\documentclass[12pt]{article}
\usepackage[utf8]{inputenc}
\usepackage{times}
\usepackage{amsmath, amsfonts,amsthm,amssymb,mathrsfs, bm}
\usepackage[parfill]{parskip}
\usepackage[shortlabels]{enumitem}
\usepackage{tikz}
\usepackage{hyperref}
\usepackage{graphicx,subcaption}
\usepackage{wrapfig}
\usepackage[open=true]{bookmark}
\usepackage[letterpaper, margin=1in]{geometry}
\usepackage[square, numbers]{natbib}
\usepackage{amsmath, amssymb, amsthm, amsfonts}
\usepackage{graphicx}
\usepackage[shortlabels]{enumitem}
\usepackage{xcolor}
\usepackage{mathrsfs}
\usepackage{physics}
\usepackage{accents}
\usepackage{cleveref}

\DeclareMathOperator{\E}{\mathbb{E}}
\DeclareMathOperator{\Var}{Var}

\DeclareMathOperator*{\argmin}{arg\,min}

\renewcommand{\bar}{\overline}
\newcommand{\R}{\mathbb{R}}

\newcommand{\K}{\mathbb{K}}
\newcommand{\1}{\mathbf{1}}
\newcommand{\set}{\qty}

\DeclareMathOperator{\unif}{Unif}
\DeclareMathOperator{\poly}{poly}
\DeclareMathOperator{\relu}{ReLU}

\DeclareMathOperator{\sym}{Sym}

\newtheorem{lemma}{Lemma}
\newtheorem{corollary}{Corollary}
\newtheorem{theorem}{Theorem}

\newtheorem{assumption}{Assumption}
\Crefname{assumption}{Assumption}{Assumptions}

\newtheorem{definition}{Definition}

\newtheorem{example}[theorem]{Example}

\usepackage{hyperref}
\usepackage{algorithm, algorithmic}

\graphicspath{{Figures/}}
\usepackage{authblk}

\title{Provable Guarantees for Nonlinear Feature Learning in Three-Layer Neural Networks}

\author{Eshaan Nichani}
\author{Alex Damian}
\author{Jason D. Lee}
\affil{Princeton University}
\date{April 2, 2025}

\ifdefined\usebigfont

\usepackage{times}
\usepackage[fontsize=13pt]{scrextend}
\makeatletter
\@ifpackageloaded{geometry}{\AtBeginDocument{\newgeometry{letterpaper,left=1.56in,right=1.56in,top=1.71in,bottom=1.77in}}}{\usepackage[letterpaper,left=1.56in,right=1.56in,top=1.71in,bottom=1.77in]{geometry}}
\makeatother
\else
\fi

\begin{document}

\maketitle

\begin{abstract}
    One of the central questions in the theory of deep learning is to understand how neural networks learn hierarchical features. The ability of deep networks to extract salient features is crucial to both their outstanding generalization ability and the modern deep learning paradigm of pretraining and finetuneing. However, this feature learning process remains poorly understood from a theoretical perspective, with existing analyses largely restricted to two-layer networks. In this work we show that three-layer neural networks have provably richer feature learning capabilities than two-layer networks. We analyze the features learned by a three-layer network trained with layer-wise gradient descent, and present a general purpose theorem which upper bounds the sample complexity and width needed to achieve low test error when the target has specific hierarchical structure. We instantiate our framework in specific statistical learning settings -- single-index models and functions of quadratic features -- and show that in the latter setting three-layer networks obtain a sample complexity improvement over all existing guarantees for two-layer networks. Crucially, this sample complexity improvement relies on the ability of three-layer networks to efficiently learn \emph{nonlinear} features. We then establish a concrete optimization-based depth separation by constructing a function which is efficiently learnable via gradient descent on a three-layer network, yet cannot be learned efficiently by a two-layer network. Our work makes progress towards understanding the provable benefit of three-layer neural networks over two-layer networks in the feature learning regime.

\end{abstract}
\paragraph{Revision (April 2, 2025):} In this most recent revision, we have strengthened the sample complexity guarantee in \Cref{thm:single_feature_informal} from $n \gtrsim \norm{\K f^*}_{L^2}^{-2}$ to $\norm{\K}_{op}\norm{\K f^*}_{L^2}^{-2}$, and the width dependence from $m_2 \gtrsim \norm{\K f^*}_{L^2}^{-2}$ to $\norm{\K f^*}_{L^2}^{-1}$ (see \Cref{sec:technical-def} for formal definitions). This improves the sample complexity of learning single-index models in \Cref{sec:single_index_theorem} to $\tilde\Theta(d)$, and under the assumption that the activation $\sigma_2$ has no constant or linear terms, improves the sample complexity of learning quadratic features in \Cref{sec:quad_features_theorem} to $\tilde\Theta(d^2)$.

\pagebreak
\section{Introduction}

The success of modern deep learning can largely be attributed to the ability of deep neural networks to decompose the target function into a hierarchy of learned features. This feature learning process enables both improved accuracy \citep{He2015Resnet} and transfer learning \citep{devlin-etal-2019-bert}. Despite its importance, we still have a rudimentary theoretical understanding of the feature learning process. Fundamental questions include understanding what features are learned, how they are learned, and how they affect generalization.

From a theoretical viewpoint, a fascinating question is to understand how depth can be leveraged to learn more salient features and as a consequence a richer class of hierarchical functions. The base case for this question is to understand which features (and function classes) can be learned efficiently by three-layer neural networks, but not two-layer networks. Recent work on feature learning has shown that two-layer neural networks learn features which are \emph{linear} functions of the input (see \Cref{sec:related_work} for further discussion). It is thus a natural question to understand if three-layer networks can learn \emph{nonlinear} features, and how this can be leveraged to obtain a sample complexity improvement. Initial learning guarantees for three-layer networks \citep{chen2020, zhu2019b,zhu2020}, however, consider simplified model and function classes and do not discern specifically what the learned features are or whether a sample complexity improvement can be obtained over shallower networks or kernel methods. 

On the other hand, the standard approach in deep learning theory to understand the benefit of depth has been to establish ``depth separations'' \citep{telgarsky2016}, i.e. functions that cannot be efficiently approximated by shallow networks, but can be via deeper networks. However, depth separations are solely concerned with the representational capability of neural networks, and ignore the optimization and generalization aspects. In fact, depth separation functions such as \citep{telgarsky2016} are often not learnable via gradient descent \citep{malach2021approximation}. To reconcile this, recent papers \citep{safran2022,ren2023mutlilayer} have established \emph{optimization-based} depth separation results, which are functions which cannot be efficiently learned using gradient descent on a shallow network but can be learned with a deeper network. We thus aim to answer the following question:

\begin{center}
\emph{What features are learned by gradient descent on a three-layer neural network, and can these features be leveraged to obtain a provable sample complexity guarantee?}
\end{center}

\subsection{Our contributions}

We provide theoretical evidence that three-layer neural networks have provably richer feature learning capabilities than their two-layer counterparts. We specifically study the features learned by a three-layer network trained with a layer-wise variant of gradient descent (\Cref{alg:two}). Our main contributions are as follows.

\begin{itemize}

\item \Cref{thm:single_feature_informal} is a general purpose sample complexity guarantee for \Cref{alg:two} to learn an arbitrary target function $f^*$. We first show that \Cref{alg:two} learns a feature roughly corresponding to a low-frequency component of the target function $f^*$ with respect to the random feature kernel $\mathbb{K}$ induced by the first layer. We then derive an upper bound on the population loss in terms of the learned feature. As a consequence, we show that if $f^*$ possesses a hierarchical structure where it can be written as a 1D function of the learned feature (detailed in \Cref{sec:main_thm}), then the sample complexity for learning $f^*$ is equal to the sample complexity of learning the feature. This demonstrates that three-layer networks indeed perform hierarchical learning.

\item We next instantiate \Cref{thm:single_feature_informal} in two statistical learning settings which satisfy such hierarchical structure. As a warmup, we show that \Cref{alg:two} learns single-index models (i.e $f^*(x) = g^*(w \cdot x)$) in $d$ samples, which is information-theoretically optimal and crucially has $d$-dependence not scaling with the degree of the link function $g^*$. We next show that \Cref{alg:two} learns the target $f^*(x) = g^*(x^TAx)$, where $g^*$ is either Lipschitz or a degree $p = O(1)$ polynomial, up to $o_d(1)$ error with $d^2$ samples. This improves on all existing guarantees for learning with two-layer networks or via NTK-based approaches, which all require sample complexity $d^{\Omega(p)}$. A key technical step is to show that for the target $f^*(x) = g^*(x^TAx)$, the learned feature is approximately $x^TAx$. This argument relies on the universality principle in high-dimensional probability, and may be of independent interest.
\item We conclude by establishing an explicit optimization-based depth separation. We show that the target function $f^*(x) = \mathrm{ReLU}(x^TAx)$ for appropriately chosen $A$ can be learned by \Cref{alg:two} up to $o_d(1)$ error in $\poly(d)$ samples, whereas any two layer network needs either superpolynomial width or weight norm in order to approximate $f^*$ up to comparable accuracy. This implies that such an $f^*$ is not efficiently learnable via two-layer networks.

\end{itemize}

The above separation hinges on the ability of three-layer networks to learn the nonlinear feature $x^TAx$ and leverage this feature learning to obtain an improved sample complexity. Altogether, our work presents a general framework demonstrating the capability of three-layer networks to learn nonlinear features, and makes progress towards a rigorous understanding of feature learning, optimization-based depth separations, and the role of depth in deep learning more generally.

\subsection{Related Work}\label{sec:related_work}

\paragraph{Neural Networks and Kernel Methods.} Early guarantees for neural networks relied on the Neural Tangent Kernel (NTK) theory \citep{jacot2018, soltanolkotabi2018, du2019, chizat2018}. The NTK theory shows global convergence by coupling to a kernel regression problem and generalization via the application of kernel generalization bounds \citep{arora2019, cao2019, zhu2019b}. The NTK can be characterized explicitly for certain data distributions \citep{montanari2021, montanari2020, mei2022}, which allows for tight sample complexity and width analyses. This connection to kernel methods has also been used to study the role of depth, by analyzing the signal propagation and evolution of the NTK in MLPs~\citep{poole2016chaos, schoenholz2017deep, hayou2019depth}, convolutional networks~\citep{arora2019CNTK, xiao2018cnns, xiao2020disentangle, nichani2021}, and residual networks \citep{resnet_NTK}. However, the NTK theory is insufficient as neural networks outperform their NTK in practice \citep{arora2019CNTK, leefinite2020}. In fact, \citep{montanari2021} shows that kernels cannot adapt to low-dimensional structure and require $d^k$ samples to learn any degree $k$ polynomials in $d$ dimensions. Ultimately, the NTK theory fails to explain generalization or the role of depth in practical networks not in the kernel regime. A key distinction is that networks in the kernel regime cannot learn features~\citep{tensorprograms4}. A recent goal has thus been to understand the feature learning mechanism and how this leads to sample complexity improvements \citep{regmatters2018, du2018quad, yehudai2019, zhu2019, ghorbani2019b, ghorbani2020, daniely2020, woodworth2020, li2020, malach2021}. Crucially, our analysis is \emph{not} in the kernel regime, and shows an improvement of three-layer networks over two-layer networks in the feature-learning regime.

\paragraph{Feature Learning.} Recent work has studied the provable feature learning capabilities of two-layer neural networks. \citep{bai2020, abbe2022mergedstaircase, abbe2023leap, ba2022onestep, bruna2022singleindex, barak2022hiddenprogress} show that for isotropic data distributions, two-layer networks learn linear features of the data, and thus efficiently learn functions of low-dimensional projections of the input (i.e targets of the form $f^*(x) = g(Ux)$ for $U \in \mathbb{R}^{r \times d}$). Here, $x\mapsto Ux$ is the ``linear feature.'' Such target functions include low-rank polynomials \citep{damian2022representations, abbe2023leap} and single-index models \citep{ba2022onestep,bruna2022singleindex} for Gaussian covariates, as well as sparse boolean functions \citep{abbe2022mergedstaircase} such as the $k$-sparse parity problem \citep{barak2022hiddenprogress} for covariates uniform on the hypercube. \citep{radhakrishnan2022mechanism} draws connections from the mechanisms in these works to feature learning in standard image classification settings. The above approaches rely on layerwise training procedures, and our \Cref{alg:two} is an adaptation of the algorithm in \citep{damian2022representations}.


Another approach uses the quadratic Taylor expansion of the network to learn classes of polynomials \citep{bai2020, nichani2022qntk} This approach can be extended to three-layer networks. \citep{chen2020} replace the outermost layer with its quadratic approximation, and by viewing $z^p$ as the hierarchical function $(z^{p/2})^2$ show that their three-layer network can learn low rank, degree $p$ polynomials in $d^{p/2}$ samples. \citep{zhu2019b} similarly uses a quadratic approximation to improperly learn a class of three-layer networks via sign-randomized GD. An instantiation of their upper bound to the target $g^*(x^TAx)$ for degree $p$ polynomial $g^*$ yields a sample complexity of $d^{p + 1}$.  However, \citep{chen2020,zhu2019b} are proved via opaque landscape analyses, do not concretely identify the learned features, and rely on nonstandard algorithmic modifications. Our \Cref{thm:single_feature_informal} directly identifies the learned features, and when applied to the quadratic feature setting in \Cref{sec:quad_features_theorem} obtains an improved sample complexity guarantee independent of the degree of $g^*$.



\paragraph{Depth Separations.}

\citep{telgarsky2016} constructs a function which can be approximated by a poly-width network with large depth, but not with smaller depth. \citep{2016eldan} is the first depth separation between depth 2 and 3 networks, with later works \citep{2017safran, 2017daniely, safran2019} constructing additional such examples. However, such functions are often not learnable via three-layer networks \citep{malach2019deeper}. \citep{malach2021approximation} shows that approximatability by a shallow (depth 3 network) is a necessary condition for learnability via a deeper network.

These issues have motivated the development of optimization-based, or algorithmic, depth separations, which construct functions which are learnable by a three-layer network but not by two-layer networks. \citep{safran2022} shows that certain ball indicator functions $\mathbf{1}(\norm{x} \ge \lambda)$ are not approximatable by two-layer networks, yet are learnable via GD on a special variant of a three-layer network with second layer width equal to 1. However, their network architecture is tailored for learning the ball indicator, and the explicit polynomial sample complexity ($n \gtrsim d^{36}$) is weak. \citep{ren2023mutlilayer} shows that a multi-layer mean-field network with a 1D bottleneck layer can learn the target $\mathrm{ReLU}(1 - \norm{x})$, which \citep{safran2019} previously showed was inaproximatable via two-layer networks. However, their analysis relies on the rotational invariance of the target function, and it is difficult to read off explicit sample complexity and width guarantees beyond being $\poly(d)$. Our \Cref{sec:quad_features_theorem} shows that three-layer networks can learn a larger class of features ($x^TAx$ versus $\norm{x}$) and functions on top of these features (any Lipschitz $q$ versus $\mathrm{ReLU}$), with explicit dependence on the width and sample complexity needed ($n, m_2 = \tilde O(d^2)$, $m_1 = \tilde O(1)$).
\section{Preliminaries}

\subsection{Problem Setup}

\paragraph{Data distribution.} Our aim is to learn the target function $f^*:\mathcal{X}_d \rightarrow \mathbb{R}$,with $\mathcal{X}_d \subset \mathbb{R}^d$ the space of covariates. We let $\nu$ be some distribution on $\mathcal{X}_d$, and draw two independent datasets $\mathcal{D}_1, \mathcal{D}_2$, each with $n$ samples, so that each $x \in \mathcal{D}_1$ or $x \in \mathcal{D}_2$ is sampled i.i.d as $x \sim \nu$. Without loss of generality, we normalize so $\mathbb{E}_{x \sim \nu}[{f^*(x)}^2] \le 1$. We make the following assumptions on $\nu$:

\begin{definition}[Sub-Gaussian Vector] A mean-zero random vector $X \in \mathbb{R}^d$ is $\gamma$-subGaussian if, for all unit vectors $v \in \mathbb{R}^d$, $\mathbb{E}\qty[\exp(\lambda X\cdot v)] \le \exp(\gamma^2\lambda^2)$ for all $\lambda \in \mathbb{R}$.
\end{definition}

\begin{assumption}\label{assume:x_subg}
$\mathbb{E}_{x \sim \nu}[x] = 0$ and $\nu$ is $C_\gamma$-subGaussian for some constant $C_\gamma$.
\end{assumption}

\begin{assumption}\label{assume:f_moment}
$f^*$ has polynomially growing moments, i.e there exist constants $(C_f, \ell)$ such that $\E_{x \sim \nu}\qty[f^*(x)^q]^{1/q} \le C_fq^\ell$ for all $q \ge 1$.
\end{assumption}

We note that \Cref{assume:f_moment} is satisfied by a number of common distributions and functions, and we will verify that \Cref{assume:f_moment} holds for each example in \Cref{sec:examples}.

\paragraph{Three-layer neural network.} Let $m_1, m_2$ be the two hidden layer widths, and $\sigma_1, \sigma_2$ be two activation functions. Our learner is a three-layer neural network parameterized by $\theta = (a, W, b, V)$, where $a \in \mathbb{R}^{m_1}, W \in \mathbb{R}^{m_1 \times m_2}$, $b \in \mathbb{R}^{m_1}$, and $V \in \mathbb{R}^{m_2 \times d}$. The network $f(x; \theta)$ is defined as:
\begin{align}
    f(x; \theta) &:= \frac{1}{m_1}a^T\sigma_1(W\sigma_2(Vx) + b) = \frac{1}{m_1}\sum_{i=1}^{m_1} a_i\sigma_1\qty(\langle w_i, h^{(0)}(x) \rangle + b_i).
\end{align}
Here, $w_i \in \mathbb{R}^{m_2}$ is the $i$th row of $W$, and $h^{(0)}(x) := \sigma_2(Vx) \in \mathbb{R}^{m_2}$ is the random feature embedding arising from the innermost layer. The parameter vector $\theta^{(0)} := (a^{(0)}, W^{(0)}, b^{(0)}, V^{(0)})$ is initialized with $a^{(0)}_i \sim_{iid} \mathrm{Unif}\qty(\{\pm 1\})$, $W^{(0)} = 0$, the biases $b^{(0)}_i = 0$, and the rows $v^{(0)}_i$ of $V^{(0)}$ drawn $v_i \sim_{iid} \tau$, where $\tau$ is the uniform measure on $\mathcal{S}^{d-1}(1)$, the $d$-dimensional unit sphere. We make the following assumption on the activations, and note that the polynomial growth assumption on $\sigma_2$ is satisfied by all activations used in practice.
\begin{assumption}\label{assume:act} $\sigma_1$ is the $\mathrm{ReLU}$ activation, i.e $\sigma_1(z) = \max(z, 0)$, and $\sigma_2$ has polynomial growth, i.e
$\abs{\sigma_2(x)} \le C_{\sigma}(1 + \abs{x})^{\alpha_\sigma}$ for some constants $C_\sigma, \alpha_\sigma > 0$.
\end{assumption}

\paragraph{Training Algorithm.} 
\begin{algorithm}[t]
\caption{Layer-wise training algorithm}\label{alg:two}
\begin{algorithmic}
\REQUIRE Initialization $\theta^{(0)}$; learning rates $\eta_1, \eta_2$; weight decay $\lambda$; time $T$
\STATE \COMMENT{Stage 1: Train $W$} 
\STATE $W^{(1)} \leftarrow W^{(0)} - \eta_1\nabla_W L_1(\theta^{(0)})$
\STATE $a^{(1)} \leftarrow a^{(0)}$
\STATE $b^{(1)}_i \sim_{iid} \mathcal{N}(0, 1)$
\STATE $\theta^{(1)} \leftarrow (a^{(1)}, W^{(1)}, b^{(1)}, V^{(0)})$
\STATE \COMMENT{Stage 2: Train $a$} 
\FOR{$t = 2, \cdots, T$}
    \STATE $a^{(t)} \leftarrow a^{(t-1)} - \eta_2\qty[\nabla_a L_2(\theta^{(t-1)}) + \lambda a^{(t-1)}]$
    \STATE $\theta^{(t)} \leftarrow (a^{(t)}, W^{(1)}, b^{(1)}, V^{(0)})$
\ENDFOR
\STATE $\hat \theta \leftarrow \theta^{(T)}$
\ENSURE $\hat \theta$
\end{algorithmic}
\end{algorithm}

Let $L_i(\theta)$ denote the empirical loss on dataset $\mathcal{D}_i$; that is for $i=1,2$: $L_i(\theta) := \frac{1}{n}\sum_{x \in \mathcal{D}_i}\qty(f(x; \theta) - f^*(x))^2$. Our network is trained via layer-wise gradient descent with sample splitting. Throughout training, the first layer weights $V$ are held constant. First, the second layer weights $W$ are trained for $t=1$ timesteps. The biases $b_i$ are then reinitialized i.i.d from $\mathcal{N}(0, 1)$. Next, the outer layer weights $a$ are trained for $t = T-1$ timesteps. This two stage training process is common in prior works analyzing gradient descent on two-layer networks \citep{damian2022representations, ba2022onestep, abbe2022mergedstaircase, barak2022hiddenprogress}, and as we see in \Cref{sec:depth_separation}, is already sufficient to establish a separation between two and three-layer networks. Pseudocode for the training procedure is presented in \Cref{alg:two}.

\subsection{Technical definitions}\label{sec:technical-def}

The activation $\sigma_2$ admits a random feature kernel $K : \mathcal{X}_d \times \mathcal{X}_d \rightarrow \mathbb{R}$ and corresponding integral operator $\mathbb{K} : L^2(\mathcal{X}_d, \nu) \rightarrow L^2(\mathcal{X}_d, \nu)$:
\begin{definition}[Kernel objects]
$\sigma_2$ admits the random feature kernel \begin{equation}K(x, x') := \mathbb{E}_{v \sim \tau}\qty[\sigma_2(x \cdot v)\sigma_2(x' \cdot v)]\end{equation}
and corresponding integral operator \begin{equation}(\mathbb{K}f)(x) := \mathbb{E}_{x'\sim \nu}\qty[K(x, x')f(x')].\end{equation}
\end{definition}

We make the following assumptions on $\mathbb{K}$, which we verify for the examples in \Cref{sec:examples}:
\begin{assumption}\label{assume:kernel_f_moment}
$\mathbb{K}f^*$ has polynomially bounded moments, i.e there exist constants $C_K, \chi$ such that, for all $1 \le q \le d$, $\norm{\mathbb{K}f^*}_{L^q(\nu)} \le C_K q^\chi \norm{\mathbb{K}f^*}_{L^2(\nu)}$.
\end{assumption}

\begin{assumption}\label{assume:kernel-concentration}
There exists a constant $C_{K_2}$ such that, with probability $1 - \exp(d/C_{K_2})$ over the draw of $x \sim \nu$, $\mathbb{E}_{x'\sim \nu}[K(x,x')^2] \le C_{K_2}\mathbb{E}_{x, x'\sim \nu}[K(x,x')^2]$.
\end{assumption}

We also require the definition of the Sobolev space: 
\begin{definition} Let $\mathcal{W}^{2, \infty}([-1, 1])$ be the Sobolev space of twice continuously differentiable functions $q: [-1, 1] \rightarrow \mathbb{R}$ equipped with the norm $\norm{q}_{k, \infty} := \max_{s \le k}\max_{x \in [-1, 1]}\abs{q^{(s)}(x)}$ for $k = 1, 2$.
\end{definition}

\subsection{Notation}
We use big $O$ notation (i.e $O, \Theta, \Omega$) to ignore absolute constants ($C_\sigma, C_f$, etc.) that do not depend on $d, n, m_1, m_2$. We further write $a_d \lesssim b_d$ if $a_d = O(b_d)$, and $a_d = o(b_d)$ if $\lim_{d \rightarrow \infty} a_d/b_d = 0$. Additionally, we use $\tilde O$ notation to ignore terms that depend logarithmically on $dnm_1m_2$. For $f : \mathcal{X}_d \rightarrow \mathbb{R}$, define $\norm{f}_{L^p(\mathcal{X}_d, \nu)} = \qty(\mathbb{E}_{x \sim \nu}[f(x)^p])^{1/p}$. To simplify notation we also call this quantity $\norm{f}_{L^p(\nu)}$, and $\norm{g}_{L^p(\mathcal{X}_d, \nu)}, \norm{g}_{L^p(\tau)}$ are defined analogously for functions $g: \mathcal{S}^{d-1}(1) \rightarrow \mathbb{R}$. When the domain is clear from context, we write $\norm{f}_{L^p}, \norm{g}_{L^p}$. We let $L^p(\mathcal{X}_d, \nu)$ be the space of $f$ with finite $\norm{f}_{L^p(\mathcal{X}_d, \nu)}$. Finally, we write $\mathbb{E}_x$ and $\mathbb{E}_v$ as shorthand for $\mathbb{E}_{x \sim \nu}$ and $\mathbb{E}_{v \sim \tau}$ respectively.
\section{Main Result}\label{sec:main_thm}

The following is our main theorem which upper bounds the population loss of \Cref{alg:two}:

\begin{theorem}\label{thm:single_feature_informal} Select $q \in \mathcal{W}^{2, \infty}([-1, 1])$. Let $\eta_1 = \frac{m_1}{m_2}\bar \eta$, and assume $n = \tilde\Omega(\norm{\K}_{op}\norm{\K f^*}_{L^2}^{-2})$, $m_2 = \tilde \Omega(\norm{\K f^*}_{L^2}^{-1})$ There exist $\bar \eta, \lambda, \eta_2$ such that after $T = \poly(n,m_1,m_2,d,\norm{q}_{2, \infty})$ timesteps, with high probability over the initialization and datasets the output $\hat \theta$ of Algorithm 1 satisfies the population $L^2$ loss bound
\begin{equation}
\begin{aligned}\label{eq:pop_risk}
    &\mathbb{E}_x\qty[\qty(f(x; \hat \theta) - f^*(x))^2]\\
    &\quad\le \tilde O\Bigg(\underbrace{\norm{q \circ \qty(\bar \eta \cdot \mathbb{K}f^* ) - f^*}^2_{L^2}}_{\substack{\text{accuracy of}\\ \text{feature learning}}} + \underbrace{\norm{q}_{1, \infty}^2\qty(\frac{\norm{\K}_{op}\norm{\mathbb{K}f^*}_{L^2}^{-2}}{n} + \frac{\norm{\mathbb{K}f^*}_{L^2}^{-1}}{m_2})}_{\substack{\text{sample complexity of}\\ \text{feature learning}}} + \underbrace{\frac{\norm{q}_{2,\infty}^2}{m_1}  + \frac{\norm{q}_{2, \infty}^2 + 1}{\sqrt{n}}}_{\text{complexity of $q$}}\Bigg)
\end{aligned}
\end{equation}
\end{theorem}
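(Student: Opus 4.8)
The plan is to analyze Algorithm~\ref{alg:two} stage by stage, tracking how the target's low-frequency content with respect to $\K$ is extracted into a single neuron's preactivation and then fit by the outer layer. \emph{Stage 1 (one step on $W$).} Since $W^{(0)}=0$ and $a^{(0)}_i\in\{\pm1\}$, the gradient $\nabla_W L_1(\theta^{(0)})$ has a clean form: differentiating the ReLU at a fixed bias $b_i$, the update to row $w_i$ is proportional to $\frac1n\sum_{x\in\mathcal D_1}(f(x;\theta^{(0)})-f^*(x))\,a_i\sigma_1'(b_i)\,h^{(0)}(x)$. With $f(x;\theta^{(0)})$ being a small constant-ish quantity (since $W^{(0)}=0$), the dominant term is $-a_i\sigma_1'(b_i)\cdot\frac1n\sum_x f^*(x)h^{(0)}(x)$. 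Thus after one step, $\langle w_i^{(1)},h^{(0)}(x)\rangle \approx \eta_1 a_i\sigma_1'(b_i)\cdot\frac1{m_2}\sum_{j}\widehat{\langle \sigma_2(\langle v_j,\cdot\rangle),f^*\rangle}\,\sigma_2(\langle v_j,x\rangle)$; the inner average concentrates (over $v_j\sim\tau$ and over $\mathcal D_1$) to $(\K f^*)(x)$, up to a population-vs-empirical error of order $\norm{\K f^*}_{L^2}/\sqrt{\min(n,m_2)}$ in $L^2(\nu)$. The scaling $\eta_1=\frac{m_1}{m_2}\bar\eta$ is exactly what normalizes the $\frac1{m_2}$ and the $\frac1{m_1}$ in the network definition so that the preactivation of neuron $i$ becomes $\bar\eta\,a_i\sigma_1'(b_i)(\K f^*)(x)+b_i+(\text{error})$, i.e.\ every neuron sees (a rescaling of) the \emph{learned feature} $\bar\eta\,\K f^*$.

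\emph{Stage 2 (training $a$ with weight decay).} After Stage 1 the network is $f(x;\theta)=\frac1{m_1}\sum_i a_i\,\relu\!\big(c_i(\K f^* )(x)\cdot\bar\eta + b_i + \varepsilon_i(x)\big)$ with $c_i=a_i^{(1)}\sigma_1'(b_i^{(0)})$ fixed; training only $a$ is a convex (ridge) regression problem over the random features $\phi_i(x)=\relu(c_i\bar\eta(\K f^*)(x)+b_i)$. The key representation fact is that these are exactly the random features of a \emph{one-dimensional} two-layer ReLU network in the scalar variable $z=\bar\eta(\K f^*)(x)$: by the standard universal approximation / random-feature argument for $\relu$ with Gaussian biases, any $q\in\mathcal W^{2,\infty}([-1,1])$ can be approximated on $[-1,1]$ by $\frac1{m_1}\sum_i a_i^\star\relu(c_i z+b_i)$ with $\norm{a^\star}_\infty=O(\norm q_{2,\infty})$ and approximation error $O(\norm q_{2,\infty}/\sqrt{m_1})$ — this produces the $\norm q_{2,\infty}^2/m_1$ term. (One must check $\bar\eta\K f^*$ lands in $[-1,1]$: choose $\bar\eta\asymp\norm{\K f^*}_{L^2}^{-1}$ and use \Cref{assume:kernel_f_moment} to get a high-probability $L^\infty$-type control, or truncate; this is why $\bar\eta$ is a free parameter in the statement.) Running gradient descent on the convex objective $L_2(\theta)+\lambda\norm a^2$ for $T=\poly(\cdot)$ steps with appropriate $\eta_2,\lambda$ converges to (near) the ridge minimizer, whose excess empirical risk over the comparator $a^\star$ is controlled by $\lambda\norm{a^\star}^2$ plus optimization error; converting empirical to population risk via a Rademacher/uniform-convergence bound over the ball $\norm a\le O(\norm{a^\star})$ costs $O((\norm q_{2,\infty}^2+1)/\sqrt n)$ (the $+1$ absorbing the label noise/scale of $f^*$). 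The $\varepsilon_i(x)$ perturbation from Stage~1 propagates through the $1$-Lipschitz $\relu$ and the comparator weights, contributing the $\norm q_{1,\infty}^2\norm{\K f^*}_{L^2}^{-2}/\min(n,m_1,m_2)$ term (the $\norm q_{1,\infty}$ because only the Lipschitz constant of $q$, not $q''$, multiplies the feature error). Finally, the best achievable population risk with the \emph{exact} feature is $\norm{q\circ(\bar\eta\,\K f^*)-f^*}_{L^2}^2$, the irreducible "accuracy of feature learning" term. Collecting these four sources gives \eqref{eq:pop_risk}.

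\emph{Order of steps:} (1) expand $\nabla_W L_1(\theta^{(0)})$ and identify the leading term; (2) concentration of $\frac1{m_2}\sum_j\sigma_2(\langle v_j,x\rangle)\cdot\frac1n\sum_{x'}\sigma_2(\langle v_j,x'\rangle)f^*(x')$ to $\K f^*$ in $L^2(\nu)$, uniformly enough via \Cref{assume:x_subg,assume:f_moment,assume:act}, yielding the Stage-1 feature-error bound; (3) $L^\infty$ control of $\bar\eta\K f^*$ via \Cref{assume:kernel_f_moment} and the choice of $\bar\eta$; (4) $1$D random-feature approximation of $q$ by $\relu$ features with Gaussian biases, with weight-norm and error bounds; (5) convex optimization convergence of Stage~2 to the ridge minimizer; (6) generalization bound over the weight-norm ball; (7) triangle-inequality assembly, propagating the Stage-1 error through the $\relu$s.

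\emph{Main obstacle.} The crux is Step~(2): showing that a \emph{single} gradient step on the middle layer writes $\K f^*$ into \emph{every} neuron's preactivation with $L^2(\nu)$ error that scales like $\norm{\K f^*}_{L^2}/\sqrt{\min(n,m_2)}$ rather than like an uncontrolled quantity. This requires simultaneously handling (a) the finite-width Monte Carlo error from $m_2$ random features $v_j$, (b) the finite-sample error from $n$ data points, and (c) the fact that $\sigma_2$ only has polynomial growth (so the summands are heavy-tailed and one needs the sub-Gaussianity of $\nu$ plus \Cref{assume:f_moment} to get $L^q$ moment bounds and then a truncation/Bernstein argument), all while keeping the bound in the \emph{function} norm $L^2(\nu)$ so it can be pushed through the ReLUs in Stage~2. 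A secondary difficulty is verifying that the scalar feature stays in the domain $[-1,1]$ of $q$ with high probability — handled by \Cref{assume:kernel_f_moment} and the freedom to scale $\bar\eta$ — and ensuring the lower bound $n,m_1,m_2=\tilde\Omega(\norm{\K f^*}_{L^2}^{-2})$ is exactly what makes the feature signal exceed the noise floor, so that fitting it is information-theoretically on the order of "learning one scalar feature."
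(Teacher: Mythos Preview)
Your proposal is correct and follows essentially the same route as the paper: a one-step gradient on $W$ whose leading term writes the learned feature $\phi\approx\K f^*$ into every preactivation (\Cref{lem:feature_concentrates}), a choice of $\bar\eta$ via \Cref{assume:kernel_f_moment} to confine $\bar\eta\,\K f^*$ to $[-1,1]$ (\Cref{lem:phi_concentrates}), a one-dimensional ReLU random-feature representation of $q$ with weight norm $O(\norm{q}_{2,\infty})$ (\Cref{lem:univariate_f}), convex ridge convergence in Stage~2, and a Rademacher generalization bound (\Cref{lem:rademacher}). The only notable technical deviations are that the paper controls $|\phi-\K f^*|$ pointwise over the finite set $\mathcal D_2$ rather than in $L^2(\nu)$, and it handles the unbounded squared loss in the generalization step via a truncated surrogate $\ell_\tau$ (\Cref{lem:truncate_loss}) before applying Rademacher---both are minor refinements of steps you already outlined.
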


The full proof of this theorem is in \Cref{app:main_thm_proof}. The population risk upper bound \eqref{eq:pop_risk} has three terms:

\begin{enumerate}

 \item The first term quantifies the extent to which feature learning is useful for learning the target $f^*$, and depends on how close $f^*$ is to having hierarchical structure. Concretely, if there exists $q : \mathbb{R} \rightarrow \mathbb{R}$ such that the compositional function $q \circ \bar\eta \cdot \mathbb{K}f^*$ is close to the target $f^*$, then this first term is small. In \Cref{sec:examples}, we show that this is true for certain \emph{hierarchical} functions. In particular, say that $f^*$ satisfies the hierarchical structure $f^* = g^* \circ h^*$. If the quantity $\mathbb{K}f^*$ is nearly proportional to the true feature $h^*$, then this first term is negligible. As such, we refer to the quantity $\mathbb{K}f^*$ as the \emph{learned feature}.

\item The second term is the sample (and width) complexity of learning the feature $\mathbb{K}f^*$. It is useful to compare this term to the standard kernel generalization bound, which requires $n \gtrsim \ev{f^\star, \mathbb{K}^{-1} f^\star}_{L^2}$. Unlike in the kernel bound, the feature learning term in \eqref{eq:pop_risk} does not require inverting the kernel $\mathbb{K}$ as it only requires a lower bound on $\norm{\mathbb{K} f^\star}_{L^2}$. This difference can be best understood by considering the alignment of $f^\star$ with the eigenfunctions of $\mathbb{K}$. Say that $f^\star$ has nontrivial alignment with eigenfunctions of $\mathbb{K}$ with both small $(\lambda_{min})$ and large $(\lambda_{max})$ eigenvalues. Kernel methods require $\Omega(\lambda_{min}^{-1})$ samples, which blows up when $\lambda_{min}$ is small; the sample complexity of kernel methods depends on the high frequency components of $f^*$. On the other hand, the guarantee in \Cref{thm:single_feature_informal} scales with $\norm{\K}_{op}\norm{\mathbb{K} f^\star}_{L^2}^{-2} \asymp \lambda_{max}\cdot\lambda_{max}^{-2} = \lambda_{max}^{-1}$, which can be much smaller. In other words, the sample complexity of feature learning scales with the low-frequency components of $f^*$. The feature learning process can thus be viewed as extracting the low-frequency components of the target. 

\item  The last two terms measure the complexity of learning the univariate function $q$. In the examples in \Cref{sec:examples}, the effect of these terms is benign.
\end{enumerate}

Altogether, if $f^*$ satisfies the hierarchical structure that its high-frequency components can be inferred from the low-frequency ones, then a good $q$ for \Cref{thm:single_feature_informal} exists and the dominant term in \eqref{eq:pop_risk} is the sample complexity of feature learning term, which only depends on the low-frequency components. This is not the case for kernel methods, as small eigenvalues blow up the sample complexity. As we show in \Cref{sec:examples}, this ability to  ignore the small eigenvalue components of $f^\star$ during the feature learning process is critical for achieving good sample complexity in many problems.

\subsection{Proof Sketch}

At initialization, $f(x; \theta^{(0)}) = 0$. The first step of GD on the population loss for a neuron $w_j$ is thus
\begin{align}
 w^{(1)} &= -\eta_1\nabla_{w_j}\mathbb{E}_x\qty[\qty(f(x; \theta^{(0)}) - f^*(x))^2]\\
 &= \eta_1\mathbb{E}_x\qty[f^*(x)\nabla_{w_j}f(x; \theta^{(0)})]\\
 &= \frac{1}{m_2}\bar\eta\mathbf{1}_{b^{(0)}_j \ge 0}a^{(0)}_j\mathbb{E}_x\qty[f^*(x)h^{(0)}(x)].
\end{align}
Therefore the network $f(x'; \theta^{(1)})$ after the first step of GD is given by
\begin{align}
f(x'; \theta^{(1)}) &= \frac{1}{m_1}\sum_{j=1}^{m_1}a_j\sigma_1\qty(\langle w_j^{(1)}, h^{(0)}(x') \rangle + b_j)\\
 &= \frac{1}{m_1}\sum_{j=1}^{m_1}a_j\sigma_1\qty(a^{(0)}_j\cdot \bar\eta\frac{1}{m_2}\mathbb{E}_x\qty[f^*(x)h^{(0)}(x)^Th^{(0)}(x')] + b_j)\mathbf{1}_{b^{(0)}_j \ge 0}.
\end{align}
We first notice that this network now implements a 1D function of the quantity 
\begin{align}\phi(x') := \bar\eta\frac{1}{m_2}\mathbb{E}_x\qty[f^*(x)h^{(0)}(x)^Th^{(0)}(x')].\end{align}
Specifically, the network can be rewritten as
\begin{align}
	f(x'; \theta^{(1)}) = \frac{1}{m_1}\sum_{j=1}^{m_1}a_j\sigma_1\qty(a_j^{(0)}\cdot \phi(x') + b_j)\cdot \mathbf{1}_{b^{(0)}_j \ge 0}.
\end{align}
Since $f$ implements a hierarchical function of the quantity $\phi(x)$, we term $\phi$ the \emph{learned feature}. 

The second stage of \Cref{alg:two} is equivalent to random feature regression. We first use results on ReLU random features to show that any $q \in \mathcal{W}^{2, \infty}([-1, 1])$ can be approximated on $[-1, 1]$ as $q(z) \approx \frac{1}{m_1}\sum_{j=1}^{m_1}a_j^*\sigma_1(a_j^{(0)}z + b_j)$ for some $\norm{a^*} \lesssim \norm{q}_{2, \infty}$ (\Cref{lem:univariate_f}). Next, we use the standard kernel Rademacher bound to show that the excess risk scales with the smoothness of $q$ (\Cref{lem:rademacher}). Hence we can efficiently learn functions of the form $q \circ \phi$.

It suffices to compute this learned feature $\phi$. For $m_2$ large, we observe that
\begin{align}
\phi(x') & = \frac{\bar\eta}{m_2}\sum_{j=1}^{m_2}\mathbb{E}_x\qty[f^*(x)\sigma(x \cdot v)\sigma(x' \cdot v)]) \approx \bar\eta \mathbb{E}_x\qty[f^*(x)K(x, x')] = \bar\eta(\mathbb{K}f^*)(x').
\end{align}
The learned feature is thus approximately $\bar \eta \cdot \mathbb{K}f^*$. Choosing $\bar\eta$ so that $\abs{\phi(x')} \le 1$, we see that \Cref{alg:two} learns functions of the form $\phi \circ (\bar \eta \cdot \mathbb{K}f^*)$. Finally, we translate the above analysis to the finite sample gradient via standard concentration tools. The ``typical" magnitude of the kernel function is $\E_{x,x'}\abs{K(x, x')} \le \E_{x,x'}[K(x, x')^2]^{1/2} = \norm{\K}_F \lesssim \norm{\K}^{1/2}_{op}$, since $\Tr(\K) \lesssim 1$. The empirical estimate to $\mathbb{K}f^*$ thus concentrates at a $\sqrt{\norm{\K}_{op}/n}$ rate, and hence $n \gtrsim \norm{\K}_{op}\norm{\mathbb{K}f^*}_{L^2}^{-2}$ samples are needed to obtain a constant factor approximation (\Cref{lem:feature_concentrates}).


\section{Examples}\label{sec:examples}
We next instantiate \Cref{thm:single_feature_informal} in two specific statistical learning settings which satisfy the hierarchical prescription detailed in \Cref{sec:main_thm}. As a warmup, we show that three-layer networks efficiently learn single index models. Our second example shows how three-layer networks can obtain a sample complexity improvement over existing guarantees for two-layer networks. 

\subsection{Warmup: single index models}\label{sec:single_index_theorem}
    Let $f^* = g^*(w^* \cdot x)$, for unknown direction $w^* \in \mathbb{R}^d$ and unknown link function $g^*: \mathbb{R} \rightarrow \mathbb{R}$, and take $\mathcal{X}_d = \mathbb{R}^d$ with $\nu = \mathcal{N}(0, I)$. Prior work \citep{damian2022representations, bruna2022singleindex} shows that two-layer neural networks learn such functions with an improved sample complexity over kernel methods. Let $\sigma_2(z) = z$, so that the network is of the form $f(x; \theta) = \frac{1}{m_1}a^T\sigma_1\qty(WVx)$. We can verify that \Cref{assume:x_subg,assume:f_moment,assume:act,assume:kernel_f_moment} are satisfied, and thus applying \Cref{thm:single_feature_informal} in this setting yields the following:
    
    \begin{theorem}\label{thm:linear_feature}
    Let $f^*(x) = g^*(w^* \cdot x)$, where $\norm{w^*}_2 = 1$. Assume that $g^*, (g^*)^\prime$ and $(g^*)^{\prime\prime}$ are polynomially bounded and that $\mathbb{E}_{z \sim \mathcal{N}(0, 1)}\qty[(g^*)^\prime(z)] \neq 0$. Then with high probability \Cref{alg:two} satisfies the population loss bound
    \begin{align}
    \mathbb{E}_x\qty[\qty(f(x; \hat \theta) - f^*(x))^2] = \tilde O\qty(\frac{d}{\min(n,m_2)} + \frac{1}{m_1} + \frac{1}{\sqrt{n}}).
    \end{align}
    \end{theorem}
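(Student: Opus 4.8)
The plan is to derive Theorem~\ref{thm:linear_feature} as a direct corollary of Theorem~\ref{thm:single_feature_informal}, so the work is entirely in (i) checking the four standing assumptions, (ii) computing the learned feature $\mathbb{K}f^*$ explicitly and lower bounding $\norm{\mathbb{K}f^*}_{L^2}$, and (iii) exhibiting a good univariate $q$ with controlled Sobolev norm so that the feature-learning error term vanishes. Since $\sigma_2(z)=z$, the random feature kernel is $K(x,x')=\mathbb{E}_{v\sim\tau}[(x\cdot v)(x'\cdot v)] = \frac{1}{d}\langle x,x'\rangle$, so $\mathbb{K}$ is just $\frac{1}{d}$ times the covariance operator: $(\mathbb{K}f^*)(x') = \frac{1}{d}\mathbb{E}_x[f^*(x)\,\langle x, x'\rangle]$. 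For $f^* = g^*(w^*\cdot x)$ with $\nu=\mathcal{N}(0,I)$, Stein's lemma gives $\mathbb{E}_x[g^*(w^*\cdot x)\,x] = \mathbb{E}_{z\sim\mathcal{N}(0,1)}[(g^*)'(z)]\,w^* =: c_1 w^*$, so $\mathbb{K}f^* = \frac{c_1}{d}(w^*\cdot x')$, a pure linear function in the planted direction. The hypothesis $\mathbb{E}[(g^*)'(z)]\neq 0$ is exactly what guarantees $c_1\neq 0$ and hence $\norm{\mathbb{K}f^*}_{L^2} = |c_1|/d = \Theta(1/d)$, which is why the sample-complexity-of-feature-learning term becomes $\tilde O(\norm{\mathbb{K}f^*}_{L^2}^{-2}/\min(n,m_1,m_2)) = \tilde O(d^2/\min(n,m_1,m_2))$; this also forces the assumption $n,m_1,m_2 = \tilde\Omega(d^2)$.

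Next I would handle the assumptions. Assumption~\ref{assume:x_subg} is immediate since $\mathcal{N}(0,I)$ is $O(1)$-subGaussian. Assumption~\ref{assume:f_moment} follows from polynomial boundedness of $g^*$: if $|g^*(z)|\lesssim (1+|z|)^k$ then $\E_x[f^*(x)^q]^{1/q} \lesssim \E_{z}[(1+|z|)^{kq}]^{1/q} \lesssim (kq)^{k/2} \lesssim q^{\ell}$ for $\ell = k/2$, using Gaussian moment bounds. Assumption~\ref{assume:act} holds with $\sigma_1=\mathrm{ReLU}$ and $\sigma_2(z)=z$ linear (so $\alpha_\sigma=1$). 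Assumption~\ref{assume:kernel_f_moment} is the easiest: $\mathbb{K}f^*$ is a linear function $\frac{c_1}{d}(w^*\cdot x)$, and for Gaussian $x$, $\norm{w^*\cdot x}_{L^q} = \Theta(\sqrt q)\norm{w^*\cdot x}_{L^2}$, so $\chi = 1/2$ works. I also need to normalize: Theorem~\ref{thm:single_feature_informal} assumes $\E[f^*(x)^2]\le 1$, which I can arrange by rescaling $g^*$ (the conclusion scales homogeneously), or absorb into constants.

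For the univariate function: $\bar\eta\cdot\mathbb{K}f^* = \frac{\bar\eta c_1}{d}(w^*\cdot x)$, where $\bar\eta$ is chosen (as in the proof sketch) so that $|\bar\eta\cdot\mathbb{K}f^*|\le 1$ on the relevant high-probability region — concretely $\bar\eta \asymp d/(|c_1|\sqrt{\log(nd)})$ so that $\frac{\bar\eta c_1}{d}|w^*\cdot x|\le 1$ whp. Then I want $q:[-1,1]\to\mathbb{R}$ with $q\!\left(\frac{\bar\eta c_1}{d}\,t\right) \approx g^*(t)$, i.e. $q(s) = g^*\!\left(\frac{d}{\bar\eta c_1}s\right)$; since $g^*,(g^*)',(g^*)''$ are polynomially bounded and the argument ranges over a bounded interval, $\norm{q}_{2,\infty} = \poly(d)$ (polylog, actually, times absolute constants, since the rescaling factor $d/(\bar\eta c_1)\asymp\sqrt{\log(nd)}$). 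With this choice the first term $\norm{q\circ(\bar\eta\mathbb{K}f^*) - f^*}_{L^2}^2$ is, up to the whp truncation of the Gaussian tail, exactly zero; the tail contribution is $\tilde O(1/\poly(d))$ and absorbed. The remaining terms $\norm{q}_{2,\infty}^2/m_1 + (\norm{q}_{2,\infty}^2+1)/\sqrt n$ become $\tilde O(1/m_1 + 1/\sqrt n) = \tilde O(1/\sqrt n)$ under $n,m_1 = \tilde\Omega(d^2)$, and the feature-learning sample term is $\tilde O(d^2/\min(n,m_1,m_2))$, yielding the claimed bound.

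The main obstacle — really the only nontrivial point — is the careful bookkeeping around the interaction between the high-probability truncation region for $w^*\cdot x$ and the Sobolev-norm blowup of $q$: making $\bar\eta$ larger shrinks $\norm{q}_{2,\infty}$ but risks $|\bar\eta\mathbb{K}f^*|$ exceeding $1$ on a larger-probability set, so one must verify that with the right polylogarithmic choice of $\bar\eta$ both the approximation error (from the Gaussian tail where $q$ is not defined/accurate) and $\norm{q}_{2,\infty}$ stay at the $\tilde O(1)$ / $\poly\log$ scale simultaneously. Everything else is a mechanical substitution into Theorem~\ref{thm:single_feature_informal} together with Stein's lemma and standard Gaussian moment estimates.
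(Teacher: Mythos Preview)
Your proposal is correct and follows essentially the same approach as the paper: apply the general Theorem~\ref{thm:single_feature_informal}, compute $\mathbb{K}f^*=\frac{c_1}{d}(w^*\cdot x)$ via Stein's lemma, verify the four assumptions (with $\ell=\alpha_g/2$ and $\chi=1/2$), and choose $q(z)=g^*\!\bigl(\bar\eta^{-1}\tfrac{d}{c_1}z\bigr)$ so the approximation term vanishes and $\norm{q}_{2,\infty}=\poly(\iota)$.

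One small clarification: the ``main obstacle'' you flag --- the tension between the truncation region and the Sobolev-norm blowup --- is not actually an issue here. Since $g^*$ is defined on all of $\mathbb{R}$, so is $q$, and hence $q\circ(\bar\eta\,\mathbb{K}f^*)=f^*$ holds \emph{identically}; the norm $\norm{q}_{2,\infty}$ in Theorem~\ref{thm:single_feature_informal} is taken only over $[-1,1]$, and with $\bar\eta=\Theta(\iota^{-1/2}\norm{\mathbb{K}f^*}_{L^2}^{-1})$ the rescaling factor $\bar\eta^{-1}d/c_1=\Theta(\iota^{1/2})$ is polylogarithmic, giving $\norm{q}_{2,\infty}=\poly(\iota)$ directly from the polynomial bounds on $g^*,(g^*)',(g^*)''$. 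No tail-probability bookkeeping is needed for the first term.
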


    Given widths $m_1 = \tilde\Theta(1), m_2 = \tilde \Theta(d)$, a sample size of $n = \tilde\Theta(d)$ suffices to learn $f^*$. This sample complexity improves on that of kernel methods, which require $d^p$ samples when $g^*$ is a degree $p$ polynomial, and matches existing guarantees for learning single-index models with two-layer neural networks \citep{soltanolkotabi2017, mei2018, benarousSGD, bruna2022singleindex, damian2022representations}. 


    \Cref{thm:linear_feature} is proved in \Cref{app:single_index}; a brief sketch is as follows. Since $\sigma_2(z) = z$, the kernel is $K(x, x') = \mathbb{E}_v\qty[(x \cdot v)(x' \cdot v)] = \frac{x \cdot x'}{d}$. By an application of Stein's Lemma, the learned feature is
    \begin{align}
        (\mathbb{K}f^*)(x) = \frac1d\mathbb{E}_{x'}\qty[x \cdot x'f^*(x)] = \frac1d x^T\mathbb{E}_{x'}\qty[\nabla f^*(x')]
    \end{align}
    Since $f^*(x) = g^*(w^* \cdot x)$, $\nabla f^*(x) = w^*(g^*)'(w^*\cdot x)$, and thus
    \begin{align}
        (\mathbb{K}f^*)(x) = \frac{1}{d}\mathbb{E}_{z\sim\mathcal{N}(0, 1)}\qty[(g^*)'(z)]w^* \cdot x \propto \frac{1}{d}w^* \cdot x.
    \end{align}
    Finally, we note that $\norm{\K}_{op} = d^{-1}$. The learned feature is proportional to the true feature, so an appropriate choice of $\bar\eta$ and choosing $q = g^*$ in \Cref{thm:single_feature_informal} implies that $\norm{\K}_{op}\norm{\mathbb{K}f^*}_{L^2}^{-2} = d^{-1}\cdot d^{2} = d$ samples are needed to learn $f^*$.

\subsection{Functions of quadratic features}\label{sec:quad_features_theorem}

    The next example shows how three-layer networks can learn nonlinear features, and thus obtain a sample complexity improvement over two-layer networks.
    
    Let $\mathcal{X}_d = \mathcal{S}^{d-1}(\sqrt{d})$, the sphere of radius $\sqrt{d}$, and $\nu$ the uniform measure on $\mathcal{X}_d$. The integral operator $\mathbb{K}$ has been well studied \citep{montanari2020,montanari2021, mei2022}, and its eigenfunctions correspond to the spherical harmonics. Preliminaries on spherical harmonics and this eigendecomposition are given in \Cref{app:spherical}.

    Consider the target $f^*(x) = g^*(x^TAx)$, where $A \in \mathbb{R}^{d \times d}$ is a symmetric matrix and $g^* : \mathbb{R} \rightarrow \mathbb{R}$ is an unknown link function. In contrast to a single-index model, the feature $x^TAx$ we aim to learn is a \emph{quadratic} function of $x$. Since one can write $x^TAx = x^T\qty(A - \Tr(A)\cdot\frac{I}{d})x + \Tr(A)$, we without loss of generality assume $\Tr(A) = 0$. We also select the normalization $\norm{A}^2_F = \frac{d+2}{2d} = \Theta(1)$; this ensures that $\E_{x \sim \nu}[(x^TAx)^2] = 1$. We first make the following assumptions on the target function.:
    \begin{assumption}\label{assume:quad_feature_fn} $\mathbb{E}_x\qty[f^*(x)] = 0$, $\mathbb{E}_{z \sim \mathcal{N}(0, 1)}\qty[(g^*)'(z)] = \Theta(1)$, $g^*$ is $1$-Lipschitz, and $(g^*)^{\prime\prime}$ has polynomial growth.
    \end{assumption}
    The first assumption can be achieved via a preprocessing step which subtracts the mean of $f^*$, the second is a nondegeneracy condition, and the last two assume the target is sufficiently smooth.

    We next require the eigenvalues of $A$ to satisfy an incoherence condition:

    \begin{assumption}\label{assume:quad_feature_kappa} Define $\kappa := \norm{A}_{op}\sqrt{d}$. Then $\kappa = o(\sqrt{d}/\text{polylog}(d))$.
    \end{assumption}

    Note that $\kappa \le \sqrt{d}$. If $A$ has rank $\Theta(d)$ and condition number $\Theta(1)$, then $\kappa = \Theta(1)$. Furthermore, when the entries of $A$ are sampled i.i.d, $\kappa = \tilde \Theta(1)$ with high probability by Wigner's semicircle law.

    Finally, we make the following nondegeneracy assumption on the Gegenbauer decomposition of $\sigma_2$ (defined in \Cref{app:spherical}). We show that $\lambda_2^2(\sigma_2) = O(d^{-2})$, and later argue that the following assumption is mild and indeed satisfied by standard activations such as  $\sigma_2 = \mathrm{ReLU}$.
    \begin{assumption}\label{assume:quad_geg}
        Let $\lambda_2(\sigma_2)$ be the 2nd Gegenbauer coefficient of $\sigma_2$. Then $\lambda_2^2(\sigma_2) = \Theta(d^{-2})$.
    \end{assumption}

    We can verify \Cref{assume:x_subg,assume:f_moment,assume:act,assume:kernel_f_moment} hold for this setting, and thus applying \Cref{thm:single_feature_informal} yields the following:
    
    \begin{theorem}\label{thm:quad_feature}
        Under \Cref{assume:quad_feature_fn,assume:quad_feature_kappa,assume:quad_geg}, with high probability \Cref{alg:two} satisfies the population loss bound
        \begin{align}
        \mathbb{E}_x\qty[\qty(f(x; \hat \theta) - f^*(x))^2] \lesssim \tilde O\qty(\frac{d^4}{n} + \frac{d^2}{m_2} + \frac{1}{m_1} + \frac{1}{\sqrt{n}} + \qty(\frac{\kappa}{\sqrt{d}})^{1/3})
        \end{align}
    \end{theorem}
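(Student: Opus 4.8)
The plan is to derive \Cref{thm:quad_feature} by instantiating \Cref{thm:single_feature_informal}. The routine part is checking the standing hypotheses for this setting: the uniform law on $\mathcal{S}^{d-1}(\sqrt{d})$ is $O(1)$-subGaussian, giving \Cref{assume:x_subg}; since $g^*$ is $1$-Lipschitz and $x^TAx$ is a quadratic with $\E_x[(x^TAx)^2]=1$, the moments of $f^*=g^*(x^TAx)$ grow polynomially, giving \Cref{assume:f_moment}; \Cref{assume:act} is immediate; and \Cref{assume:kernel_f_moment} will drop out of the description of $\K f^*$ below. The substance is twofold: (i) identify the learned feature $\K f^*$ and show it is essentially proportional to $x^TAx$, and (ii) produce $q\in\mathcal{W}^{2,\infty}([-1,1])$ for which the three terms of \eqref{eq:pop_risk} evaluate to the claimed bound.

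For (i), I would diagonalize $\K$ in the spherical harmonics: it acts on the degree-$k$ harmonic subspace by the scalar $\lambda_k^2(\sigma_2)$, which by \Cref{assume:quad_geg} equals $\Theta(d^{-2})$ for $k=2$. Decomposing $f^*=\sum_k P_k f^*$ into harmonic components, the degree-$0$ piece drops since $\E_x[f^*]=0$, all odd-degree pieces vanish because $f^*$ is even in $x$, and the degree-$k$ pieces with $k\ge 4$ contribute only $O(d^{-k})$ in $L^2$ to $\K f^*$ -- negligible beside degree $2$. Thus $\K f^* = \lambda_2^2(\sigma_2)\,P_2 f^* + (\text{lower order})$, and it remains to compute $P_2 f^*$. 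Here universality enters: because $\kappa=\norm{A}_{op}\sqrt{d}=o(\sqrt{d})$, the quadratic form $x^TAx$ is close -- in distribution and in all bounded moments -- to a standard Gaussian, so the degree-$2$ harmonic component of $g^*(x^TAx)$ is governed by the first Hermite coefficient of $g^*$, namely $\E_{z\sim\N(0,1)}[(g^*)'(z)]=\Theta(1)$ (\Cref{assume:quad_feature_fn}). The only competing degree-$2$ directions are multiples of $x^TA^jx$, $j\ge 2$, whose $L^2$ norms are at most $\norm{A^j}_F\le\norm{A}_{op}^{j-1}\norm{A}_F = \Theta((\kappa/\sqrt{d})^{j-1})$, hence contribute an error $O(\kappa/\sqrt{d})$. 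Collecting, $\K f^* = c\cdot x^TAx + \mathrm{err}$ with $c=\Theta(d^{-2})$ and $\norm{\mathrm{err}}_{L^2}/\norm{c\cdot x^TAx}_{L^2}$ a small power of $\kappa/\sqrt{d}$; in particular $\norm{\K f^*}_{L^2}=\Theta(d^{-2})$, so $\norm{\K f^*}_{L^2}^{-2}=\Theta(d^4)$, which produces the $d^4/\min(n,m_1,m_2)$ scaling and, together with the standing $\tilde\Omega$ hypothesis and the structure of $\K f^*$, discharges \Cref{assume:kernel_f_moment} and the width/sample lower bounds.

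For (ii), since $g^*$ is merely Lipschitz I would truncate the feature to $\abs{x^TAx}\le R$ with $R=\tilde\Theta(1)$ -- the tail has super-polynomially small probability by Hanson--Wright-type concentration (using $\kappa=o(\sqrt{d})$), and $f^*$ has all moments, so the truncated region carries all but an $o(1)$ fraction of the $L^2$ mass -- and mollify $g^*$ at scale $\delta$, producing a $C^2$ function agreeing with $g^*$ to $O(\delta)$ on $[-R,R]$ with second derivative $O(1/\delta)$. Choosing $\bar\eta$ so that $\bar\eta\,\K f^*$ lands in $[-1,1]$ on the truncated region ($\bar\eta c R\asymp 1$) and taking $q$ to be the corresponding clamped, mollified rescaling of $g^*$, the three terms of \eqref{eq:pop_risk} become: a feature-accuracy term $\lesssim$ (relative feature error from (i))$^2 + \delta^2 + (\text{negligible tail})$, using that $q$ is $O(R)$-Lipschitz; a feature-complexity term $\lesssim \norm{q}_{1,\infty}^2\,d^4/\min(n,m_1,m_2)=\tilde O(d^4/\min(n,m_1,m_2))$ since $\norm{q}_{1,\infty}\lesssim R=\tilde\Theta(1)$; and complexity-of-$q$ terms $\lesssim \norm{q}_{2,\infty}^2(m_1^{-1}+n^{-1/2})+n^{-1/2}\lesssim \delta^{-2}(m_1^{-1}+n^{-1/2})+n^{-1/2}$. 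The last group is absorbed into $d^4/\min(n,m_1,m_2)+n^{-1/2}$ via $n,m_1,m_2=\tilde\Omega(d^4)$ once $\delta$ is not too small, and the remaining $\kappa$-dependence is optimized -- balancing the feature-approximation error against the truncation radius and the smoothness parameter $\delta$ -- to the stated $(\kappa/\sqrt{d})^{1/3}$.

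The main obstacle is the universality step of (i): showing that the degree-$2$ part of $g^*(x^TAx)$ is genuinely \emph{proportional} to $x^TAx$, rather than merely \emph{some} degree-$2$ polynomial in $x$, with an explicit rate in $\kappa/\sqrt{d}$ and while propagating through a non-smooth link $g^*$. In the single-index warmup (\Cref{thm:linear_feature}) Stein's lemma makes $\K f^*\propto w^*\cdot x$ an exact identity; here there is no exact identity, and one must quantitatively compare the spherical law of a quadratic form to a Gaussian, use \Cref{assume:quad_feature_kappa} to suppress the spurious $x^TA^jx$ directions, and control how the mollification scale interacts with the comparison error -- this is the ingredient the introduction flags as potentially of independent interest. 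Everything else, including verifying \Cref{assume:kernel_f_moment} for $\K f^*$, reduces to the description $\K f^*\approx c\cdot x^TAx$ plus standard moment bounds for quadratic forms on the sphere.
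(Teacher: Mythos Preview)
Your overall architecture matches the paper: verify the standing assumptions, reduce $\K f^*$ to $\lambda_2^2(\sigma_2)\,P_2f^*$ via the spherical-harmonic diagonalization, show $P_2f^*\approx c_1\,x^TAx$ by universality, and instantiate \Cref{thm:single_feature_informal}. But step~(i) has a real gap. The claim that ``the only competing degree-$2$ directions are multiples of $x^TA^jx$'' with $O(1)$ coefficients is exactly the \emph{polynomial}-$g^*$ argument (which the paper proves separately by an explicit algebraic expansion, \Cref{lem:counting,lem:feature_almost_a}, yielding the sharper $\kappa/\sqrt d$ floor of \Cref{thm:quad_feature_poly}); for merely Lipschitz $g^*$ it is not available. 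Symmetry does force $T_2$ (with $P_2f^*=x^TT_2x$) to be a function of $A$ via spectral calculus, but there is no a~priori control on the coefficients of that function, so summing $\|A^j\|_F$ does not bound $\|T_2-c_1A\|_F$. What must actually be shown is that $\E_x[g^*(x^TAx)\,x^TBx]$ is small for \emph{every} traceless $B$ with $\langle A,B\rangle=0$. Joint universality of $(x^TAx,x^TBx)$ handles $B$ with small $\|B\|_{op}$ (\Cref{lem:dense_implies_small}), but fails when $B$ is, say, rank one. The paper's resolution (\Cref{lem:orthogonal_implies_small}) decomposes an arbitrary $B$ into a low-rank part---handled by a second-order perturbation argument reducing to $\E[g^*(x^TAx)(u^Tx)^2]\approx\E_z[g^*(z)]$, \Cref{lem:sparse_implies_small}---plus a small-operator-norm remainder, and optimizes the rank threshold. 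That split is the source of the exponent: it gives $\|P_2f^*-c_1\,x^TAx\|_{L^2}\lesssim \|A\|_{op}^{1/6}\log d$ (\Cref{lem:p2}), which when squared is the $(\kappa/\sqrt d)^{1/3}$ floor. Your final ``balance'' cannot produce this without the decomposition.

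On~(ii), mollifying $g^*$ is unnecessary here: \Cref{assume:quad_feature_fn} already grants $(g^*)^{\prime\prime}$ polynomial growth, so $g^*\in C^2$ and the paper simply takes $q(z)=g^*\big(\bar\eta^{-1}\|\K f^*\|_{L^2}^{-1}\,z\big)$ with $\|q\|_{2,\infty}=\poly(\iota)$ directly. The smoothing device you describe is what the paper uses in \Cref{thm:LB} for $g^*=\mathrm{ReLU}$, not in \Cref{thm:quad_feature}; in particular the $1/3$ exponent is not a $\delta$-tradeoff but comes entirely from the low-rank/dense split in~(i).
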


    We thus require sample size $n = \tilde \Theta(d^4)$ and widths $m_1 = \tilde \Theta(1), m_2 = \tilde \Theta(d^2)$ to obtain $o_d(1)$ test loss.

    Under a stronger assumption on the activation $\sigma_2$, we can show that $n = \tilde \Theta(d^2)$ samples suffice to obtain $o_d(1)$ test loss.

    \begin{assumption}\label{assume:sigma2-no-linear-const} $\sigma_2$ has no constant or linear term; that is, the zeroth and first Gegenbauer coefficients of $\sigma_2$ satisfy $\lambda_0(\sigma_2) = \lambda_1(\sigma_2) = 0$.
    \end{assumption}

    \begin{corollary}\label{cor:better-sample-complexity}
            Under \Cref{assume:quad_feature_fn,assume:quad_feature_kappa,assume:quad_geg,assume:sigma2-no-linear-const}, with high probability \Cref{alg:two} satisfies the population loss bound
        \begin{align}
        \mathbb{E}_x\qty[\qty(f(x; \hat \theta) - f^*(x))^2] \lesssim \tilde O\qty(\frac{d^2}{n} + \frac{d^2}{m_2} + \frac{1}{m_1} + \frac{1}{\sqrt{n}} + \qty(\frac{\kappa}{\sqrt{d}})^{1/3})
        \end{align}
    \end{corollary}
    

    \paragraph{Proof Sketch.} The integral operator $\mathbb{K}$ has eigenspaces corresponding to spherical harmonics of degree $k$. In particular, \cite{montanari2021} shows that, in $L^2$,
    \begin{align}
        \mathbb{K}f^* = \sum_{k\ge 0} c_k P_kf^*,
    \end{align}
    where $P_k$ is the orthogonal projection onto the subspace of degree $k$ spherical harmonics, and the $c_k$ are constants satisfying $c_k = O(d^{-k})$. Since $f^*$ is an even function and $\mathbb{E}_x\qty[f^*(x)] = 0$, truncating this expansion at $k = 2$ yields
    \begin{align}
        \mathbb{K}f^* = \Theta(d^{-2})\cdot P_2f^* + O(d^{-4}),
    \end{align}
    It thus suffices to compute $P_2f^*$. To do so, we draw a connection to the universality phenomenon in high dimensional probability. Consider two features $x^TAx$ and $x^TBx$ with $\langle A, B \rangle = 0$. We show that, when $d$ is large, the distribution of $x^TAx$ approaches that of the standard Gaussian, while $x^TBx$ approaches a mixture of $\chi^2$ and Gaussian random variables independent of $x^TAx$. As such, we show
    \begin{align}
        \E_x\qty[g^*(x^TAx)x^TBx] &\approx \E_x\qty[g^*(x^TAx)]\cdot \E_x\qty[x^TBx] = 0\\
        \E_x\qty[g^*(x^TAx)x^TAx] &\approx \E_{z \sim \mathcal{N}(0, 1)}\qty[g^*(z)z] = \E_{z \sim \mathcal{N}(0, 1)}\qty[(g^*)'(z)].
    \end{align}
    The second expression can be viewed as an approximate version of Stein's lemma, which was applied in \Cref{sec:single_index_theorem} to compute the learned feature. Altogether, our key technical result (stated formally in \Cref{lem:p2}) is that for $f^* = g^*(x^TAx)$, the projection $P_2f^*$ satisfies
    \begin{align}
        (P_2f^*)(x) = \E_{z \sim \mathcal{N}(0, 1)}[(g^*)'(z)]\cdot x^TAx + o_d(1)
    \end{align}
    The learned feature is thus $\mathbb{K}f^*(x) = \Theta(d^{-2})\cdot (x^TAx + o_d(1))$; plugging this into \Cref{thm:single_feature_informal} yields the $d^4$ sample complexity.

    To obtain the improved sample complexity in \Cref{cor:better-sample-complexity}, we remark that $\K$ has as eigenvalues $\{c_k\}_{k \ge 0}$. Under \Cref{assume:sigma2-no-linear-const}, we have that $c_0 = c_1 = 0$, and thus $\norm{\K}_{op} = c_2 \lesssim d^{-2}$. 

    The full proofs of \Cref{thm:quad_feature} and \Cref{cor:better-sample-complexity} are deferred to \Cref{sec:quad_feature_proof}. In \Cref{app:quad_feature_poly} we show that when $g^*$ is a degree $p = O(1)$ polynomial, \Cref{alg:two} learns $f^*$ in $\tilde O(d^4)$ samples with an improved error floor.

    \paragraph{Comparison to two-layer networks.} Existing guarantees for two-layer networks cannot efficiently learn functions of the form $f^*(x) = g^*(x^TAx)$ for arbitrary Lipschitz $g^*$. In fact, in \Cref{sec:depth_separation} we provide an explicit lower bound against two-layer networks efficiently learning a subclass of these functions. When $g^*$ is a degree $p$ polynomial, networks in the kernel regime require $d^{2p} \gg d^4$ samples to learn $f^*$ \citep{montanari2021}. Improved guarantees for two-layer networks learn degree $p'$ polynomials in $r^{p'}$ samples when the target only depends on a rank $r \ll d$ projection of the input \citep{damian2022representations}. However, $g^*(x^TAx)$ cannot be written in this form for some $r \ll d$, and thus existing guarantees do not apply. We conjecture that two-layer networks require $d^{\Omega(p)}$ samples when $g^*$ is a degree $p$ polynomial.
    
    Altogether, the ability of three-layer networks to efficiently learn the  class of functions $f^*(x) = g^*(x^TAx)$ hinges on their ability to extract the correct \emph{nonlinear} feature. Empirical validation of the above examples is given in \Cref{sec:experiments}.
\section{An Optimization-Based Depth Separation}\label{sec:depth_separation}

We complement the learning guarantee in \Cref{sec:quad_features_theorem} with a lower bound showing that there exist functions in this class that cannot be approximated by a polynomial size two-layer network.


The class of candidate two-layer networks is as follows. For a parameter vector $\theta = (a, W, b_1, b_2)$, where $a \in \mathbb{R}^m, W \in \mathbb{R}^{m \times d}, b_1 \in \mathbb{R}^{m}, b_2 \in \mathbb{R}$, define the associated two-layer network as
\begin{align}
    N_\theta(x) := a^T\sigma(Wx + b_1) + b_2 = \sum_{i=1}^m a_i\sigma(w_i^Tx + b_{1, i}) + b_2.
\end{align}
Let $\norm{\theta}_{\infty} := \max(\norm{a}_\infty, \norm{W}_\infty, \norm{b_1}_\infty, \norm{b_2}_\infty)$ denote the maximum parameter value. We make the following assumption on $\sigma$, which holds for all commonly-used activations.
\begin{assumption}\label{assume:sigma_poly} There exist constants $C_\sigma, \alpha_\sigma$ such that $\abs{\sigma(z)} \le C_\sigma\qty(1 + \abs{z})^{\alpha_\sigma}$.
\end{assumption}

Our main theorem establishing the separation is the following.
\begin{theorem}\label{thm:LB}
    Let $d$ be a suffiently large even integer. Consider the target function $f^*(x) = \mathrm{ReLU}(x^TAx) - c_0$, where $A = \frac{1}{\sqrt{d}}U\begin{pmatrix} 0 & I_{d/2} \\ I_{d/2} & 0 \end{pmatrix}U^T$ for some orthogonal matrix $U$ and $c_0 = \E_{x\sim\nu}\qty[\mathrm{ReLU}(x^TAx)]$. Under \Cref{assume:sigma_poly}, there exist constants $C_1, C_2, C_3, c_3$, depending only on $(C_\sigma, \alpha_\sigma)$, such that for any $c_3 \ge \epsilon \ge C_3d^{-2}$, any two layer neural network $N_\theta(x)$ of width $m$ and population $L^2$ error bound $\norm{N_\theta - f^*}_{L^2}^2 \le \epsilon$ must satisfy $\max(m, \norm{\theta}_{\infty}) \ge C_1\exp\qty(C_2 \epsilon^{-1/2}\log(d\epsilon))$. However, Algorithm 1 with $m_1 = \tilde \Theta(d^{2/3}), m_2 = \tilde \Theta(d^{13/6})$ outputs a predictor satisfying the population $L^2$ loss bound
    \begin{align}
        \E_x\qty[\qty(f(x; \hat \theta) - f^*(x))^2] \lesssim O\qty(\frac{d^4}{n} + \sqrt{\frac{d}{n}} + d^{-1/6}).
    \end{align}
    after $T = \mathrm{poly}(d, m_1, m_2, n)$ timesteps.
\end{theorem}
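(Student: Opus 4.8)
The plan is to prove the two halves of the statement separately: the lower bound against two-layer networks, and the upper bound for Algorithm~1, which is essentially an application of Theorem~\ref{thm:quad_feature} after verifying the activation assumption. I would begin with the easier half. For the upper bound, I would check that $f^*(x) = \mathrm{ReLU}(x^TAx) - c_0$ with the prescribed $A$ satisfies Assumptions~\ref{assume:quad_feature_fn}, \ref{assume:quad_feature_kappa}, and \ref{assume:quad_geg}. Since $A = \frac{1}{\sqrt d}U\begin{pmatrix}0 & I_{d/2}\\ I_{d/2} & 0\end{pmatrix}U^T$, its eigenvalues are $\pm 1/\sqrt d$, so $\Tr(A)=0$, $\|A\|_F^2 = d \cdot (1/\sqrt d)^2 = 1$ (which I'd need to reconcile with the $(d+2)/(2d)$ normalization by rescaling $A$ slightly, or by noting the theorem statement already fixes $A$ and the normalization constants absorb the $\Theta(1)$ discrepancy), and $\kappa = \|A\|_{op}\sqrt d = 1 = o(\sqrt d)$, so Assumption~\ref{assume:quad_feature_kappa} holds with $\kappa = \Theta(1)$. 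After centering by $c_0$, $\E_x[f^*(x)]=0$; $g^*(z) = \mathrm{ReLU}(z)$ is $1$-Lipschitz with $\E_{z\sim\mathcal N(0,1)}[(g^*)'(z)] = \P(z\ge 0) = 1/2 = \Theta(1)$ (the second-derivative-growth condition holds in the distributional/mollified sense used elsewhere in the paper for ReLU). Assumption~\ref{assume:quad_geg} on $\lambda_2(\sigma_2)$ is stated in the paper to hold for $\sigma_2 = \mathrm{ReLU}$. Then Theorem~\ref{thm:quad_feature} gives exactly the bound $\tilde O(d^4/\min(n,m_1,m_2) + 1/\sqrt n + (\kappa/\sqrt d)^{1/3})$ with $\kappa = \Theta(1)$; taking $m_1, m_2 = \mathrm{poly}(d,n)$ large enough and simplifying $(\kappa/\sqrt d)^{1/3} = \Theta(d^{-1/6})$ and $1/\sqrt n \le \sqrt{d/n}$ yields the claimed $O(d^4/n + \sqrt{d/n} + d^{-1/6})$.

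For the lower bound, the approach is an approximation-theoretic argument exploiting the oscillatory/structured nature of $f^*$ restricted to a well-chosen low-dimensional slice. The key idea: with $A$ having the block anti-diagonal form, writing $x = U y$ and splitting $y = (u,v)$ into its first and second halves, we have $x^TAx = \frac{2}{\sqrt d}\, u\cdot v$. Restricting to the two-dimensional family of inputs where $u = r e_1$ (first coordinate block) and $v = s e_1$ (second coordinate block) — or more robustly, integrating against a suitable measure on such a slice — the target becomes $\mathrm{ReLU}(\tfrac{2}{\sqrt d} rs) - c_0$, a genuinely two-variable function with nontrivial curvature along the hyperbola $rs = \text{const}$. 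A two-layer network $N_\theta$ restricted to this slice is a sum of $m$ ridge functions $a_i\sigma(\alpha_i r + \beta_i s + \gamma_i)$. The standard strategy (à la the $L^2$ depth-separation lower bounds of \citep{2016eldan, safran2019, 2017safran}) is to use a Fourier/Hermite-moment argument or a direct variation/oscillation-counting argument: a width-$m$, parameter-bound-$R$ two-layer network restricted to an appropriate curve can only realize functions whose "complexity" (number of oscillations, or decay of Fourier mass) is controlled by $\mathrm{poly}(m, R, d)$, whereas $\mathrm{ReLU}(\tfrac{2}{\sqrt d} rs)$ on a region where $rs$ ranges over an interval of length $\sim d$ oscillates / has curvature on a scale forcing complexity $\exp(\Omega(\epsilon^{-1/2}\log(d\epsilon)))$ to approximate to $L^2$-error $\epsilon$. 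Concretely I would: (i) fix the slice and push forward $\nu$ to get a 2D (or 1D, after a further reduction to the single variable $t = rs$) measure $\mu$; (ii) lower-bound $\|N_\theta - f^*\|_{L^2(\nu)}$ by $c\|N_\theta|_{\text{slice}} - f^*|_{\text{slice}}\|_{L^2(\mu)}$ using that the slice carries non-negligible $\nu$-mass; (iii) bound the approximation power of the restricted network class via a Lipschitz/total-variation estimate — each neuron contributes variation $\lesssim |a_i| \cdot \mathrm{poly}(\|\theta\|_\infty, d)$ by Assumption~\ref{assume:sigma_poly}, so the whole network has variation $\lesssim m\,\mathrm{poly}(\|\theta\|_\infty,d)$ on the relevant range; (iv) lower-bound the intrinsic complexity of $\mathrm{ReLU}(\tfrac{2}{\sqrt d} t)$ needed to get $L^2$ error $\le \epsilon$, which being piecewise-linear with a kink forces a local $L^2$ cost near the kink unless the approximant tracks it, and this — combined with the curvature of the slice parametrization turning the single kink into many kinks in $(r,s)$ — produces the $\exp(C_2\epsilon^{-1/2}\log(d\epsilon))$ bound. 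The lower bound $\epsilon \ge C_3 d^{-2}$ is the regime where this oscillation-counting is nonvacuous (below it, even the trivial constant predictor does comparably well), and the upper constraint $\epsilon \le c_3$ keeps us in the regime where $f^*$ is genuinely hard.

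The main obstacle is step (iv) of the lower bound: quantitatively converting "$f^*$ restricted to the slice is piecewise-linear-but-intricate in the right coordinates" into the precise $\exp(C_2\epsilon^{-1/2}\log(d\epsilon))$ lower bound on $\max(m,\|\theta\|_\infty)$. This requires the right choice of test measure on the slice and a clean reduction so that the curvature of the map $(r,s)\mapsto rs$ at scale $1/\sqrt d$ interacts with the kink of ReLU to manufacture $\sim\epsilon^{-1/2}$-many effective oscillations, while simultaneously the $\mathrm{poly}(d)$ factors only enter the $\log(d\epsilon)$ term and not the exponent's base. I expect the cleanest route is to follow the template of \citep{safran2019} for $\mathrm{ReLU}(1-\|x\|)$ — which is also a radial piecewise-linear function — adapting it by observing that on the slice $x^TAx$ plays the role their radial coordinate plays, but with the extra subtlety that here the "radius-like" variable $\tfrac{2}{\sqrt d}u\cdot v$ has a different (product, not Euclidean-norm) geometry; handling the Jacobian of this change of variables and confirming the pushforward measure has the density lower bounds needed for the Fourier/moment argument is where the real work lies. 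The neural-network-side bound (step (iii)) is routine given Assumption~\ref{assume:sigma_poly}.
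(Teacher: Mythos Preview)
Your upper bound has a real gap. You cannot invoke Theorem~\ref{thm:quad_feature} because Assumption~\ref{assume:quad_feature_fn} requires $(g^*)''$ to have polynomial growth, and $\mathrm{ReLU}'' = \delta_0$ is not even a function; the paper does not treat this ``in the distributional/mollified sense'' anywhere. The paper instead applies Theorem~\ref{thm:single_feature_informal} (more precisely Theorem~\ref{thm:single_feature_formal}) to the smoothed link $\mathrm{ReLU}_\epsilon$ with $\|\mathrm{ReLU}_\epsilon\|_{2,\infty} = \Theta(\epsilon^{-1})$, bounds the smoothing bias $\|\mathrm{ReLU} - \mathrm{ReLU}_\epsilon\|_{L^2}^2 \lesssim \epsilon^3$, and optimizes $\epsilon = d^{-1/4}$. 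The term $\sqrt{\|q\|_{2,\infty}^4/n} \sim \sqrt{\epsilon^{-4}/n} = \sqrt{d/n}$ is precisely the origin of the $\sqrt{d/n}$ in the statement; your route (direct application of Theorem~\ref{thm:quad_feature}) would give $1/\sqrt n$, which is not what the theorem claims and not what the argument supports.

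Your lower bound strategy is the wrong template and will not produce the stated bound. Restricting to the two-dimensional slice $u = r e_1$, $v = s e_1$ gives the target $\mathrm{ReLU}\bigl(\tfrac{2}{\sqrt d}\,rs\bigr)$, which has a \emph{single} kink locus $\{rs=0\}$; there is no mechanism by which ``curvature of the slice parametrization turns the single kink into many kinks.'' The function simply does not oscillate, so variation/oscillation counting in the style of Telgarsky or Safran--Shamir cannot yield an exponential lower bound here. Moreover, any fixed two-dimensional slice has $\nu$-measure zero, so your step~(ii) fails without a thickening that destroys the clean form of the restriction. The paper's argument is genuinely high-dimensional: after writing $x^TAx = \tfrac{2}{\sqrt d}\langle x_1, x_2\rangle$ and conditioning on $\|x_1\|^2 = r$ (so $\bar x_1,\bar x_2$ are independent and uniform on $\mathcal S^{d/2-1}(1)$), $f^*$ becomes an \emph{inner product function} $\bar q(\langle \bar x_1,\bar x_2\rangle)$ while each neuron of $N_\theta$ becomes a \emph{separable function} of $(\bar x_1,\bar x_2)$. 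One then applies Daniely's inequality (\Cref{lem:daniely_LB}) together with an explicit computation of the Gegenbauer coefficients of $\mathrm{ReLU}$ (\Cref{lem:ReLU_geg}), which gives $\|P^{(d/2)}_{\ge 2k}\mathrm{ReLU}\|_{L^2}^2 \gtrsim 1/(k^2 d)$. The exponential comes from $B(d/2,2k)^{1/2} \sim d^{k}$, i.e.\ from the dimension of degree-$k$ spherical harmonics --- exactly the high-dimensional information your two-dimensional reduction discards.
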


The lower bound follows from a modification of the argument in \citep{2017daniely} along with an explicit decomposition of the $\mathrm{ReLU}$ function into spherical harmonics. We remark that the separation applies for any link function $g^*$ whose Gegenbauer coefficients decay sufficiently slow. The upper bound follows from an application of \Cref{thm:single_feature_informal} to a smoothed version of $\mathrm{ReLU}$, since $\mathrm{ReLU}$ is not in $\mathcal{W}^{2, \infty}([-1, 1])$. The full proof of the theorem is given in \Cref{sec:lower_bound_proof}.

\paragraph{Remarks.} In order for a two-layer network to achieve test loss matching the $d^{-1/6}$ error floor, either the width or maximum weight norm of the network must be $\exp(\Omega(d^\delta))$ for some constant $\delta$; this lower bound is superpolynomial in $d$. As a consequence, gradient descent on a poly-width two-layer neural network with stable step size must run for superpolynomially many iterations in order for some weight to grow this large and thus converge to a solution with low test error. Therefore $f^*$ is not learnable via gradient descent in polynomial time. This reduction from a weight norm lower bound to a runtime lower bound is made precise in \cite{safran2022}.

We next describe a specific example of such an $f^*$. Let $S$ be a $d/2$-dimensional subspace of $\mathbb{R}^d$, and let $A = d^{-\frac12}P_S - d^{-\frac12}P_S^\perp$, where $P_S, P^\perp_S$ are projections onto the subspace $S$ and and its orthogonal complement respectively. Then, $f^*(x) = \mathrm{ReLU}\qty(2d^{-\frac12}\norm{P_Sx}^2 - d^{\frac12})$. \citep{ren2023mutlilayer} established an optimization-based separation for the target $\mathrm{ReLU}(1 - \norm{x})$, under a different distribution $\nu$. However, their analysis relies heavily on the rotational symmetry of the target, and they posed the question of learning $\mathrm{ReLU}(1 - \norm{P_Sx})$ for some subspace $S$. Our separation applies to a similar target function, and crucially does not rely on this rotational invariance.

\section{Discussion}

In this work we showed that three-layer networks can both learn nonlinear features and leverage these features to obtain a provable sample complexity improvement over two-layer networks. There are a number of interesting directions for future work. First, can our framework be used to learn hierarchical functions of a larger class of features beyond quadratic functions? Next, since \Cref{thm:single_feature_informal} is general purpose and makes minimal distributional assumptions, it would be interesting to understand if it can be applied to standard empirical datasets such as CIFAR-10, and what the learned feature $\mathbb{K}f^*$ and hierarchical learning correspond to in this setting. Finally, our analysis studies the nonlinear feature learning that arises from a neural representation $\sigma_2(Vx)$ where $V$ is fixed at initialization. This alone was enough to establish a separation between two and three-layer networks. A fascinating question is to understand the additional features that can be learned when both $V$ and $W$ are jointly trained. Such an analysis, however, is incredibly challenging in feature-learning regimes.

\section*{Acknowledgements}
EN acknowledges support from a National Defense Science \& Engineering Graduate Fellowship. 
AD acknowledges support from a NSF Graduate Research Fellowship. EN, AD, and JDL acknowledge support of the ARO under MURI Award W911NF-11-1-0304,  the Sloan Research Fellowship, NSF CCF 2002272, NSF IIS 2107304,  NSF CIF 2212262, ONR Young Investigator Award, and NSF CAREER Award 2144994. The authors would like to thank Itay Safran and Yatin Dandi for helpful discussions.

\bibliographystyle{plainnat}
\bibliography{ref}
\newpage
\appendix

\section{Empirical Validation}\label{sec:experiments}

\begin{figure*}[h!]
    \center
    \includegraphics[width=0.49\textwidth]{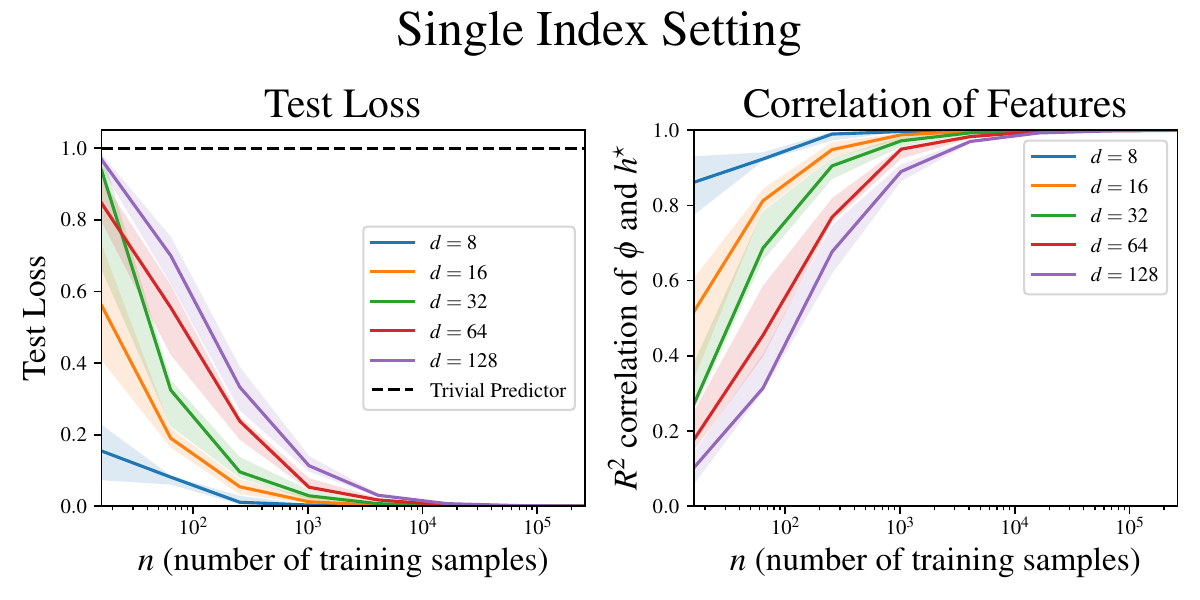}
    \includegraphics[width=0.49\textwidth]{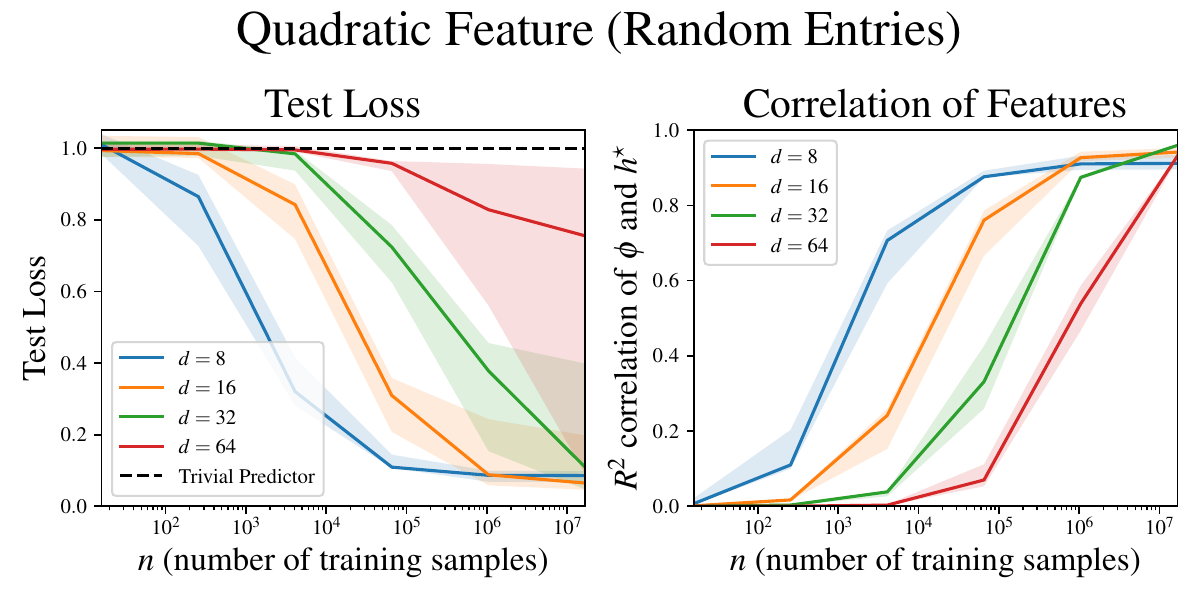}
    \includegraphics[width=0.49\textwidth]{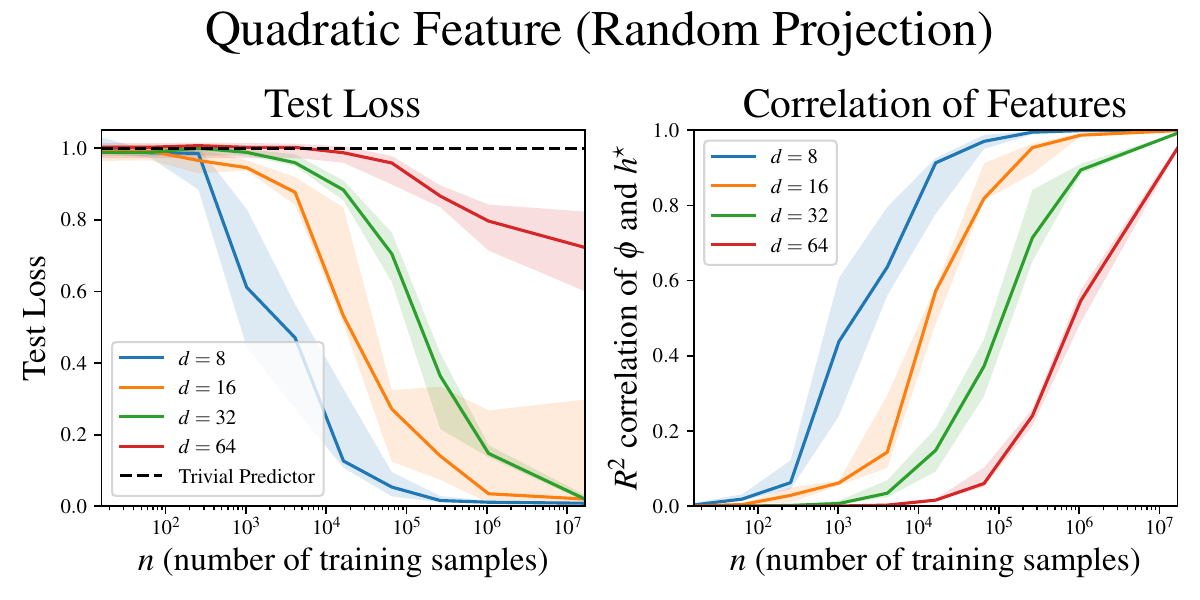}
    \includegraphics[width=0.49\textwidth]{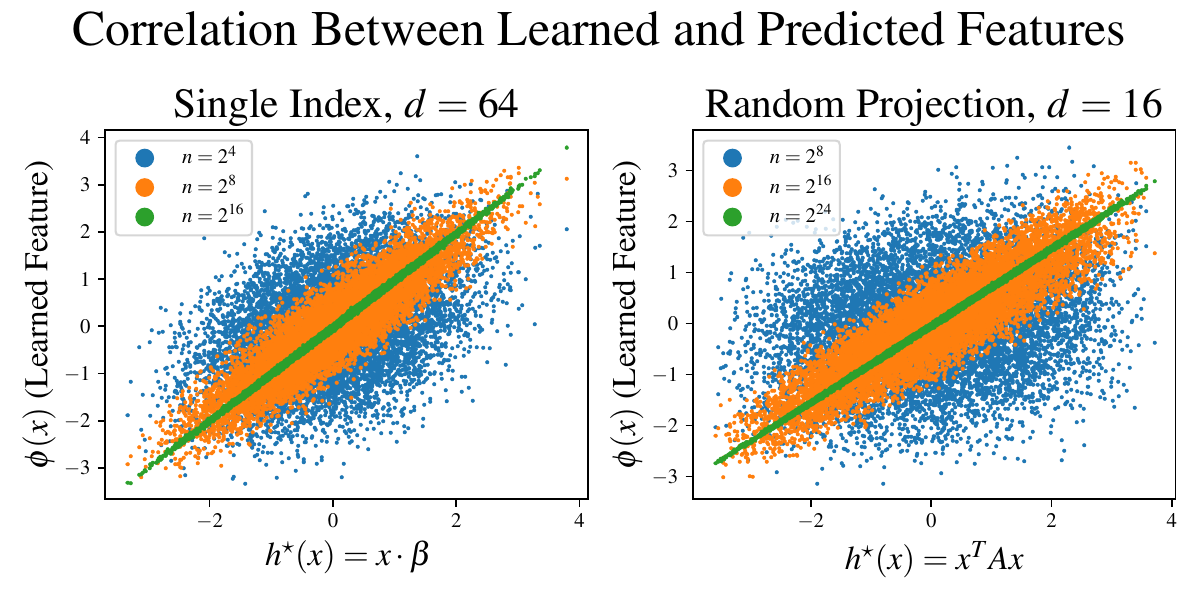}
    \caption{
        We ran \Cref{alg:two} on both the single index and quadratic feature settings described in \Cref{sec:examples}. Each trial was run with 5 random seeds. The solid lines represent the medians and the shaded areas represent the min and max values. For every trial we recorded both the test loss on a test set of size $2^{15}$ and the linear correlation between the learned feature map $\phi(x)$ and the true intermediate feature $h^\star(x)$ where $h^\star(x) = x \cdot \beta$ for the single index setting and $h^\star(x) = x^T A x$ for the quadratic feature setting. Our results show that the test loss goes to $0$ as the linear correlation between the learned feature map $\phi$ and the true intermediate feature $h^\star$ approaches $1$.
    }
    \label{fig:experiment}
\end{figure*}

We empirically verify our conclusions in the single index setting of \Cref{sec:single_index_theorem} and the quadratic feature setting of \Cref{sec:quad_features_theorem}:

\paragraph{Single Index Setting} We learn the target function $g^\star(w^\star \cdot x)$ using \Cref{alg:two} where $w^\star \in S^{d-1}$ is drawn randomly and $g^\star(x) = \mathrm{sigmoid}(x) = \frac{1}{1 + e^{-x}}$, which satisfies the condition $\E_{z \sim \mathcal{N}(0,1)}[g'(z)] \neq 0$. As in \Cref{thm:linear_feature}, we choose the initial activation $\sigma_2(z) = z$. We optimize the hyperparameters $\eta_1,\lambda$ using grid search over a holdout validation set of size $2^{15}$ and report the final error over a test set of size $2^{15}$.

\paragraph{Quadratic Feature Setting} We learn the target function $g^\star(x^T A x)$ using \Cref{alg:two} where $g^\star(x) = \relu(x)$. We ran our experiments with two different choices of $A$:
\begin{itemize}
    \itemsep0em
    \item $A$ is symmetric with random entries, i.e. $A_{ij} \sim N(0,1)$ and $A_{ji} = A_{ij}$ for $i \le j$.
    \item $A$ is a random projection, i.e. $A = \Pi_S$ where $S$ is a random $d/2$ dimensional subspace.
\end{itemize}
Both choices of $A$ were then normalized so that $\tr A = 0$ and $\|A\|_F = 1$ by subtracting the trace and dividing by the Frobenius norm. We chose initial activation $\sigma_2(z) = \relu(z)$. We note that in both examples, $\kappa = \Theta(1)$. As above, we optimize the hyperparameters $\eta_1,\lambda$ using grid search over a holdout validation set of size $2^{15}$ and report the final error over a test set of size $2^{15}$.

To focus on the sample complexity and avoid width-related bottlenecks, we directly simulate the infinite width limit ($m_2 \to \infty$) of \Cref{alg:two} by computing the kernel $\mathbb{K}$ in closed form. Finally, we run each trial with 5 random seeds and report the min, median, and max values in \Cref{fig:experiment}.

\paragraph{Comparison Between Two and Three-Layer Networks}

We also show that the sample complexity separation between two and three layer networks persists in standard training settings. In \Cref{fig:experiment2}, we train both a two and three-layer neural network on the target $f^*(x) = \relu(x^TAx)$, where $A$ is symmetric with random entries, as described above. Both networks are initialized using the $\mu$P parameterization~\cite{tensorprograms4} and are trained using SGD with momentum on all layers simultaneously. The input dimension is $d = 32$, and the widths are chosen to be $256$ for the three-layer network and $2048$ for the two-layer network, so that the parameter counts are approximately equal. \Cref{fig:experiment2} plots the average test loss over 5 random seeds against sample size; here, we see that the three-layer network has a better sample complexity than the two-layer network.

\begin{figure*}[h!]
    \center
    \includegraphics[width=0.49\textwidth]{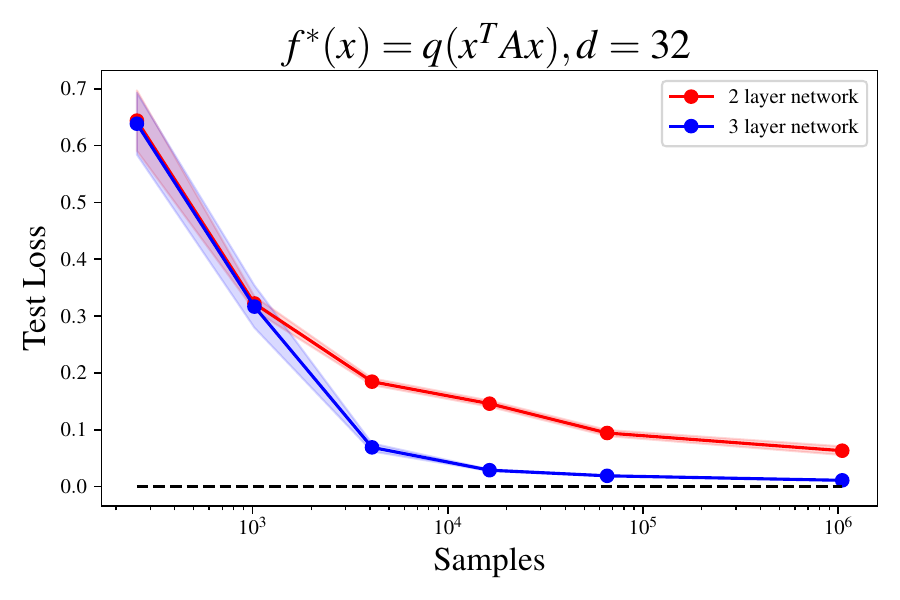}
    \caption{
        Three-layer neural networks learn the target $f^*(x) = \relu(x^TAx)$ with better sample complexity than two-layer networks.
    }
    \label{fig:experiment2}
\end{figure*}

\paragraph{Experimental Details.} Our experiments were written in JAX \citep{jax2018github}, and were run on a single NVIDIA RTX A6000 GPU.

\section{Notation}

\subsection{Asymptotic Notation}

Throughout the proof we will let $C$ be a fixed but sufficiently large constant.
\begin{definition}[high probability events]
    Let $\iota = C \log(dnm_1m_2)$. We say that an event happens \emph{with high probability} if it happens with probability at least $1-\poly(d,n,m_1,m_2)e^{-\iota}$.
\end{definition}
\begin{example}
	If $z \sim N(0,1)$ then $\abs{z} \le \sqrt{2\iota}$ with high probability.
\end{example}

Note that high probability events are closed under union bounds over sets of size $\poly(d,n,m_1,m_2)$. We will also assume throughout that $\iota \le C^{-1} d$.

\subsection{Tensor Notation}

For a $k$-tensor $T \in (\mathbb{R}^d)^{\otimes k}$, let $\sym$ be the symmetrization of $T$ across all $k$ axes, i.e
\begin{align*}
\sym(T)_{i_1,\cdots, i_k} = \frac{1}{k!}\sum_{\sigma \in S_k} T_{i_{\sigma(1)},\cdots,i_{\sigma(k)}}
\end{align*}
Next, given tensors $A \in (\mathbb{R}^d)^{\otimes a}, B \in (\mathbb{R}^d)^{\otimes b}$, we let $A \otimes B \in (\mathbb{R}^d)^{\otimes a + b}$ be their tensor product.

\begin{definition}[Symmetric tensor product]
Given tensors $A \in (\mathbb{R}^d)^{\otimes a}, B \in (\mathbb{R}^d)^{\otimes b}$, we define their symmetric tensor product $A \tilde \otimes B \in (\mathbb{R}^d)^{\otimes a + b}$ as
\begin{align*}
A \tilde \otimes B := \sym\qty(A \otimes B).
\end{align*}
We note that $\tilde \otimes$ satisfies associativity.
\end{definition}

\begin{definition}[Tensor contraction]
Given a symmetric $k$-tensor $T \in (\mathbb{R}^d)^{\otimes k}$ and an $a$-tensor $A\in (\mathbb{R}^d)^{\otimes a}$, where $k \ge a$, we define the tensor contraction $T(A)$ to be the $k - a$ tensor given by
\begin{align*}
T(A)_{i_1, \dots, i_{k-a}} = \sum_{(j_1, \dots, j_a) \in [d]^a} T_{j_1, \dots, j_a, i_1, \dots, i_{k-a}}A_{j_1, \dots, j_a}.
\end{align*}
When $A$ is also a $k$-tensor, then $\langle T, A\rangle := T(A) \in \mathbb{R}$ denotes their Euclidean inner product. We further define $\norm{T}_F^2 := \langle T, T\rangle$.
\end{definition}
\section{Univariate Approximation}\label{sec:univariate_approx}

Throughout this section, let $\mu(b) := \frac{\exp(-b^2/2)}{\sqrt{2\pi}}$ denote the PDF of a standard Gaussian.

\begin{lemma}\label{lem:univariate_1}
    Let $a \sim \unif(\set{-1,1})$ and let $b \sim N(0,1)$. Then there exists $v(a,b)$ supported on $\set{-1,1} \times [0,2]$ such that for any $\abs{x} \le 1$,
	\begin{align*}
		\E_{a,b}[v(a,b)\sigma(ax+b)] = 1\qand \sup_{a,b} \abs{v(a,b)} \lesssim 1.
	\end{align*}
\end{lemma}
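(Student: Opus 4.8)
The plan is to explicitly construct the weight function $v(a,b)$ by representing the constant function $1$ on $[-1,1]$ as an integral of ReLU atoms against the Gaussian initialization measure. The starting point is the elementary identity $\sigma(ax+b) = \sigma(b)$ whenever $b \ge |x|$ (since $ax+b \ge 0$), together with the second-derivative-in-$b$ structure of $\sigma$. Concretely, for fixed $x$ the map $b \mapsto \sigma(ax+b)$ has distributional second derivative $\delta_{-ax}(b)$, so any smooth test function can be written via integration by parts against $\sigma(ax+b)$. Rather than push this through abstractly, I would look for the representation directly: find a bounded function $\psi$ on $[0,2]$ (and a sign pattern in $a$) such that $\int \psi(b)\,\sigma(ax+b)\,\mu(b)\,db$, symmetrized over $a \in \{-1,1\}$, equals $1$ for all $|x|\le 1$.

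First I would reduce to the even part: writing $\E_{a,b}[v(a,b)\sigma(ax+b)] = \E_b\big[\tfrac{v(1,b)+v(-1,b)}{2}(\sigma(x+b)+\sigma(-x+b))\big] + (\text{odd-in-}x \text{ term from the antisymmetric part})$, and noting that for $b \ge |x| \ge 0$ we have $\sigma(x+b)+\sigma(-x+b) = 2b$, I can try a $v$ that is even in $a$ and supported on $b \in [1,2]$, so the atoms are never "split." Then the requirement becomes $\int_1^2 v(b)\cdot 2b\cdot \mu(b)\,db = 1$ for all $|x|\le 1$ — but this integrand no longer depends on $x$, so it is automatically constant, and I just need to pick any bounded $v$ on $[1,2]$ with $\int_1^2 2 b\, v(b)\mu(b)\,db = 1$; e.g. $v(b) \equiv \big(\int_1^2 2b\,\mu(b)\,db\big)^{-1} = \Theta(1)$ works since $\int_1^2 2b\mu(b)\,db$ is a fixed positive constant. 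This immediately gives $\sup|v| \lesssim 1$ and support in $\{-1,1\}\times[1,2] \subset \{-1,1\}\times[0,2]$, and $\E_{a,b}[v(a,b)\sigma(ax+b)] = 1$ exactly.

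The main subtlety — really the only thing to check carefully — is the case analysis on whether $b \pm x \ge 0$: the clean identity $\sigma(x+b)+\sigma(-x+b)=2b$ holds precisely when $b \ge |x|$, which is guaranteed on the support $b\in[1,2]$ since $|x|\le 1$. (If instead one wanted support touching $b=0$ one would have to handle the region $b<|x|$ where one of the ReLUs vanishes, producing an $x$-dependent contribution; restricting to $[1,2]$ sidesteps this entirely.) I would also double-check the normalization constant is bounded away from $0$: $\int_1^2 2b\mu(b)\,db \ge 2\int_1^2 \mu(b)\,db > 0$ and $\le 4\mu(1) < \infty$, so its reciprocal is $\Theta(1)$, giving $\sup_{a,b}|v(a,b)|\lesssim 1$ as claimed. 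This completes the construction; the lemma then feeds into \Cref{lem:univariate_f} for approximating general $q \in \mathcal{W}^{2,\infty}$ by differentiating this constant-function representation in a shift parameter.
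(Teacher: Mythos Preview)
Your construction is essentially identical to the paper's: take $v(a,b)=c\,\mathbf{1}_{b\in[1,2]}$ independent of $a$ and use $\sigma(x+b)+\sigma(-x+b)=2b$ for $b\ge 1\ge |x|$. One small slip: the uniform average over $a\in\{\pm1\}$ contributes a factor $\tfrac12$, so the normalization condition is $\int_1^2 v(b)\,b\,\mu(b)\,db=1$ (not $2b$), giving $c=(\mu(1)-\mu(2))^{-1}$ exactly as in the paper; this does not affect the $\sup|v|\lesssim 1$ conclusion.
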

\begin{proof}
	Let $v(a,b) = c\1_{b \in [1,2]}$ where $c = \frac{1}{\mu(1)-\mu(2)}$. Then for $\abs{x} \le 1$,
	\begin{align*}
		\E_{a,b}[v(a,b)\sigma(ax+b)]
		&= c\int_1^2 \frac{1}{2}[\sigma(x+b)+\sigma(-x+b)] \mu(b) db \\
        &= c\int_1^2 b \mu(b) \\
        &= 1.
	\end{align*}
\end{proof}

\begin{lemma}\label{lem:univariate_x}
    Let $a \sim \unif(\set{-1,1})$ and let $b \sim N(0,1)$. Then there exists $v(a,b)$ supported on $\set{-1,1} \times [0,2]$ such that for any $\abs{x} \le 1$,
	\begin{align*}
		\E_{a,b}[v(a,b)\sigma(ax+b)] = x \qand \sup_{a,b} \abs{v(a,b)} \lesssim 1.
	\end{align*}
\end{lemma}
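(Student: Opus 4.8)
The plan is to proceed exactly as in the proof of \Cref{lem:univariate_1}, but now hunting for a weight $v(a,b)$ whose antisymmetric part in $a$ reproduces the linear function $x$. Recall that $\sigma = \relu$ and that for $\abs{x}\le 1$ and $b\ge 1$ we have $b\pm x\ge 0$, so $\sigma(\pm x + b) = b\pm x$. The key observation is the parity decomposition: if $v(a,b)$ is an \emph{odd} function of $a$, say $v(a,b) = a\, u(b)$ for some $u$ supported on $[1,2]$, then
\begin{align*}
\E_{a,b}[v(a,b)\sigma(ax+b)] &= \E_b\qty[u(b)\cdot \tfrac12\qty(\sigma(x+b) - \sigma(-x+b))] = \E_b\qty[u(b)\cdot \tfrac12\qty((b+x)-(b-x))] = x\,\E_b[u(b)].
\end{align*}
So it suffices to choose $u$ supported on $[1,2]$ with $\E_{b\sim N(0,1)}[u(b)] = 1$, e.g.\ $u(b) = c\1_{b\in[1,2]}$ with $c = \tfrac{1}{\int_1^2 \mu(b)\,db} = \tfrac{1}{\Phi(2)-\Phi(1)}$, which is an absolute constant. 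Then $v(a,b) := a\,c\,\1_{b\in[1,2]}$ is supported on $\set{-1,1}\times[0,2]$ and satisfies $\sup_{a,b}\abs{v(a,b)} = c \lesssim 1$, giving the claim.

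The only thing to double-check is the interchange of the symmetric (in $a$) part: writing $\E_{a,b}[a\,u(b)\sigma(ax+b)] = \E_b\qty[u(b)\,\E_a[a\,\sigma(ax+b)]]$ and noting $\E_a[a\,\sigma(ax+b)] = \tfrac12\sigma(x+b) - \tfrac12\sigma(-x+b)$; since $b\in[1,2]$ forces $b\ge\abs{x}$, both ReLUs are in their linear regime and this equals $x$ exactly, not just approximately. There is no real obstacle here — the lemma is an elementary companion to \Cref{lem:univariate_1}, and the analogous computation for $\1_{b\in[1,2]}$ already appears there; the whole content is recognizing that flipping $v$ from even to odd in $a$ swaps "reproduce the constant $1$" for "reproduce the linear term $x$."

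For completeness the proof would read: set $v(a,b) = c\,a\,\1_{b\in[1,2]}$ with $c = \qty(\Phi(2)-\Phi(1))^{-1}$; then for $\abs{x}\le 1$,
\begin{align*}
\E_{a,b}[v(a,b)\sigma(ax+b)] &= c\int_1^2 \tfrac12\qty(\sigma(x+b) - \sigma(-x+b))\mu(b)\,db = c\int_1^2 x\,\mu(b)\,db = x,
\end{align*}
and $\sup_{a,b}\abs{v(a,b)} = c\lesssim 1$. These two lemmas together (reproducing $1$ and $x$ from ReLU features with bounded weights and bounded biases) are presumably the building blocks for the general univariate approximation result \Cref{lem:univariate_f} cited in the proof sketch, so the bookkeeping about the support $[0,2]$ of the bias and the $O(1)$ bound on $\abs{v}$ is the part that matters for the downstream Rademacher argument.
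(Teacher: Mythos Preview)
Your proof is correct and matches the paper's approach exactly: both take $v(a,b)=c\,a\,\1_{b\in[1,2]}$ and use that for $b\in[1,2]$ and $\abs{x}\le 1$ the ReLUs are in the linear regime, so the antisymmetric combination yields $x$. Your normalization $c=(\Phi(2)-\Phi(1))^{-1}=\bigl(\int_1^2\mu(b)\,db\bigr)^{-1}$ is in fact the right one; the paper's stated $c=\bigl(\int_0^1\mu(b)\,db\bigr)^{-1}$ appears to be a typo, since the final line of its computation requires exactly your constant.
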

\begin{proof}
	Let $v(a,b) = c a \1_{b \in [1,2]}$ where $c = \frac{1}{\int_0^1 \mu(b) db}$. Then for $\abs{x} \le 1$,
	\begin{align*}
		\E_{a,b}[v(a,b)\sigma(ax+b)]
		&= c\int_1^2 \frac{1}{2}[\sigma(x+b)-\sigma(-x+b)] \mu(b) db \\
        &= c x \int_1^2 \mu(b) db \\
        &= x.
	\end{align*}
\end{proof}

\begin{lemma}\label{lem:univariate_f}
	Let $f: [-1,1] \to \R$ be any twice differentiable function. Then there exists $v(a,b)$ supported on $\set{-1,1} \times [0,2]$ such that for any $\abs{x} \le 1$,
	\begin{align*}
		\E_{a,b}[v(a,b)\sigma(ax+b)] &= f(x) \qand \sup_{a,b}\abs{v(a,b)} \lesssim \sup_{\substack{x \in [-1,1] \\ k \in \{0,1,2\}}} |f^{(k)}(x)|.
	\end{align*}
\end{lemma}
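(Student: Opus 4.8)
\textbf{Proof plan for \Cref{lem:univariate_f}.} The plan is to reduce the approximation of a general twice-differentiable $f$ to the two special cases already established in \Cref{lem:univariate_1} and \Cref{lem:univariate_x}, which handle the constant function $1$ and the linear function $x$, together with an integral representation of the ``second-difference'' part of $f$ in terms of ReLU kernels. The key identity is that for $\sigma = \mathrm{ReLU}$ and any $|x| \le 1$, we can write
\begin{align*}
	f(x) = f(0) + f'(0)x + \int_0^1 f''(t)\,\sigma(x - t)\,dt + \int_{-1}^0 f''(t)\,\sigma(t - x)\,dt,
\end{align*}
which follows from Taylor's theorem with integral remainder after noting that $\sigma(x-t) = (x-t)_+$ picks out exactly the range $t \in [0,x]$ when $x \ge 0$ (and symmetrically for the other term), so the two integrals reassemble the remainder $\int_0^x (x-t) f''(t)\,dt$. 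Thus $f$ is an affine combination of $1$ and $x$ plus a continuous superposition of shifted ReLU atoms $\sigma(\pm(x-t))$ with $t$ ranging over $[-1,1]$ and weights $f''(t)$.

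First I would handle the affine part: by \Cref{lem:univariate_1} there is $v_1$ with $\E_{a,b}[v_1(a,b)\sigma(ax+b)] = 1$ and $\sup|v_1| \lesssim 1$, and by \Cref{lem:univariate_x} there is $v_x$ with $\E_{a,b}[v_x(a,b)\sigma(ax+b)] = x$ and $\sup|v_x| \lesssim 1$; so $f(0)v_1 + f'(0)v_x$ contributes the first two terms with weight function bounded by $\lesssim |f(0)| + |f'(0)| \le 2\sup_{x,k}|f^{(k)}(x)|$. Next I would handle each ReLU atom $\sigma(x-t)$ for fixed $t \in [0,1]$: since $b^{(0)} \sim N(0,1)$ has a density positive on a neighborhood of $-t \in [-1,0]$, and $a^{(0)} \sim \unif\{\pm1\}$ is symmetric, I can realize $\sigma(x-t)$ as $\E_{a,b}[v_t(a,b)\sigma(ax+b)]$ by placing a point mass (or a thin bump) in $b$ at $b = -t$ on the $a = +1$ branch; concretely, using the same trick as in the earlier lemmas, $v_t(a,b) = \1_{a=1}\cdot \frac{1}{\mu(-t)}\cdot(\text{appropriate local averaging kernel at }-t)$ works, and one checks $\sup_{a,b}|v_t(a,b)| \lesssim 1$ uniformly over $t \in [0,1]$ because $\mu(-t) = \mu(t)$ is bounded below by $\mu(1) > 0$ on that range. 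Symmetrically handle $\sigma(t-x)$ for $t \in [-1,0]$ using the $a = -1$ branch (or a sign flip), again with uniformly bounded weight.

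Finally I would assemble: define $v(a,b) = f(0)v_1(a,b) + f'(0)v_x(a,b) + \int_0^1 f''(t)\,v_t(a,b)\,dt + \int_{-1}^0 f''(t)\,\tilde v_t(a,b)\,dt$. Linearity of expectation (Fubini, justified since all weights and the kernel are bounded and the $t$-domain is compact) gives $\E_{a,b}[v(a,b)\sigma(ax+b)] = f(x)$ for all $|x|\le1$. For the norm bound, the triangle inequality gives $\sup_{a,b}|v(a,b)| \le |f(0)|\sup|v_1| + |f'(0)|\sup|v_x| + \left(\sup_{t\in[0,1]}\sup_{a,b}|v_t(a,b)| + \sup_{t\in[-1,0]}\sup_{a,b}|\tilde v_t(a,b)|\right)\cdot \int_{-1}^1 |f''(t)|\,dt$, and each factor is $\lesssim 1$ or $\lesssim \sup_{x,k}|f^{(k)}(x)|$, so $\sup_{a,b}|v(a,b)| \lesssim \sup_{x\in[-1,1],\,k\in\{0,1,2\}}|f^{(k)}(x)|$ as claimed.

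\textbf{Main obstacle.} The delicate point is realizing a single shifted ReLU atom $\sigma(x - t)$ exactly as $\E_{a,b}[v_t(a,b)\sigma(ax+b)]$ with a weight $v_t$ that is \emph{bounded} and supported in the prescribed set $\{\pm1\}\times[0,2]$ — a literal Dirac mass in $b$ is not a bounded function, so one must either pass to a measure-valued $v$ (and note the lemma statement only requires a function, meaning some small smoothing argument is needed) or, more cleanly, first integrate by parts once more to trade the ReLU atoms for step-function atoms $\1(x \ge t)$ and exploit that $\frac{1}{2}[\sigma(x-t+\epsilon) + \sigma(-x+t+\epsilon)]$-type combinations over a window of $b$-values produce, after dividing by the appropriate Gaussian-density normalization, exactly the desired affine-in-$x$ behavior on $|x|\le1$ as in the proofs of \Cref{lem:univariate_1,lem:univariate_x}. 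Getting the normalization uniform in $t$ — which is what keeps $\sup|v_t| \lesssim 1$ rather than blowing up near the edges of $[-1,1]$ — is the crux, and it works precisely because $\mu$ is bounded away from $0$ on $[-1,1]$.
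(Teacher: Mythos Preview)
Your overall architecture is right and matches the paper: write $f$ as an affine part (handled by \Cref{lem:univariate_1,lem:univariate_x}) plus an integral remainder built from ReLU atoms weighted by $f''$. The gap is in how you realize the remainder. You try to represent each individual atom $\sigma(x-t)$, for $t\in[0,1]$, as $\E_{a,b}[v_t(a,b)\sigma(ax+b)]$ with bounded $v_t$ supported on $\{\pm1\}\times[0,2]$. This fails on two counts. First, $\sigma(x-t)$ corresponds to $a=+1$, $b=-t\in[-1,0]$, which lies outside the prescribed support. Second, even ignoring support, any bounded ``thin bump'' in $b$ near a point either fails to reproduce the atom exactly (and the lemma demands exact equality) or has $\sup|v_t|\sim 1/\epsilon$ blowing up. Your proposed fixes --- smoothing, or integrating by parts to step-function atoms --- do not resolve either issue; step functions would need derivatives of Diracs, which is worse.

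The clean fix, and what the paper does, is to skip the atom-by-atom step entirely and recognize that the $t$-integral is \emph{already} a $b$-expectation: since $b\sim N(0,1)$ has density $\mu(b)$ bounded below on $[0,1]$, one has $\int_0^1 g(b)\,\sigma(\pm x+b)\,db = \E_b\big[\tfrac{g(b)}{\mu(b)}\1_{b\in[0,1]}\,\sigma(\pm x+b)\big]$ with weight bounded by $\sup|g|/\mu(1)$. The paper takes $v(a,b)=\tfrac{2f''(-ab)}{\mu(b)}\1_{b\in[0,1]}$ and computes directly via integration by parts that $\E_{a,b}[v(a,b)\sigma(ax+b)]=\int_0^1[f''(-b)\sigma(x+b)+f''(b)\sigma(-x+b)]\,db=f(x)+c_1+c_2 x$ for explicit constants $c_1,c_2$; the affine excess is then subtracted using \Cref{lem:univariate_1,lem:univariate_x}. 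In your language, the reason the in-support kernels $\sigma(x+b)$ and $\sigma(-x+b)$ (with $b\ge 0$) appear rather than your out-of-support $\sigma(x-t)$ is exactly the branch-flip identity $\sigma(z)=z+\sigma(-z)$: applying it to each atom trades it for an in-support atom plus an affine term, and those affine terms are precisely where the $c_1+c_2 x$ correction comes from.
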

\begin{proof}
    First consider $v(a,b) = \frac{\1_{b \in [0,1]}}{\mu(b)}2f''(-ab)$. Then when $x \ge 0$ we have by integration by parts:
	\begin{align*}
		&\E_{ab}[v(a,b)\sigma(ax+b)] \\
		&= \int_0^1 \qty[f''(-b)\sigma(x+b) + f''(b)\sigma(-x+b)] db \\
        &= f'(0) x  -f'(-1)(x+1) + f'(1)(-x+1) + \int_0^1 f'(-b) db - \int_x^1 f'(b) db \\
        &= f(x) + c_1 + c_2 x
	\end{align*}
    where $c_1 = f(0) - f(1) - f(-1) + f'(1) - f'(-1)$ and $c_2 = f'(0) - f'(1) - f'(-1)$. In addition when $x < 0$,
    \begin{align*}
		&\E_{ab}[v(a,b)\sigma(ax+b)] \\
		&= \int_0^1 \qty[f''(-b)\sigma(x+b) + f''(b)\sigma(-x+b)] db \\
        &= f'(0) x - f'(-1)(x+1) + f'(1)(-x+1) + \int_{-x}^1 f'(-b) db - \int_{0}^1 f'(b) db \\
        &= f(x) + c_1 + c_2 x
	\end{align*}
    so this equality is true for all $x$. By \Cref{lem:univariate_1,lem:univariate_x} we can subtract out the constant and linear terms so there exists $v(a,b)$ such that
    \begin{align*}
        \E_{a,b}[v(a,b)\sigma(ax+b)] = f(x).
    \end{align*}
    In addition, using that $\mu(b) \gtrsim 1$ for $b \in [0,1]$ gives that this $v(a,b)$ satisfies
    \begin{align*}
        \sup_{a,b}|v(a,b)| \lesssim |c_1| + |c_2| + \max_{x \in [-1,1]} |f''(x)| \lesssim \sup_{\substack{x \in [-1,1] \\ k \in \{0,1,2\}}} |f^{(k)}(x)|.
    \end{align*}
\end{proof}
\section{Proofs for Section \ref{sec:main_thm}}\label{app:main_thm_proof}

The following is a formal restatement of \Cref{thm:single_feature_informal}.

\begin{theorem}\label{thm:single_feature_formal} Select $q \in \mathcal{W}^{2, \infty}([-1, 1])$. Let $\eta_1 = \frac{m_1}{m_2}\bar \eta$, and $n \gtrsim \norm{\K}_{op}\|\mathbb{K}f^*\|^{-2}_{L^2}\iota^{2\ell+ 1}$, $m_2 \gtrsim \|\mathbb{K}f^*\|^{-1}_{L^2}\iota^{\ell + \chi + 1}$. There exists a choice of $\bar \eta = \Theta(\iota^{-\chi}\norm{Kf^*}_{L^2}^{-1})$, $T = \text{poly}(d, n, m_1, m_2, \norm{q}_{2, \infty})$, and $\lambda, \eta_2$ such that with high probability the output $\hat \theta$ of Algorithm 1 satisfies the population $L^2$ loss bound
\begin{align*}
    \mathbb{E}_x\qty[\qty(f(x; \hat \theta) - f^*(x))^2] &\lesssim \frac{\norm{q}_{1, \infty}^2\norm{\K f^*}_{L^2}^{-2}\norm{\K}_{op}\iota^{2\ell + 1}}{n} + \frac{\norm{q}_{1, \infty}^2\norm{\K f^*}_{L^2}^{-1}\iota^{\ell + \chi + 1}}{m_2}\\
&\quad + \norm{q \circ  (\bar \eta\cdot \K f^* ) - f^*}^2_{L^2} + \frac{\norm{q}_{2, \infty}^2\iota}{m_1} + \sqrt{\frac{\norm{q}_{2, \infty}^4\iota + \iota^{4\ell + 1}}{n}}\\
\end{align*}
\end{theorem}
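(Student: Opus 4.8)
The argument follows the two stages of \Cref{alg:two}, with \emph{Stage 1} (one step on $W$ using $\mathcal{D}_1$) producing a good nonlinear feature and \emph{Stage 2} (gradient descent with weight decay on $a$ using the fresh dataset $\mathcal{D}_2$) being random-feature ridge regression against that feature. The sample splitting is what makes the Stage-2 generalization argument clean: conditioned on $\mathcal{D}_1$, the feature map is deterministic and $\mathcal{D}_2$ is independent of it.

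\emph{Stage 1.} Since $W^{(0)}=0$, the initial predictor is the constant $\frac{1}{m_1}\sum_i a_i^{(0)}\sigma_1(b_i^{(0)})$, which is $O(m_1^{-1/2}\sqrt{\iota})$ with high probability; we carry this as an error term. Differentiating in $w_j$ gives $\nabla_{w_j}f(x;\theta^{(0)})=\frac{1}{m_1}a_j^{(0)}\1_{b_j^{(0)}\ge0}h^{(0)}(x)$, so with $\eta_1=\frac{m_1}{m_2}\bar\eta$ the single step yields $w_j^{(1)}=\frac{\bar\eta}{nm_2}a_j^{(0)}\1_{b_j^{(0)}\ge0}\sum_{x\in\mathcal{D}_1}f^*(x)h^{(0)}(x)$ up to the carried error, hence $\langle w_j^{(1)},h^{(0)}(x')\rangle=a_j^{(0)}\1_{b_j^{(0)}\ge0}\,\hat\phi(x')$ with
\[
\hat\phi(x'):=\frac{\bar\eta}{nm_2}\sum_{x\in\mathcal{D}_1}f^*(x)\sum_{k=1}^{m_2}\sigma_2(x\cdot v_k^{(0)})\sigma_2(x'\cdot v_k^{(0)}).
\]
Thus after Stage 1 the network is the one-dimensional function $f(x';\theta^{(1)})=\frac{1}{m_1}\sum_j a_j^{(0)}\sigma_1(a_j^{(0)}\hat\phi(x')+b_j^{(0)})$ supported on the active neurons $b_j^{(0)}\ge0$ (the rest are dead). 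The crucial step (\Cref{lem:feature_concentrates}) is to write $\hat\phi=\bar\eta\,\K f^*+\Delta$ and bound $\E_{x'}[\Delta(x')^2]$ by a $1/m_2$ contribution (averaging over the random directions $v_k^{(0)}$), a $1/n$ contribution (averaging over $\mathcal{D}_1$), and a $1/m_1$ contribution; the $\iota$-powers arise from truncating $f^*(x)$ (moment exponent $\ell$) and $\sigma_2(x\cdot v)$ (growth exponent $\alpha_\sigma$) to their high-probability ranges and from converting between $L^q$-norms of $\K f^*$ via \Cref{assume:kernel_f_moment} (exponent $\chi$). Choosing $\bar\eta=\Theta(\iota^{-\chi}\norm{\K f^*}_{L^2}^{-1})$ makes $|\bar\eta\,\K f^*(x')|\le \tfrac12$ for all $x'$ with high probability (moment-to-tail via \Cref{assume:kernel_f_moment}); together with the concentration bound and the lower bounds on $n,m_1,m_2$ this gives $|\hat\phi(x')|\le1$ with high probability, which is exactly what is needed to apply the univariate approximation lemmas on $[-1,1]$.

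\emph{Stage 2.} With $W^{(1)},b^{(0)},V^{(0)}$ frozen, $a\mapsto f(\cdot;\theta)$ is affine, so Stage 2 is gradient descent with weight decay on the strongly convex quadratic $L_2(a)+\lambda\norm{a}^2$; for suitable $\eta_2$ and $T=\poly(d,n,m_1,m_2,\norm{q}_{2,\infty})$ it reaches the ridge minimizer $\hat a$ up to negligible error, so it suffices to bound the population risk of $\hat a$. We exhibit a comparator: with $v(a,b)$ from \Cref{lem:univariate_f} for the chosen $q\in\mathcal{W}^{2,\infty}([-1,1])$ ($\sup|v|\lesssim\norm{q}_{2,\infty}$, supported on $b\in[0,2]$, i.e.\ active neurons), set $a_j^*:=v(a_j^{(0)},b_j^{(0)})$. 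Then $\frac{1}{m_1}\sum_j a_j^*\sigma_1(a_j^{(0)}\hat\phi(x')+b_j^{(0)})$ is an $m_1$-sample Monte Carlo estimate of $\E_{a,b}[v(a,b)\sigma_1(a\hat\phi(x')+b)]=q(\hat\phi(x'))$ with squared $L^2$-error $O(\norm{q}_{2,\infty}^2\iota/m_1)$; combined with $|q(\hat\phi(x'))-q(\bar\eta\,\K f^*(x'))|\le\norm{q}_{1,\infty}|\Delta(x')|$ and the definition of the feature-learning term, $a^*$ has population risk at most a constant times the claimed bound. Since $\norm{a^*}_2\lesssim\sqrt{m_1}\norm{q}_{2,\infty}$, \Cref{lem:rademacher} (kernel Rademacher complexity of this random-feature class, with the predictions bounded by $\tilde O(\norm{q}_{2,\infty})$ and the unbounded loss handled by truncating $f^*$ via \Cref{assume:f_moment}) controls the generalization gap, producing the $\norm{q}_{2,\infty}^2\iota/m_1$ and $\sqrt{(\norm{q}_{2,\infty}^4\iota+\iota^{4\ell+1})/n}$ terms. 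Chaining ``population risk of $\hat a$ $\le$ empirical risk of $\hat a$ $+$ gap $\le$ empirical risk of $a^*$ $+$ ridge slack $+$ gap $\le$ population risk of $a^*$ $+\cdots$'' and inserting the Stage-1 bounds yields the theorem.

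\emph{Main obstacle.} The technical heart is the feature-concentration estimate: controlling $\hat\phi-\bar\eta\,\K f^*$ simultaneously in $L^2(\nu)$ and in $L^\infty$ (for the $|\hat\phi|\le1$ clipping) under only sub-Gaussianity of $\nu$ and polynomial growth of $\sigma_2$, which requires careful truncation of $f^*(x)$ and $\sigma_2(x\cdot v)$ and is the source of essentially all the $\iota$-power bookkeeping. This is also where the conceptual gain lies: only a constant-factor estimate of $\K f^*$ is needed, so the sample complexity scales with $\norm{\K f^*}_{L^2}^{-2}$ rather than with $\langle f^*,\mathbb{K}^{-1}f^*\rangle$. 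A secondary, more routine obstacle is making the Stage-2 dynamics rigorous — verifying the quadratic is well-conditioned enough that $\poly$ iterations suffice, and that weight decay keeps $\hat a$ comparable in norm to $a^*$ — together with the truncation needed to handle the unboundedness of $f^*$ in the generalization step.
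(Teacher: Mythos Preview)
The proposal is correct and follows essentially the same approach as the paper: Stage~1 produces the rank-one feature $\hat\phi\approx\bar\eta\,\mathbb{K}f^*$, the univariate approximation lemma furnishes the comparator $a^*$, and Stage~2 is analyzed as ridge regression with a Rademacher generalization bound and loss truncation to handle the unbounded $f^*$. One small clarification: the paper controls the feature error via $\sup_{x\in\mathcal{D}_2}|\hat\phi(x)-\bar\eta(\mathbb{K}f^*)(x)|$ (uniform over the finite second dataset, using that $\mathcal{D}_2$ is independent of Stage~1) rather than in $L^2(\nu)$ or true $L^\infty(\nu)$, which is what makes both the clipping $|\hat\phi|\le1$ and the empirical-loss bound for $a^*$ go through cleanly without an extra concentration step.
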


\begin{proof}[Proof of \Cref{thm:single_feature_formal}]
The gradient with respect to $w_j$ is
\begin{align*}
\nabla_{w_j} L_1(\theta^{(0)}) &= \frac{1}{n}\sum_{i}^n\qty(f(x_i; \theta^{(0)}) - f^*(x_i))\nabla_{w_j}f(x_i; \theta^{(0)})\\
&= \sigma_1'(b^{(0)}_j)\cdot\frac{1}{n}\sum_{i=1}^n\qty(f(x_i; \theta^{(0)}) - f^*(x_i))\frac{a^{(0)}_j}{m_1}h^{(0)}(x_i)\\
&= \mathbf{1}_{b^{(0)}_j > 0}\cdot\frac{1}{n}\sum_{i=1}^n\qty(f(x_i; \theta^{(0)}) - f^*(x_i))\frac{a^{(0)}_j}{m_1}h^{(0)}(x_i)
\end{align*}
Therefore 
\begin{align*}
w^{(1)}_j &= - \eta_1\nabla_{w_j}L_1(\theta^{(0)})\\
& = \mathbf{1}_{b^{(0)}_j > 0}\cdot\frac{\bar\eta}{m_2} a^{(0)}_j\frac{1}{n}\sum_{i=1}^n\qty(f^*(x_i) - f(x_i; \theta^{(0)}))h^{(0)}(x_i).
\end{align*}
One then has
\begin{align*}
&f(x; (a, W^{(1)}, b^{(1)}, V^{(0)}))\\
&\quad = \sum_{j=1}^{m_1} \frac{a_j}{m_1} \sigma_1\qty(\mathbf{1}_{b^{(0)}_j > 0}\cdot a^{(0)}_j\cdot \frac{\bar\eta}{m_2} \frac{1}{n}\sum_{i=1}^n\qty(f^*(x_i) - f(x_i; \theta^{(0)}))\langle h^{(0)}(x_i), h^{(0)}(x)\rangle + b_j)\\
&\quad = \sum_{j=1}^{m_1}\frac{a_j}{m_1}\sigma_1(a^{(0)}_j \cdot \bar \eta \phi(x) + b_j)\cdot\mathbf{1}_{b^{(0)}_j > 0},
\end{align*}
where $\phi(x) := \frac{1}{m_2n}\sum_{i=1}^n\qty(f^*(x_i) - f(x_i; \theta^{(0)}))\langle h^{(0)}(x_i), h^{(0)}(x)\rangle$.

The second stage of \Cref{alg:two} is equivalent to random feature regression. The next lemma shows that there exists $a^*$ with small norm that acheives low empirical loss on the dataset $\mathcal{D}_2$.

\begin{lemma}\label{lem:expressivity-1d}
There exists $a^*$ with $\norm{a^*}_2 \lesssim \norm{q}_{2, \infty}\sqrt{m_1}$ such that $\theta^* = \qty(a^*, W^{(1)}, b^{(0)}, V^{(0)})$ satisfies
\begin{align*}
L_2(\theta^*) &\lesssim \norm{q}_{1, \infty}^2\norm{\K f^*}_{L^2}^{-2}\cdot\qty(\frac{\norm{\K}_{op}\iota^{2\ell + 1}}{n} + \frac{\norm{\K f^*}_{L^2}\iota^{\ell + \chi + 1}}{m_2})\\
&\quad + \frac{\iota^{2\ell + 1}}{n} + \frac{\norm{q}_{2, \infty}^2\iota}{m_1} + \norm{q \circ  (\bar \eta\cdot \K f^* ) - f^*}^2_{L^2}
\end{align*}
\end{lemma}

The proof of \Cref{lem:expressivity-1d} is deferred to \Cref{sec:prove_1d_express}. We first show that $\phi$ is approximately proportional to $\mathbb{K}f^*$, and then invoke the ReLU random feature expressivity results from \Cref{sec:univariate_approx}.

Next, set
\begin{align*}
    \lambda = &\norm{a^*}_2^{-2}\bigg(\norm{q}_{1, \infty}^2\norm{\K f^*}_{L^2}^{-2}\cdot\qty(\frac{\norm{\K}_{op}\iota^{2\ell + 1}}{n} + \frac{\norm{\K f^*}_{L^2}\iota^{\ell + \chi + 1}}{m_2})\\
    &+ \frac{\iota^{2\ell + 1}}{n} + \frac{\norm{q}_{2, \infty}^2\iota}{m_1} + \norm{q \circ \qty(\bar \eta(\K f^*)) - f^*}^2_{L^2}\bigg),
\end{align*}
so that $L_2(\theta^*) \lesssim \norm{a^*}_2^{2}\lambda$.

Define the regularized loss to be $\hat L(a) := L_2\qty((a, W^{(1)}, b^{(0)}, V^{(0)})) + \frac{\lambda}{2}\norm{a}_2^2$. Let $a^{(\infty)} = \argmin_a \hat L(a)$, and $a^{(t)}$ be the predictor from running gradient descent for $t$ steps initialized at $a^{(0)}$. We first note that
\begin{align*}
    \hat L(a^{(\infty)}) \le \hat L(a^*) \lesssim \norm{a^*}_2^2\lambda.
\end{align*}

Next, we remark that $\hat L(a)$ is $\lambda$-strongly convex. Additionally, we can write $f(x; (a, W^{(1)}, b^{(0)}, V^{(0)})) = a^T\psi(x)$ where $\psi(x) = \Vec{\qty(\frac{1}{m_1}\sigma_1(a_j^{(0)}\bar\eta\phi(x) + b_j^{(0)})\cdot\mathbf{1}_{b_j^{(0)} > 0})}$. In \Cref{sec:prove_rademacher} we show $\sup_{x \in \mathcal{D}_2}\norm{\psi(x)} \lesssim \frac{1}{m_1}$. Therefore
\begin{align*}
    \lambda_{max}\qty(\nabla^2_a\hat L) \le \frac{2}{n}\sum_{x \in \mathcal{D}_2}\norm{\psi(x)}^2 \lesssim \frac{1}{m_1},
\end{align*}
so $\hat L$ is $O(\frac{1}{m_1}) + \lambda$ smooth. Choosing a learning rate $\eta_2 = \Theta(m_1)$, after $T = \Tilde O\qty(\frac{1}{\lambda m_1}) = \text{poly}(d, n, m_1, m_2, \norm{q}_{2, \infty})$ steps we reach an iterate $\hat a = a^{(T)}$, $\hat \theta = (\hat a, W^{(1)}, b^{(0)}, V^{(0)}))$ satisfying
\begin{align*}
L_2(\hat \theta) \lesssim L_2(\theta^*) \qand \norm{\hat a}_2 \lesssim \norm{a^*}_2.
\end{align*}
For $\tau > 0$, define the truncated loss $\ell_\tau$ by $\ell_\tau(z) = \min(z^2, \tau^2)$. We have that $\ell_\tau(z) \le z^2$, and thus
\begin{align*}
\frac{1}{n}\sum_{x \in \mathcal{D}_2}\ell_\tau\qty(f(x; \hat \theta) - f^*(x)) \le L_2(\hat \theta).
\end{align*}

Consider the function class
\begin{align*}
    \mathcal{F}(B_a) := \{f(\cdot; (a, W^{(1)}, b^{(0)}, V^{(0)}) : \norm{a}_2 \le B_a)\}
\end{align*}
The following lemma bounds the empirical Rademacher complexity of this function class
\begin{lemma}\label{lem:rademacher}
Given a dataset $\mathcal{D}$, recall that the empirical Rademacher complexity of $\mathcal{F}$ is defined as
\begin{align*}
\mathcal{R}_\mathcal{D}(\mathcal{F}) := \mathbb{E}_{\sigma \in \{\pm 1\}^n}\qty[\sup_{f \in \mathcal{F}}\frac{1}{n}\sum_{i=1}^n\sigma_if(x_i)].
\end{align*}
Then with high probability
\begin{align*}
\mathcal{R}_{\mathcal{D}_2}(\mathcal{F}(B_a)) \lesssim \sqrt{\frac{B_a^2}{nm_1}}.
\end{align*}
\end{lemma}

Since $\ell_\tau$ loss is $2\tau$-Lipschitz, the above lemma with $B_a = O(\norm{a^*}_2) = O\qty(\norm{q}_{2, \infty}\sqrt{m_1})$ along with the standard empirical Rademacher complexity bound yields
\begin{align*}
\mathbb{E}_x\ell_\tau\qty(f(x; \hat \theta) - f^*(x)) &\lesssim \frac{1}{n}\sum_{x \in \mathcal{D}_2}\ell_\tau\qty(f(x; \hat \theta) - f^*(x)) + \tau\cdot\mathcal{R}_{\mathcal{D}_2}\qty(\mathcal{F}) + \tau^2\sqrt{\frac{\iota}{n}}\\
&\lesssim \norm{q}_{1, \infty}^2\norm{\K f^*}_{L^2}^{-2}\cdot\qty(\frac{\norm{\K}_{op}\iota^{2\ell + 1}}{n} + \frac{\norm{\K f^*}_{L^2}\iota^{\ell + \chi + 1}}{m_2})\\
&\quad + \norm{q \circ  (\bar \eta\cdot \K f^* ) - f^*}^2_{L^2} + \frac{\iota^{2\ell + 1}}{n} + \frac{\norm{q}_{2, \infty}^2\iota}{m_1}\\
&\quad + \tau\sqrt{\frac{\norm{q}_{2, \infty}^2}{n}} + \tau^2\sqrt{\frac{\iota}{n}}.
\end{align*}
Finally, we relate the $\ell_\tau$ population loss to the $\ell^2$ population loss.
\begin{lemma}\label{lem:truncate_loss}
    Let $\tau = \Theta(\max(\iota^\ell, \norm{q}_{2, \infty}))$. Then with high probability over $\hat \theta$,
    \begin{align*}
        \mathbb{E}_x\qty[\qty(f(x; \hat \theta) - f^*(x))^2] \le \mathbb{E}_x\ell_\tau\qty(f(x; \hat \theta) - f^*(x)) + \frac{\norm{q}_{2, \infty}^{2}}{m_1}.
    \end{align*}
\end{lemma}
Plugging in $\tau$ and applying \Cref{lem:truncate_loss} yields
\begin{align*}
    \mathbb{E}_x\qty[\qty(f(x; \hat \theta) - f^*(x))^2] &\lesssim \frac{\norm{q}_{1, \infty}^2\norm{\K f^*}_{L^2}^{-2}\norm{\K}_{op}\iota^{2\ell + 1}}{n} + \frac{\norm{q}_{1, \infty}^2\norm{\K f^*}_{L^2}^{-1}\iota^{\ell + \chi + 1}}{m_2}\\
    &\quad + \norm{q \circ  (\bar \eta\cdot \K f^* ) - f^*}^2_{L^2} + \frac{\norm{q}_{2, \infty}^2\iota}{m_1} + \sqrt{\frac{\norm{q}_{2, \infty}^4\iota + \iota^{4\ell + 1}}{n}}\\
\end{align*}
as desired.
\end{proof}

\subsection{Proof of Lemma \ref{lem:expressivity-1d}}\label{sec:prove_1d_express}

We require three auxiliary lemmas, all of whose proofs are deferred to \Cref{sec:auxiliary_lemmas}. The first lemma bounds the error between the population learned feature $\mathbb{K}f^*$ and the finite sample learned feature $\phi$ over the dataset $\mathcal{D}_2$.

\begin{lemma}\label{lem:feature_concentrates} With high probability
\begin{align*} 
\sup_{x \in \mathcal{D}_2}\abs{\phi(x) - (\mathbb{K}f^*)(x)} \lesssim \sqrt{\frac{\norm{\mathbb{K}}_{op}\iota^{2\ell+1}}{n}} + \frac{\iota^{\ell + 1}}{n} + \sqrt{\frac{\iota^{\ell + \chi+1}\norm{\K f^*}_{L^2}}{m_2}} + \frac{\iota^{\ell + 1}}{m_2}.
\end{align*}
\end{lemma}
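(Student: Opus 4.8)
\textbf{Proof proposal for \Cref{lem:feature_concentrates}.}

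The plan is to decompose the error $\phi(x) - (\mathbb{K}f^*)(x)$ into three pieces, each controlled by a concentration argument over one of the three sources of randomness: the dataset $\mathcal{D}_1$ (the $n$ samples used to form the gradient), the $m_2$ first-layer directions $v_i^{(0)}$, and the $m_1$ second-layer biases/signs (which enters only through the fact that $f(\cdot;\theta^{(0)})\approx 0$). Writing out $\phi$, recall $\phi(x) = \frac{1}{m_2 n}\sum_{i=1}^n (f^*(x_i) - f(x_i;\theta^{(0)}))\langle h^{(0)}(x_i), h^{(0)}(x)\rangle$, where $h^{(0)}(x)_j = \sigma_2(v_j^{(0)}\cdot x)$. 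The first step is to replace $f(x_i;\theta^{(0)})$ by $0$: since $W^{(0)}=0$, we have $f(x_i;\theta^{(0)}) = \frac{1}{m_1}\sum_j a_j^{(0)}\sigma_1(b_j^{(0)})$, which is a mean-zero average of $m_1$ bounded-moment terms, hence $|f(x_i;\theta^{(0)})| \lesssim \sqrt{\iota/m_1}$ with high probability uniformly over $i$; multiplied by the (bounded with high probability) inner product $\frac{1}{m_2}\langle h^{(0)}(x_i),h^{(0)}(x)\rangle$ this contributes the $\sqrt{\iota/m_1}$ term.

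Having reduced to $\tilde\phi(x) := \frac{1}{m_2 n}\sum_{i=1}^n f^*(x_i)\langle h^{(0)}(x_i),h^{(0)}(x)\rangle$, I next introduce the intermediate quantity $\psi(x) := \frac{1}{m_2}\sum_{j=1}^{m_2}\mathbb{E}_{x'}[f^*(x')\sigma_2(v_j^{(0)}\cdot x')]\,\sigma_2(v_j^{(0)}\cdot x)$, obtained by replacing the empirical average over $\mathcal{D}_1$ by its population version inside. Conditioning on $\{v_j^{(0)}\}$ and on $x$, $\tilde\phi(x) - \psi(x) = \frac{1}{m_2}\sum_j \big(\frac1n\sum_i f^*(x_i)\sigma_2(v_j^{(0)}\cdot x_i) - \mathbb{E}_{x'}[f^*(x')\sigma_2(v_j^{(0)}\cdot x')]\big)\sigma_2(v_j^{(0)}\cdot x)$; for fixed $j$ this is a centered average of $n$ terms each with polynomially-growing moments (using \Cref{assume:f_moment}, \Cref{assume:act}, and sub-Gaussianity of $v_j^{(0)}\cdot x_i$), so Bernstein/Rosenthal gives deviation $\lesssim \sqrt{\iota^{2\ell+2\alpha_\sigma+1}/n}$ after a union bound over $j \in [m_2]$ and $x \in \mathcal{D}_2$; averaging over $j$ and using that $|\sigma_2(v_j^{(0)}\cdot x)|$ is bounded with high probability keeps the same rate, yielding the $\sqrt{\iota^{2\ell+2\alpha_\sigma+1}/n}$ term. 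Finally, $\psi(x) - (\mathbb{K}f^*)(x) = \frac{1}{m_2}\sum_j g(v_j^{(0)}) - \mathbb{E}_v[g(v)]$ where $g(v) := \mathbb{E}_{x'}[f^*(x')\sigma_2(v\cdot x')]\sigma_2(v\cdot x)$; for fixed $x$ this is an average of $m_2$ i.i.d.\ bounded-moment random variables, and concentration plus a union bound over $x\in\mathcal{D}_2$ gives $\lesssim \sqrt{\iota^{2\alpha_\sigma+1}/m_2}$.

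The main technical obstacle is the tail control in the second step: the summands $f^*(x_i)\sigma_2(v_j^{(0)}\cdot x_i)$ are products of two heavy-ish-tailed (only polynomial-moment) random variables, so one must be careful to extract the correct power of $\iota$ — this is where the exponents $2\ell$ (from $f^*$) and $2\alpha_\sigma$ (from the growth of $\sigma_2$ composed with a sub-Gaussian linear form) come from. Concretely I would use the moment bound $\mathbb{E}[|f^*(x_i)\sigma_2(v_j^{(0)}\cdot x_i)|^q]^{1/q} \lesssim q^{\ell+\alpha_\sigma}$, feed it into a Rosenthal-type inequality for the $n$-term average, and convert the resulting moment bound into a high-probability statement at the scale $\iota = C\log(dnm_1m_2)$, which is exactly the regime where such moment-to-tail conversions lose only logarithmic factors. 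The remaining two steps are routine scalar concentration once one notes all the relevant linear forms $v\cdot x$ are sub-Gaussian with constant parameter uniformly over the (high-probability bounded) data, and that $\|A\|$-type quantities are not involved here. A union bound over the $n$ points of $\mathcal{D}_2$ (and, in the middle term, over the $m_2$ features) is absorbed into $\iota$ since these are all $\poly(d,n,m_1,m_2)$.
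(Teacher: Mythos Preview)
Your proposal is correct and follows essentially the same three-term decomposition as the paper: the $\sqrt{\iota/m_1}$ term from $f(\cdot;\theta^{(0)})\approx 0$, the $\sqrt{\iota^{2\ell+2\alpha_\sigma+1}/n}$ term from replacing the empirical average over $\mathcal{D}_1$ by its population counterpart, and the $\sqrt{\iota^{2\alpha_\sigma+1}/m_2}$ term from replacing the empirical average over the $v_j$'s by $\mathbb{E}_v$. The only cosmetic difference is that the paper handles the middle term via a truncated Hoeffding bound on each coordinate of $\frac{1}{n}\sum_i f^*(x_i)h^{(0)}(x_i) - \mathbb{E}[f^*(x)h^{(0)}(x)]$ followed by Cauchy--Schwarz against $\|h^{(0)}(x)\|_2$, whereas you invoke a Rosenthal-type moment bound directly on the scalar; both routes are standard and yield the same rate.
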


The second lemma shows that with appropriate choice of $\eta$, the quantity $\eta \phi(x)$ is small.

\begin{lemma}\label{lem:phi_concentrates}
Let $n \gtrsim \norm{\mathbb{K}}_{op}\|\mathbb{K}f^*\|^{-2}_{L^2}\iota^{2\ell + 1}$, $ m_2 \gtrsim \|\mathbb{K}f^*\|^{-1}_{L^2}\iota^{\ell + \chi + 1}$. There exists $\bar\eta = \tilde\Theta(\norm{\K f^*}_{L^2}^{-1})$ such that with high probability, $\sup_{x \in \mathcal{D}_2}\abs{\bar \eta\phi(x)} \le 1$ and $\sup_{x \in \mathcal{D}_2}\abs{\bar \eta \cdot \K f^*(x)} \le 1$
\end{lemma}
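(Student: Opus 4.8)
The plan is to control $\sup_{x\in\mathcal{D}_2}\abs{\phi(x)}$ and $\sup_{x\in\mathcal{D}_2}\abs{(\K f^*)(x)}$ separately and then choose $\bar\eta$ inversely proportional to their common order. By the triangle inequality, $\sup_{x\in\mathcal{D}_2}\abs{\phi(x)} \le \sup_{x\in\mathcal{D}_2}\abs{\phi(x) - (\K f^*)(x)} + \sup_{x\in\mathcal{D}_2}\abs{(\K f^*)(x)}$, and the first term is handled directly by \Cref{lem:feature_concentrates}: under the hypothesized lower bounds on $n, m_1, m_2$ — with the absolute constants hidden in each $\gtrsim$ taken sufficiently large relative to the constant in \Cref{lem:feature_concentrates} — the three terms $\sqrt{\iota^{2\ell+2\alpha_\sigma+1}/n}$, $\sqrt{\iota^{2\alpha_\sigma+1}/m_2}$, and $\sqrt{\iota/m_1}$ are each $O(\norm{\K f^*}_{L^2})$, so with high probability $\sup_{x\in\mathcal{D}_2}\abs{\phi(x)-(\K f^*)(x)} \lesssim \norm{\K f^*}_{L^2}$. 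It thus remains only to bound $\sup_{x\in\mathcal{D}_2}\abs{(\K f^*)(x)}$, which concerns the \emph{deterministic} function $\K f^*$ evaluated at $n$ i.i.d.\ samples $x\sim\nu$.

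For that bound I would turn the polynomial moment growth of \Cref{assume:kernel_f_moment} into a tail estimate: Markov's inequality at the $q$-th moment gives $\P_{x\sim\nu}\big(\abs{(\K f^*)(x)} \ge t\,\norm{\K f^*}_{L^2}\big) \le (C_K q^\chi/t)^q$ for every integer $1\le q\le d$. Optimizing $q$ — taking it of order $(t/C_K)^{1/\chi}$, which is permissible provided this quantity does not exceed $d$ — yields a stretched-exponential tail $\exp\!\big(-c\,(t/C_K)^{1/\chi}\big)$. Choosing $t \asymp \iota^{\chi}$ makes this at most $e^{-c'\iota}$, and the constraint $q\le d$ holds because the standing assumption $\iota \le C^{-1}d$ forces the optimal $q \lesssim \iota \le d$. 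A union bound over the $n$ points of $\mathcal{D}_2$ costs only a $\poly(d,n,m_1,m_2)$ factor, which is absorbed into the notion of a high-probability event, and we conclude $\sup_{x\in\mathcal{D}_2}\abs{(\K f^*)(x)} \lesssim \iota^\chi\,\norm{\K f^*}_{L^2}$ with high probability.

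Combining the two bounds, with high probability $\sup_{x\in\mathcal{D}_2}\abs{\phi(x)} \lesssim \iota^\chi\,\norm{\K f^*}_{L^2}$ as well. Setting $\bar\eta$ equal to a sufficiently small absolute constant times $\iota^{-\chi}\norm{\K f^*}_{L^2}^{-1}$ — which is of the order asserted in the statement up to logarithmic factors — then forces both $\sup_{x\in\mathcal{D}_2}\abs{\bar\eta\,\phi(x)} \le 1$ and $\sup_{x\in\mathcal{D}_2}\abs{\bar\eta\cdot(\K f^*)(x)} \le 1$, as claimed. The only step demanding care is the moment-to-tail conversion: one must keep the optimized moment order $q$ within the admissible window $1\le q\le d$, which is exactly where $\iota \le C^{-1}d$ is used; everything else is routine propagation of the polylogarithmic $\iota$-factors and substitution of the hypotheses into \Cref{lem:feature_concentrates}.
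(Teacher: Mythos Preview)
Your proposal is correct and follows essentially the same route as the paper: bound $\sup_{x\in\mathcal{D}_2}|(\K f^*)(x)|$ by converting the polynomial moment growth of \Cref{assume:kernel_f_moment} into a tail bound via Markov and optimizing the moment order (the paper packages this step as \Cref{lem:bound_f_star}), then combine with \Cref{lem:feature_concentrates} and the triangle inequality to control $\sup_{x\in\mathcal{D}_2}|\phi(x)|$, and finally choose $\bar\eta$ accordingly. One remark: your choice $\bar\eta \asymp \iota^{-\chi}\|\K f^*\|_{L^2}^{-1}$ is exactly what the paper uses (and what appears in \Cref{thm:single_feature_formal}); the exponent $-2$ in the lemma statement is a typo, so your ``of the order asserted in the statement'' comment should instead note that the statement's exponent is off by one.
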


The third lemma expresses the compositional function $q \circ \bar\eta \phi$ via an infinite width network.

\begin{lemma}\label{lem:infinite_width_express}
Assume that $n \gtrsim \norm{\mathbb{K}}_{op}\|\mathbb{K}f^*\|^{-2}_{L^2}\iota^{2\ell + 1}$, $ m_2 \gtrsim \|\mathbb{K}f^*\|^{-1}_{L^2}\iota^{\ell + \chi + 1}$. There exists $v: \{\pm 1\} \times \mathbb{R} \rightarrow \mathbb{R}$ such that $\sup_{a, b}\abs{v(a, b)} \le \norm{q}_{2, \infty}$ and, with high probability over $\mathcal{D}_2$, the infinite width network
\begin{align*}
f^\infty_v(x) := \mathbb{E}_{a, b}\qty[v(a, b)\sigma_1(a \bar\eta \phi(x) + b)\mathbf{1}_{b > 0}]
\end{align*}
satisfies
\begin{align*}
f^\infty_v(x) = q(\bar\eta \phi(x))
\end{align*}
for all $x \in \mathcal{D}_2$.
\end{lemma}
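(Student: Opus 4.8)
\textbf{Proof plan for \Cref{lem:infinite_width_express}.}

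The plan is to reduce the statement to the univariate approximation lemma \Cref{lem:univariate_f}, which already provides a bounded weight function $v$ realizing an arbitrary $\mathcal{W}^{2,\infty}$ function as an infinite-width ReLU random feature combination over the domain $[-1,1]$. First I would invoke \Cref{lem:phi_concentrates}: with the stated lower bounds on $n, m_1, m_2$ and the choice $\bar\eta = \tilde\Theta(\norm{\K f^*}_{L^2}^{-2})$, with high probability $\sup_{x \in \mathcal{D}_2}\abs{\bar\eta\phi(x)} \le 1$. This is the crucial fact: it guarantees that the argument $\bar\eta\phi(x)$ of $q$ lands inside the interval $[-1,1]$ on which the univariate approximation is valid, for every one of the finitely many points $x \in \mathcal{D}_2$.

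Next, observe that the infinite-width network in the statement uses the biases $b$ restricted to $b > 0$, i.e.\ $f^\infty_v(x) = \E_{a,b}[v(a,b)\sigma_1(a\bar\eta\phi(x)+b)\mathbf{1}_{b>0}]$ with $a \sim \unif(\{\pm 1\})$, $b \sim N(0,1)$. This matches the setup of \Cref{lem:univariate_f} up to a constant factor of $2$ coming from $\P(b>0) = \tfrac12$ and the fact that the $v$ produced by \Cref{lem:univariate_f} is supported on $\{\pm 1\}\times[0,2] \subset \{\pm 1\}\times(0,\infty)$; so restricting to $b>0$ loses nothing on the support and only rescales the expectation. Concretely I would apply \Cref{lem:univariate_f} to $f = q$ (which is twice differentiable since $q \in \mathcal{W}^{2,\infty}([-1,1])$), obtaining $\tilde v$ supported on $\{\pm 1\}\times[0,2]$ with $\E_{a,b}[\tilde v(a,b)\sigma_1(az+b)] = q(z)$ for all $\abs z \le 1$ and $\sup_{a,b}\abs{\tilde v(a,b)} \lesssim \norm{q}_{2,\infty}$. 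Setting $v(a,b) := 2\,\tilde v(a,b)$ (or absorbing the $\mathbf{1}_{b>0}$ indicator via $\E_{a,b}[\cdot\,\mathbf 1_{b>0}] = \tfrac12\E_{a,b}[\cdot\mid b>0]$ and noting $\tilde v$ already vanishes for $b<0$) gives $f^\infty_v(x) = \E_{a,b}[v(a,b)\sigma_1(a\bar\eta\phi(x)+b)\mathbf{1}_{b>0}] = q(\bar\eta\phi(x))$, valid for each $x \in \mathcal{D}_2$ precisely because $\bar\eta\phi(x) \in [-1,1]$ there. The bound $\sup_{a,b}\abs{v(a,b)} \le \norm{q}_{2,\infty}$ follows after tracking the absolute constant (and, if needed, redefining the constant in $\norm q_{2,\infty}$'s role, since the lemma as stated allows a $\lesssim$ in \Cref{lem:univariate_f} but an exact $\le$ here — one can just rescale or note the constant is absorbed into the asymptotic notation used throughout).

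The only genuine subtlety — and the step I would flag as the main obstacle — is the interface between the high-probability event and the approximation: \Cref{lem:univariate_f} is a deterministic pointwise identity on $[-1,1]$, so once the high-probability event of \Cref{lem:phi_concentrates} holds (giving $\bar\eta\phi(x)\in[-1,1]$ for all $x\in\mathcal D_2$ simultaneously), the identity $f^\infty_v(x) = q(\bar\eta\phi(x))$ is automatic and needs no further randomness. Thus the proof is essentially a one-line composition once the range control is in place; the real content lives in \Cref{lem:phi_concentrates} and \Cref{lem:univariate_f}, both already established. I would also double-check that the weight bound does not pick up any $\bar\eta$- or dimension-dependence — it does not, because $\tilde v$ depends only on the derivatives of $q$ on $[-1,1]$ and not on the particular argument $\bar\eta\phi(x)$ fed in.
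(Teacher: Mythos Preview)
Your approach matches the paper's exactly: invoke \Cref{lem:phi_concentrates} for range control, then apply \Cref{lem:univariate_f}. One small correction on the $\mathbf{1}_{b>0}$ bookkeeping: since the $v$ produced by \Cref{lem:univariate_f} is already supported on $b \in [0,2]$, you have $v(a,b)\mathbf{1}_{b>0} = v(a,b)$ almost surely, so the indicator is a no-op on the expectation and no factor of $2$ is needed---your proposed $v = 2\tilde v$ would in fact yield $2q$ rather than $q$. Your parenthetical alternative (``noting $\tilde v$ already vanishes for $b<0$'') is the correct observation, and it is precisely the one-line argument the paper uses.
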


\begin{proof}[Proof of \Cref{lem:expressivity-1d}]
Let $v$ be the infinite width construction defined in \Cref{lem:infinite_width_express}. Define $a^* \in \mathbb{R}^{m_1}$ to be the vector with $a^*_j = v(a_j^{(0)}, b_j^{(1)})$. We can decompose
\begin{align*}
L_2(\theta^*) &= \frac{1}{n}\sum_{x \in \mathcal{D}_2}\qty(f(x; \theta^*) - f^*(x))^2\\
&\lesssim \frac{1}{n}\sum_{x \in \mathcal{D}_2}\qty(f(x; \theta^*) - f^\infty_v(x))^2 + \frac{1}{n}\sum_{x \in \mathcal{D}_2}\qty(f^\infty_v(x) - q\qty(\bar\eta\phi(x)))^2\\
&+ \frac{1}{n}\sum_{x \in \mathcal{D}_2}\qty(q(\bar\eta\phi(x)) - q(\bar \eta(\K f^*)(x)))^2 + \frac{1}{n}\sum_{x \in \mathcal{D}_2}\qty(q(\bar\eta(\K f^*)(x)) - f^*)^2\\
\end{align*}
The first term is the error between the infinite width network $f^{\infty}_v(x)$ and the finite width network $f(x; \theta^*)$. This error can be controlled via standard concentration arguments: by \Cref{cor:concentrate_norm_single} we have that with high probability $\norm{a^*} \lesssim \frac{\norm{q}_{2, \infty}}{\sqrt{m_1}}$, and by \Cref{lem:empirical_loss_single} we have
\begin{align*}
    \frac{1}{n}\sum_{x \in \mathcal{D}_2}\qty(f(x; \theta^*) - f^\infty_v(x))^2 &\lesssim \frac{\norm{q}^2_{2, \infty}\iota}{m_1}.
\end{align*}
Next, by \Cref{lem:infinite_width_express} we get that the second term is zero with high probability. We next turn to the third term. Since $q$ is $\norm{q}_{1, \infty}$-Lipschitz on $[-1, 1]$, and  $\sup_{x \in \mathcal{D}_2}\abs{\bar\eta\phi(x)} \le 1, \sup_{x \in \mathcal{D}_2}\abs{(\bar\eta \cdot \mathbb{K}f^*)(x)} \le 1$ by \Cref{lem:phi_concentrates}, we can apply \Cref{lem:feature_concentrates} to get
\begin{align*}
\sup_{x \in \mathcal{D}_2}\abs{q(\bar\eta\phi(x)) - q(\bar\eta(\mathbb{K}f^*)(x))} &\le \norm{q}_{1, \infty}\sup_{x \in \mathcal{D}_2}\abs{\bar\eta\phi(x) - \bar\eta(\mathbb{K}f^*)(x)}\\
&\lesssim \norm{q}_{1, \infty}\bar\eta\qty(\sqrt{\frac{\norm{\mathbb{K}}_{op}\iota^{2\ell+1}}{n}} + \frac{\iota^{\ell + 1}}{n} + \sqrt{\frac{\iota^{\ell + \chi+1}\norm{\K f^*}_{L^2}}{m_2}} + \frac{\iota^{\ell + 1}}{m_2})\\
&\lesssim \norm{q}_{1, \infty}\norm{\mathbb{K}f^*}_{L^2}^{-1}\qty(\sqrt{\frac{\norm{\mathbb{K}}_{op}\iota^{2\ell+1}}{n}} + \sqrt{\frac{\iota^{\ell + \chi+1}\norm{\K f^*}_{L^2}}{m_2}}),
\end{align*}
where we use the fact that $n \gtrsim \norm{\K}_{op}\norm{\K f^*}_{L^2}^{-2} \ge \norm{\K}_{op}^{-1}$ and $m \gtrsim \norm{\K f^*}_{L^2}^{-1}$.
Therefore
\begin{align*}
\frac{1}{n}\sum_{x \in \mathcal{D}_2}\qty(q(\bar\eta\phi(x)) - q(\bar \eta (\mathbb{K}f^*)(x)))^2 \lesssim \norm{q}_{1, \infty}^2\norm{\mathbb{K}f^*}_{L^2}^{-2}\cdot\qty(\frac{\norm{\K}_{op}\iota^{2\ell + 1}}{n} + \frac{\norm{\mathbb{K}f^*}_{L^2}\iota^{\ell + \chi + 1}}{m_2}).
\end{align*}

Finally, we must relate the empirical error between $q \circ \bar\eta \cdot \mathbb{K}f^*$ and $f^*$ to the population error. This can be done via standard concentration arguments: in \Cref{lem:bound_q_to_fstar}, we show
\begin{align*}
\frac{1}{n}\sum_{x \in \mathcal{D}_2}\qty(q\qty(\bar \eta(\mathbb{K}f^*)(x)) - f^*(x))^2 &\lesssim \norm{q \circ \qty(\bar \eta(\mathbb{K}f^*)) - f^*}^2_{L^2} + \frac{\norm{q}_{\infty}^2\iota + \iota^{2\ell + 1}}{n}.
\end{align*}

Altogether,
\begin{align*}
L_2(\theta^*) &\lesssim \norm{q}_{1, \infty}^2\norm{\mathbb{K}f^*}_{L^2}^{-2}\cdot\qty(\frac{\norm{\K}_{op}\iota^{2\ell + 1}}{n} + \frac{\norm{\mathbb{K}f^*}_{L^2}\iota^{\ell + \chi + 1}}{m_2}) + \frac{\norm{q}_{\infty}^2\iota + \iota^{2\ell + 1}}{n} + \frac{\norm{q}_{2, \infty}^2\iota}{m_1}\\
&\qquad + \norm{q \circ \qty(\bar \eta(\mathbb{K}f^*)) - f^*}^2_{L^2}\\
&\lesssim \norm{q}_{1, \infty}^2\norm{\mathbb{K}f^*}_{L^2}^{-2}\cdot\qty(\frac{\norm{\K}_{op}\iota^{2\ell + 1}}{n} + \frac{\norm{\mathbb{K}f^*}_{L^2}\iota^{\ell + \chi + 1}}{m_2}) + \frac{\iota^{2\ell + 1}}{n} + \frac{\norm{q}_{2, \infty}^2\iota}{m_1}\\
&\qquad + \norm{q \circ \qty(\bar \eta(\mathbb{K}f^*)) - f^*}^2_{L^2}
\end{align*}
since $\norm{\K f^*}_{L^2} \le \norm{\mathbb{K}}_{op} \lesssim 1$.
\end{proof}

\subsection{Proof of Lemma \ref{lem:rademacher}}\label{sec:prove_rademacher}

\begin{proof}
The standard linear Rademacher bound states that for functions of the form $x \mapsto w^T\psi(x)$ with $\norm{w}_2 \lesssim B_w$, the empirical Rademacher complexity is upper bounded by
\begin{align*}
\frac{B_w}{n}\sqrt{\sum_{x \in \mathcal{D}} \norm{\psi(x)}^2}.
\end{align*}
We have that $f(x; \theta) = a^T\psi(x)$, where
\begin{align*}
\psi(x) = \Vec{\qty(\frac{1}{m_1}\sigma_1(a_j^{(0)}\bar\eta\phi(x) + b_j^{(0)})\cdot\mathbf{1}_{b_j^{(0)} > 0})}.
\end{align*}
By \Cref{lem:phi_concentrates}, with high probability, $\sup_{x \in \mathcal{D}_2}\abs{\bar \eta\phi(x)} \le 1$. Thus for $x \in \mathcal{D}_2$
\begin{align*}
\norm{\psi(x)}^2 \le \frac{1}{m_1^2}(1 + \abs{b_j})^2 \lesssim \frac{1}{m_1}
\end{align*}
with high probability by \Cref{lem:b_small_sum}. Altogether, we have
\begin{align*}
\mathcal{R}_{\mathcal{D}_2}(\mathcal{F}(B_a)) \lesssim \sqrt{\frac{B_a\cdot n \cdot 1/m_1}{n}} = \sqrt{\frac{B_a^2}{nm_1}}.
\end{align*}
\end{proof}

\subsection{Proof of Lemma \ref{lem:truncate_loss}}
\begin{proof}
    We can bound
    \begin{align*}
        &\E_x\qty[\qty(f(x; \hat \theta) - f^*(x))^2] - \E_x\qty[\ell_\tau\qty(f(x; \hat \theta) - f^*(x))]\\
        &\quad = \E_x\qty[\qty(\qty(f(x; \hat \theta) - f^*(x))^2 - \tau^2)\cdot \mathbf{1}_{\abs{f(x; \hat \theta) - f^*(x)} \ge \tau}]\\
        &\quad \le \E_x\qty[\qty(f(x; \hat \theta) - f^*(x))^2 \cdot \mathbf{1}_{\abs{f(x; \hat \theta) - f^*(x)} \ge \tau}]\\
        &\quad \le \E_x\qty[\qty(f(x; \hat \theta) - f^*(x))^4]^{1/2}\cdot \mathbb{P}\qty(\abs{f(x; \hat \theta) - f^*(x)} \ge \tau)^{1/2}\\
        &\quad \lesssim \qty[\E_x\qty[f(x; \hat \theta)^4]^{1/2} + \E_x\qty[f^*(x)^4]^{1/2}]\cdot\qty[\mathbb{P}\qty(\abs{f(x; \hat \theta)} \ge \tau) + \mathbb{P}\qty(\abs{f^*(x)} \ge \tau)]
    \end{align*}
    Next, we bound $f(x; \hat \theta)$:
    \begin{align*}
        \abs{f(x; \hat \theta)} &= \abs{\frac{1}{m_1}\sum_{j=1}^{m_1}\hat a_j\sigma_1\qty(a_j^{(0)}\cdot \bar\eta \phi(x) + b_j)\mathbf{1}_{b_j^{(0)} > 0}}\\
        &\le \frac{1}{m_1}\sum_{j=1}^{m_1} \abs{\hat a_j}\qty(\abs{\bar\eta \phi(x)} + \abs{b_j})\\
        &\le \frac{\norm{\hat a}_2}{\sqrt{m_1}}\abs{\bar\eta\phi(x)} + \frac{1}{m_1}\norm{\hat a}_2\norm{b}_2
    \end{align*}

    By \Cref{lem:b_small_sum}, with high probability over the initialization we have $\norm{b}_2 \lesssim \sqrt{m_1}$. Next, by \Cref{lem:expressivity-1d}, with high probability over the initialization and dataset we have $\norm{\hat a}_2 \lesssim \norm{a^*}_2 \lesssim \norm{q}_{2, \infty}\sqrt{m_1}$. Thus with high probability we have
    \begin{align*}
        \abs{f(x; \hat \theta)} \lesssim \norm{q}_{2, \infty}\qty(\abs{\bar\eta\phi(x)} + 1)
    \end{align*}
    uniformly over $x$.

    We naively bound
    \begin{align*}
        \abs{\phi(x)} \le \frac{1}{m_2}\norm{h^{(0)}(x)}\cdot\frac{1}{n}\sum_{i=1}^n \norm{h^{(0)}(x_i)}\abs{f^*(x_i)}.
    \end{align*}
    By \Cref{lem:bound_exp_sigma}, with high probability we have $\abs{f^*(x_i)} \lesssim \iota^\ell$ for all $i$. With high probability, we also have $\norm{h^{(0)}(x_i)} \lesssim \sqrt{m_2}$. Additionally, we have
    \begin{align*}
        \norm{h^{(0)}(x)}_2^4 &= \qty(\sum_{j=1}^{m_2}\sigma_2(x \cdot v_j)^2)^2\\
        &\le m_2 \sum_{j=1}^{m_2}\sigma_2(x \cdot v_j)^4\\
        &\lesssim m_2 \sum_{j=1}^{m_2}(1 + \abs{x \cdot v_j}^{4\alpha_\sigma})
    \end{align*}
    Since $x$ is $C_\gamma$ subGaussian and $\norm{v} = 1$, we have
    \begin{align*}
        \E_x\norm{h^{(0)}(x)}_2^4 &\lesssim m_2^2\E\abs{x \cdot v_j}^{4 \alpha\sigma}\\
        &\lesssim m_2^2.
    \end{align*}
    Altogether,
    \begin{align*}
        \mathbb{E}_x\abs{\phi(x)}^4 \lesssim \iota^{4\ell},
    \end{align*}
    and thus
    \begin{align*}
        \mathbb{E}_x\qty[\abs{f(x; \hat \theta)}^4]^{1/2} \lesssim \norm{q}_{2, \infty}^2\bar\eta^2\iota^{2\ell} &\le \norm{q}_{2, \infty}^2m_1^2\norm{\mathbb{K}f^*}_{L^2}^{-2}\iota^{2\ell}
    \end{align*}
    By \Cref{assume:f_moment}, $\E_x\qty[(f^*(x))^4] \lesssim 1$.

    We choose $\tau = \Theta(\max(\iota^\ell, \norm{q}_{2, \infty})$. By \Cref{lem:feature_concentrates} and \Cref{lem:phi_concentrates} $\{\abs{f(x, \hat \theta)} > \tau\}$ and $\{\abs{f^*(x)} > \tau\}$ are both high probability events. Therefore by choosing $C$ sufficiently large, we have the bound
    \begin{align*}
        \qty[\mathbb{P}\qty(\abs{f(x; \hat \theta)} \ge \tau) + \mathbb{P}\qty(\abs{f^*(x)} \ge \tau)] \le m_1^{-4}\iota^{-2\ell}
    \end{align*}
    Altogether, this gives
    \begin{align*}
        \E_x\qty[\qty(f(x; \hat \theta) - f^*(x))^2] - \E_x\qty[\ell_\tau\qty(f(x; \hat \theta) - f^*(x))] \lesssim \frac{\norm{q}_{2, \infty}^2\norm{\mathbb{K}f^*}_{L^2}^{-2}}{m_1^2} + \frac{1}{m_1^4} \le \frac{\norm{q}_{2, \infty}^2}{m_1}.
    \end{align*}
\end{proof}

\subsection{Auxiliary Lemmas}\label{sec:auxiliary_lemmas}

\begin{proof}[Proof of \Cref{lem:feature_concentrates}]
We can decompose
\begin{align*}
\abs{\phi(x) - (\K f^*)(x)} &= \abs{\frac{1}{m_2}\cdot\frac{1}{n}\sum_{i=1}^n\qty(f(x_i; \theta^{(0)}) - f^*(x_i))\langle h^{(0)}(x_i), h^{(0)}(x)\rangle - (\K f^*)(x)}\\
&\le\abs{\frac{1}{n}\sum_{i=1}^n f^*(x_i)K(x_i, x) - (\mathbb{K}f^*)(x)}\\
&\quad + \abs{\frac{1}{m_2n}\sum_{i=1}^n f^*(x_i)\langle h^{(0)}(x_i), h^{(0)}(x)\rangle - \frac{1}{n}\sum_{i=1}^n f^*(x_i)K(x_i, x)},
\end{align*}
where we use the fact that at initialization $f(x_i; \theta^{(0)}) = 0$.

By \Cref{lem:concentrate-Kf}, we can bound the first term by $\sqrt{\frac{\norm{\mathbb{K}}_{op}\iota^{2\ell+1}}{n}} + \frac{\iota^{\ell+1}}{n}$ with high probablity.

Next, for the second term, by \Cref{lem:bound-RF-bernstein}, with high probability for all $x \in \mathcal{D}_2$ we have that
\begin{align*}
&\abs{\frac{1}{m_2n}\sum_{i=1}^n f^*(x_i)\langle h^{(0)}(x_i), h^{(0)}(x)\rangle - \frac{1}{n}\sum_{i=1}^n f^*(x_i)K(x_i, x)}\\
&\qquad \lesssim \frac{\iota^{\ell + 1}}{m_2} + \sqrt{\frac{\iota\frac{1}{n^2}\sum_{i, k=1}^nf^*(x_i)f^*(x_k)K(x_i, x_k)}{m_2}}.
\end{align*}
With high probability, we have $\abs{f^*(x_i)} \lesssim \iota^{\ell}$ by \Cref{lem:bound_f_star}, and also $K(x_i, x_i) \lesssim 1$. Therefore
\begin{align*}
\frac{1}{n^2}\sum_{i, k=1}^nf^*(x_i)f^*(x_k)K(x_i, x_k) &\lesssim \frac{\iota^{2\ell}}{n} + \frac{1}{n^2}\sum_{i=1}^nf^*(x_i)\sum_{k \neq i}f^*(x_k)K(x_i, x_k)\\
&\lesssim \frac{\iota^{2\ell}}{n} + \frac{\iota^{\ell}}{n}\sum_{i=1}^n\abs{\frac{1}{n}\sum_{k \neq i}f^*(x_k)K(x_i, x_k)}\\
&\lesssim \frac{\iota^{2\ell}}{n} + \frac{\iota^{\ell}}{n}\sum_{i=1}^n\abs{(\mathbb{K}f^*)(x_i)} + \iota^{\ell}\qty(\sqrt{\frac{\norm{\mathbb{K}}_{op}\iota^{2\ell+1}}{n}} + \frac{\iota^{\ell+1}}{n}),
\end{align*}
where the last inequality follows by \Cref{lem:concentrate-Kf}. We next apply the second half of \Cref{lem:bound_f_star} to bound $\abs{(\K f^*)(x_i)}$, and obtain
\begin{align*}
\frac{1}{n^2}\sum_{i, k=1}^nf^*(x_i)f^*(x_k)K(x_i, x_k) \lesssim \frac{\iota^{2\ell + 1}}{n} + \iota^{\ell + \chi}\norm{\K f^*}_{L^2} + \sqrt{\frac{\norm{\mathbb{K}}_{op}\iota^{4\ell+1}}{n}}
\end{align*}

Combining everything together, we get that
\begin{align*}
\abs{\phi(x) - (\K f^*)(x)} &\lesssim \sqrt{\frac{\norm{\mathbb{K}}_{op}\iota^{2\ell+1}}{n}} + \frac{\iota^{\ell + 1}}{n}\\
&\qquad  + \frac{\iota^{\ell + 1}}{m_2} + \sqrt{\frac{\iota^{2\ell + 2}}{nm_2}} + \sqrt{\frac{\iota^{\ell + \chi + 1}\norm{\K f^*}_{L^2}}{m_2}} + \sqrt{\frac{1}{m_2}\sqrt{\frac{\norm{\mathbb{K}}_{op}\iota^{4\ell+3}}{n}}}\\
&\qquad \lesssim \sqrt{\frac{\norm{\mathbb{K}}_{op}\iota^{2\ell+1}}{n}} + \frac{\iota^{\ell + 1}}{n} + \sqrt{\frac{\iota^{\ell + \chi + 1}\norm{\K f^*}_{L^2}}{m_2}} + \frac{\iota^{\ell + 1}}{m_2}.
\end{align*}


\end{proof}

\begin{proof}[Proof of \Cref{lem:phi_concentrates}]
Conditioning on the event where \Cref{lem:bound_f_star} holds, and with the choice $\bar\eta = \frac12C_K^{-1}e^{-1} \norm{\K f^*}_{L^2}^{-1}\iota^{-\chi}$, we get that with high probability
\begin{align*}
\sup_{x \in \mathcal{D}_2}\abs{(\bar \eta  \cdot \K f^*)(x)} \le \frac12.
\end{align*}
Therefore
\begin{align*}
&\sup_{x \in \mathcal{D}_2}\abs{\bar \eta \phi(x)}\\
 &\quad = C_K^{-1}e^{-1} \norm{\K f^*}_{L^2}^{-1}\iota^{-\chi}\cdot \sup_x\abs{\phi(x)}\\
&\quad\le C_K^{-1}e^{-1} \norm{\K f^*}_{L^2}^{-1}\iota^{-\chi}\qty[\sup_x\abs{(\K f^*)(x)} + O\qty(\sqrt{\frac{\norm{\mathbb{K}}_{op}\iota^{2\ell +  1}}{n}} + \frac{\iota^{\ell+1}}{n} + \sqrt{\frac{\norm{\K f^*}_{L^2}\iota^{\ell + \chi + 1}}{m_2}} + \sqrt{\frac{\iota^{\ell + 1}}{m_2}})]\\
&\quad\le \frac12 + O\qty(\norm{\K f^*}_{L^2}^{-1}\iota^{-\chi}\cdot\qty(\sqrt{\frac{\norm{\mathbb{K}}_{op}\iota^{2\ell +  1}}{n}} + \frac{\iota^{\ell+1}}{n} + \sqrt{\frac{\norm{\K f^*}_{L^2}\iota^{\ell + \chi + 1}}{m_2}} + \sqrt{\frac{\iota^{\ell + 1}}{m_2}}))\\
&\quad\le 1,
\end{align*}
where the last inequality uses the fact that
\begin{align*}
\frac{\norm{\K f^*}_{L^2}^{-1}\iota^{-\ell + 1}}{n} \le \frac{\norm{\K f^*}_{L^2}^{-1}\iota^{-\ell}}{\norm{\K}_{op}\norm{\K f^*}_{L^2}^{-2}} = \frac{\norm{\K f^*}_{L^2}\iota^{-\ell}}{\norm{\K}_{op}} \le \iota^{-\ell}.
\end{align*}
\end{proof}

\begin{proof}[Proof of \Cref{lem:infinite_width_express}]
Conditioning on the event where \Cref{lem:phi_concentrates} holds and applying \Cref{lem:univariate_f}, we get that there exists $v$ such that
\begin{align*}
\mathbb{E}_{a, b}\qty[v(a, b)\sigma_1(a \bar\eta \phi(x) + b)] = q(\bar\eta\phi(x))
\end{align*}
for all $x \in \mathcal{D}_2$. Since $v(a, b) = 0$ for $b \le 0$, we have that $v(a, b) = v(a, b)\mathbf{1}_{b > 0}$. Thus the desired claim is true.
\end{proof}

\subsection{Concentration}
\begin{lemma}\label{lem:b_small_sum}
Let $m_1 \gtrsim \iota$. With high probability,
\begin{align*}
\frac{1}{m_1}\sum_{i=1}^{m_1}\qty(b_i^{(1)})^2 \lesssim 1.
\end{align*}
\end{lemma}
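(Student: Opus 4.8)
The plan is to recognize $\sum_{i=1}^{m_1}\qty(b_i^{(0)})^2$ as a chi-squared random variable and apply a standard tail bound. Since the biases are initialized $b_i^{(0)} \sim_{iid} \mathcal{N}(0,1)$, the variables $\qty(b_i^{(0)})^2$ are i.i.d.\ $\chi^2_1$ with mean $1$, so $Z := \sum_{i=1}^{m_1}\qty(b_i^{(0)})^2$ has the $\chi^2_{m_1}$ distribution, with $\mathbb{E}[Z] = m_1$. The claim is therefore just the statement that $Z$ concentrates around its mean at a multiplicative $O(1)$ level, which follows from sub-exponential concentration.

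Concretely, I would invoke the Laurent--Massart bound: for any $x > 0$,
\begin{align*}
\mathbb{P}\qty(Z \ge m_1 + 2\sqrt{m_1 x} + 2x) \le e^{-x}.
\end{align*}
(Alternatively, note that $\qty(b_i^{(0)})^2 - 1$ is mean-zero sub-exponential with $O(1)$ sub-exponential norm and apply Bernstein's inequality for independent sums.) Taking $x = \iota$ gives $\mathbb{P}\qty(Z \ge m_1 + 2\sqrt{m_1\iota} + 2\iota) \le e^{-\iota}$. Under the hypothesis $m_1 \gtrsim \iota$ — choosing the implied constant large enough that $m_1 \ge \iota$ — we have $\sqrt{m_1\iota} \le m_1$ and $\iota \le m_1$, so on the complement of this event $Z \le 5m_1$, hence $\frac{1}{m_1}\sum_{i=1}^{m_1}\qty(b_i^{(0)})^2 = Z/m_1 \le 5 = O(1)$. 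Since the failure probability is at most $e^{-\iota} \le \poly(d,n,m_1,m_2) e^{-\iota}$, this is a high-probability event in the sense of the paper.

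There is essentially no obstacle here; this is a routine concentration estimate. The only point requiring (minor) care is to make sure the constant hidden in $m_1 \gtrsim \iota$ is at least the reciprocal of the sub-exponential concentration rate, so that the deviation term $2\sqrt{m_1\iota} + 2\iota$ is genuinely $O(m_1)$ and the resulting tail probability $e^{-\iota}$ lines up with the paper's definition of ``with high probability.''
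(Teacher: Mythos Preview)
Your proposal is correct and essentially identical to the paper's own proof, which simply invokes Bernstein to get $\abs{\frac{1}{m_1}\sum_{i=1}^{m_1}\qty(b_i^{(0)})^2 - 1} \lesssim \sqrt{\iota/m_1} \le 1$. The Laurent--Massart bound you use is just the specialized (and slightly sharper) form of the same sub-exponential concentration for chi-squared sums, so there is no meaningful difference in approach.
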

\begin{proof}
By Bernstein, we have
\begin{align*}
\abs{\frac{1}{m_1}\sum_{i=1}^{m_1}\qty(b_i^{(1)})^2 - 1} \lesssim \sqrt{\frac{\iota}{m_1}} \le 1.
\end{align*}
\end{proof}


\begin{lemma}\label{lem:bound_exp_sigma}
With high probability
\begin{align*}
\sup_{x \in \mathcal{D}_1 \cup \mathcal{D}_2}\mathbb{E}_v\qty[\sigma_2(x \cdot v)^4] \lesssim 1 \qand \sup_{j \in [m_2]}\mathbb{E}_x\qty[\sigma_2(x \cdot v_j)^4] \lesssim 1.
\end{align*}
\end{lemma}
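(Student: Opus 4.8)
The plan is to treat the two suprema separately: the second holds deterministically and is immediate from \Cref{assume:x_subg}, while the first reduces to a standard concentration bound on $\norm{x}$.

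\textbf{Second quantity.} Fix any $j \in [m_2]$. By construction $\norm{v_j} = 1$, so by \Cref{assume:x_subg} the scalar $x \cdot v_j$ is $C_\gamma$-subGaussian, hence $\mathbb{E}_x\qty[\abs{x \cdot v_j}^{q}]^{1/q} \lesssim \sqrt{q}$ for every $q \ge 1$. Writing $K := \lceil 2\alpha_\sigma \rceil$, \Cref{assume:act} gives $\sigma_2(x\cdot v_j)^4 \le C_\sigma^4(1 + \abs{x\cdot v_j})^{4\alpha_\sigma} \lesssim 1 + \abs{x\cdot v_j}^{2K}$, and taking expectations over $x$ yields $\mathbb{E}_x\qty[\sigma_2(x\cdot v_j)^4] \lesssim 1 + (CK)^{K} \lesssim 1$, since $\alpha_\sigma$ (hence $K$) is an absolute constant. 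This bound is uniform over $j$, so this part carries no randomness.

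\textbf{First quantity.} Fix $x$. By rotational invariance of $\tau$ we may take $x = \norm{x}e_1$, so $\mathbb{E}_v\qty[(x\cdot v)^{2k}] = \norm{x}^{2k}\mathbb{E}_v\qty[v_1^{2k}]$, and since $v_1^2 \sim \mathrm{Beta}(1/2, (d-1)/2)$ we have $\mathbb{E}_v\qty[v_1^{2k}] = \frac{(2k-1)!!}{d(d+2)\cdots(d+2k-2)} \le (2k-1)!!\,d^{-k}$. Bounding $\sigma_2(x\cdot v)^4 \lesssim 1 + (x\cdot v)^{2K}$ as above and integrating over $v$,
\begin{align*}
\mathbb{E}_v\qty[\sigma_2(x\cdot v)^4] \lesssim 1 + (2K-1)!!\qty(\frac{\norm{x}^2}{d})^{K}.
\end{align*}
As $K$ is an absolute constant, it suffices to show $\sup_{x\in\mathcal{D}_2}\norm{x}^2 \lesssim d$ with high probability. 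This follows from \Cref{assume:x_subg} by a net argument: over a $1/2$-net $\mathcal{N}$ of $\mathcal{S}^{d-1}$ with $\abs{\mathcal{N}} \le 5^d$ one has $\norm{x} \le 2\max_{v\in\mathcal{N}}v\cdot x$, each $v\cdot x$ is $C_\gamma$-subGaussian, and a union bound gives $\mathbb{P}\qty(\norm{x} \gtrsim \sqrt{d}) \le 5^d e^{-\Omega(d)} = e^{-\Omega(d)}$; a further union bound over the $n$ points of $\mathcal{D}_2$ is harmless since $\log n \le \iota \le C^{-1}d$. On this event every $x \in \mathcal{D}_2$ satisfies $\mathbb{E}_v\qty[\sigma_2(x\cdot v)^4]\lesssim 1$, which is the first claim.

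The only genuine content is the concentration of $\norm{x}^2$ at scale $\Theta(d)$ uniformly over $\mathcal{D}_2$, and that is routine; everything else is bookkeeping with the polynomial growth of $\sigma_2$, the subGaussian moment bound, and the explicit even moments of the uniform measure on the sphere.
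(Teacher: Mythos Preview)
Your proof is correct and follows essentially the same approach as the paper: bound $\sigma_2(x\cdot v)^4$ by a polynomial in $\abs{x\cdot v}$, integrate the inner variable using either subGaussianity of $x$ or the explicit sphere moments of $v$ to reduce to $(\norm{x}^2/d)^{O(1)}$, and then control $\sup_{x\in\mathcal{D}_2}\norm{x}^2\lesssim d$ by concentration. The only cosmetic differences are that you compute the Beta moments explicitly and use an $\epsilon$-net argument for the norm bound, whereas the paper invokes the $O(\iota)$ fluctuation of $\norm{x}^2$ around its mean; you also correctly note that the second inequality is deterministic, which the paper leaves implicit.
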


\begin{proof}
First, we have that $\sigma_2(x \cdot v)^4 \lesssim 1 + (x \cdot v)^{4\alpha_{\sigma}}$. Therefore
\begin{align*}
\mathbb{E}_v\qty[\sigma_2(x \cdot v)^4] \lesssim 1 + \mathbb{E}_v\qty[(x \cdot v)^{4\alpha_{\sigma}}] \lesssim 1 + \norm{x}^{4\alpha_{\sigma}}d^{-2\alpha_{\sigma}}.
\end{align*}
Next, with high probability we have
$\sup_{x \in \mathcal{D}_1 \cup \mathcal{D}_2}\abs{\norm{x}^2 - \mathbb{E}\norm{x}^2} \le \iota C_\gamma^2 \lesssim \iota$. Since $\mathbb{E}\norm{x}^2 \lesssim d\gamma_x^2 \lesssim d$, we can thus bound
\begin{align*}
\sup_{x \in \mathcal{D}_1 \cup \mathcal{D}_2}\mathbb{E}_v\qty[\sigma_2(x \cdot v)^4] &\lesssim 1 + \gamma_v^{4\alpha_{\sigma}}\sup_{x \in \mathcal{D}_1 \cup \mathcal{D}_2}\norm{x}^2\\
&\lesssim 1 + d^{-2\alpha_{\sigma}}(d + \iota )^{2\alpha_{\sigma}}\\
&\lesssim 1.
\end{align*}
The proof for the other inequality is identical.
\end{proof}

\begin{lemma}
    $\mathbb{P}_{x \sim \nu}\qty(\abs{f^*(x)} \ge C_fe\iota^\ell) \le e^{-\iota}$ and $\mathbb{P}_{x \sim \nu}\qty(\abs{(\mathbb{K}f^*)(x)} \ge C_Ke\iota^\chi \cdot \norm{\mathbb{K}f^*}_{L^2}) \le e^{-\iota}$
\end{lemma}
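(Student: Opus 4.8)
The plan is a routine conversion of the two moment-growth assumptions into tail bounds via Markov's inequality, with the free moment order chosen to be $q = \iota$.

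For the first inequality I would fix $t > 0$, apply Markov's inequality to $\abs{f^*(x)}^q$, and bound the resulting moment using \Cref{assume:f_moment}:
\begin{align*}
\mathbb{P}_{x \sim \nu}\qty(\abs{f^*(x)} \ge t) \le \frac{\E_{x\sim\nu}\qty[\abs{f^*(x)}^q]}{t^q} \le \qty(\frac{C_f q^\ell}{t})^q.
\end{align*}
Taking $q = \iota$ (which is $\ge 1$) and $t = C_f e\,\iota^\ell$ makes the base of the power exactly $e^{-1}$, which gives the claimed bound $e^{-\iota}$.

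The second inequality is proved in the same way, with \Cref{assume:kernel_f_moment} in place of \Cref{assume:f_moment}: for $1 \le q \le d$,
\begin{align*}
\mathbb{P}_{x\sim\nu}\qty(\abs{(\mathbb{K}f^*)(x)} \ge t) \le \frac{\norm{\mathbb{K}f^*}_{L^q}^q}{t^q} \le \qty(\frac{C_K q^\chi \norm{\mathbb{K}f^*}_{L^2}}{t})^q,
\end{align*}
and the choice $q = \iota$, $t = C_K e\,\iota^\chi\norm{\mathbb{K}f^*}_{L^2}$ again yields $e^{-\iota}$.

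There is no real obstacle here; the only point to check — the closest thing to a subtlety — is that \Cref{assume:kernel_f_moment} is stated only for $q \le d$, so I need $\iota \le d$, which holds by the standing convention $\iota \le C^{-1}d$ recorded in the notation section. If one prefers to avoid fractional moments, replacing $q = \iota$ by its ceiling (or the nearest even integer, using monotonicity of $L^q$ norms) changes only absolute constants, which can be absorbed into $C_f$ and $C_K$.
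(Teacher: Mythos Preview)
Your proof is correct and follows the same Markov-plus-moment-bound strategy as the paper; the only difference is the choice of moment order, where you take $q=\iota$ while the paper takes $q=\iota/e^{1-1/\ell}$, and both choices yield the stated $e^{-\iota}$ tail. Your observation that \Cref{assume:kernel_f_moment} requires $q\le d$, handled by the standing convention $\iota\le C^{-1}d$, is exactly the right check.
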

\begin{proof}
By Markov's inequality, we have
\begin{align*}
\mathbb{P}\qty(\abs{f^*(x)} > t) = \mathbb{P}\qty(\abs{f^*(x)}^q > t^q) \le \frac{\norm{f^*}_q^{q}}{t^q} \le \frac{C_f^qq^{q\ell}}{t^q}.
\end{align*}
Choose $t = C_fe\iota^\ell$. We select $q = \frac{\iota}{e^{1 - 1/\ell}}$, which is at least $1$ for $C$ in the definition of $\iota$ sufficiently large. Plugging in, we get
\begin{align*}
    \mathbb{P}\qty(\abs{f^*(x)} > C_fe\iota^\ell) \le \frac{C_f^q \iota^{q\ell}}{C_f^qe^{q\ell}\iota^{q\ell}} = e^{-q\ell} = e^{-\iota\ell e^{1/\ell - 1}} \le e^{-\iota},
\end{align*}
since $\ell e^{1/\ell - 1} \ge 1$.

An analogous derivation for the function $\frac{\mathbb{K}f^*}{\norm{\mathbb{K}f^*}_{L^2}}$ yields the second bound
\end{proof}

\begin{corollary}\label{lem:bound_f_star}
    With high probability, $\sup_{x \in \mathcal{D}_1 \cup \mathcal{D}_2}\abs{f^*(x)} \le C_fe \cdot \iota^{\ell}$ and $\sup_{x \in \mathcal{D}_1 \cup \mathcal{D}_2}\abs{(\mathbb{K}f^*)(x)} \lesssim C_Ke \cdot \iota^{\chi} \|\mathbb{K}f^*\|_{L^2}$
\end{corollary}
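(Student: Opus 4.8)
The plan is to read off \Cref{lem:bound_f_star} as an immediate consequence of the pointwise tail bounds in the lemma directly above it, via a union bound over the $n$ points of $\mathcal{D}_2$. That lemma — itself obtained from Markov's inequality applied to the moment-growth bounds of \Cref{assume:f_moment} and \Cref{assume:kernel_f_moment} with the choices $q = \iota\, e^{1/\ell - 1}$ and $q = \iota\, e^{1/\chi - 1}$ respectively — supplies
\begin{align*}
\P_{x \sim \nu}\qty(\abs{f^*(x)} \ge C_f e\, \iota^\ell) \le e^{-\iota}, \qquad \P_{x \sim \nu}\qty(\abs{(\mathbb{K}f^*)(x)} \ge C_K e\, \iota^\chi \norm{\mathbb{K}f^*}_{L^2}) \le e^{-\iota}.
\end{align*}

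Since the $n$ samples in $\mathcal{D}_2$ are drawn i.i.d.\ from $\nu$, a union bound gives
\begin{align*}
\P\qty(\sup_{x \in \mathcal{D}_2}\abs{f^*(x)} \ge C_f e\, \iota^\ell) \le n\, e^{-\iota} \qquad\text{and}\qquad \P\qty(\sup_{x \in \mathcal{D}_2}\abs{(\mathbb{K}f^*)(x)} \ge C_K e\, \iota^\chi \norm{\mathbb{K}f^*}_{L^2}) \le n\, e^{-\iota}.
\end{align*}
To conclude I would only note that $n\, e^{-\iota}$ is a permissible failure probability: with $\iota = C\log(dnm_1m_2)$ we have $n\, e^{-\iota} = n\,(dnm_1m_2)^{-C} \le \poly(d,n,m_1,m_2)\, e^{-\iota}$, so by the closure of ``high probability'' events under $\poly(d,n,m_1,m_2)$-fold union bounds (remarked just after the definition of $\iota$) both events above occur with high probability; complementing them yields the two stated sup-bounds, the second being the first applied to the normalized function $\mathbb{K}f^*/\norm{\mathbb{K}f^*}_{L^2}$ and then rescaled.

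I do not expect any genuine obstacle here: the substantive estimate is the pointwise tail bound, which is already in hand, and the remaining step is the routine observation that an $n$-fold union bound preserves the high-probability notion. The only bookkeeping point is to carry the constants $C_f e$ and $C_K e$ through unchanged, so that the corollary inherits exactly the thresholds of the pointwise lemma without introducing extra slack.
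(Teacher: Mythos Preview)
Your proposal is correct and matches the paper's own proof, which simply reads ``Union bounding the previous lemma over $x \in \mathcal{D}_2$ yields the desired result.'' You have supplied exactly this argument, together with the observation that the $n$-fold union bound preserves the high-probability notion.
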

\begin{proof}
Union bounding the previous lemma over $ x\in \mathcal{D}_1 \cup \mathcal{D}_2$ yields the desired result.
\end{proof}

\begin{lemma}\label{lem:concentrate-Kf}
With high probability,
\begin{align*}
\sup_{x \in \mathcal{D}_2}\abs{\frac{1}{n}\sum_{i=1}^n f^*(x_i)K(x_i, x) - \mathbb{E}_{x'}[f^*(x')K(x', x)]} \lesssim \sqrt{\frac{\norm{\mathbb{K}}_{op}\iota^{2\ell+1}}{n}} + \frac{\iota^{\ell+1}}{n}
\end{align*}
\end{lemma}

\begin{proof}
Consider some fixed $x \in \mathcal{D}_2$. Pick some trunctation radius $R$. Define $Z_i := f^*(x_i)K(x_i, x)\mathbf{1}_{\abs{f^*(x_i)} \le R}$. We see that
\begin{align*}
\abs{K(x_i, x)} = \abs{\mathbb{E}_v[\sigma_2(x_i \cdot v)\sigma_2(x \cdot v)]} \le \mathbb{E}_v[\sigma_2(x_i \cdot v)^2]^{1/2}\mathbb{E}_v[\sigma_2(x \cdot v)^2]^{1/2} \lesssim 1
\end{align*}
Therefore $\abs{Z_i} \lesssim R$. Additionally,
\begin{align*}
\mathbb{E}[Z_i^2] \le R^2\mathbb{E}_{x_i}[K(x_i, x)^2] = R^2K^2(x, x) \lesssim R^2\norm{\mathbb{K}}_F^2.
\end{align*}
where the last inequality is true with high probability by \Cref{assume:kernel-concentration}. Therefore by Bernstein's inequality, with high probability we have that
\begin{align*}
\abs{\frac{1}{n}\sum_{i=1}^n f^*(x_i)K(x_i, x)\mathbf{1}(\abs{f^*(x_i)} \le R) - \mathbb{E}_{x'}[f^*(x')K(x', x)\mathbf{1}(\abs{f^*(x')} \le R)]} \lesssim \sqrt{\frac{R^2\norm{\mathbb{K}}_F^2\iota}{n}} + \frac{R\iota}{n}.
\end{align*}
Selecting the truncation radius to be $R = O(\iota^\ell)$, with high probability $\abs{f^*(x_i)} \le R$ for all $i \in [n]$. Furthermore,
\begin{align*}
\abs{\mathbb{E}_{x'}[f^*(x')K(x', x)\mathbf{1}(\abs{f^*(x')} \le R)] - \mathbb{E}_{x'}[f^*(x')K(x', x)]} &\le \mathbb{E}_{x'}[\abs{f^*(x')K(x', x)}\mathbf{1}(\abs{f^*(x')} > R)]\\
&\le \mathbb{E}_{x'}[\abs{f^*(x')}\mathbf{1}(\abs{f^*(x')} > R)]\\
&\le \mathbb{E}_{x'}[f^*(x')^2]^{1/2} \mathbb{P}[\abs{f^*(x')} > R]\\
& \le \frac{1}{n}.
\end{align*}
Altogether, we get that 
\begin{align*}
\sup_{x \in \mathcal{D}_2}\abs{\frac{1}{n}\sum_{i=1}^n f^*(x_i)K(x_i, x) - \mathbb{E}_{x'}[f^*(x')K(x', x)]} \lesssim  \sqrt{\frac{\norm{\mathbb{K}}_F^2\iota^{2\ell+1}}{n}} + \frac{\iota^{\ell+1}}{n}.
\end{align*}

Union bounding over all $x \in \mathcal{D}_2$, and using the fact that $\norm{\K}_F^2 \le \norm{\K}_{op}\Tr(\K) \lesssim \norm{\K}_{op}$ yields the desired result.
\end{proof}

\begin{lemma}\label{lem:bound-RF-bernstein}
With high probability,
\begin{align*}
&\abs{\frac{1}{m_2n}\sum_{j=1}^{m_2}\sum_{i=1}^nf^*(x_i)\sigma_2(\langle x_i, v_j\rangle)\sigma_2(\langle x, v_j\rangle) - \frac{1}{n}\sum_{i=1}^n f^*(x_i)K(x_i, x)}\\
&\quad \lesssim \frac{\iota^{\ell + 1}}{m_2} + \sqrt{\frac{\iota\frac{1}{n^2}\sum_{i, k=1}^nf^*(x_i)f^*(x_k)K(x_i, x_k)}{m_2}}.
\end{align*}
\end{lemma}

\begin{proof}
Let us condition on the dataset $\mathcal{D}_1$, and consider some fixed $x \in \mathcal{D}_2$. Define the random variable $Z_j$ by
\begin{align*}
Z_j = \frac{1}{n}\sum_{i=1}^n f^*(x_i)\sigma_2(\langle x_i, v_j\rangle)\sigma_2(\langle x, v_j\rangle).
\end{align*}
We will first bound the variance of $Z_j$. We see that
\begin{align*}
\E_{v_j}[Z_j^2]=\E_{v_j}\qty[\sigma_2(\langle x, v_j\rangle)^2\qty(\frac{1}{n}\sum_{i=1}^n f^*(x_i)\sigma_2(\langle x_i, v_j\rangle))^2]
\end{align*}
Pick a truncation radius $R$. We can bound
\begin{align*}
\E_{v_j}[Z_j^2] &\le R^2\E_{v_j}\qty[\qty(\frac{1}{n}\sum_{i=1}^n f^*(x_i)\sigma_2(\langle x_i, v_j\rangle))^2]\\
&\quad  + \E_{v_j}\qty[\mathbf{1}(\abs{\sigma_2(\langle x, v_j\rangle)} \ge R)\sigma_2(\langle x, v_j\rangle)^2\qty(\frac{1}{n}\sum_{i=1}^n f^*(x_i)\sigma_2(\langle x_i, v_j\rangle))^2]
\end{align*}
Expanding the first term, we get
\begin{align*}
\E_{v_j}\qty[\qty(\frac{1}{n}\sum_{i=1}^n f^*(x_i)\sigma_2(\langle x_i, v_j\rangle))^2] &= \frac{1}{n^2}\sum_{i, k=1}^n\E_{v_j}[f^*(x_i)f^*(x_k)\sigma_2(\langle x_i, v_j\rangle)\sigma_2(\langle x_k, v_j\rangle)]\\
&= \frac{1}{n^2}\sum_{i, k=1}^nf^*(x_i)f^*(x_k)K(x_i, x_k).
\end{align*}
The second term can be bounded via Cauchy-Schwarz: 
\begin{align*}
&\E_{v_j}\qty[\mathbf{1}(\abs{\sigma_2(\langle x, v_j\rangle)} \ge R)\sigma_2(\langle x, v_j\rangle)^2\qty(\frac{1}{n}\sum_{i=1}^n f^*(x_i)\sigma_2(\langle x_i, v_j\rangle))^2]^2\\
&\quad \le \mathbb{P}_{v_j}(\abs{\sigma_2(\langle x, v_j\rangle)} \ge R)\mathbb{E}_{v_j}\qty[\sigma_2(\langle x, v_j\rangle)^4\qty(\frac{1}{n}\sum_{i=1}^n f^*(x_i)\sigma_2(\langle x_i, v_j\rangle))^4]\\
&\quad \le \mathbb{P}_{v_j}(\abs{\sigma_2(\langle x, v_j\rangle)} \ge R)\mathbb{E}_{v_j}\qty[\frac{1}{n}\sum_{i=1}^n\sigma_2(\langle x, v_j\rangle)^4f^*(x_i)^4\sigma_2(\langle x_i, v_j\rangle)^4]\\
&\quad \lesssim \mathbb{P}_{v_j}(\abs{\sigma_2(\langle x, v_j\rangle)} > R)\cdot \frac{1}{n}\sum_{i=1}^n f^*(x_i)^4,
\end{align*}
where the last inequality follows from \Cref{lem:bound_exp_sigma}. Next, by \Cref{lem:bound_f_star}, $f^*(x_i) \lesssim \iota^{\ell}$ for all $i$. Finally, we see that
\begin{align*}
\sigma_2(\langle x, v_j\rangle) \lesssim 1 + (x \cdot v)^{2\alpha_\sigma} \lesssim 1 + d^{-\alpha_\sigma}\norm{x}^{2\alpha_\sigma}
\end{align*}
with high probability over $v_j$. By the proof of \Cref{lem:bound_exp_sigma}, with high probability over the draw of $x$, we have $\norm{x}^2 \le 2d$. Conditioning on $x$ satisfying this, we can choose a truncation radius $R \lesssim 1$ such that $\mathbb{P}_{v_j}(\abs{\sigma_2(\langle x, v_j\rangle)} > R)$ is exponentially small. Plugging everything back in, with high probability over the draw of $\mathcal{D}_1, \mathcal{D}_2$, the variance of $Z_j$ can be bounded by
\begin{align*}
\Var(Z_j) \lesssim \frac{1}{m_2} + \frac{1}{n^2}\sum_{i, k=1}^nf^*(x_i)f^*(x_k)K(x_i, x_k).
\end{align*}

Next, pick a truncation radius $R' = O(\iota^\ell)$. With high probability, $\abs{f^*(x_i)} \lesssim \iota^\ell$ and $\abs{\sigma_2(\langle x_i, v_j\rangle)}, \abs{\sigma_2(\langle x, v_j \rangle)} \lesssim 1$, and thus $\abs{Z_j} \le R'$. We can therefore apply Bernstein's inequality to the truncated random variables $Z_j\mathbf{1}(\abs{Z_j} < R')$, and see that
\begin{align*}
\abs{\frac{1}{m_2}\sum_{j=1}^{m_2}Z_j - \mathbb{E}_{v_j}[Z_j\mathbf{1}(\abs{Z_j} < R')]} \lesssim \frac{\iota^{\ell + 1}}{m_2} + \sqrt{\frac{\iota\frac{1}{n^2}\sum_{i, k=1}^nf^*(x_i)f^*(x_k)K(x_i, x_k)}{m_2}}.
\end{align*}
To conclude, we observe that
\begin{align*}
\abs{\mathbb{E}_{v_j}[Z_j\mathbf{1}(\abs{Z_j} < R')] - \mathbb{E}_{v_j}[Z_j]} &\le \mathbb{E}_{v_j}[\abs{Z_j}\mathbf{1}(\abs{Z_j} > R')]\le \mathbb{E}_{v_j}[Z_j^2]^{1/2}\mathbb{P}(\abs{Z_j} > R')^{1/2} \le \frac{1}{m_2}.
\end{align*}
\end{proof}

\begin{lemma}\label{lem:empirical_loss_single}
With high probability,
\begin{align*}
\sup_{x \in \mathcal{D}_2} \abs{\frac{1}{m_1}\sum_{i=1}^{m_1}v(a_i, b_i)\sigma_1\qty(a_i \bar\eta \phi(x) + b_i)\mathbf{1}_{b_i > 0} - f^\infty_v(x)} \lesssim \sqrt{\frac{\norm{v}^2_{\infty}\iota}{m_1}}
\end{align*}
\end{lemma}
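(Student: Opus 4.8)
The plan is, for each fixed $x \in \mathcal{D}_2$, to view $\frac{1}{m_1}\sum_{i=1}^{m_1} v(a_i,b_i)\sigma_1(a_i\bar\eta\phi(x)+b_i)\1_{b_i>0}$ as an empirical average of (essentially) i.i.d.\ \emph{bounded} random variables with mean exactly $f^\infty_v(x)$, apply Hoeffding's inequality at scale $t \asymp \norm{v}_\infty\sqrt{\iota/m_1}$, and then union bound over the $n$ points of $\mathcal{D}_2$ (permissible within the high-probability framework since $n = \poly(d,n,m_1,m_2)$). Write $g_i := v(a_i,b_i)\sigma_1(a_i\bar\eta\phi(x)+b_i)\1_{b_i>0}$. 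The construction of $v$ in \Cref{lem:univariate_f} is supported on $\{\pm1\}\times[0,2]$, so $g_i = 0$ unless $b_i\in[0,2]$; combined with the high-probability bound $|\bar\eta\phi(x)|\le 1$ from \Cref{lem:phi_concentrates}, this gives $|g_i| \le \norm{v}_\infty(|\bar\eta\phi(x)|+|b_i|) \le 3\norm{v}_\infty$ deterministically. Since $\E_{a,b}[v(a,b)\sigma_1(a\bar\eta\phi(x)+b)\1_{b>0}] = f^\infty_v(x)$ is, by definition, the common mean of the $g_i$ once $\phi(x)$ is held fixed, Hoeffding yields $\P(|\frac1{m_1}\sum_i g_i - f^\infty_v(x)| > t) \le 2\exp(-c\,m_1 t^2/\norm{v}_\infty^2)$, and taking $t = C'\norm{v}_\infty\sqrt{\iota/m_1}$ with $C'$ a large constant makes the right-hand side at most $e^{-\iota}$.

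The one genuinely non-routine point — and the main obstacle — is that $\phi(x)$ is not independent of the summation variables: its definition contains $f(x_j;\theta^{(0)}) = \frac1{m_1}\sum_k a_k\sigma_1(b_k)$ (using $W^{(0)}=0$), so one cannot literally condition on $\phi(x)$ while keeping the $(a_i,b_i)$ random. The resolution is to isolate this dependence into a single scalar: write $\phi(x) = \tilde\phi(x) - \rho(x)\,S$, where $S := \frac1{m_1}\sum_k a_k\sigma_1(b_k)$, and $\tilde\phi(x) := \frac1{m_2 n}\sum_j f^*(x_j)\langle h^{(0)}(x_j),h^{(0)}(x)\rangle$, $\rho(x) := \frac1{m_2 n}\sum_j \langle h^{(0)}(x_j),h^{(0)}(x)\rangle$ are measurable with respect to $\mathcal{D}_1,\mathcal{D}_2,V^{(0)}$ alone. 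By \Cref{lem:h0_norm}, $|\rho(x)|\lesssim 1$; by \Cref{lem:init_small} (note $S = f(x;\theta^{(0)})$), $|S|\lesssim \sqrt{\iota/m_1}$ with high probability.

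For each \emph{fixed} value $s$ of $S$ the summands are genuinely i.i.d.\ functions of $(a_i,b_i)$, so the bounded-variable Hoeffding bound above applies verbatim to $\bar G_x(s) := \frac1{m_1}\sum_i v(a_i,b_i)\sigma_1(a_i\bar\eta(\tilde\phi(x)-\rho(x)s)+b_i)\1_{b_i>0}$ and its mean $F_x(s)$, uniformly for $s$ in the range $|s|\lesssim\sqrt{\iota/m_1}$ (on which $|\bar\eta(\tilde\phi(x)-\rho(x)s)|\lesssim 1$, so the summands stay bounded by $O(\norm{v}_\infty)$). Moreover $s\mapsto\bar G_x(s)$ and $s\mapsto F_x(s)$ are $O(\norm{v}_\infty\bar\eta)$-Lipschitz, since $\sigma_1$ is $1$-Lipschitz and $|\rho(x)|\lesssim1$. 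Hence one covers the $O(\sqrt{\iota/m_1})$-range of $S$ by a net of resolution $\delta \asymp \bar\eta^{-1}\sqrt{\iota/m_1}$ — of polynomial size, since $\bar\eta = \Theta(\iota^{-\chi}\norm{\K f^*}_{L^2}^{-1})$ and $\norm{\K f^*}_{L^2}^{-1} \lesssim \sqrt{\min(n,m_1,m_2)}$ by the hypotheses — union bounds the core estimate over this net together with $\mathcal{D}_2$, and finally substitutes the realized $S$, paying a Lipschitz correction of size $O(\norm{v}_\infty\bar\eta\cdot\delta) = O(\norm{v}_\infty\sqrt{\iota/m_1})$, which is of the claimed order. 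All constituent bounds ($|\bar\eta\phi(x)|\le1$, $|S|\lesssim\sqrt{\iota/m_1}$, $|\rho(x)|\lesssim1$, and $\norm{h^{(0)}(x)}\lesssim\sqrt{m_2}$, which enters $\rho$) are already in place from the preceding lemmas, so beyond this bookkeeping the concentration is a one-line Hoeffding argument.
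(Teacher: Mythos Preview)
Your approach is essentially the paper's---bound the summands by $O(\norm{v}_\infty)$ using the support of $v$ and $|\bar\eta\phi(x)|\le 1$, apply Hoeffding, and union bound over $\mathcal{D}_2$---but you are more careful on one point the paper glosses over. The paper simply conditions on the event $\sup_{x\in\mathcal{D}_2}|\bar\eta\phi(x)|\le 1$ and applies Hoeffding directly, implicitly treating $\phi(x)$ as fixed; it does not address the fact that $\phi(x)$ depends on the very $(a_i,b_i)$ being averaged (through the scalar $S=\frac{1}{m_1}\sum_k a_k\sigma_1(b_k)=f(\cdot;\theta^{(0)})$). Your decomposition $\phi(x)=\tilde\phi(x)-\rho(x)S$ isolates this dependence into a single scalar and then handles it by a polynomial-size net plus Lipschitz continuity in $s$, which is a clean and correct fix. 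The size estimates you quote ($|\rho(x)|\lesssim 1$ from \Cref{lem:h0_norm}, $|S|\lesssim\sqrt{\iota/m_1}$ from \Cref{lem:init_small}, net cardinality $\asymp\bar\eta=\poly$ under the standing assumptions) all check out, and the Lipschitz correction indeed lands at the claimed order $\norm{v}_\infty\sqrt{\iota/m_1}$.

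In short: same core idea as the paper, but your version closes a dependence gap the paper's two-line proof leaves open.
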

\begin{proof}
Condition on the high probability event $\sup_{x \in \mathcal{D}_2}\abs{\bar\eta\phi(x)} \le 1$. Next, note that whenever $b > 2$ that $v(a, b) = 0$. Therefore we can bound
\begin{align*}
\abs{v(a, b)\sigma_1(a \bar\eta \phi(x) + b_i)\mathbf{1}_{b_i > 0}} \le 2\norm{v}_{\infty}
\end{align*}
Therefore by Hoeffding's inequality we have that
\begin{align*}
\abs{\frac{1}{m_1}\sum_{i=1}^{m_1}v(a_i, b_i)\sigma_1\qty(a_i \bar\eta \phi(x) + b_i)\mathbf{1}_{b_i > 0} - f^\infty_v(x)} \lesssim \sqrt{\frac{\norm{v}^2_{\infty}\iota}{m_1}} +
\end{align*}
The desired result follows via a Union bound over $x \in \mathcal{D}_2$.
\end{proof}

\begin{lemma}
With high probability,
\begin{align*}
\abs{\frac{1}{m_1}\sum_{i=1}^{m_1}v(a_i, b_i)^2 - \norm{v}_{L^2}^2} \lesssim \sqrt{\frac{\norm{v}_{\infty}^4\iota}{m_1}}.
\end{align*}
\end{lemma}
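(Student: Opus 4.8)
The plan is to recognize this as a one-line application of Hoeffding's inequality to i.i.d.\ bounded random variables. Define $Z_i := v(a_i, b_i)^2$ for $i \in [m_1]$. Since the pairs $(a_i, b_i)$, with $a_i \sim \unif(\{\pm 1\})$ and $b_i \sim N(0,1)$ independently, are drawn i.i.d.\ across $i$ at initialization, the $Z_i$ are i.i.d.\ with common mean $\E[Z_1] = \E_{a,b}\qty[v(a,b)^2] = \norm{v}_{L^2}^2$, so the quantity on the left-hand side is exactly the deviation of the empirical average of $m_1$ i.i.d.\ terms from its mean.

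Next I would note that the summands are uniformly bounded: by the explicit constructions in \Cref{sec:univariate_approx} (in particular \Cref{lem:univariate_f} and the lemmas feeding it), every $v$ produced there is supported on $\{\pm 1\} \times [0,2]$, hence $0 \le Z_i = v(a_i,b_i)^2 \le \norm{v}_\infty^2$ almost surely. Crucially, unlike the other concentration lemmas in this section such as \Cref{lem:empirical_loss_single} or \Cref{lem:empirical_to_pop_norm}, no preliminary truncation step is needed here because the summands are already bounded.

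Then I would apply Hoeffding's inequality to $Z_1, \dots, Z_{m_1} \in [0, \norm{v}_\infty^2]$: for any $t > 0$,
\begin{align*}
\P\qty(\abs{\frac{1}{m_1}\sum_{i=1}^{m_1} Z_i - \norm{v}_{L^2}^2} > t) \le 2\exp\qty(-\frac{2 m_1 t^2}{\norm{v}_\infty^4}).
\end{align*}
Taking $t = \Theta\qty(\norm{v}_\infty^2 \sqrt{\iota/m_1})$ makes the right-hand side at most $e^{-\iota}$ up to the polynomial prefactor permitted in the definition of a high-probability event, which yields
\begin{align*}
\abs{\frac{1}{m_1}\sum_{i=1}^{m_1} v(a_i,b_i)^2 - \norm{v}_{L^2}^2} \lesssim \norm{v}_\infty^2 \sqrt{\frac{\iota}{m_1}} = \sqrt{\frac{\norm{v}_\infty^4 \iota}{m_1}}
\end{align*}
with high probability, as claimed.

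There is essentially no obstacle in this argument; the only points requiring a moment's care are confirming that the boundedness of $v$ is indeed available from the constructions in \Cref{sec:univariate_approx}, and interpreting $\norm{v}_{L^2}^2$ with respect to the initialization measure on $(a,b)$ rather than any other measure. One could instead invoke Bernstein's inequality to obtain a variance-dependent rate, but since only the stated $\norm{v}_\infty$-dependent bound is needed downstream, Hoeffding suffices.
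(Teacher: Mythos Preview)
Your proposal is correct and follows essentially the same approach as the paper: both note that $v(a_i,b_i)^2 \le \norm{v}_\infty^2$ almost surely and then apply Hoeffding's inequality to the i.i.d.\ bounded summands. Your write-up is more detailed (explicitly identifying the mean as $\norm{v}_{L^2}^2$ and the relevant measure), but the argument is the same.
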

\begin{proof}
Note that
\begin{align*}
v(a_i, b_i)^2 \le \norm{v}_{\infty}^2
\end{align*}
Thus by Hoeffding's inequality we have that
\begin{align*}
\abs{\frac{1}{m_1}\sum_{i=1}^{m_1}v(a_i, b_i)^2 - \norm{v}_{L^2}^2} \lesssim \sqrt{\frac{\norm{v}_{\infty}^4\iota}{m_1}}.
\end{align*}
\end{proof}

\begin{corollary}\label{cor:concentrate_norm_single}
    Let $m_1 \gtrsim \iota$. Then with high probability
    \begin{align*}
        \sum_{i=1}^{m_i}v(a_i, b_i)^2 \lesssim \norm{v}_{\infty}^2m_1.
    \end{align*}
\end{corollary}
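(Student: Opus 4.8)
This corollary is essentially a restatement of the immediately preceding lemma, with the $L^2$ norm of $v$ replaced by the (weaker) $L^\infty$ bound and the normalization cleared. The plan is as follows. First I would invoke the preceding concentration lemma, which gives, with high probability,
\begin{align*}
\frac{1}{m_1}\sum_{i=1}^{m_1}v(a_i,b_i)^2 \le \norm{v}_{L^2}^2 + C\sqrt{\frac{\norm{v}_\infty^4\iota}{m_1}}
\end{align*}
for an absolute constant $C$. Next, since the pair $(a,b)$ is drawn from a probability measure (namely $\unif(\{\pm1\})\times N(0,1)$), we trivially have $\norm{v}_{L^2}^2 = \E_{a,b}[v(a,b)^2] \le \sup_{a,b}\abs{v(a,b)}^2 = \norm{v}_\infty^2$. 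Finally, using the hypothesis $m_1 \gtrsim \iota$, the fluctuation term satisfies $\sqrt{\norm{v}_\infty^4\iota/m_1} = \norm{v}_\infty^2\sqrt{\iota/m_1} \lesssim \norm{v}_\infty^2$, so combining the two bounds yields $\frac{1}{m_1}\sum_{i=1}^{m_1}v(a_i,b_i)^2 \lesssim \norm{v}_\infty^2$, and multiplying through by $m_1$ gives the claim.

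I do not anticipate any genuine obstacle here; the only points requiring a moment's care are (i) confirming that the $L^2$ norm in the preceding lemma is taken with respect to the initialization law of $(a_i,b_i)$, so that the elementary inequality $\norm{v}_{L^2}\le\norm{v}_\infty$ applies, and (ii) checking that the high-probability event is the same one used in the preceding lemma, so no additional union bound is needed. Both are immediate from the setup.
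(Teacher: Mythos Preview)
Your proposal is correct and matches the paper's proof essentially line for line: invoke the preceding concentration lemma, use $m_1 \gtrsim \iota$ to absorb the fluctuation term into $\norm{v}_\infty^2$, bound $\norm{v}_{L^2}^2 \le \norm{v}_\infty^2$, and clear the normalization. The two caveats you flag are indeed immediate from the setup.
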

\begin{proof}
By the previous lemma, we have that
\begin{align*}
\abs{\frac{1}{m_1}\sum_{i=1}^{m_1}v(a_i, b_i)^2 - \norm{v}_{L^2}^2} \lesssim \sqrt{\frac{\norm{v}_{\infty}^4\iota}{m_1}} \lesssim \norm{v}_{\infty}^2.
\end{align*}
Thus
\begin{align*}
\sum_{i=1}^{m_i}v(a_i, b_i)^2 \lesssim \norm{v}_{L^2}^2m_1 + \norm{v}_{\infty}^2m_1 \lesssim \norm{v}_{\infty}^2m_1.
\end{align*}
\end{proof}

\begin{lemma}\label{lem:bound_q_to_fstar}
With high probability,
\begin{align*}
\frac{1}{n}\sum_{x \in \mathcal{D}_2}\qty(q\qty(\bar \eta(\K f^*)(x)) - f^*(x))^2 &\lesssim \norm{q \circ \qty(\bar \eta(\K f^*)) - f^*}^2_{L^2} + \frac{\norm{q}^2_\infty\iota + \iota^{2\ell + 1}}{n}.
\end{align*}
\end{lemma}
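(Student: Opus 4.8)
\emph{Proof plan.} The claim is a one-sided deviation bound for the sample average over $\mathcal{D}_2$ of the nonnegative random variable $Y_x := (q(\bar\eta(\K f^*)(x)) - f^*(x))^2$, $x\sim\nu$, whose expectation is exactly $\norm{q\circ(\bar\eta\K f^*) - f^*}_{L^2}^2$. Before concentrating I would first dispose of a minor domain issue: $q$ is only given on $[-1,1]$, and, writing $B$ for the event $\abs{\bar\eta(\K f^*)(x)} > 1$, the pointwise tail bound preceding \Cref{lem:bound_f_star} gives $\nu(B)\le e^{-\iota}$. I would extend $q$ to all of $\R$ by clipping its argument into $[-1,1]$; this keeps $\norm{q}_\infty$ unchanged and makes both $Y_x$ and the population quantity well-defined functions on $\mathcal X_d$ (and if one wants to compare to an unextended notion, $\E_x[Y_x\mathbf{1}_B]\le 2\norm{q}_\infty^2\nu(B)+2\,\E_x[f^*(x)^2\mathbf{1}_B]\lesssim \norm{q}_\infty^2 e^{-\iota}+e^{-\iota/2}\ll 1/n$, using \Cref{assume:f_moment} and Cauchy--Schwarz). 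Moreover, by \Cref{lem:phi_concentrates} every $x\in\mathcal D_2$ satisfies $\abs{\bar\eta(\K f^*)(x)}\le 1$ with high probability, so on that event the left-hand side of the lemma equals $\frac1n\sum_{x\in\mathcal D_2}Y_x$ with no discrepancy from the clipping.

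Next I would truncate. On the high-probability event of \Cref{lem:bound_f_star} we have $\sup_{x\in\mathcal D_2}\abs{f^*(x)}\lesssim\iota^\ell$, so, since $\abs{q(\cdot)}\le\norm{q}_\infty$, we get $Y_x\le R:=C(\norm{q}_\infty^2+\iota^{2\ell})$ uniformly over $x\in\mathcal D_2$ for a suitable constant $C$. Setting $\tilde Y_x:=\min(Y_x,R)\in[0,R]$, this means $\frac1n\sum_{x\in\mathcal D_2}Y_x=\frac1n\sum_{x\in\mathcal D_2}\tilde Y_x$ on this event, while trivially $\E[\tilde Y_x]\le\E[Y_x]=\norm{q\circ(\bar\eta\K f^*)-f^*}_{L^2}^2$.

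Finally I would apply Bernstein's inequality to the i.i.d.\ bounded variables $\tilde Y_x\in[0,R]$, bounding the variance by $\Var(\tilde Y_x)\le\E[\tilde Y_x^2]\le R\,\E[\tilde Y_x]\le R\,\E[Y_x]$. This yields, with high probability, $\frac1n\sum_{x\in\mathcal D_2}\tilde Y_x\le\E[Y_x]+C'\big(\sqrt{R\,\E[Y_x]\,\iota/n}+R\iota/n\big)$, and absorbing the middle term by AM--GM ($\sqrt{R\,\E[Y_x]\,\iota/n}\le\E[Y_x]+R\iota/n$) gives $\frac1n\sum_{x\in\mathcal D_2}\tilde Y_x\lesssim\E[Y_x]+R\iota/n$. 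Since $R\iota\lesssim\norm{q}_\infty^2\iota+\iota^{2\ell+1}$, a union bound over the three high-probability events invoked above (the $\mathcal{D}_2$-bound from \Cref{lem:phi_concentrates}, the $f^*$-bound from \Cref{lem:bound_f_star}, and the Bernstein event) produces the stated inequality.

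I do not expect a genuine obstacle here: the content is the two routine reductions (clipping $q$, truncating $Y_x$) plus a Bernstein bound. The one place where care is actually needed is the variance estimate in the last step --- it is essential to use $\Var(\tilde Y_x)\le R\,\E[\tilde Y_x]$ rather than the crude $\Var\le R^2$, since only the former yields the additive error $R\iota/n=O\big((\norm{q}_\infty^2\iota+\iota^{2\ell+1})/n\big)$ instead of a larger $O(R\sqrt{\iota/n})$ term.
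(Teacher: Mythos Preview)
Your proposal is correct and follows essentially the same route as the paper: restrict to the high-probability set where $\abs{\bar\eta(\K f^*)(x)}\le 1$ and $\abs{f^*(x)}\lesssim\iota^\ell$ (the paper uses an indicator $\mathbf{1}_{x\in S}$, you use the equivalent truncation $\tilde Y_x=\min(Y_x,R)$), then apply Bernstein with the variance bound $\Var\le R\cdot\E[Y_x]$ and absorb the square-root term by AM--GM. Your handling of the domain issue for $q$ is slightly more explicit than the paper's, but the argument is otherwise identical.
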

\begin{proof}
Let $S$ be the set of $x$ so that $\abs{\bar \eta(\K f^*)(x)} \le 1$ and $\abs{f^*(x)} \lesssim \iota^\ell$. Consider the random variables $\qty(q\qty(\bar \eta(\K f^*)(x)) - f^*(x))^2\cdot \mathbf{1}_{x \in S}$. We have that
\begin{align*}
\abs{\qty(q\qty(\bar \eta(\K f^*)(x)) - f^*(x))^2\cdot \mathbf{1}_{x \in S}} \lesssim \sup_{z \in [-1, 1]}\abs{q(z)}^2 + \iota^{2\ell}.
\end{align*}
and 
\begin{align*}
\mathbb{E}\qty[(q\qty(\bar \eta(\K f^*)(x)) - f^*(x))^2\cdot \mathbf{1}_{x \in S}]^2 \lesssim \qty(\sup_{z \in [-1, 1]}\abs{q(z)}^2 + \iota^{2\ell})\cdot \norm{q \circ \qty(\bar \eta(\K f^*)) - f^*}^2_{L^2}.
\end{align*}
Therefore by Bernstein's inequality we have that
\begin{align*}
&\abs{\frac{1}{n}\sum_{x \in \mathcal{D}_2}\qty(q\qty(\bar \eta(\K f^*)(x)) - f^*(x))^2\mathbf{1}_{x \in S}  - \norm{\qty(q \circ \qty(\bar \eta(\K f^*)) - f^*)\mathbf{1}_{x \in S}}^2_{L^2}}\\
&\quad\lesssim \sqrt{\frac{\qty(\sup_{z \in [-1, 1]}\abs{q(z)}^2 + \iota^{2\ell})\cdot \norm{\qty(q \circ \qty(\bar \eta(\K f^*)) - f^*)\mathbf{1}_{x \in S}}^2_{L^2}\iota}{n}} + \frac{\sup_{z \in [-1, 1]}\abs{q(z)}^2 + \iota^{2\ell}}{n}\iota\\
&\lesssim \norm{\qty(q \circ \qty(\bar \eta(\K f^*)) - f^*)\mathbf{1}_{x \in S}}^2_{L^2} + \frac{\sup_{z \in [-1, 1]}\abs{q(z)}^2 + \iota^{2\ell}}{n}\iota.
\end{align*}
Conditioning on the high probability event that $x \in S$ for all $x \in \mathcal{D}_2$, we get that
\begin{align*}
\frac{1}{n}\sum_{x \in \mathcal{D}_2}\qty(q\qty(\bar \eta(\K f^*)(x)) - f^*(x))^2 &\lesssim \norm{\qty(q \circ \qty(\bar \eta(\K f^*)) - f^*)\mathbf{1}_{x \in S}}^2_{L^2} + \frac{\sup_{z \in [-1, 1]}\abs{q(z)}^2 + \iota^{2\ell}}{n}\iota\\
&\le \norm{q \circ \qty(\bar \eta(\K f^*)) - f^*}^2_{L^2} + \frac{\sup_{z \in [-1, 1]}\abs{q(z)}^2\iota + \iota^{2\ell + 1}}{n}.
\end{align*}
\end{proof}
\section{Proofs for Section \ref{sec:examples}}

\subsection{Single Index Model}\label{app:single_index}
    \begin{proof}[Proof of \Cref{thm:linear_feature}]
        It is easy to see that \Cref{assume:x_subg,assume:act} are satisfied. By assumption $g^*$ is polynomially bounded, i.e there exist constants $C_g, \alpha_g$ such that
        \begin{align*}
            g^*(z) \le C_g\qty(1 + \abs{z})^{\alpha_g} \le C_g2^{\alpha_g - 1}(1 + \abs{z}^{\alpha_g}).
        \end{align*}
        Therefore
        \begin{align*}
            \norm{f^*}_q = \norm{g^*}_{L^q(\mathbb{R}, \mathcal{N}(0, 1)} \le C_g2^{\alpha_g - 1}(1 + \E_{z \sim \mathcal{N}(0, 1)}\qty[\abs{z}^{\alpha_g q}]^{1/q}) \le C_g2^{\alpha_g}\alpha_g^{\alpha_g/2}q^{\alpha_g/2}.
        \end{align*}
        Thus \Cref{assume:f_moment} is satisfied with $C_f =  C_g2^{\alpha_g}\alpha_g^{\alpha_g/2}$ and $\ell = \alpha_g/2$.
        
        Next, we see that
        \begin{align*}
            K(x, x') = \mathbb{E}_v\qty[(x \cdot v)(x' \cdot v)] = \frac{x \cdot x'}{d}
        \end{align*}
        Therefore
        \begin{align*}
            (\K f^*)(x) = \mathbb{E}_{x'}\qty[f^*(x')x' \cdot x/d] = \frac{1}{d}\mathbb{E}_{x'}\qty[\nabla f^*(x')]\cdot x.
        \end{align*}
        Furthermore, we have 
        \begin{align*}
            \mathbb{E}_{x'}\qty[\nabla f^*(x')] = \mathbb{E}_{x'}\qty[w^*g'(w^* \cdot x')] = w^*\mathbb{E}_{z \sim \mathcal{N}(0, 1)}\qty[g'(z)].
        \end{align*}
        Altogether, letting $c_1 = \mathbb{E}_{z \sim \mathcal{N}(0, 1)}\qty[g'(z)] = \Omega(1)$, we have $(\K f^*)(x) = \frac1dc_1x^Tw^*$. \Cref{assume:kernel_f_moment} is thus satisfied with $\chi = 1/2$. Finally, we see that $\E_{x'}[K(x,x')^2] = d^{-2}\norm{x}^2$, and by the standard concentration bound for the norm of a Gaussian, $\norm{x}^2 \le 2d$ with probability $1 - \exp(-\Omega(d))$. Therefore \Cref{assume:kernel-concentration} is satisfied as well.

        Next, see that $\norm{\K f^*}_{L^2} = c_1/d$. Let us next calculate $\norm{\K}_{op}$. Let $f$ satisfy $\norm{f}_{L^2} = 1$. We first see that
        \begin{align*}
            (\K f)(x) = \E_{x'}[f(x')x' \cdot x/d] = \frac{1}{d}\E_{x'}[f(x')x'] \cdot x.
        \end{align*}
        Therefore
        \begin{align*}
            \norm{\K f}_{L^2} = \frac{1}{d}\norm{\E_{x'}[f(x')x']} \le \frac{1}{d},
        \end{align*}
        so $\norm{\K}_{op} \le d^{-1}$.
        
        We select the test function $q$ to be $q(z) = g(\bar \eta^{-1}d/c_1 \cdot{} z)$, so that
        \begin{align*}
            q(\bar\eta (\K f^*)(x)) = g(x^* \cdot x) = f^*(x).
        \end{align*}
        Since $\bar \eta = \Theta(d\iota^{-\chi})$, we see that
        \begin{align*}
        \sup_{z \in [-1, 1]}\abs{q(z)} &= \sup_{z \in [-\Theta(\iota^{\chi}), \Theta(\iota^{\chi})]}\abs{g(z)} = \poly(\iota)\\
        \sup_{z \in [-1, 1]}\abs{q'(z)} &= \bar \eta^{-1}d/c_1\sup_{z \in [-\Theta(\iota^{\chi}), \Theta(\iota^{\chi})]}\abs{g'(z)} = \poly(\iota)\\
        \sup_{z \in [-1, 1]}\abs{q^{\prime\prime}(z)} &= \bar \eta^{-2}d^2/c_1^2\sup_{z \in [-\Theta(\iota^{\chi}), \Theta(\iota^{\chi})]}\abs{g^{\prime\prime}(z)} = \poly(\iota)\\
        \end{align*}
        Therefore we can bound the population loss as
        \begin{align*}
        \mathbb{E}_x\qty[\qty(f(x; \hat \theta) - f^*(x))^2] \lesssim \frac{d\poly(\iota)}{\min(n, m_2)} + \frac{1}{m_1} + \frac{1}{\sqrt{n}}.
        \end{align*}
    \end{proof}


\subsection{Quadratic Feature}\label{sec:quad_feature_proof}

Throughout this section, we call $x^TAx$ a degree 2 \emph{spherical harmonic} if $A$ is symmetric, $\mathbb{E}\qty[x^TAx] = 0$, and $\E\qty[(x^TAx)^2] = 1$. Then, we have that $\Tr(A) = 0$, and also
\begin{align*}
    1 = \mathbb{E}[x^{\otimes 4}](A^{\otimes 2}) = 3\chi_2 I^{\tilde \otimes 2}(A^{\otimes 2}) = 2\chi_2\norm{A}_F^2 \Longrightarrow \norm{A}_F = \sqrt{\frac{1}{2\chi_2}} = \sqrt{\frac{d+2}{2d}} = \Theta(1).
\end{align*}
See \Cref{app:spherical} for technical background on spherical harmonics.

Our goal is to prove the following key lemma, which states that the projection of $f^*$ onto degree 2 spherical harmonics is approximately $x^TAx$.
\begin{lemma}\label{lem:p2}
    Let $q$ be a $L$-Lipschitz function with $\abs{q(0)} \le L$, and let the target $f^*$ be of the form $f^*(x) = q(x^TAx)$, where $x^TAx$ is a spherical harmonic. Let $c_1 = \mathbb{E}_{z \sim \mathcal{N}(0, 1)}\qty[q'(z)]$. Then
    \begin{align*}
        \norm{P_2f^* - c_1 x^TAx}_{L^2} \lesssim L\kappa^{1/6}d^{-1/12}\log d.
    \end{align*}
\end{lemma}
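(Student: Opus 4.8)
The plan is to characterize $P_2f^* - c_1 x^TAx$ by duality over the space $V_2$ of degree-$2$ spherical harmonics, reduce to two scalar estimates, and then dispatch both via a quantitative universality statement for quadratic forms on the sphere (which I would establish separately). \emph{Duality reduction.} Every element of $V_2$ equals $x^TMx$ for symmetric $M$ with $\Tr M = 0$; for such $M, N$ one has $\langle x^TMx, x^TNx\rangle_{L^2} = 2\chi_2\langle M, N\rangle_F$ and $\E_x[x^TMx] = \Tr M = 0$, so in particular $\norm{x^TAx}_{L^2} = 1$. Since $c_1 x^TAx \in V_2$ and $P_2$ is the $L^2$-projection onto $V_2$,
\begin{align*}
\norm{P_2f^* - c_1 x^TAx}_{L^2} = \sup_{v \in V_2,\ \norm{v}_{L^2}\le 1}\langle f^* - c_1 x^TAx,\ v\rangle_{L^2}.
\end{align*}
Writing $v = x^TMx$ and decomposing $M = \gamma A + M^\perp$ with $\langle A, M^\perp\rangle_F = 0$, the two quadratics are $L^2$-orthogonal, so $\gamma^2 + \norm{x^TM^\perp x}_{L^2}^2 \le 1$, and using $\Tr A = 0 = \langle A, M^\perp\rangle_F$ the inner product telescopes to $\gamma\big(\E_x[f^*\cdot x^TAx] - c_1\big) + \E_x[f^*\cdot x^TM^\perp x]$. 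Taking the supremum gives
\begin{align*}
\norm{P_2f^* - c_1 x^TAx}_{L^2} \le \big|\E_x[q(x^TAx)\,x^TAx] - c_1\big| + \sup_{M^\perp}\big|\E_x[q(x^TAx)\,x^TM^\perp x]\big| =: \mathrm{(I)} + \mathrm{(II)},
\end{align*}
the supremum running over symmetric traceless $M^\perp$ with $\langle A, M^\perp\rangle_F = 0$ and $\norm{x^TM^\perp x}_{L^2}\le 1$.

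\emph{Universality.} Both terms follow from the universality phenomenon described in Section~\ref{sec:quad_features_theorem}: for $x$ uniform on $\mathcal{S}^{d-1}(\sqrt d)$ and $\kappa = \norm{A}_{op}\sqrt d = o(\sqrt d)$, the law of $x^TAx$ is close to $\N(0,1)$, and jointly $(x^TAx,\,x^TM^\perp x)$ is close to a pair $(Z, W)$ with $Z \sim \N(0,1)$, $Z \perp W$, and $\E W = \E_x[x^TM^\perp x] = 0$. For (I), replacing $x^TAx$ by $Z$ and applying Stein's lemma — $\E[Z q(Z)] = \E[q'(Z)] = c_1$, valid since $q$ is $L$-Lipschitz — makes (I) small. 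For (II), approximate independence together with $\E W = 0$ gives $\E[q(x^TAx)\,x^TM^\perp x] \approx \E[q(Z)]\,\E[W] = 0$. In both cases the relevant test function $\psi(s,t) = q(s)t$ is not globally Lipschitz, so I would truncate to $\{\abs{x^TAx}\le R,\ \abs{x^TM^\perp x}\le R\}$ with $R = \Theta(\log d)$, discarding the tails using the sub-exponential moment bounds for quadratic forms on the sphere (which use $\abs{q(0)}\le L$, hence $\abs{q(s)}\le L(1 + \abs{s})$, and $\kappa$ not too large); on the truncated region $\psi$ is $O(LR)$-Lipschitz, and feeding this into the universality rate while optimizing $R$ together with any smoothing scale inside that estimate yields the bound $\lesssim L\kappa^{1/6}d^{-1/12}\log d$. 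The supremum over $M^\perp$ is harmless because the universality rate depends on $M^\perp$ only through $\norm{M^\perp}_F = \Theta(1)$ and $\kappa$.

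\emph{Main obstacle.} The crux is the quantitative joint universality for quadratic forms on the sphere, with a rate that degrades gracefully as $\kappa$ grows toward $\sqrt d$. Passing to a Gaussian vector via $x = \sqrt d\, g/\norm{g}$ introduces the fluctuation of $\norm{g}^2/d$, which is correlated with both quadratic forms and must be peeled off (by conditioning or a low-order expansion). One then needs a Berry--Esseen-type estimate for the weighted chi-square sum $g^TAg = \sum_i \lambda_i (e_i^Tg)^2$, whose rate is governed by $\norm{A}_{op}/\norm{A}_F \asymp \kappa/\sqrt d$ (the limit becomes genuinely non-Gaussian — a scaled, centered $\chi^2$ — when $\kappa \asymp \sqrt d$). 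Finally, one must show $x^TM^\perp x$ stays asymptotically independent of $x^TAx$: the leading covariance $2\langle A, M^\perp\rangle_F$ vanishes by construction, but higher joint cumulants involve traces such as $\Tr(A^2 M^\perp)$ that are only $O(\kappa^2/\sqrt d)$ rather than zero, and bounding these uniformly over all admissible $M^\perp$ is the delicate step. The duality reduction, Stein's lemma, and the truncation bookkeeping are routine once that estimate is in hand.
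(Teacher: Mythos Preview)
Your duality reduction and the handling of term (I) match the paper exactly: write $P_2f^* = x^TT_2x$, split $T_2 = \alpha A + A^\perp$, bound $|\alpha - c_1|$ via $W_1(x^TAx,\N(0,1)) \lesssim \norm{A}_{op}$ applied to a truncated version of $z \mapsto zq(z)$, and bound $\norm{A^\perp}_F$ by estimating $\E[q(x^TAx)\,x^TBx]$ for traceless $B \perp A$ with $\norm{B}_F = \Theta(1)$.

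The gap is in (II). Your sentence ``the supremum over $M^\perp$ is harmless because the universality rate depends on $M^\perp$ only through $\norm{M^\perp}_F$'' is the \emph{conclusion} you need, not a tool you can invoke. When $M^\perp$ is essentially rank one (e.g.\ $M^\perp \propto uu^T - I/d$), the marginal $x^TM^\perp x$ is a centered $\chi^2$-type variable, and the joint law of $(x^TAx, x^TM^\perp x)$ is not close to any product law in a way that second-order Poincar\'e delivers; so the route ``approximate the joint law, then factorize'' requires a bivariate universality theorem, uniform in $\norm{M^\perp}_{op}$, that you have not supplied. Your cumulant sketch correctly flags this as the obstacle, but controlling all joint cumulants uniformly in $M^\perp$ is at least as hard as the lemma itself.

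The paper bypasses joint universality altogether and uses only \emph{one-dimensional} Gaussian approximation, combined with two devices you are missing. First, an antiderivative trick: with $F' = q$, a first-order Taylor expansion gives $\E[q(X)Y] \approx \epsilon^{-1}\big(\E F(X+\epsilon Y)-\E F(X)\big)$, and each term on the right is the expectation of a (truncated) Lipschitz function of a \emph{single} scalar $c_1 x^TAx + c_2 x^TBx$, to which $W_1 \lesssim \norm{A}_{op}+\norm{B}_{op}$ applies; optimizing $\epsilon$ yields $|\E[q(x^TAx)\,x^TBx]| \lesssim L(\sqrt{\norm{A}_{op}\log d} + \norm{B}_{op}\log d)$, useful only when $\norm{B}_{op}$ is small. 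Second, for rank-one directions the paper uses the second antiderivative $G'' = q$ together with the mixed form $x^TAx + \epsilon\, u^Tx$ (linear in $u$, so its 1D rate is just $\norm{A}_{op}$, independent of $u$) to get $|\E[q(x^TAx)\,((u^Tx)^2-1)]| \lesssim L\norm{A}_{op}^{1/3}\log^{2/3} d$. A general $B$ is then split at an eigenvalue threshold $\tau$ into a dense part with $\norm{\cdot}_{op}\le \tau$ and at most $O(\tau^{-2})$ rank-one pieces; optimizing $\tau = \norm{A}_{op}^{1/6}$ gives the uniform bound $L\norm{A}_{op}^{1/6}\log d = L\kappa^{1/6}d^{-1/12}\log d$. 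This low-rank/dense decomposition is precisely what turns your ``harmless'' claim into a theorem.
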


We defer the proof of this Lemma to \Cref{sec:prove_p2}.

As a consequence, the learned feature $\mathbb{K}f^*$ is approximately proportional to $x^TAx$.
\begin{lemma}\label{lem:bound_Kf}
        Recall $c_1 = \mathbb{E}_{z \sim \mathcal{N}(0, 1)}\qty[q'(z)]$. Then $$\norm{\mathbb{K}f^* - \lambda_2^2(\sigma)c_1x^TAx}_{L^2} \lesssim L\kappa^{1/6}d^{-2 - 1/12}\log d$$
\end{lemma}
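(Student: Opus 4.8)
The plan is to combine the Mercer decomposition of $\mathbb{K}$ on the sphere with \Cref{lem:p2}. Recall from \Cref{app:spherical} that the random feature kernel $K$ is a dot-product kernel whose integral operator diagonalizes in the basis of spherical harmonics, so that $\mathbb{K} = \sum_{k \ge 0}\lambda_k^2(\sigma_2)\, P_k$, where $\lambda_k(\sigma_2)$ is the degree-$k$ Gegenbauer coefficient of $\sigma_2$ and $P_k$ is the orthogonal projection onto the degree-$k$ harmonics. Since $f^*(x) = q(x^TAx)$ is an even function (because $(-x)^TA(-x) = x^TAx$) and $\mathbb{E}_x[f^*(x)] = 0$ by \Cref{assume:quad_feature_fn}, we have $P_k f^* = 0$ for $k = 0$ and for all odd $k$, hence $\mathbb{K}f^* = \lambda_2^2(\sigma_2)\, P_2 f^* + \sum_{k \ge 4,\ k\text{ even}}\lambda_k^2(\sigma_2)\, P_k f^*$.

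From here the argument is a triangle inequality followed by two estimates:
\begin{align*}
\norm{\mathbb{K}f^* - \lambda_2^2(\sigma_2)\, c_1 x^TAx}_{L^2}
&\le \lambda_2^2(\sigma_2)\,\norm{P_2 f^* - c_1 x^TAx}_{L^2} + \norm{\sum_{k \ge 4}\lambda_k^2(\sigma_2)\, P_k f^*}_{L^2}.
\end{align*}
For the first summand, \Cref{lem:p2} gives $\norm{P_2 f^* - c_1 x^TAx}_{L^2} \lesssim L\kappa^{1/6}d^{-1/12}\log d$, and \Cref{assume:quad_geg} gives $\lambda_2^2(\sigma_2) = \Theta(d^{-2})$; multiplying yields $\lesssim L\kappa^{1/6}d^{-2 - 1/12}\log d$, which is exactly the claimed bound. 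For the second summand, using orthogonality of the $P_k$,
\begin{align*}
\norm{\sum_{k \ge 4}\lambda_k^2(\sigma_2)\, P_k f^*}_{L^2}^2
= \sum_{k \ge 4}\lambda_k^4(\sigma_2)\,\norm{P_k f^*}_{L^2}^2
\le \Big(\max_{k \ge 4}\lambda_k^2(\sigma_2)\Big)^{\!2}\,\norm{f^*}_{L^2}^2 .
\end{align*}
The standard bound $\lambda_k^2(\sigma_2) = O(d^{-k})$ for $k \le d$ — which holds for activations of polynomial growth and follows from $\sum_k \dim(\text{deg-}k) \cdot \lambda_k^2(\sigma_2) = K(x,x) = O(1)$ together with $\dim(\text{deg-}k) = \Theta(d^k)$, as established in \Cref{app:spherical} — gives $\max_{k \ge 4}\lambda_k^2(\sigma_2) = O(d^{-4})$, and $\norm{f^*}_{L^2} \le 1$ by normalization, so this tail is $O(d^{-4})$ in $L^2$, of strictly lower order than $d^{-2-1/12}$. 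Adding the two contributions completes the proof.

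The substantive work is entirely inside \Cref{lem:p2} (the approximate-Stein/universality computation of the degree-2 projection $P_2 f^*$), which we take as given; conditioned on it, the present statement is short bookkeeping. The only point requiring care is the uniform-in-$k$ eigenvalue decay $\lambda_k^2(\sigma_2) = O(d^{-k})$ used to discard the high-frequency tail, for which one must invoke the Gegenbauer-coefficient estimates for polynomially-growing activations developed in \Cref{app:spherical} rather than the mild nondegeneracy of \Cref{assume:quad_geg} alone.
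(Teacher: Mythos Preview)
Your proposal is correct and follows essentially the same approach as the paper's proof: both use the spectral decomposition $\mathbb{K}f^* = \sum_{k\ge 0}\lambda_k^2(\sigma_2)P_kf^*$, kill the $k=0$ and odd-$k$ terms via $\mathbb{E}[f^*]=0$ and evenness, bound the tail $k\ge 4$ by $O(d^{-4})$ using $\lambda_k^2(\sigma_2)=O(d^{-k})$, and handle the $k=2$ term with \Cref{lem:p2} together with $\lambda_2^2(\sigma_2)=\Theta(d^{-2})$. Your write-up is slightly more explicit about the tail estimate (orthogonality plus $\|f^*\|_{L^2}\le 1$) than the paper, which simply asserts $\|\mathbb{K}f^*-\lambda_2^2(\sigma_2)P_2f^*\|_{L^2}\lesssim d^{-4}$, but the argument is the same.
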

    \begin{proof}
    Since $\mathbb{E}[f^*(x)] = 0$, $P_0f^* = 0$. Next, since $f^*$ is an even function, $P_kf^* = 0$ for $k$ odd. Thus
    \begin{align*}
        \norm{\K f^* - \lambda_2^2(\sigma)P_2f^*}_{L^2} \lesssim d^{-4}.
    \end{align*}
    Additionally, by \Cref{lem:p2} we have that
    \begin{align*}
        \norm{P_2f^* - c_1x^TAx}_{L^2} \lesssim \norm{T_2 - c_1\cdot A}_F \lesssim L\kappa^{1/6}d^{-1/12}\log d.
    \end{align*}
    Since $\lambda^2_2(\sigma) = \Theta(d^{-2})$, we have
    \begin{align*}
        \norm{\K f^* - \lambda_2^2(\sigma)c_1x^TAx}_{L^2} \lesssim L\kappa^{1/6}d^{-2 - 1/12}\log d.
    \end{align*}
    \end{proof}

    \begin{corollary}\label{cor:bound_kappa} Assume $\kappa = o(\sqrt{d})$. Then
    \begin{align*}
        \norm{x^TAx - \norm{\mathbb{K}f^*}_{L^2}^{-1}\mathbb{K}f^*}_{L^2} \lesssim L\kappa^{1/6}d^{-1/12}\log d
    \end{align*}
    \end{corollary}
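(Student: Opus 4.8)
The plan is to deduce \Cref{cor:bound_kappa} from \Cref{lem:bound_Kf} by a routine renormalization argument. Write $a := \lambda_2^2(\sigma)c_1$, where $c_1 = \mathbb{E}_{z\sim\mathcal N(0,1)}[q'(z)]$, and set $\epsilon := \norm{\mathbb{K}f^* - a\,x^TAx}_{L^2}$, so that \Cref{lem:bound_Kf} gives $\epsilon \lesssim L\kappa^{1/6}d^{-2-1/12}\log d$. First I would record that $|a| = \Theta(d^{-2})$: this uses $\lambda_2^2(\sigma) = \Theta(d^{-2})$ from \Cref{assume:quad_geg} together with $c_1 = \Theta(1)$ from \Cref{assume:quad_feature_fn}, and after replacing $A$ by $-A$ if necessary we may take $a > 0$. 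Combining this with the incoherence assumption $\kappa = o(\sqrt d)$ of \Cref{assume:quad_feature_kappa} then gives $\epsilon/a \lesssim L\kappa^{1/6}d^{-1/12}\log d = o(1)$, and in particular $\epsilon < a$.

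Second, I would pin down $\norm{\mathbb{K}f^*}_{L^2}$ itself. Since $x^TAx$ is a degree-$2$ spherical harmonic we have $\norm{x^TAx}_{L^2} = 1$, so the reverse triangle inequality yields
\begin{align*}
\big|\,\norm{\mathbb{K}f^*}_{L^2} - a\,\big| \;=\; \big|\,\norm{\mathbb{K}f^*}_{L^2} - \norm{a\,x^TAx}_{L^2}\,\big| \;\le\; \norm{\mathbb{K}f^* - a\,x^TAx}_{L^2} \;=\; \epsilon .
\end{align*}
Hence $\norm{\mathbb{K}f^*}_{L^2} \in [a - \epsilon,\, a + \epsilon]$ and is in particular bounded away from $0$, so the normalized feature $\norm{\mathbb{K}f^*}_{L^2}^{-1}\mathbb{K}f^*$ is well defined and close to $a^{-1}\mathbb{K}f^*$.

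Finally, I would split the quantity of interest by the triangle inequality through the intermediate vector $a^{-1}\mathbb{K}f^*$:
\begin{align*}
\norm{x^TAx - \norm{\mathbb{K}f^*}_{L^2}^{-1}\mathbb{K}f^*}_{L^2}
&\le \norm{x^TAx - a^{-1}\mathbb{K}f^*}_{L^2} + \norm{a^{-1}\mathbb{K}f^* - \norm{\mathbb{K}f^*}_{L^2}^{-1}\mathbb{K}f^*}_{L^2}\\
&= \frac{1}{a}\,\norm{a\,x^TAx - \mathbb{K}f^*}_{L^2} + \frac{\big|\,\norm{\mathbb{K}f^*}_{L^2} - a\,\big|}{a}\\
&\le \frac{\epsilon}{a} + \frac{\epsilon}{a} \;=\; \frac{2\epsilon}{a},
\end{align*}
and substituting $a = \Theta(d^{-2})$ together with $\epsilon \lesssim L\kappa^{1/6}d^{-2-1/12}\log d$ gives $2\epsilon/a \lesssim L\kappa^{1/6}d^{-1/12}\log d$, as claimed. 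I do not expect any real obstacle here: all the work has already been carried out in \Cref{lem:bound_Kf} (and ultimately \Cref{lem:p2}), and the only point requiring care is that the renormalization is stable, i.e. that $\norm{\mathbb{K}f^*}_{L^2}$ does not collapse toward $0$ — which is precisely why the nondegeneracy assumption $\lambda_2^2(\sigma) = \Theta(d^{-2})$ (\Cref{assume:quad_geg}) and the incoherence bound $\kappa = o(\sqrt d)$ (\Cref{assume:quad_feature_kappa}) are invoked.
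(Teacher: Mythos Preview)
Your proposal is correct and follows essentially the same approach as the paper: both derive the corollary from \Cref{lem:bound_Kf} by a triangle/reverse-triangle renormalization, using $\lambda_2^2(\sigma)=\Theta(d^{-2})$ and $c_1=\Theta(1)$ to ensure $\norm{\mathbb{K}f^*}_{L^2}=\Theta(d^{-2})$. The only cosmetic difference is that the paper factors out $\norm{\mathbb{K}f^*}_{L^2}^{-1}$ first and inserts $\lambda_2^2(\sigma)c_1\,x^TAx$, whereas you insert $a^{-1}\mathbb{K}f^*$ directly; your handling of the sign via ``replace $A$ by $-A$'' is also slightly more explicit than the paper's implicit use of $|c_1|$.
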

    \begin{proof}
    \begin{align*}
        &\norm{x^TAx - \norm{\K f^*}_{L^2}^{-1}\K f^*}_{L^2}\\
        &= \norm{\K f^*}_{L^2}^{-1}\norm{x^TAx\norm{\K f^*}_{L^2} - \K f^*}_{L^2}\\
        &\le \norm{\K f^*}_{L^2}^{-1}\norm{\K f^* - \lambda_2^2(\sigma)c_1x^TAx}_{L^2} + \norm{\K f^*}_{L^2}^{-1}\abs{\norm{\K f^*} - \lambda_2^2(\sigma)\abs{c_1}}\\
        &\lesssim L\kappa^{1/6}d^{-2-1/12}\log d\norm{\K f^*}_{L^2}^{-1}\\
        &\lesssim L\kappa^{1/6}d^{-1/12}\log d.
    \end{align*}
    \end{proof}

    \begin{proof}[Proof of \Cref{thm:quad_feature}]

        By our choice of $\nu$, we see that \Cref{assume:x_subg} is satisfied. We next verify \Cref{assume:f_moment}. Since $f^*$ is 1-Lipschitz, we can bound $\abs{g^*(z)} \le \abs{g^*(0)} + \abs{z}$, and thus
        \begin{align*}
            \E_x\qty[g^*(x^TAx)^q]^{1/q} &\le \abs{g^*(0)} + \E_x\qty[\abs{x^TAx}^q]^{1/q}\\
            &\le  \abs{g^*(0)} + q\\
            &\le \qty(1 + g^*(0))q,
        \end{align*}
        where we used \Cref{lem:sphere_hyper}. Thus \Cref{assume:f_moment} holds with $\ell = 1$.

        Finally, we have
        \begin{align*}
            \mathbb{K}f^* = \sum_{k \ge 2} \lambda^2_k(\sigma_2)P_kf^*.
        \end{align*}
        By \Cref{lem:bound_Kf} we have $\norm{\mathbb{K}f^*}_{L^2} \ge \frac12\lambda_2^2(\sigma)c_1$ for $d$ larger than some absolute constant. Next, by \Cref{lem:sphere_hyper} we have for any $q \le \frac14d^2$
        \begin{align*}
             \norm{\mathbb{K}f^*}_q &\le \sum_{k \ge 2} \lambda^2_k(\sigma_2)\norm{P_kf^*}_q\\
             &\lesssim \sum_{k \ge 2} d^{-k}q^{k/2}\norm{P_kf^*}_{L^2}\\
             &\lesssim \sum_{k \ge 2}\qty(\sqrt{q}/d)^k\\
             &= \frac{q}{d^2}\cdot \frac{1}{1 - \sqrt{q}{d}}\\
             &\le 2qd^{-2}
        \end{align*}
        Therefore
        \begin{align*}
            \norm{\mathbb{K}f^*}_q \lesssim \frac{4}{d^2\lambda_2^2(\sigma_2)c_1}q\norm{\mathbb{K}f^*}_{L^2} \lesssim q\norm{\mathbb{K}f^*}_{L^2},
        \end{align*}
        since $\lambda_2^2(\sigma_2) = \Omega(d^{-2})$. Thus \Cref{assume:kernel_f_moment} holds with $\ell = 1$. Finally, since $\nu$ is the uniform distribution over the sphere of radius $d$, the kernel function $K(x, x')$ only depends on the inner product $\langle x, x'\rangle$. Therefore by rotational invariance, the quantity $\E_{x' \sim \nu}[K(x, x')^2]$ is independent of $x$, and hence \Cref{assume:kernel-concentration} is satisfied with $C_{K_2} = 1$.
        
        Next, observe that $\norm{\K f^*}_{L^2} \lesssim d^{-2}$. We select the test function $q$ to be $q(z) = g^*(\bar \eta^{-1}\norm{\K f^*}_{L^2}^{-1} \cdot z)$. We see that
        \begin{align*}
            q(\bar \eta (\K f^*)(x)) = g^*\qty(\norm{\K f^*}_{L^2}^{-1}(\K f^*)(x)),
        \end{align*}
        and thus
        \begin{align*}
            \norm{f^* - q(\bar \eta (\K f^*)(x))}_{L^2} &= \norm{g^*(x^TAx) - g^*\qty(\norm{\K f^*}_{L^2}^{-1}(\K f^*)(x))}_{L^2}\\
            &\lesssim \norm{x^TAx - \norm{\K f^*}_{L^2}^{-1}\K f^*}_{L^2}\\
            &\lesssim \kappa^{1/6}d^{-1/12}\log d,
        \end{align*}
        where the first inequality follows from Lipschitzness of $g^*$, and the second inequality is \Cref{cor:bound_kappa}.
        Furthermore since $\bar \eta = \Theta(\norm{\K f^*}_{L^2}^{-1}\iota^{-\chi})$, we get that $\bar \eta^{-1}\norm{\K f^*}_{L^2}^{-1} = \Theta(\iota^{\chi})$, and thus
        \begin{align*}
        \sup_{z \in [-1, 1]}\abs{q(z)} &= \sup_{z \in [-\Theta(\iota^{\chi}), \Theta(\iota^{\chi})]}\abs{g^*(z)} = \poly(\iota)\\
        \sup_{z \in [-1, 1]}\abs{q'(z)} &= \bar \eta^{-1}\norm{\K f^*}_{L^2}^{-1}\sup_{z \in [-\Theta(\iota^{\chi}), \Theta(\iota^{\chi})]}\abs{(g^*)'(z)} = \poly(\iota)\\
        \sup_{z \in [-1, 1]}\abs{q^{\prime\prime}(z)} &=  \qty(\bar\eta^{-1}\norm{\K f^*}_{L^2}^{-1})^2\sup_{z \in [-\Theta(\iota^{\chi}), \Theta(\iota^{\chi})]}\abs{(g^*)^{\prime\prime}(z)} = \poly(\iota)\\
        \end{align*}
        Therefore by \Cref{thm:single_feature_formal} we can bound the population loss as
        \begin{align*}
        \mathbb{E}_x\abs{f(x; \hat \theta) - f^*(x)} \lesssim \frac{d^4\poly(\iota)}{n} + \frac{d^2\poly(\iota)}{m_2} + \frac{1}{m_1} + \frac{1}{\sqrt{n}} + \kappa^{1/3}d^{-1/6}\iota.
        \end{align*}
    \end{proof}

\begin{proof}[Proof of \Cref{cor:better-sample-complexity}]
The proof of \Cref{cor:better-sample-complexity} proceeds analogously to the proof of \Cref{thm:quad_feature}. However, under \Cref{assume:sigma2-no-linear-const}, we additionally have that $\lambda_0(\sigma_2) = \lambda_1(\sigma_2) = 0$, and thus
\begin{align*}
\K = \sum_{k \ge 0}\lambda_k^2(\sigma_2)P_k = \sum_{k \ge 2}\lambda_k^2(\sigma_2)P_k.
\end{align*}
Since $\lambda_k^2(\sigma_2) \lesssim d^{-k}$, we thus have $\norm{\K}_{op} \lesssim d^{-2}$. Plugging into \Cref{thm:single_feature_formal} yields the desired result.
\end{proof}

\subsubsection{Proof of Lemma \ref{lem:p2}}\label{sec:prove_p2}

The high level sketch of the proof of \Cref{lem:p2} is as follows. Consider a second spherical harmonic $x^TBx$ satisfying $\E[(x^TAx)(x^TBx)] = 0$ (a simple computation shows that this is equivalent to $\Tr(AB) = 0$). We appeal to a key result in universality to show that in the large $d$ limit, the distribution of $x^TAx$ converges to a standard Gaussian; additionally, $x^TBx$ converges to an independent mean-zero random variable. As a consequence, we show that
\begin{align*}
    \E[q(x^TAx)x^TAx] \approx \E_{z \sim \mathcal{N}(0, 1)}[q(z)z] = \E_{z \sim \mathcal{N}(0, 1)}[q'(z)] = c_1.
\end{align*}
and
\begin{align*}
    \E[q(x^TAx)x^TBx] \approx \E[q(x^TAx)]\cdot \E[x^TBx] = 0.
\end{align*}
From this, it immediately follows that $P_2f^* \approx c_1x^TAx$.

The key universality theorem is the following.
\begin{definition}
    For two probability measures $\mu, \nu$, the \emph{Wasserstein 1-distance} between $\mu$ and $\nu$ is defined as
    \begin{align*}
        W_1(\mu, \nu) := \sup_{\|f\|_{Lip} \le 1}\abs{\E_{z \sim\mu}[f(z)] - \E_{z \sim \nu}[f(z)]},
    \end{align*}
    where $\norm{f}_{Lip} := \sup_{x \neq y}\frac{\abs{f(x) - f(y)}}{\norm{x - y}_2}$ is the Lipschitz norm of $f$.
\end{definition}
\begin{lemma}[\citep{ramon_hdp}[Theorem 9.20]]\label{lem:ramon_thm}
    Let $z \sim \mathcal{N}(0, I_d)$ be a standard Gaussian vector, and let $f: \mathbb{R}^d \rightarrow \mathbb{R}$ satisfy $\E[f(z)] = 0, \E[(f(z))^2] = 1$. Then
    \begin{align*}
        W_1\qty(\text{Law}(f(z)), \mathcal{N}(0, 1)) \lesssim \E\qty[\norm{\nabla f(z)}^4]^{1/4}\E\qty[\norm{\nabla^2 f(z)}_{op}^4]^{1/4},
    \end{align*}
    where $W_1$ is the Wasserstein 1-distance.
\end{lemma}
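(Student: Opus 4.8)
The plan is to establish this bound as an instance of Chatterjee's second-order Poincar\'e inequality, obtained via Stein's method combined with Gaussian integration by parts. Write $W := f(z)$, so that $\E[W] = 0$ and $\E[W^2] = 1$. The first step is the standard reduction: for a $1$-Lipschitz test function $h$, let $g = g_h$ solve the Stein equation $g'(w) - w\,g(w) = h(w) - \E[h(G)]$ with $G \sim \mathcal{N}(0,1)$; the classical bounds on the Stein solution give $\norm{g_h'}_\infty \lesssim 1$ and $\norm{g_h''}_\infty \lesssim 1$. Taking expectations along $W$ and then the supremum over $h$,
\[
W_1\qty(\text{Law}(W),\, \mathcal{N}(0,1)) = \sup_{h}\ \abs{\E\qty[g_h'(W) - W\, g_h(W)]},
\]
so it suffices to bound $\E[g'(W) - W g(W)]$ uniformly over all such $g$.

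For the second step I would introduce the Ornstein--Uhlenbeck semigroup $P_t \psi(z) := \E_{\tilde z}\qty[\psi(e^{-t} z + \sqrt{1 - e^{-2t}}\,\tilde z)]$, where $\tilde z$ is an independent copy of $z$ (applied coordinatewise to vector fields), and the pivot quantity
\[
\mathfrak{T}(z) := \Big\langle \nabla f(z),\ \int_0^\infty e^{-t}\, (P_t \nabla f)(z)\, dt \Big\rangle.
\]
Interpolating between $z$ and $\tilde z$ and integrating by parts in the Gaussian variables (the boundary term vanishes because $\E[W] = 0$) yields the Malliavin--Stein identity $\E[W\, g(W)] = \E\qty[g'(W)\, \mathfrak{T}(z)]$, while the first-order Gaussian Poincar\'e identity gives $\E[\mathfrak{T}(z)] = \Var(f(z)) = 1$. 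Hence $\E[g'(W) - W g(W)] = \E\qty[g'(W)\,(1 - \mathfrak{T}(z))]$, and therefore
\[
W_1\qty(\text{Law}(W),\, \mathcal{N}(0,1)) \le \norm{g'}_\infty\, \E\abs{1 - \mathfrak{T}(z)} \lesssim \E\abs{\mathfrak{T}(z) - \E[\mathfrak{T}(z)]} \le \sqrt{\Var\qty(\mathfrak{T}(z))}.
\]

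The third step bounds $\Var(\mathfrak{T}(z))$ by a second application of the Gaussian Poincar\'e inequality: $\Var(\mathfrak{T}(z)) \le \E\qty[\norm{\nabla \mathfrak{T}(z)}^2]$. Differentiating $\mathfrak{T}$ in $z$ produces one factor carrying a Hessian $\nabla^2 f(z)$ and one carrying $\nabla f$ possibly acted on by $P_t$; using Mehler's commutation relation $\nabla P_t = e^{-t} P_t \nabla$ together with the $L^2$-contractivity of $P_t$ to absorb the semigroup, and integrating $\int_0^\infty e^{-t}\,dt = 1$, one obtains $\E\norm{\nabla \mathfrak{T}(z)}^2 \lesssim \E\qty[\norm{\nabla f(z)}^2\, \norm{\nabla^2 f(z)}_{op}^2]$. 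A final Cauchy--Schwarz gives $\Var(\mathfrak{T}(z)) \lesssim \qty(\E\norm{\nabla f(z)}^4)^{1/2}\qty(\E\norm{\nabla^2 f(z)}_{op}^4)^{1/2}$, and combining with the previous display produces the claimed bound. All the integration-by-parts manipulations are justified first for $f$ smooth with polynomially growing derivatives and then extended to the general case by a standard smoothing/truncation approximation.

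The main obstacle is the third step: one must differentiate $\mathfrak{T}$ through the semigroup and track operator norms precisely so that the estimate comes out in terms of $\norm{\nabla^2 f}_{op}$ rather than a weaker (Hilbert--Schmidt) norm --- this is exactly where the commutation relation $\nabla P_t = e^{-t} P_t \nabla$ and the contraction property of $P_t$ are essential. By contrast, the reduction to the Stein equation and the first integration by parts are comparatively routine.
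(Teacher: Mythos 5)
The paper does not prove this lemma; it is cited verbatim as Theorem 9.20 of van Handel's lecture notes \citep{ramon_hdp}, and that source establishes it exactly as Chatterjee's second-order Poincar\'e inequality, so your reconstruction follows the same route as the cited reference. Your outline is correct: the Stein--equation reduction to bounding $\E[g'(W)-Wg(W)]$, the Malliavin--Stein/covariance identity $\E[Wg(W)] = \E[g'(W)\,\mathfrak{T}(z)]$ with $\mathfrak{T}(z)=\langle\nabla f(z),\,-\nabla L^{-1}f(z)\rangle = \langle\nabla f(z),\int_0^\infty e^{-t}P_t\nabla f(z)\,dt\rangle$ and $\E[\mathfrak{T}]=\Var(f)=1$, and then Poincar\'e applied to $\mathfrak{T}$ itself after differentiating under the integral with $\nabla P_t = e^{-t}P_t\nabla$ all line up with the standard argument. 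One small imprecision in step three: to pull $P_t$ out of $\E[\|\nabla^2 f\|_{op}^2\,\|P_t\nabla f\|^2]$ and $\E[\|P_t\nabla^2 f\|_{op}^2\,\|\nabla f\|^2]$ after Cauchy--Schwarz you actually need $L^4$-contractivity of the Markov kernel together with the pointwise bound $\|(P_t M)(z)\|_{op}\le (P_t\|M\|_{op})(z)$, both of which follow from Jensen and the positivity-preserving property of $P_t$ --- invoking only ``$L^2$-contractivity'' is not quite enough to land the fourth-moment estimate, though the needed facts are elementary and do hold.
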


We next apply this lemma to show that the quantities $x^TAx + x^TBx$ and $x^TAx + x \cdot u$ are approximately Gaussian, given appropriate operator norm bounds on $A, B$.

\begin{lemma}\label{lem:wasserstein_quad}
    Let $x^TAx$ and $x^TBx$ be orthogonal spherical harmonics. Then, for constants $c_1, c_2$ with $c_1^2 + c_2^2 = 1$, we have that the random variable $Y = c_1 x^TAx + c_2 x^TBx$ satisfies
    \begin{align*}
        W_1\qty(Y, \mathcal{N}(0, 1)) \lesssim \norm{A}_{op} + \norm{B}_{op}.
    \end{align*}
\end{lemma}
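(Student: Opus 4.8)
The plan is to reduce to a Gaussian input and then invoke \Cref{lem:ramon_thm}. Write $M := c_1 A + c_2 B$, and for $z \sim \mathcal{N}(0, I_d)$ define the quadratic $g(z) := c_1 z^T A z + c_2 z^T B z = z^T M z$ (using that $A,B$ are symmetric). Since $A, B$ are traceless, $\E[g(z)] = 0$; since $x^TAx$ and $x^TBx$ are orthogonal unit-norm spherical harmonics one has $\Tr(AB) = 0$ and $\norm{A}_F^2 = \norm{B}_F^2 = \tfrac{d+2}{2d}$, so the standard Gaussian identities $\E[(z^TAz)^2] = 2\Tr(A^2)$, $\E[(z^TAz)(z^TBz)] = 2\Tr(AB)$ give $\E[g(z)^2] = 2c_1^2\norm{A}_F^2 + 2c_2^2\norm{B}_F^2 = \tfrac{d+2}{d}$ and likewise $\norm{M}_F^2 = \tfrac{d+2}{2d} = \Theta(1)$. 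Set $\tilde g := \sqrt{d/(d+2)}\, g$, so $\E[\tilde g(z)] = 0$ and $\E[\tilde g(z)^2] = 1$, as required by \Cref{lem:ramon_thm}.

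Next I apply \Cref{lem:ramon_thm} to $\tilde g$. Since $\tilde g$ is a quadratic form, $\nabla \tilde g(z) = 2\sqrt{d/(d+2)}\,Mz$ and $\nabla^2 \tilde g(z) = 2\sqrt{d/(d+2)}\,M$ is constant, hence $\E[\norm{\nabla^2\tilde g(z)}_{op}^4]^{1/4} \le 2\norm{M}_{op} \le 2(\norm{A}_{op} + \norm{B}_{op})$ using $\abs{c_1},\abs{c_2}\le 1$. For the gradient factor, $\norm{Mz}^2 = z^TM^2z$ satisfies $\E[\norm{Mz}^4] = \norm{M}_F^4 + 2\Tr(M^4) \le \norm{M}_F^4 + 2\norm{M}_{op}^2\norm{M}_F^2$; since $\norm{M}_{op} \le \norm{M}_F = \Theta(1)$ this is $\Theta(1)$, so $\E[\norm{\nabla\tilde g(z)}^4]^{1/4} = \Theta(1)$. \Cref{lem:ramon_thm} therefore yields $W_1(\operatorname{Law}(\tilde g(z)), \mathcal{N}(0,1)) \lesssim \norm{A}_{op} + \norm{B}_{op}$.

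It remains to compare $\tilde g(z)$ with $Y$. Write $z = R\omega$ with $R^2 \sim \chi^2_d$ independent of $\omega \sim \operatorname{Unif}(\mathcal{S}^{d-1}(1))$; since $g$ is $2$-homogeneous, $\tilde g(z) = \sqrt{d/(d+2)}\,R^2 g(\omega)$, whereas $Y \stackrel{d}{=} d\, g(\omega)$ (take $x = \sqrt d\,\omega$). Thus in this coupling $\tilde g(z) = S\,Y$ with $S := R^2/\sqrt{d(d+2)}$ independent of $Y$. Using $\E[R^2] = d$ and $\E[R^4] = d(d+2)$ one gets $\E[S] = \sqrt{d/(d+2)}$ and $\E[S^2] = 1$, so $\E[(S-1)^2] = 2\bigl(1 - \sqrt{d/(d+2)}\bigr) = O(1/d)$; since $\E[Y^2] = c_1^2 + c_2^2 = 1$ this gives $W_1(\operatorname{Law}(Y), \operatorname{Law}(\tilde g(z))) \le \E\abs{SY - Y} = \E\abs{S-1}\cdot\E\abs{Y} \lesssim 1/\sqrt d$. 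Finally $\norm{A}_F^2 \le d\,\norm{A}_{op}^2$ forces $\norm{A}_{op} \gtrsim 1/\sqrt d$, so the $1/\sqrt d$ term is dominated by $\norm{A}_{op}$, and the triangle inequality gives $W_1(\operatorname{Law}(Y), \mathcal{N}(0,1)) \lesssim \norm{A}_{op} + \norm{B}_{op}$.

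The only delicate point is the sphere-to-Gaussian transfer: one must build the coupling through the polar decomposition $z = R\omega$ so that the radial factor $S$ is genuinely independent of $Y$, and then verify that the resulting $O(1/\sqrt d)$ discrepancy is absorbed into $\norm{A}_{op} + \norm{B}_{op}$ (which holds since a symmetric matrix with $\norm{\cdot}_F = \Theta(1)$ in dimension $d$ necessarily has operator norm $\Omega(1/\sqrt d)$). The Gaussian moment computations underlying the application of \Cref{lem:ramon_thm} are routine.
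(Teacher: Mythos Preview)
Your proof is correct and takes a genuinely different route from the paper. The paper applies \Cref{lem:ramon_thm} directly to the function $f(z) = c_1 d\,\frac{z^TAz}{\|z\|^2} + c_2 d\,\frac{z^TBz}{\|z\|^2}$, which is \emph{exactly} equal in law to $Y$ when $z \sim \mathcal{N}(0,I)$; this avoids any coupling step but forces one to differentiate through the $\|z\|^{-2}$ factor and control negative chi-squared moments such as $\E[\|z\|^{-8}]$. You instead apply \Cref{lem:ramon_thm} to the raw Gaussian quadratic form $\tilde g(z) \propto z^TMz$, where the Hessian is constant and the moment computations are trivial, and then pay an $O(d^{-1/2})$ cost to transfer from $\tilde g(z)$ to $Y$ via the polar coupling $z = R\omega$. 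That extra cost is harmless because $\|A\|_F = \Theta(1)$ forces $\|A\|_{op} \gtrsim d^{-1/2}$, so the coupling error is absorbed into the right-hand side. Your argument is slightly more elementary (no rational functions of $z$, no inverse chi-squared moments), while the paper's is a single step with no triangle inequality; both yield the same bound with the same dependence on $\|A\|_{op}+\|B\|_{op}$.
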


\begin{proof}
    Define the function $f(z) = c_1d\frac{z^TAz}{\norm{z}^2} + c_2d\frac{z^TBz}{\norm{z}^2}$, and let $x = \frac{z\sqrt{d}}{\norm{z}}$. Observe that when $z \sim \mathcal{N}(0, I)$, we have $x \sim \text{Unif}(\mathcal{S}^{d-1}(\sqrt{d}))$. Therefore $f(z)$ is equal in distribution to $Y$. Define $f_1(z) = d\frac{z^TAz}{\norm{z}^2}$. We compute
    \begin{align*}
        \nabla f_1(z) = 2d\qty(\frac{Az}{\norm{z}^2} - \frac{z^TAz\cdot z}{\norm{z}^4})
    \end{align*}
    and
    \begin{align*}
        \nabla^2 f_1(z) = 2d\qty(\frac{A}{\norm{z}^2} - \frac{2Azz^T}{\norm{z}^4} - \frac{2zz^TA}{\norm{z}^4} - 2\frac{z^TAz}{\norm{z}^4}I + 4\frac{z^TAzzz^T}{\norm{z}^6}).
    \end{align*}
    Thus
    \begin{align*}
        \norm{\nabla f_1(z)} \le 2d\qty(\frac{\norm{Az}}{\norm{z}^2} + \frac{\abs{z^TAz}}{\norm{z}^3}) \le \frac{\sqrt{d}}{\norm{z}}\cdot\norm{Ax} + \frac{\abs{x^TAx}}{\norm{z}}.
    \end{align*}
    and
    \begin{align*}
        \norm{\nabla^2 f_1(z)}_{op} \lesssim \frac{d}{\norm{z}^2}\norm{A}_{op}.
    \end{align*}
    $\norm{z}^2$ is distributed as a chi-squared random variable with $d$ degrees of freedom, and thus
    \begin{align*}
        \E\qty[\norm{z}^{-2k}] &= \frac{1}{\prod_{j = 1}^k(d - 2j)}\\
    \end{align*}
    Therefore
    \begin{align*}
        \mathbb{E}\qty[\norm{\nabla^2f_1(z)}_{op}^4]^{1/4} \lesssim d\norm{A}_{op}\mathbb{E}\qty[\norm{z}^{-8}]^{1/4} \lesssim \norm{A}_{op}.
    \end{align*}
    and, using the fact that $x$ and $\norm{z}$ are independent,
    \begin{align*}
        \mathbb{E}\qty[\norm{\nabla f_1(z)}^4]^{1/4} \lesssim \sqrt{d}\mathbb{E}\qty[\norm{z}^{-4}]^{1/4}\mathbb{E}\qty[\norm{Ax}^4]^{1/4} + \mathbb{E}\qty[\norm{z}^{-4}]^{1/4}\mathbb{E}\qty[(x^TAx)^4]^{1/4} \lesssim 1.
    \end{align*}
    As a consequence, we have
    \begin{align*}
        \mathbb{E}\qty[\norm{\nabla f(z)}^4]^{1/4} \lesssim 1 \qand \mathbb{E}\qty[\norm{\nabla^2f(z)}_{op}^4]^{1/4} \lesssim \norm{A}_{op} + \norm{B}_{op}.
    \end{align*}
    Thus by \Cref{lem:ramon_thm} we have
    \begin{align*}
        W_1\qty(Y, \mathcal{N}(0, 1)) = W_1(f(z), \mathcal{N}(0, 1)) \lesssim \norm{A}_{op} + \norm{B}_{op}.
    \end{align*}
\end{proof}

\begin{lemma}\label{lem:wasserstein_lin}
    Let $x^TAx$ be a spherical harmonic, and $\norm{u} = 1$. Then, for constants $c_1, c_2$ with $c_1^2 + c_2^2 = 1$, we have that the random variable $Y = c_1 x^TAx + c_2 x^Tu$ satisfies
    \begin{align*}
        W_1\qty(Y, \mathcal{N}(0, 1)) \lesssim \norm{A}_{op}.
    \end{align*}
    where $W_1$ is the 1-Wasserstein distance.
\end{lemma}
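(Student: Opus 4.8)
The plan is to mimic the proof of \Cref{lem:wasserstein_quad} essentially verbatim, replacing the second quadratic feature $x^TBx$ by the linear feature $x^Tu$. First I would realize $Y$ as a function of an isotropic Gaussian: let $z \sim \mathcal{N}(0, I_d)$ and $x = \sqrt{d}\,z/\norm{z}$, so that $x \sim \unif(\mathcal{S}^{d-1}(\sqrt{d}))$ and $Y$ equals in distribution
\[
	f(z) := c_1\, d\,\frac{z^TAz}{\norm{z}^2} + c_2\,\sqrt{d}\,\frac{z^Tu}{\norm{z}} =: c_1 f_1(z) + c_2 f_2(z).
\]
Since $x^TAx$ is a spherical harmonic we have $\E[x^TAx]=0$, $\E[(x^TAx)^2]=1$; moreover $\E[x^Tu]=0$, $\E[(x^Tu)^2]=u^T\E[xx^T]u=\norm{u}^2=1$, and the cross moment $\E[(x^TAx)(x^Tu)]$ vanishes by the symmetry $x\mapsto -x$. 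Hence $\E[f(z)]=0$ and $\E[f(z)^2]=c_1^2+c_2^2=1$, so \Cref{lem:ramon_thm} applies:
\[
	W_1\qty(Y, \mathcal{N}(0,1)) \lesssim \E\qty[\norm{\nabla f(z)}^4]^{1/4}\,\E\qty[\norm{\nabla^2 f(z)}_{op}^4]^{1/4}.
\]

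Next I would bound each factor by splitting $f = c_1 f_1 + c_2 f_2$ and using the triangle inequality. For $f_1$, the estimates $\E[\norm{\nabla f_1}^4]^{1/4}\lesssim 1$ and $\E[\norm{\nabla^2 f_1}_{op}^4]^{1/4}\lesssim \norm{A}_{op}$ were already derived inside the proof of \Cref{lem:wasserstein_quad}. For $f_2$ I compute directly
\begin{align*}
	\nabla f_2(z) &= \sqrt{d}\qty(\frac{u}{\norm{z}} - \frac{(z^Tu)\,z}{\norm{z}^3}),\\
	\nabla^2 f_2(z) &= \sqrt{d}\qty(-\frac{uz^T + zu^T}{\norm{z}^3} - \frac{(z^Tu)\,I}{\norm{z}^3} + \frac{3(z^Tu)\,zz^T}{\norm{z}^5}),
\end{align*}
and use $\norm{u}=1$ together with $\abs{z^Tu}\le\norm{z}$ to get the crude pointwise bounds $\norm{\nabla f_2(z)} \lesssim \tfrac{\sqrt{d}}{\norm{z}} + \tfrac{\abs{x^Tu}}{\norm{z}}$ and $\norm{\nabla^2 f_2(z)}_{op} \lesssim \tfrac{\sqrt{d}}{\norm{z}^2}$. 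Since $\norm{z}^2$ is a $\chi^2_d$ variable we have $\E[\norm{z}^{-2m}] = \prod_{j=1}^m (d-2j)^{-1}\asymp d^{-m}$, and since $x$ is independent of $\norm{z}$ with $\E[(x^Tu)^4]\lesssim 1$, this yields $\E[\norm{\nabla f_2}^4]^{1/4}\lesssim 1$ and $\E[\norm{\nabla^2 f_2}_{op}^4]^{1/4}\lesssim d^{-1/2}$.

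Combining, $\E[\norm{\nabla f}^4]^{1/4}\lesssim 1$ and $\E[\norm{\nabla^2 f}_{op}^4]^{1/4}\lesssim \norm{A}_{op} + d^{-1/2}$; since the normalization $\norm{A}_F=\Theta(1)$ forces $\norm{A}_{op}\ge\norm{A}_F/\sqrt{d}=\Omega(d^{-1/2})$, the $d^{-1/2}$ term is absorbed, and \Cref{lem:ramon_thm} gives $W_1(Y,\mathcal{N}(0,1))\lesssim\norm{A}_{op}$. There is no genuine obstacle here: everything reduces to the already-completed estimates for $f_1$ plus elementary inverse-moment bounds on $\norm{z}$ for the linear piece $f_2$. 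The only points requiring a little care are verifying $\E[f(z)^2]=1$ (i.e.\ the vanishing of the $x^TAx$–$x^Tu$ cross moment) and noting that $\norm{A}_F=\Theta(1)$ lets one absorb the lower-order $d^{-1/2}$ contribution into $\norm{A}_{op}$.
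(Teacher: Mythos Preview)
Your proposal is correct and follows essentially the same approach as the paper: both realize $Y$ as $f(z)=c_1 f_1(z)+c_2 f_2(z)$ with $f_2(z)=\sqrt{d}\,z^Tu/\norm{z}$, compute $\nabla f_2$ and $\nabla^2 f_2$, bound $\E[\norm{\nabla f_2}^4]^{1/4}\lesssim 1$ and $\E[\norm{\nabla^2 f_2}_{op}^4]^{1/4}\lesssim d^{-1/2}$ via inverse $\chi^2$ moments, and combine with the $f_1$ estimates from \Cref{lem:wasserstein_quad}. You even supply two details the paper leaves implicit---the verification that $\E[f]=0$, $\E[f^2]=1$ (the cross term vanishes by parity), and the observation that $\norm{A}_F=\Theta(1)$ forces $\norm{A}_{op}\gtrsim d^{-1/2}$ so the linear contribution is absorbed.
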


\begin{proof}
    Define $f_2(z) = \frac{\sqrt{d}z^Tu}{\norm{z}}$. We have
    \begin{align*}
        \nabla f_2(z) = \sqrt{d}\qty(\frac{u}{\norm{z}} - \frac{zz^Tu}{\norm{z}^3}).
    \end{align*}
    and
    \begin{align*}
        \nabla^2 f_2(z) = \sqrt{d}\qty(-\frac{uz^T + zu^T}{\norm{z}^3} - \frac{z^Tu}{\norm{z}^3}I + 3\frac{z^Tuzz^T}{\norm{z}^5}).
    \end{align*}
    Thus
    \begin{align*}
        \norm{\nabla f_2(z)} \lesssim \frac{\sqrt{d}}{\norm{z}} \qand \norm{\nabla^2 f_2(z)}_{op} \lesssim \frac{\sqrt{d}}{\norm{z}^2},
    \end{align*}
    so
    \begin{align*}
        \mathbb{E}\qty[\norm{\nabla f_1(z)}^4]^{1/4} \lesssim 1 \qand \mathbb{E}\qty[\norm{\nabla f_1(z)}^4]^{1/4} \lesssim \frac{1}{\sqrt{d}}.
    \end{align*}
    We finish using the same argument as above.
\end{proof}

\Cref{lem:wasserstein_quad} implies that, when $\norm{A}_{op}, \norm{B}_{op}$ are small, $(x^TAx, x^TBx)$ is close in distribution to the standard Gaussian in 2-dimensions. As a consequence, $\E[q(x^TAx)x^TBx] \approx \E[q(z_1)]\E[z_2] = 0$, where $z_1, z_2$ are i.i.d Gaussians. This intuition is made formal in the following lemma.

\begin{lemma}\label{lem:dense_implies_small}
    Let $x^TAx, x^TBx$ be two orthogonal spherical harmonics. Then
    \begin{align*}
        \abs{\mathbb{E}\qty[q(x^TAx)x^TBx]} \le L\qty(\sqrt{\norm{A}_{op}\log d} + \norm{B}_{op}\log d)
    \end{align*}
\end{lemma}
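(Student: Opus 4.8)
The plan is to compare $\E[q(x^TAx)\,x^TBx]$ with the value it would take if $x^TAx$ and $x^TBx$ were independent standard Gaussians — in which case it vanishes — using the univariate Wasserstein estimate of \Cref{lem:wasserstein_quad}. Write $U := x^TAx$ and $V := x^TBx$; these are mean-zero spherical harmonics with $\E[U^2]=\E[V^2]=1$, and orthogonality is exactly $\E[UV]=\Tr(AB)=0$. The obstruction to applying \Cref{lem:wasserstein_quad} directly is that the natural two-variable test function $(u,v)\mapsto q(u)v$ is neither bounded nor Lipschitz, so the first step is a truncation: replace $q$ by $q_R := q\circ\mathrm{clip}_{[-R,R]}$, still $L$-Lipschitz and now bounded, and bound $\abs{\E[(q(U)-q_R(U))V]}$ by Cauchy--Schwarz together with $\norm{U}_{L^4},\norm{V}_{L^4}\lesssim 1$ (hypercontractivity, \Cref{lem:sphere_hyper}) and the tail of the quadratic form $U=x^TAx$, which is sub-Gaussian up to scale $\norm{A}_F^2/\norm{A}_{op}$. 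Choosing $R=\Theta(\sqrt{\log d})$ makes this error $\lesssim L\,d^{-\Omega(1)}$, dominated by the claimed bound since $\norm{A}_{op}\gtrsim d^{-1/2}$; and we may assume $\norm{A}_{op}\lesssim(\log d)^{-1/2}$, since otherwise the claimed bound exceeds $CL\gtrsim\norm{q(U)}_{L^2}\ge\abs{\E[q(U)V]}$ and there is nothing to prove — this is precisely the regime in which the sub-Gaussian tail is available at scale $R$.

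\textbf{Reducing to a Lipschitz test function via an antiderivative.} Let $h(u):=\int_0^u q_R$, a $C^1$ function with $\norm{h'}_\infty=\norm{q_R}_\infty\lesssim L(1+R)$, hence $L(1+R)$-Lipschitz. For a parameter $\delta\in(0,1]$, differentiating under the integral gives $\E[h(U+\delta V)]-\E[h(U)]=\int_0^\delta\E[q_R(U+sV)V]\,ds$, and since $\abs{q_R(u+sv)-q_R(u)}\le Ls\abs{v}$ and $\E[V^2]=1$,
\[
\Bigl\lvert\E[q_R(U)V]-\tfrac1\delta\bigl(\E[h(U+\delta V)]-\E[h(U)]\bigr)\Bigr\rvert\le\tfrac{L\delta}{2}.
\]
Now $U+\delta V=\sqrt{1+\delta^2}\,W_\delta$ with $W_\delta:=(1+\delta^2)^{-1/2}x^TAx+\delta(1+\delta^2)^{-1/2}x^TBx$ a unit-norm combination, so \Cref{lem:wasserstein_quad} — reading the coefficient dependence off its proof, where $\E[\norm{\nabla^2 f}_{op}^4]^{1/4}\lesssim\abs{c_1}\norm{A}_{op}+\abs{c_2}\norm{B}_{op}$, so that the $B$-term carries the factor $\delta(1+\delta^2)^{-1/2}$ — gives $W_1(W_\delta,\mathcal N(0,1))\lesssim\norm{A}_{op}+\delta\norm{B}_{op}$, and likewise $W_1(U,\mathcal N(0,1))\lesssim\norm{A}_{op}$.

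\textbf{Applying the Wasserstein bounds and optimizing.} Applying these to the $2L(1+R)$-Lipschitz functions $u\mapsto h(\sqrt{1+\delta^2}\,u)$ and $u\mapsto h(u)$, and using $\abs{\E[h(\sqrt{1+\delta^2}Z)]-\E[h(Z)]}\lesssim L(1+R)\delta^2$ for $Z\sim\mathcal N(0,1)$, yields $\abs{\E[h(U+\delta V)]-\E[h(U)]}\lesssim L(1+R)(\norm{A}_{op}+\delta\norm{B}_{op}+\delta^2)$. Combining with the finite-difference estimate,
\[
\bigl\lvert\E[q_R(U)V]\bigr\rvert\lesssim L(1+R)\Bigl(\tfrac{\norm{A}_{op}}{\delta}+\norm{B}_{op}+\delta\Bigr)+L\delta,
\]
and taking $\delta=\sqrt{\norm{A}_{op}}\le1$ gives $\abs{\E[q_R(U)V]}\lesssim L(1+R)(\sqrt{\norm{A}_{op}}+\norm{B}_{op})$. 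With $R=\Theta(\sqrt{\log d})$ this is $\lesssim L(\sqrt{\norm{A}_{op}\log d}+\norm{B}_{op}\sqrt{\log d})$, and adding back the truncation error from the first step completes the proof.

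The main technical point is the truncation step: one must pick $R$ small enough that the Lipschitz constant $L(1+R)$ of $h$ does not overwhelm the $\sqrt{\log d}$ in the target, yet large enough that $x^TAx$ past level $R$ is negligible — which is possible precisely because the normalization $\norm{A}_F=\Theta(1)$ together with $\norm{A}_{op}=o(1)$ forces $x^TAx$ to be sub-Gaussian up to a scale $\gg\sqrt{\log d}$ in the non-trivial regime. The rest is bookkeeping around \Cref{lem:wasserstein_quad} and elementary properties of the clipped antiderivative $h$.
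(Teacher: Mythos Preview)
Your proof is correct and follows the same strategy as the paper's: pass to an antiderivative of $q$ so that the product $q(U)V$ becomes a first-order finite difference of a Lipschitz function, apply the Wasserstein estimate of \Cref{lem:wasserstein_quad} (with the coefficient dependence read off its proof), handle unbounded growth by truncation, and optimize the perturbation scale. The only differences are organizational --- you clip $q$ \emph{before} integrating (so $h$ is globally Lipschitz and needs no second truncation, whereas the paper integrates first and then clips $F$), and you work at $R=\Theta(\sqrt{\log d})$ using the Hanson--Wright sub-Gaussian tail of $x^TAx$ under the case assumption $\norm{A}_{op}\lesssim(\log d)^{-1/2}$, while the paper uses only the sub-exponential tail from \Cref{lem:sphere_hyper} at $R=\Theta(\log d)$; your choices in fact sharpen the second term from $\norm{B}_{op}\log d$ to $\norm{B}_{op}\sqrt{\log d}$.
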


\begin{proof}
    Define the function $F(t) = \int_0^t q(s)ds$. Then $F'(t) = q(t)$, so by a Taylor expansion we get that
    \begin{align*}
        \abs{F(x + \epsilon y) - F(x) - \epsilon y q(x)} \le \epsilon^2 y^2 L.
    \end{align*}
    Therefore
    \begin{align*}
        \abs{\mathbb{E}\qty[q(x^TAx)x^TBx]} \le \epsilon L\mathbb{E}\qty[(x^TBx)^2] + \epsilon^{-1}\abs{\mathbb{E}\qty[F(x^TAx + \epsilon x^TBx)] - \mathbb{E}\qty[F(x^TAx)]}.
    \end{align*}
    Pick truncation radius $R$, and define the function $\bar F(z) = F(\max(-R, \min(R, z))$. $\bar F$ has Lipschitz constant $\sup_{z \in [-R, R]} \abs{q(z)}$, and thus since $W_1\qty(x^TAx, \mathcal{N}(0, 1)) \lesssim \norm{A}_{op}$, we have
    \begin{align*}
        \abs{\mathbb{E}\bar F(x^TAx) - \mathbb{E}_{z \sim \mathcal{N}(0, 1)}\bar F (z)} \lesssim \sup_{z \in [-R, R]} \abs{q(z)}\cdot \norm{A}_{op}.
    \end{align*}
    Next, we have
    \begin{align*}
        \abs{\mathbb{E}\qty[\bar F(x^TAx)  - F(x^TAx)]} \le \mathbb{E}\qty[\mathbf{1}_{\abs{x^TAx} > R}\abs{F(x^TAx)}] \le \mathbb{P}(\abs{x^TAx} > R)\cdot\mathbb{E}\qty[F(x^TAx)^2]^{1/2}.
    \end{align*}
    Likewise,
    \begin{align*}
        \abs{\mathbb{E}_z\qty[\bar F(z)  - F(z)]} \le \mathbb{P}\qty(\abs{z} > R) \cdot \mathbb{E}\qty[F(z)^2]^{1/2}.
    \end{align*}
    Since $q$ is $L$-Lipschitz, we can bound $\abs{F(z)} \le \abs{z}\abs{q(0)} + \frac12L\abs{z}^2$, and thus
    \begin{align*}
        \mathbb{E}\qty[F(z)^2] \lesssim L^2 \qand \mathbb{E}\qty[F(x^TAx)^2] \lesssim L^2.
    \end{align*}
    The standard Gaussian tail bound yields $\mathbb{P}\qty(\abs{z} > R) \lesssim \exp(-C_1R^2)$ for appropriate constant $C_1$, and polynomial concentration yields $\mathbb{P}\qty(\abs{x^TAx} > R) \lesssim \exp(-C_2R)$ for appropriate constant $C_2$. Thus choosing $R = C_3\log d$ for appropriate constant $C_3$, we get that
    \begin{align*}
        \abs{\mathbb{E}\qty[\bar F(x^TAx)  - F(x^TAx)]} + \abs{\mathbb{E}_z\qty[\bar F(z)  - F(z)]} \lesssim \frac{L}{d}.
    \end{align*}
    Altogether, since $\abs{q(z)} \le \abs{q(0)} + L\abs{z}$, we get that
    \begin{align*}
        \abs{\mathbb{E}F(x^TAx) - \mathbb{E}_{z \sim \mathcal{N}(0, 1)}F(z)} \lesssim  \norm{A}_{op} L\log d.
    \end{align*}
    By an identical calculation, we have that for $\epsilon < 1$,
    \begin{align*}
        \abs{\mathbb{E}F(x^TAx + x^TBx) - \mathbb{E}_{z \sim \mathcal{N}(0, 1)}F(z\sqrt{1 + \epsilon^2})} \lesssim \qty(\norm{A}_{op} + \epsilon \norm{B}_{op})\cdot L\log d
    \end{align*}
    Altogether, we get that
    \begin{align*}
        &\abs{\mathbb{E}\qty[F(x^TAx + \epsilon x^TBx)] - \mathbb{E}\qty[F(x^TAx)]}\\
        &\quad \le \qty(\norm{A}_{op} + \epsilon \norm{B}_{op})\cdot L\log d + \abs{\mathbb{E}_{z \sim \mathcal{N}(0, 1)} F(z) - \mathbb{E}_{z \sim \mathcal{N}(0, 1)}F(z \sqrt{1 + \epsilon^2})}.
    \end{align*}
    Via a simple calculation, one sees that
    \begin{align*}
        \abs{F(z\sqrt{1 + \epsilon^2}) - F(z)} \le \abs{q(0)}\abs{z}\qty(\sqrt{1 + \epsilon^2} - 1) + \frac{L}{2}z^2\epsilon^2 \lesssim L\abs{z}\epsilon^2 + Lz^2\epsilon^2.
    \end{align*}
    Therefore
    \begin{align*}
        \abs{\mathbb{E}\qty[F(x^TAx + \epsilon x^TBx)] - \mathbb{E}\qty[F(x^TAx)]} &\lesssim L\qty(\qty(\norm{A}_{op} + \epsilon \norm{B}_{op})\log d  + \epsilon^2),
    \end{align*}
    so
    \begin{align*}
        \abs{\mathbb{E}\qty[q(x^TAx)x^TBx]} \le L\qty(\epsilon^{-1}\norm{A}_{op}\log d  + \epsilon + \norm{B}_{op}\log d).
    \end{align*}
    Setting $\epsilon = \sqrt{\norm{A}_{op}\log d}$ yields $$\abs{\mathbb{E}\qty[q(x^TAx)x^TBx]} \lesssim L\qty(\sqrt{\norm{A}_{op}\log d} + \norm{B}_{op}\log d),$$
    as desired.
\end{proof}

Similarly, we use the consequence of \Cref{lem:wasserstein_lin} that $(x^TAx, u^Tx)$ is close in distribution to a 2d standard Gaussian, and show that $\E[q(x^TAx)\qty((u^Tx)^2 - 1)] \approx \E[q(z_1)\qty(z_2^2 - 1)] = 0$.

\begin{lemma}\label{lem:sparse_implies_small}
 \begin{align*}
     \abs{\mathbb{E}\qty[q(x^TAx)(u^Tx)^2] - \mathbb{E}_z\qty[q(z)]} \lesssim L\norm{A}_{op}^{1/3}\log^{2/3}d.
 \end{align*}
\end{lemma}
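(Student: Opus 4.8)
The plan is to follow the template of \Cref{lem:dense_implies_small}: convert the quadratic dependence on $u^Tx$ into values of a twice-integrated version of $q$ evaluated at one-dimensional quantities that are nearly Gaussian, so that the one-dimensional transport estimate \Cref{lem:wasserstein_lin} can be applied. First I would dispose of the degenerate regime: by Cauchy--Schwarz $\abs{\E[q(x^TAx)(u^Tx)^2]} \lesssim \E[q(x^TAx)^2]^{1/2}\E[(u^Tx)^4]^{1/2} \lesssim L$ and $\abs{\E_z[q(z)]} \lesssim L$ (using $\abs{q(z)} \le \abs{q(0)} + L\abs{z} \le L(1+\abs{z})$ together with \Cref{lem:sphere_hyper}), so the claimed bound is vacuous unless $\norm{A}_{op}\log^2 d$ is smaller than a fixed constant, which I henceforth assume (in particular the $\epsilon$ chosen below will satisfy $\epsilon\le 1$). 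The key move is to introduce the double antiderivative $G(t) := \int_0^t\int_0^s q(r)\,dr\,ds$, so $G'=F$ and $F'=q$; since $q$ is $L$-Lipschitz, the symmetric second difference of $G$ obeys $\abs{G(t+h) + G(t-h) - 2G(t) - h^2 q(t)} \le \tfrac{L}{3}\abs{h}^3$ for all $t,h$. Plugging $t=x^TAx$, $h=\epsilon u^Tx$ for a parameter $\epsilon\in(0,1]$ and taking expectations (and using $\E\abs{u^Tx}^3\lesssim 1$ from \Cref{lem:sphere_hyper}) gives
\begin{align*}
\epsilon^2\,\E[q(x^TAx)(u^Tx)^2] = \E[G(x^TAx + \epsilon u^Tx)] + \E[G(x^TAx - \epsilon u^Tx)] - 2\,\E[G(x^TAx)] + O(L\epsilon^3).
\end{align*}

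Next I would evaluate the three $G$-expectations. Since $x^TAx$ is even and $u^Tx$ is odd, $\E[x^TAx\cdot u^Tx]=0$, so $\E[(x^TAx\pm\epsilon u^Tx)^2] = 1+\epsilon^2$ and $Y_{\pm} := (1+\epsilon^2)^{-1/2}(x^TAx\pm\epsilon u^Tx)$ is of the form $c_1 x^TAx + c_2 u^Tx$ with $c_1^2+c_2^2=1$; hence $W_1(Y_\pm,\mathcal N(0,1))\lesssim\norm{A}_{op}$ by \Cref{lem:wasserstein_lin}, and likewise $W_1(x^TAx,\mathcal N(0,1))\lesssim\norm{A}_{op}$. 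Because $G$ grows only polynomially (local Lipschitz constant $\lesssim LR^2$ on $[-R,R]$ and $\abs{G(t)}\lesssim L(1+\abs{t}^3)$), the truncation argument from the proof of \Cref{lem:dense_implies_small} — bounding the relevant second moments via hypercontractivity (\Cref{lem:sphere_hyper}) and the tails via $\P(\abs{x^TAx}>R)\lesssim e^{-cR}$ — carried out at radius $R=\Theta(\log d)$ turns these Wasserstein bounds into
\begin{align*}
\E[G(x^TAx\pm\epsilon u^Tx)] = \E_z[G(\sqrt{1+\epsilon^2}\,z)] + O(L\norm{A}_{op}\log^2 d), \qquad \E[G(x^TAx)] = \E_z[G(z)] + O(L\norm{A}_{op}\log^2 d).
\end{align*}

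It then remains to compute the purely Gaussian difference $\E_z[G(\sqrt{1+\epsilon^2}\,z)]-\E_z[G(z)]$. Writing $s=\sqrt{1+\epsilon^2}$ and applying Stein's identity, $\frac{d}{dt}\E_z[G(tz)] = \E_z[zF(tz)] = t\,\E_z[q(tz)]$, so this difference equals $\int_1^s t\,\E_z[q(tz)]\,dt$; since $\abs{\E_z[q(tz)]-\E_z[q(z)]}\le L\abs{t-1}\E\abs{z}\lesssim L\epsilon^2$ for $t\in[1,s]$ and $s^2-1=\epsilon^2$, it equals $\tfrac{\epsilon^2}{2}\E_z[q(z)] + O(L\epsilon^4)$. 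Substituting everything back and dividing by $\epsilon^2$ yields
\begin{align*}
\abs{\E[q(x^TAx)(u^Tx)^2] - \E_z[q(z)]} \lesssim L\left(\epsilon^{-2}\norm{A}_{op}\log^2 d + \epsilon + \epsilon^2\right),
\end{align*}
and choosing $\epsilon = \norm{A}_{op}^{1/3}\log^{2/3}d$ (which is $\le 1$ in the nontrivial regime, so the $\epsilon^2$ term is dominated) produces the bound $\lesssim L\norm{A}_{op}^{1/3}\log^{2/3}d$, as claimed.

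\textbf{Main obstacle.} The conceptual crux is recognizing that a symmetric \emph{second}-order finite difference of the twice-integrated function $G$ replaces the non-Lipschitz functional $q(x^TAx)(u^Tx)^2$ by differences of $G$ at the one-dimensional, nearly-Gaussian variables $x^TAx\pm\epsilon u^Tx$, at the cost of only an $O(L\epsilon^3)$ remainder — exactly what makes \Cref{lem:wasserstein_lin} applicable here. The remaining work is routine given the machinery already in the excerpt: controlling the polynomial growth of $G$ against the sub-exponential tails of $x^TAx$ and sub-Gaussian tails of $u^Tx$ in the truncation step, keeping the logarithmic factors honest, and balancing the three terms $\epsilon^{-2}\norm{A}_{op}\log^2 d$, $\epsilon$, and $\epsilon^2$ to obtain the cube-root rate.
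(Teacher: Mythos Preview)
Your proposal is correct and follows essentially the same approach as the paper: introduce the double antiderivative $G$ with $G''=q$, use the symmetric second difference to replace $q(x^TAx)(u^Tx)^2$ by differences of $G$ at the one-dimensional quantities $x^TAx\pm\epsilon u^Tx$, transport to Gaussians via truncation and \Cref{lem:wasserstein_lin}, and balance the resulting $\epsilon^{-2}\norm{A}_{op}\log^2 d$ against $\epsilon$ to get the cube-root rate. The only cosmetic difference is that you compute the Gaussian reference $\E_z[G(\sqrt{1+\epsilon^2}\,z)]-\E_z[G(z)]$ via Stein's identity and integrating $t\mapsto\E_z[G(tz)]$, whereas the paper rewrites it as $\E_{z,w}[G(z+\epsilon w)+G(z-\epsilon w)-2G(z)]$ for an independent Gaussian $w$ and reuses the second-difference expansion; both yield $\epsilon^2\E_z[q(z)]+O(L\epsilon^3)$ after doubling.
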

\begin{proof}
    Let $G(t) = \int_0^t F(s)ds$. Then $G'(t) = F(t), G''(t) = q(t)$, so a Taylor expansion yields
    \begin{align*}
        G(x + \epsilon y) &= G(x) + \epsilon y F(x) + \epsilon^2 y^2 q(x) + O(\epsilon^3\abs{y}^3L)\\
        G(x - \epsilon y) &= G(x) - \epsilon y F(x) + \epsilon^2 y^2 q(x) + O(\epsilon^3\abs{y}^3L).
    \end{align*}
    Thus
    \begin{align*}
        \epsilon^2 y^2 q(x) = \frac12\qty( G(x + \epsilon y) + G(x - \epsilon y) - 2 G(x)) + O(\epsilon^3 \abs{y}^3 L).
    \end{align*}
    Therefore
    \begin{align*}
        \abs{\mathbb{E}\qty[q(x^TAx)(u^Tx)^2]} \lesssim \epsilon L  + \epsilon^{-2}\abs{\mathbb{E}\qty[G(x^TAx + \epsilon u^Tx) + G(x^TAx - \epsilon u^Tx) - 2G(x^TAx)]}.
    \end{align*}
    For truncation radius $R$, define $\bar G (z) = G\qty(\max(-R, \min(R, z)))$. We get that $G$ has Lipschitz constant $\sup_{\abs{z} \le R}\abs{F(z)} \lesssim LR^2$. Therefore
    \begin{align*}
        \abs{\mathbb{E}\bar G(x^TAx) - \mathbb{E}_{z \sim \mathcal{N}(0, 1)}\bar G(z)} \lesssim LR^2\abs{A}_{op},
    \end{align*}
    and by a similar argument in the previous lemma, setting $R = C_3 \log d$ yields
    \begin{align*}
        \abs{\mathbb{E}\qty[\bar G(x^TAx) - G(x^TAx)]} + \abs{\mathbb{E}_z \bar G(z) - G(z)} \lesssim \frac{L}{d}.
    \end{align*}
    Altogether,
    \begin{align*}
        \abs{\mathbb{E}G(x^TAx) - \mathbb{E}_zG(z)} \lesssim \norm{A}_{op}L\log^2d.
    \end{align*}
    By an identical calculation,
    \begin{align*}
        \abs{\mathbb{E}G(x^TAx \pm \epsilon u^Tx) - \mathbb{E}_zG(z\sqrt{1 + \epsilon^2})}\lesssim \norm{A}_{op} \cdot L\log^2d.
    \end{align*}
    Additionally, letting $z, w$ be independent standard Gaussians,
    \begin{align*}
        \epsilon^{-2}\mathbb{E}_z\qty[2G(z\sqrt{1 + \epsilon^2} - 2G(z)] &= \epsilon^{-2}\mathbb{E}_{z, w}\qty[G(z + \epsilon w) + G(z - \epsilon w) - 2G(z)]\\
        &= \mathbb{E}_{z, w}\qty[q(z)w^2] + O(\epsilon L)\\
        &= \mathbb{E}_z\qty[q(z)] + O(\epsilon L).
    \end{align*}
    Altogether,
    \begin{align*}
        \abs{\mathbb{E}\qty[q(x^TAx)(u^Tx)^2] - \mathbb{E}_z\qty[q(z)]} &\lesssim \epsilon L + \epsilon^{-2}\norm{A}_{op}\cdot L\log^2d\\
        &\lesssim L\norm{A}_{op}^{1/3}\log^{2/3}d,
    \end{align*}
    where we set $\epsilon = \norm{A}_{op}^{1/3}\log^{2/3}d$.
\end{proof}

\Cref{lem:dense_implies_small} shows that when $\norm{B}_{op} \ll 1$, $\E[q(x^TAx)x^TBx] \approx 0$. However, we need to show this is true for all spherical harmonics, even those with $\norm{B}_{op} = \Theta(1)$. To accomplish this, we decompose $B$ into the sum of a low rank component and small operator norm component. We use \Cref{lem:dense_implies_small} to bound the small operator norm component, and \Cref{lem:sparse_implies_small} to bound the low rank component. Optimizing over the rank threshold yields the following desired result:

\begin{lemma}\label{lem:orthogonal_implies_small}
    Let $A, B$ be orthogonal spherical harmonics. Then
    \begin{align*}
        \abs{\mathbb{E}\qty[q(x^TAx)x^TBx]} \le L \norm{A}_{op}^{1/6}\log d.
    \end{align*}
\end{lemma}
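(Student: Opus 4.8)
The plan is to reduce to the two preceding lemmas by splitting $B$ into a low-rank piece and a piece of small operator norm, and then optimizing over the rank cutoff. First I dispose of the easy regime: since $q$ is $L$-Lipschitz with $\abs{q(0)} \le L$ we have $\abs{q(z)} \le L(1+\abs{z})$, so Cauchy--Schwarz together with $\norm{x^TBx}_{L^2} = 1$ gives $\abs{\mathbb{E}[q(x^TAx)x^TBx]} \lesssim L$ with no further assumptions; hence the claim is automatic once $\norm{A}_{op}$ exceeds a fixed small constant $c_0$, and I may assume $\norm{A}_{op} \le c_0$ (so $\norm{A}_{op}^{-1/3} \ge 1$). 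Fix an integer $r$ to be chosen and let $B_1$ be the restriction of $B$ to its $r$ eigendirections of largest magnitude, with $B_2 := B - B_1$. Since $\norm{B}_F = \Theta(1)$, the $(r{+}1)$-st largest eigenvalue has magnitude $\lesssim r^{-1/2}$, so $\norm{B_2}_{op} \lesssim r^{-1/2}$; meanwhile $B_1 = \sum_{i=1}^r \mu_i u_iu_i^T$ has rank $\le r$, $\sum_i \abs{\mu_i} \le \sqrt r\, \norm{B}_F \lesssim \sqrt r$, and I write $\tau := \Tr(B_1) = \sum_i \mu_i$, so $\abs{\tau} \lesssim \sqrt r$ and $\Tr(B_2) = -\tau$.

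For the low-rank piece I write $x^TB_1x = \sum_i \mu_i (u_i^Tx)^2$ and apply \Cref{lem:sparse_implies_small} to each summand, giving $\mathbb{E}[q(x^TAx)(u_i^Tx)^2] = \mathbb{E}_z[q(z)] + e_i$ with $\abs{e_i} \lesssim L\norm{A}_{op}^{1/3}\log^{2/3}d$. Summing, $\mathbb{E}[q(x^TAx)\,x^TB_1x] = \tau\,\mathbb{E}_z[q(z)] + \sum_i \mu_i e_i$ where the error term is $\lesssim \sqrt r\, L\,\norm{A}_{op}^{1/3}\log^{2/3}d$; the spurious constant $\tau\,\mathbb{E}_z[q(z)]$ will be cancelled below. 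For the small-operator-norm piece I must first restore the structural hypotheses required by \Cref{lem:dense_implies_small}, namely trace-freeness and orthogonality to $A$. Decompose $B_2 = c_I I + c_A A + B_2''$ with $c_I = \Tr(B_2)/d = -\tau/d$, $c_A = \langle B_2, A\rangle/\norm{A}_F^2 = -\langle B_1, A\rangle/\norm{A}_F^2$, and $B_2''$ trace-free and orthogonal to $A$. The key estimate is $\abs{c_A} \lesssim \sqrt r\, \norm{A}_{op}$, which follows from $\abs{\langle B_1, A\rangle} \le \sum_i \abs{\mu_i}\,\abs{u_i^TAu_i} \le \norm{A}_{op}\sum_i\abs{\mu_i}$; this is the only place the estimate leaks through $\norm{A}_{op}$, and it is what forces the $1/6$ exponent. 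Using $x^TIx = \norm{x}^2 = d$ on the sphere, $\mathbb{E}[q(x^TAx)\,x^TB_2x] = -\tau\,\mathbb{E}[q(x^TAx)] + c_A\,\mathbb{E}[q(x^TAx)\,x^TAx] + \mathbb{E}[q(x^TAx)\,x^TB_2''x]$. The middle term is $\lesssim \abs{c_A}\,L \lesssim \sqrt r\,\norm{A}_{op}\,L$ by Cauchy--Schwarz. For the first term, a truncation argument exactly as in the proof of \Cref{lem:dense_implies_small}, using $W_1(\mathrm{Law}(x^TAx),\mathcal{N}(0,1)) \lesssim \norm{A}_{op}$ from (the proof of) \Cref{lem:wasserstein_quad}, gives $\abs{\mathbb{E}[q(x^TAx)] - \mathbb{E}_z[q(z)]} \lesssim L\,\norm{A}_{op}\log d$, so $-\tau\,\mathbb{E}[q(x^TAx)] = -\tau\,\mathbb{E}_z[q(z)] + O(\sqrt r\, L\,\norm{A}_{op}\log d)$ and the two $\tau\,\mathbb{E}_z[q(z)]$ contributions cancel. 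Finally $B_2''$ is symmetric, trace-free, orthogonal to $A$, with $\norm{B_2''}_F \le \norm{B}_F \lesssim 1$ and $\norm{B_2''}_{op} \lesssim r^{-1/2} + \sqrt r\,\norm{A}_{op}^2$; rescaling $B_2''$ to a spherical harmonic and applying \Cref{lem:dense_implies_small} (the rescaling factors cancel because $\norm{B_2''}_F \lesssim 1$) yields $\abs{\mathbb{E}[q(x^TAx)\,x^TB_2''x]} \lesssim L\big(\sqrt{\norm{A}_{op}\log d} + \norm{B_2''}_{op}\log d\big)$.

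Collecting the pieces gives
\begin{align*}
\abs{\mathbb{E}[q(x^TAx)\,x^TBx]} \lesssim L\Big(\sqrt r\,\norm{A}_{op}^{1/3}\log^{2/3}d + \sqrt r\,\norm{A}_{op}\log d + \sqrt{\norm{A}_{op}\log d} + r^{-1/2}\log d\Big).
\end{align*}
Taking $r \asymp \norm{A}_{op}^{-1/3}$ balances the first and last terms at $\norm{A}_{op}^{1/6}\log d$, and since $\norm{A}_{op}\le c_0 \le 1$ the two middle terms are $\lesssim \norm{A}_{op}^{5/6}\log d + \norm{A}_{op}^{1/2}\log^{1/2} d \lesssim \norm{A}_{op}^{1/6}\log d$ as well; absorbing the constant proves the claim.

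The main obstacle is precisely the bookkeeping in the previous paragraph: \Cref{lem:dense_implies_small,lem:sparse_implies_small} require honest trace-free spherical harmonics orthogonal to $A$, but eigenvalue truncation of $B$ respects none of these constraints, so one must reintroduce the $I$- and $A$-components, pair the resulting constant (trace) terms of $B_1$ and $B_2$ so that they cancel against each other via universality, and---most importantly---verify that the shed $A$-component is governed by $\norm{A}_{op}$ rather than merely $\norm{A}_F$, which is the estimate $\abs{c_A}\lesssim \sqrt r\,\norm{A}_{op}$ and is what yields the final exponent after optimizing $r$.
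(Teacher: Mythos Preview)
Your proof is correct and follows essentially the same strategy as the paper's: split $B$ into a low-rank part handled via \Cref{lem:sparse_implies_small} and a small-operator-norm part handled via \Cref{lem:dense_implies_small}, correct for the lost trace and $A$-orthogonality, and optimize the cutoff. The only cosmetic differences are that the paper parameterizes by an eigenvalue threshold $\tau$ rather than a rank $r$ (with $\tau = \norm{A}_{op}^{1/6}$ corresponding to your $r \asymp \norm{A}_{op}^{-1/3}$), and the paper attaches the $I$- and $A$-corrections to the low-rank piece rather than to the remainder; the resulting estimates and final balancing are identical.
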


\begin{proof}
    Let $\tau > \norm{A}_{op}$ be a threshold to be determined later. Decompose $B$ as follows:
    \begin{align*}
        B = \sum_{i=1}^d \lambda_i u_iu_i^T = \sum_{\abs{\lambda_i} > \tau} \lambda_i \qty(u_i u_i^T - \frac{1}{d}I) - \frac{1}{\norm{A}_F^2}\sum_{\abs{\lambda_i} > \tau}u_i^TAu_i \cdot A + \tilde B,
    \end{align*}
    where 
    \begin{align*}
        \tilde B = \sum_{\abs{\lambda_i} \le \tau} \lambda_iu_iu_i^T + I \cdot \frac{1}{d}\sum_{\abs{\lambda_i} > \tau}\lambda_i + \frac{1}{\norm{A}_F^2}\sum_{\abs{\lambda_i} > \tau}\lambda_i u_i^TAu_i \cdot A.
    \end{align*}
    By construction, we have, 
    \begin{align*}
        \Tr(\tilde B) = \sum_{\abs{\lambda_i} \le \tau} \lambda_i + \sum_{\abs{\lambda_i} > \tau}\lambda_i = \sum_{i \in [d]} \lambda_i = 0
    \end{align*}
    and
    \begin{align*}
        \langle \tilde B, A \rangle = \sum_{\abs{\lambda_i} \le \tau} \lambda_i u_i^TAu_i + \sum_{\abs{\lambda_i} > \tau}\lambda_i u_i^TAu_i = \langle A, B \rangle = 0.
    \end{align*}
    Therefore by \Cref{lem:dense_implies_small},
    \begin{align*}
        \abs{\mathbb{E}\qty[q(x^TAx)x^T\tilde Bx]} \lesssim L\sqrt{\norm{A}_{op}\log d}\norm{\tilde B}_F + L\norm{\tilde B}_{op}\log d.
    \end{align*}
    There are at most $O(\tau^{-2})$ indices $i$ satisfying $\abs{\lambda_i} > \tau$, and thus
    \begin{align*}
        \sum_{\abs{\lambda_i} > \tau}\abs{\lambda_i} \lesssim \sqrt{\tau^{-2} \cdot \sum_{\abs{\lambda_i} > \tau}\abs{\lambda_i}^2} \le \tau^{-1}.
    \end{align*}
    We thus compute that
    \begin{align*}
        \norm{\tilde B}_F^2 &\lesssim \sum_{\abs{\lambda_i}\le \tau}\lambda_i^2 + \frac{1}{d}\qty(\sum_{\lambda_i > \tau}\lambda_i)^2 + \abs{\sum_{\abs{\lambda_i} > \tau}\lambda_i u_i^TAu_i}^2\\
        &\lesssim  \sum_{\abs{\lambda_i}\le \tau}\lambda_i^2 + \norm{A}_{op}^2\abs{\sum_{\abs{\lambda_i} > \tau} \lambda_i}^2\\
        &\lesssim 1 + \tau^{-2}\norm{A}_{op}^2\\
        &\lesssim 1.
    \end{align*}
    and
    \begin{align*}
        \norm{\tilde B}_{op} &\le \tau + \qty(\frac{1}{d} + \norm{A}_{op})\abs{\sum_{\abs{\lambda_i} > \tau}\lambda_iu_i^TAu_i}\\
        &\lesssim \tau + \norm{A}^2_{op}\tau^{-1}.
    \end{align*}
    
    Next, since $\abs{\mathbb{E}\qty[q(x^TAx)] - \mathbb{E}_z\qty[q(x)]} \lesssim L\norm{A}_{op}$, \Cref{lem:sparse_implies_small} yields
    \begin{align*}
        \abs{\mathbb{E}\qty[q(x^TAx)\cdot \sum_{\abs{\lambda_i} > \tau}\lambda_i\qty(u_iu_i^T - \frac{1}{d}I)]} &\le \sum_{\abs{\lambda_i} > \tau}\abs{\lambda_i}\abs{\mathbb{E}\qty[q(x^TAx)(u_i^Tx)^2] - \mathbb{E}\qty[q(x^TAx)]}\\
        &\lesssim \sum_{\abs{\lambda_i} > \tau}\abs{\lambda_i} \cdot L\norm{A}_{op}^{1/3}\log^{2/3}d\\
        &\lesssim L\tau^{-1}\norm{A}_{op}^{1/3}\log^{2/3}d.
    \end{align*}

    Finally,
    \begin{align*}
        \abs{\mathbb{E}\qty[q(x^TAx)\cdot \frac{1}{\norm{A}_F^2}\sum_{\abs{\lambda_i} > \tau}\lambda_i u_i^TAu_i \cdot x^TAx]} \lesssim L\abs{\sum_{\abs{\lambda_i} > \tau}\lambda_i u_i^TAu_i} \lesssim L\norm{A}_{op}\tau^{-1}.
    \end{align*}


    Altogether, 
    \begin{align*}
        \abs{\mathbb{E}\qty[q(x^TAx)x^TBx]} \lesssim L\log d\qty(\norm{A}^{1/2}_{op} + \tau + \norm{A}^2_{op}\tau^{-1} + \tau^{-1}\norm{A}_{op}^{1/3} + \norm{A}_{op}\tau^{-1}) \lesssim L\norm{A}^{1/6}_{op}\log d.
    \end{align*}
    where we set $\tau = \norm{A}_{op}^{1/6}$.
\end{proof}

Finally, we use the fact that $x^TAx$ is approximately Gaussian to show that $\E[q(x^TAx)x^TAx] \approx c_1$.
\begin{lemma}\label{lem:wasserstein_A}
Let $x^TAx$ be a spherical harmonic. Then
\begin{align*}
    \abs{\E[q(x^TAx)x^TAx] - c_1} \lesssim L\norm{A}_{op}\log d
\end{align*}
\end{lemma}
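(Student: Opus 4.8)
The plan is to reduce the claim to the Wasserstein estimate already established and then dispose of a truncation issue. Since $q$ is $L$-Lipschitz (hence absolutely continuous) with $\abs{q(0)} \le L$, Stein's lemma gives $c_1 = \E_{z \sim \mathcal{N}(0,1)}[q'(z)] = \E_{z \sim \mathcal{N}(0,1)}[q(z)z]$, so the statement is equivalent to $\abs{\E[h(x^TAx)] - \E_{z \sim \mathcal{N}(0,1)}[h(z)]} \lesssim L\norm{A}_{op}\log d$, where $h(t) := q(t)\,t$. The key input is $W_1(x^TAx, \mathcal{N}(0,1)) \lesssim \norm{A}_{op}$; this follows by specializing the proof of \Cref{lem:wasserstein_quad} to $c_1 = 1,\ c_2 = 0$, where the $B$-terms drop out and one is left with $f(z) = d\,z^TAz/\norm{z}^2$, for which the proof already shows $\E[\norm{\nabla f(z)}^4]^{1/4} \lesssim 1$ and $\E[\norm{\nabla^2 f(z)}_{op}^4]^{1/4} \lesssim \norm{A}_{op}$.

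The main obstacle is that $h(t) = q(t)t$ grows quadratically and is therefore not globally Lipschitz, so $W_1$ cannot be tested against it directly. I would handle this exactly as in the proof of \Cref{lem:dense_implies_small}: fix a truncation radius $R = C_3 \log d$ with $C_3$ a large absolute constant, set $\bar h(t) := h\big(\max(-R,\min(R,t))\big)$, and observe that on $[-R,R]$ we have $\abs{h'(t)} = \abs{q(t) + t q'(t)} \le \abs{q(0)} + 2L\abs{t} \lesssim LR$, so $\bar h$ is $O(LR)$-Lipschitz. Applying the Wasserstein bound to $\bar h$ then contributes $\lesssim LR\,\norm{A}_{op} = L\norm{A}_{op}\log d$.

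Finally I would control the two truncation errors. From $\abs{h(t)} \le \abs{q(0)}\abs{t} + L t^2 \lesssim L(\abs{t} + t^2)$ and the hypercontractive moment bound for degree-$2$ spherical harmonics (\Cref{lem:sphere_hyper}) one gets $\E[h(x^TAx)^2]^{1/2} \lesssim L$, and likewise $\E_{z}[h(z)^2]^{1/2} \lesssim L$; combining these via Cauchy--Schwarz with the polynomial concentration $\P(\abs{x^TAx} > R) \lesssim e^{-cR}$ and the Gaussian tail $\P(\abs{z} > R) \lesssim e^{-cR^2}$, and taking $C_3$ large, makes both truncation errors $O(L/d)$. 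Since $\norm{A}_F = \Theta(1)$ and $\operatorname{rank}(A) \le d$ force $\norm{A}_{op} \ge \norm{A}_F/\sqrt{d} = \Omega(1/\sqrt{d})$, this $O(L/d)$ is absorbed into $L\norm{A}_{op}\log d$, and the three contributions combine to the claimed bound. All of this is routine bookkeeping mirroring \Cref{lem:dense_implies_small}; the only conceptual point is the truncation to restore Lipschitzness, and the only piece used without separate proof is the specialization of \Cref{lem:wasserstein_quad}.
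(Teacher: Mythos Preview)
Your proposal is correct and follows essentially the same route as the paper's proof: define $H(z)=q(z)z$, truncate at radius $R=C\log d$ to obtain an $O(LR)$-Lipschitz function, apply the Wasserstein bound $W_1(x^TAx,\mathcal N(0,1))\lesssim\|A\|_{op}$ (the specialization of \Cref{lem:wasserstein_quad}), and bound the truncation errors by $O(L/d)$ via Cauchy--Schwarz and tail bounds. The only cosmetic difference is that you make the absorption of the $O(L/d)$ term into $L\|A\|_{op}\log d$ explicit via $\|A\|_{op}\ge\|A\|_F/\sqrt d$, whereas the paper leaves this implicit.
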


\begin{proof}
Define $H(z) = q(z)z$. For truncation radius $R$, define $\bar H(z) = H(\max(-R, \min (R, z)))$. For $x, y \in [-R, R]$ we can bound
\begin{align*}
\abs{H(x) - H(y)} &= \abs{q(x)x - q(y)y}\\
&\le \abs{x}\abs{q(x) - q(y)} + \abs{q(y)}\norm{x - y}\\
&\lesssim RL\norm{x - y}.
\end{align*}
Thus $\bar H$ has Lipschitz constant $O(RL)$. Since $W_1(x^TAx, \mathcal{N}(0, 1)) \lesssim \norm{A}_{op}$, we have
\begin{align*}
    \abs{\E \bar H(x^TAx) - \E_{z \sim \mathcal{N}(0, 1)}\bar H(z)} \lesssim RL\norm{A}_{op}.
\end{align*}
Furthermore, choosing $R= C\log d$ for appropriate constant $C$, we have that
\begin{align*}
    \abs{\E_x[\bar H(x^TAx) - H(x^TAx)]} &\le \mathbb{P}(\abs{x^TAx} > R) \cdot \E[H(x^TAx)^2]^{1/2} \lesssim \frac{L}{d}\\
    \abs{\E_z[\bar H(z) - H(z)]} &\le \mathbb{P}(\abs{z} > R) \cdot \E[H(z)^2]^{1/2} \lesssim \frac{L}{d}.
\end{align*}
Altogether,
\begin{align*}
    \abs{\E_x[H(x^TAx)] - \E_{z \sim \mathcal{N}(0, 1)}[H(z)]} \lesssim L\norm{A}_{op}\log d.
\end{align*}
Substituting $H(x^TAx) = q(x^TAx)x^TAx$ and $\E_{z \sim \mathcal{N}(0, 1)}[H(z)] = \E_z[q(z)z] = \E_z[q'(z)] = c_1$ yields the desired bound.
\end{proof}

We are now set to prove \Cref{lem:p2}.

\begin{proof}[Proof of \Cref{lem:p2}]
Let $P_2f^* = x^TT_2x$. Then $\norm{P_2f^* - c_1 x^TAx}_{L^2} \lesssim \norm{T_2 - c_1A}_F$.

Write $T_2 = \alpha A + A^\perp$, where $\langle A^\perp, A\rangle = 0$. We first have that
\begin{align*}
    \mathbb{E}\qty[f^*(x)x^TA^\perp x] = \mathbb{E}\qty[x^TT_2x \cdot x^TA^\perp x] = 2\chi_2\langle T_2, A^\perp \rangle = 2\chi_2 \norm{A^\perp}_F^2.
\end{align*}
Also, by \Cref{lem:orthogonal_implies_small}, we have
\begin{align*}
    \mathbb{E}\qty[f^*(x)x^TA^\perp x] = \mathbb{E}\qty[q(x^TAx)x^TA^\perp x] \lesssim \norm{A^\perp}_F\cdot L\norm{A}_{op}^{1/6}\log d.
\end{align*}
Therefore $\norm{A^\perp}_F \lesssim L\norm{A}_{op}^{1/6}\log d$. Next, see that
\begin{align*}
    \mathbb{E}\qty[f^*(x)x^TA x] = 2\alpha\chi_2  \norm{A}_F^2 = \alpha,
\end{align*}
so by \Cref{lem:wasserstein_A} we have $\abs{c_1 - \alpha} \lesssim L\norm{A}_{op}\log d$. Altogether,
\begin{align*}
    \norm{T_2 - c_1 A}_F \le \abs{\alpha - c_1}\norm{A}_F + \norm{A^\perp}_F \lesssim L\norm{A}_{op}^{1/6}\log d = L\kappa^{1/6}d^{-1/12}\log d.
\end{align*}
\end{proof}

\subsection{Improved Error Floor for Polynomials}\label{app:quad_feature_poly}

When $q$ is a polynomial of degree $p = O(1)$, we can improve the exponent of $d$ in the error floor.

\begin{theorem}\label{thm:quad_feature_poly}
        Assume that $q$ is a degree $p$ polynomial, where $p = O(1)$. Under \Cref{assume:quad_feature_fn}, \Cref{assume:quad_feature_kappa}, and \Cref{assume:quad_geg}, with high probability \Cref{alg:two} satisfies the population loss bound
        \begin{align*}
        \mathbb{E}_x\qty[\qty(f(x; \hat \theta) - f^*(x))^2] \lesssim \tilde O\qty(\frac{d^4}{n} + \frac{d^2}{m_2} + \frac{1}{m_1} + \frac{1}{\sqrt{n}} + \frac{\kappa^2}{d})
        \end{align*}
\end{theorem}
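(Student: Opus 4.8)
The improved error floor follows from a single, local sharpening of the analysis behind \Cref{thm:quad_feature}. The plan is to observe that the $(\kappa/\sqrt d)^{1/3}$ floor there is produced entirely by \Cref{lem:p2}, whose bound $\norm{P_2f^* - c_1 x^TAx}_{L^2} \lesssim L\kappa^{1/6}d^{-1/12}\log d$ is in turn driven by the estimate of \Cref{lem:orthogonal_implies_small}, $\abs{\E\qty[q(x^TAx)x^TBx]} \lesssim L\norm{A}_{op}^{1/6}\log d$ for orthogonal spherical harmonics. When $q = g^*$ is a degree $p = O(1)$ polynomial this estimate can be replaced by the far sharper
\begin{align*}
    \abs{\E\qty[q(x^TAx)x^TBx]} \lesssim_p \norm{A}_{op}\norm{B}_F ,
\end{align*}
and then running the proofs of \Cref{lem:p2}, \Cref{lem:bound_Kf}, \Cref{cor:bound_kappa} and \Cref{thm:quad_feature} with this substitution gives $\norm{P_2 f^* - c_1 x^TAx}_{L^2} \lesssim \norm{A}_{op}\log d = \tilde O(\kappa/\sqrt d)$, so the feature-learning accuracy term $\norm{q\circ(\bar\eta\,\K f^*) - f^*}_{L^2}^2$ in \Cref{thm:single_feature_formal} shrinks from $\tilde O((\kappa/\sqrt d)^{1/3})$ to $\tilde O((\kappa/\sqrt d)^2) = \tilde O(\kappa^2/d)$, which is exactly the claimed floor; the $d^4/\min(n,m_1,m_2)$ and $1/\sqrt n$ terms are unchanged.

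To prove the sharper estimate, expand $q(z) = \sum_{k=0}^p q_k z^k$, so it suffices to bound $\E\qty[(x^TAx)^k x^TBx]$ for each $k \le p$. Writing $x^TAx = \langle A, x^{\otimes 2}\rangle$ gives $\E\qty[(x^TAx)^k x^TBx] = \langle A^{\otimes k}\otimes B,\ \E\qty[x^{\otimes(2k+2)}]\rangle$, and by rotational invariance of $\nu$, $\E\qty[x^{\otimes 2m}] = c_{d,m}\, I^{\tilde\otimes m}$ for an explicit constant $c_{d,m} = \Theta_m(1)$ (see \Cref{app:spherical}). Expanding $I^{\tilde\otimes(k+1)}$ as the sum over the $(2k+1)!!$ perfect matchings of its $2(k+1)$ slots and contracting against $A^{\otimes k}\otimes B$: a matching that pairs two slots belonging to the same tensor factor contributes a factor $\Tr(A) = 0$ or $\Tr(B) = 0$ and vanishes, so the surviving matchings are exactly the disjoint unions of cycles (of length at least $2$) on the $k+1$ factors, each cycle contributing a trace of a product of the corresponding matrices. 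Since $B$ appears once, precisely one cycle contains $B$, and a cycle containing $B$ and $j\ge 1$ copies of $A$ contributes $\Tr(BA^{j})$; here $j=1$ gives $\Tr(BA) = \langle A,B\rangle = 0$, while $j\ge 2$ gives $\abs{\Tr(BA^j)} = \abs{\langle A^j, B\rangle} \le \norm{A^j}_F\norm{B}_F \le \norm{A}_{op}^{j-1}\norm{A}_F\norm{B}_F \lesssim \norm{A}_{op}\norm{B}_F$, using $\norm{A}_F = \Theta(1)$ and $\norm{A}_{op}\le 1$; the remaining cycles contribute $\Tr(A^\ell) = O(1)$. Summing the $O_k(1)$ surviving matchings yields $\abs{\E\qty[(x^TAx)^k x^TBx]} \lesssim_k \norm{A}_{op}\norm{B}_F$, and summing over $k\le p$ with $O(1)$ coefficients $q_k$ gives the claim. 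The $1+O_m(1/d)$ multiplicative slack in $c_{d,m}$ is harmless: since $\operatorname{rank}(A)\le d$ we have $\kappa = \norm{A}_{op}\sqrt d \gtrsim \norm{A}_F = \Theta(1)$, hence $1/d \lesssim \kappa^2/d$.

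Plugging back in, the proof of \Cref{lem:p2} goes through verbatim with $\norm{A}_{op}^{1/6}$ replaced by $\norm{A}_{op}$: writing $P_2f^* = x^TT_2x$ and $T_2 = \alpha A + A^\perp$ with $\langle A^\perp,A\rangle = 0$, the sharper estimate gives $2\chi_2\norm{A^\perp}_F^2 = \E\qty[f^*(x) x^TA^\perp x] \lesssim \norm{A}_{op}\norm{A^\perp}_F$ so $\norm{A^\perp}_F \lesssim \norm{A}_{op}$, while \Cref{lem:wasserstein_A} still gives $\abs{\alpha - c_1}\lesssim \norm{A}_{op}\log d$; hence $\norm{P_2f^* - c_1x^TAx}_{L^2} \lesssim \norm{A}_{op}\log d$. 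The distributional assumptions \Cref{assume:x_subg}--\Cref{assume:kernel_f_moment} are verified as in \Cref{thm:quad_feature}, the only change being that polynomial-boundedness of $g^*$ replaces its Lipschitzness in bounding the moments of $f^*$ (yielding $\ell = p = O(1)$), and $L$-Lipschitzness of $g^*$ is replaced by hypercontractivity on the sphere (\Cref{lem:sphere_hyper}) together with the $O(1)$ coefficients of $g^*$ when comparing $g^*(x^TAx)$ to $g^*(\norm{\K f^*}_{L^2}^{-1}\K f^*)$. Everything else — the choice $q(z) = g^*(\bar\eta^{-1}\norm{\K f^*}_{L^2}^{-1}z)$, the use of \Cref{assume:quad_geg} to pin down $\lambda_2^2(\sigma_2)=\Theta(d^{-2})$, and the final application of \Cref{thm:single_feature_formal} — is identical to the proof of \Cref{thm:quad_feature}. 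The main obstacle is purely organizational: carefully enumerating which cycle structures survive in the tensor-moment expansion and checking that the $p$-dependent constants and $O(1/d)$ corrections are dominated by $\kappa^2/d$; the only estimate that genuinely matters is $\abs{\Tr(A^jB)}\lesssim\norm{A}_{op}\norm{B}_F$ for $j\ge 2$, which is immediate.
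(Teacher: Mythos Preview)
Your proposal is correct, and the core idea---replacing the universality/Wasserstein route behind \Cref{lem:orthogonal_implies_small} with an exact combinatorial expansion of the spherical moment tensor $\E[x^{\otimes 2m}]\propto I^{\tilde\otimes m}$---is precisely what the paper does, but the organization is different. The paper computes the matrix $T_2$ \emph{directly}: it first proves a recursion (\Cref{lem:counting})
\[
A^{\tilde\otimes k}(I^{\otimes k-1})=\sum_{s=1}^k d_{k,s}\,A^s\cdot A^{\tilde\otimes(k-s)}(I^{\otimes(k-s)})
\]
via a path-enumeration on the same matching graph you describe, then feeds this into the explicit $P_2$-projection formula from \Cref{app:spherical} to write $T_2=\sum_s\beta_s\,\mathcal{T}(A^s)$ with $\beta_s=O(1)$, identifies $\beta_1=\E_x[(g^*)'(x^TAx)]+O(1/d)$, and bounds $\sum_{s\ge 2}\|\mathcal{T}(A^s)\|_F\lesssim\kappa/\sqrt d$. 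Your route instead characterizes $T_2$ through its inner products $\E[f^*(x)\,x^TBx]$ against test matrices $B$; your cycle decomposition is the ``dual'' of the paper's path count. Your version is somewhat leaner (no explicit constants $d_{k,s}$, no need to invoke the $P_2$-formula), whereas the paper's yields $T_2$ in closed form and extracts $c_1=\E_x[(g^*)'(x^TAx)]$ purely algebraically, avoiding your appeal to \Cref{lem:wasserstein_A}---which formally needs global Lipschitzness and so sits a bit awkwardly with $g^*$ being a genuine degree-$p$ polynomial (the same algebraic argument with $B=A$ would patch this). Both finish identically via \Cref{cor:bound_kappa}/\Cref{cor:bound_kappa_poly} and \Cref{thm:single_feature_formal}, and your hypercontractivity substitute for the Lipschitz comparison of $g^*(x^TAx)$ with $g^*(\|\K f^*\|_{L^2}^{-1}\K f^*)$ is the natural fix.
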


The high level strategy to prove \Cref{thm:quad_feature_poly} is similar to that for \Cref{thm:quad_feature}, as we aim to show $\mathbb{K}f^*$ is approximately proportional to $x^TAx$. Rather to passing to universality as in \Cref{lem:p2}, however, we use an algebraic argument to estimate $P_2f^*$.

The key algebraic lemma is the following:

    \begin{lemma}\label{lem:counting}
        Let $\Tr(A) = 0$. Then
        \begin{align*}
        A^{\tilde \otimes k}(I^{\otimes k-1}) = \sum_{s=1}^k d_{k, s}A^s \cdot A^{\tilde \otimes k - s}(I^{\otimes (k-s)}),
        \end{align*}
        where the constants $d_{k,s}$ are defined by
        \begin{align*}
            d_{k, s} := 2^{s-1}\frac{(2k - 2s - 1)!!(k-1)!}{(2k-1)!!(k-s)!}
        \end{align*}
        and we denote $(-1)!! = 1$.
    \end{lemma}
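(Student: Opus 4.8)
The plan is to prove the identity by a direct combinatorial expansion of the symmetrizations, organizing all index contractions into a ``chord diagram'' that splits into one path and a collection of cycles. Write $M_k := A^{\tilde\otimes k}(I^{\otimes(k-1)})$ (a symmetric matrix) and $S_m := A^{\tilde\otimes m}(I^{\otimes m})$ (a scalar, with $S_0 = 1$), so the claim is $M_k = \sum_{s=1}^k d_{k,s} A^s S_{k-s}$. Since $\sym$ is a self-adjoint idempotent, moving it onto the identity factors gives $S_m = \langle A^{\otimes m}, \sym(I^{\otimes m})\rangle$ and, writing $B := \tfrac12(e_{b_1}e_{b_2}^T + e_{b_2}e_{b_1}^T)$ for standard basis vectors $e_{b_1},e_{b_2}$, $(M_k)_{b_1 b_2} = \langle A^{\otimes k}, \sym(I^{\otimes(k-1)}\otimes B)\rangle$. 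First I would expand these symmetrizations over ``configurations'': a configuration for $S_m$ is a perfect matching of the $2m$ tensor slots, and a configuration for $M_k$ is a choice of an unordered pair of ``free'' slots (carrying $B$) together with a perfect matching of the remaining $2k-2$ slots. A stabilizer count shows each configuration is produced by exactly $m!\,2^m$ (resp.\ $(k-1)!\,2^k$) of the $(2m)!$ (resp.\ $(2k)!$) permutations, so $\sym(I^{\otimes m}) = \tfrac{m!\,2^m}{(2m)!}\sum_{\mathrm{configs}}$ and $\sym(I^{\otimes(k-1)}\otimes B) = \tfrac{(k-1)!\,2^k}{(2k)!}\sum_{\mathrm{configs}}$.

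Next I would evaluate $\langle A^{\otimes k}, \mathrm{config}\rangle$ graph-theoretically: view the $k$ copies of $A$ as edges, each matched pair of slots as a degree-$2$ vertex, and each free slot as a degree-$1$ vertex. Every vertex has degree at most $2$, so the multigraph is a disjoint union of \emph{one} path --- whose two endpoints are exactly the two free vertices --- together with some cycles; the path of $s$ edges contributes the matrix entry $(A^s)_{b_1 b_2}$ and a cycle of $c$ edges contributes the scalar $\Tr(A^c)$, so the configuration value is $(A^{s})_{b_1 b_2}\prod_j \Tr(A^{c_j})$ with $s + \sum_j c_j = k$. The hypothesis $\Tr A = 0$ annihilates every configuration containing a self-loop (a length-$1$ cycle), so effectively $c_j \ge 2$; identically $\langle A^{\otimes m},\mathrm{config}\rangle = \prod_j\Tr(A^{c_j})$ in the $S_m$ expansion, whence $\sum_{\mathrm{matchings}}\prod_j\Tr(A^{c_j}) = \tfrac{(2m)!}{m!\,2^m}\, S_m$.

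Finally I would group the $M_k$-configurations by the number $s$ of edges on the path. For a fixed $s$-subset of the $k$ edges there are $s!\,2^{s-1}$ path structures (a linear order on those $s$ edges plus an orientation of each, modulo reversing the whole path; the reversal involution is fixed-point-free since the $s$ edges are distinct), and the remaining $k-s$ edges carry an arbitrary closed matching --- precisely the matchings summed in the $S_{k-s}$ expansion. Since the $k$ copies of $A$ are interchangeable, this yields
\begin{align*}
M_k = \frac{(k-1)!\,2^k}{(2k)!}\sum_{s=1}^k A^s\cdot \binom{k}{s} s!\, 2^{s-1}\cdot\frac{(2(k-s))!}{(k-s)!\,2^{k-s}}\,S_{k-s},
\end{align*}
and recognizing $\tfrac{2^k k!}{(2k)!} = \tfrac{1}{(2k-1)!!}$ and $\tfrac{(2(k-s))!}{(k-s)!\,2^{k-s}} = (2k-2s-1)!!$ collapses the coefficient of $A^s S_{k-s}$ to exactly $d_{k,s} = 2^{s-1}(2k-2s-1)!!\,(k-1)!/\big((2k-1)!!\,(k-s)!\big)$ (the case $s=k$ being handled by $S_0 = 1$ and the convention $(-1)!! = 1$).

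The hard part will be the bookkeeping of this last step: getting the stabilizer sizes exactly right and correctly accounting for the factor of $2$ from path reversal, since any miscount of multiplicities would shift the closed-form constant; a secondary point requiring care is the structural claim that each configuration's graph is precisely one path plus cycles (so that exactly one matrix-valued factor $A^s$ appears). An alternative is to induct on $k$ by peeling one factor off $A^{\tilde\otimes k} = \sym(A\otimes A^{\tilde\otimes(k-1)})$ and distributing the $k-1$ identity contractions, but that forces one to track contractions of $A^{\tilde\otimes(k-1)}$ against $A$ on two of its slots, which are less clean than the quantities $M_\bullet$ and $S_\bullet$, so I would favor the direct diagrammatic route above.
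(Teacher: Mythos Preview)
Your proposal is correct and follows essentially the same combinatorial argument as the paper: expand the symmetrization, interpret each term graph-theoretically as one path (contributing $A^s$) plus cycles (absorbed into the scalar $A^{\tilde\otimes(k-s)}(I^{\otimes(k-s)})$), group by the path length $s$, and count. The only cosmetic difference is that the paper works with the dual graph---placing the $k$ copies of $A$ at the \emph{vertices} and the $k-1$ contracted index-pairs on the \emph{edges}---and sums directly over $S_{2k}$ rather than over stabilizer-reduced configurations; your path count $\binom{k}{s}\cdot s!\,2^{s-1}$ matches the paper's $2^{2s-1}\frac{(k-1)!}{(k-s)!}\frac{k!}{(k-s)!}$ after normalization, and your invocation of $\Tr A=0$ to kill self-loops is harmless but in fact unnecessary since those terms are already absorbed into $S_{k-s}$.
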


    \begin{proof}
        The proof proceeds via a counting argument. We first have that
        \begin{align*}
            A^{\tilde\otimes k}(I^{\otimes k-1}) = \sum_{(\alpha_1, \dots, \alpha_{k-1}) \in [d]^{k-1}} \qty(A^{\tilde \otimes k})_{\alpha_1, \alpha_1, \dots, \alpha_{k-1},\alpha_{k-1}, i, j}.
        \end{align*}
        Consider any permutation $\sigma \in S_{2k}$. We can map this permutation to the graph $G(\sigma)$ on $k$ vertices and $k-1$ edges as follows: for $m \in [k-1]$, if $\sigma^{-1}(2m - 1) \in \{2a - 1, 2a\}$ and $\sigma^{-1}(2m) \in \{2b - 1, 2b\}$, then we draw an edge $e(m)$ between $a$ and $b$. In the resulting graph $G(\sigma)$ each node has degree at most $2$, and hence there are either two vertices with degree 1 or one vertex with degree 0. For a vertex $v$, let $e_1(v), e_2(v) \in [k-1]$ be the two edges $v$ is incident to if $v$ has degree $2$, and otherwise $e_1(v)$ be the only edge $v$ is incident to. For shorthand, let $(i_1, \dots, i_{2k}) = (\alpha_1, \alpha_1, \dots, \alpha_{k-1},\alpha_{k-1}, i, j)$. 
        
        If there are two vertices $(u_1, u_2)$ with degree 1, we have that
        \begin{align*}
            \qty(A^{\otimes k})_{i_{\sigma(1)}, i_{\sigma(2)},\cdots i_{\sigma(2k)}} = A_{i, \alpha_{e_1(u_1)}}A_{j, \alpha_{e_2(u_2)}}\prod_{v\neq u_1, u_2}A_{\alpha_{e_1(v)}, \alpha_{e_2(v)}} 
        \end{align*}
        Let $u_1, u_2$ be connected to eachother via a path of $s$ total vertices, and let $\mathcal{P}$ be the ordered set of vertices in this path. Via the matrix multiplication formula, one sees that
        \begin{align*}
        \sum_{(\alpha_1, \dots, \alpha_{k-1}) \in [d]^{k-1}} \qty(A^{\otimes k})_{i_{\sigma(1)}, i_{\sigma(2)},\cdots i_{\sigma(2k)}} = (A^s)_{i,j}\sum\prod_{v\notin \mathcal{P}}A_{\alpha_{e_1(v)}, \alpha_{e_2(v)}},
        \end{align*}
        where the sum is over the $k-s$ $\alpha$'s that are still remaining in $\{e_i(v) : v \notin \mathcal{P}\}$

        Likewise, if there is one vertex $u_1$ with degree 0, we have
        \begin{align*}
            \qty(A^{\otimes k})_{i_{\sigma(1)}, i_{\sigma(2)},\cdots i_{\sigma(2k)}} = A_{i, j}\prod_{v\neq u_1}A_{\alpha_{e_1(v)}, \alpha_{e_2(v)}}.
        \end{align*}
        and thus, since $\mathcal{P} = \{u_1\}$
        \begin{align*}
        \sum_{(\alpha_1, \dots, \alpha_{k-1}) \in [d]^{k-1}} \qty(A^{\otimes k})_{i_{\sigma(1)}, i_{\sigma(2)},\cdots i_{\sigma(2k)}} = A_{i,j}\sum_{(\alpha_1, \dots, \alpha_{k-1})}\prod_{v\notin \mathcal{P}}A_{\alpha_{e_1(v)}, \alpha_{e_2(v)}}.
        \end{align*}
        Altogether, we have that
        \begin{align*}
            A^{\tilde\otimes k}(I^{\otimes k-1}) &= \frac{1}{(2k)!}\sum_{\sigma \in S_{2k}}A^s\sum\prod_{v\notin \mathcal{P}}A_{\alpha_{e_1(v)}, \alpha_{e_2(v)}}\\
        \end{align*}
        where $s, \mathcal{P}$ are defined based on the graph $\mathcal{G}(\sigma)$. Consider a graph with fixed path $\mathcal{P}$, and let $S_\mathcal{P}$ be the set of permutations which give rise to the path $\mathcal{P}$. We have that
        \begin{align*}
            A^{\tilde\otimes k}(I^{\otimes k-1}) = \frac{1}{(2k)!}\sum_{\mathcal{P}}A^s\sum_{\sigma \in \mathcal{S}_\mathcal{P}}\sum\prod_{v\notin \mathcal{P}}A_{\alpha_{e_1(v)}, \alpha_{e_2(v)}}.
        \end{align*}
        There are $(k-1)\cdots(k-s+1)$ choices for the $k$ edges to use in the path, and at each vertex $v$ there are two choices for which edge should correspond to $2v$ or $2v+1$. Additionally, there are $2$ ways to orient each edge. Furthermore, there are $\frac{k!}{(k-s)!}$ ways to choose the ordering of the path. Altogether, there are $2^{2s - 1}\frac{(k-1)!}{(k-s)!}\frac{k!}{(k-s)!}$ ways to construct a path of length $s$. We can thus write
        \begin{align*}
            A^{\tilde\otimes k}(I^{\otimes k-1}) = \frac{1}{(2k)!}\sum_{s}A^s2^{2s - 1}\frac{(k-1)!}{(k-s)!}\frac{k!}{(k-s)!}\sum\prod_{v\notin \mathcal{P}}A_{\alpha_{e_1(v)}, \alpha_{e_2(v)}},
        \end{align*}
        where this latter sum is over all permutations where the mapping corresponding to vertices not on the path have not been decided, along with the sum over the unused $\alpha$'s. Reindexing, this latter sum is (letting $(i_1, \dots, i_{2k-2s}) = (\alpha_1, \alpha_1, \dots, \alpha_{k-s},\alpha_{k-s})$)
        \begin{align*}
        \sum_{\sigma \in S_{2k - 2s}}\sum_{(\alpha_1, \dots, \alpha_{k-s}) \in [d]^{k-s}}\prod A_{i_{\sigma(2j-1)}, i_{\sigma(2j)}} = (2k - 2s)!A^{\tilde \otimes k-s}(I^{\otimes k-s})
        \end{align*}
        Altogether, we obtain
        \begin{align*}
            A^{\tilde\otimes k}(I^{\otimes k-1}) &= \sum_{s \ge 1}\frac{(2k - 2s)!}{(2k)!}\frac{(k-1)!}{(k-s)!}\frac{k!}{(k-s)!}2^{2s - 1}\cdot A^s\cdot A^{\tilde \otimes  k- s}(I^{\otimes k - s})\\
            &= \sum_{s \ge 1}\frac{k!}{(2k)!}\frac{(2k - 2s)!}{(k-s)!}\frac{(k-1)!}{(k-s)!}\cdot A^s\cdot A^{\tilde \otimes  k- s}(I^{\otimes k - s})\\
            &= \sum_{s \ge 1}\frac{(2k - 2s - 1)!!}{(2k - 1)!!2^s}\frac{(k-1)!}{(k-s)!}2^{2s - 1}\cdot A^s\cdot A^{\tilde \otimes  k- s}(I^{\otimes k - s})\\
            &= \sum_{s \ge 1}d_{k,s}\cdot A^s\cdot A^{\tilde \otimes  k- s}(I^{\otimes k - s}),
        \end{align*}
        as desired.
    \end{proof}

    \begin{definition}Define the operator $\mathcal{T}: \mathbb{R}^{d \times d} \rightarrow \mathbb{R}^{d\times d}$ by $\mathcal{T}(M) = M - \Tr(M)\cdot \frac{I}{d}$.
    \end{definition}

    \begin{lemma} \label{lem:feature_almost_a}
    Let $P_2f^*(x) = x^TT_2x$. Then $\norm{T_2 - \mathbb{E}_x\qty[(g^*)'(x^TAx)]\cdot A}_F \lesssim \frac{\kappa}{\sqrt{d}}$.
    \end{lemma}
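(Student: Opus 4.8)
The plan is to reduce \Cref{lem:feature_almost_a} to an estimate on the single matrix $C := \E_x[g^*(x^TAx)xx^T]$. The degree‑$2$ spherical harmonics on $\mathcal{X}_d$ are exactly the functions $x\mapsto x^TBx$ with $\Tr B=0$, and $\E_x[(x^TTx)(x^TBx)]=2\chi_2\langle T,B\rangle$ for traceless symmetric $T,B$. Since $P_2f^*=x^TT_2x$ is the orthogonal projection onto this subspace, $\Tr T_2=0$ and for every traceless symmetric $B$ we have $2\chi_2\langle T_2,B\rangle=\E_x[f^*(x)x^TBx]=\langle C,B\rangle=\langle\mathcal{T}(C),B\rangle$; as $2\chi_2 T_2-\mathcal{T}(C)$ is itself traceless symmetric this forces $T_2=\tfrac{1}{2\chi_2}\mathcal{T}(C)$. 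Expanding $g^*(z)=\sum_{k=0}^p c_kz^k$ with $p=O(1)$ and $|c_k|=O(1)$, we get $C=\sum_k c_k C_k$ where $C_k:=\E_x[(x^TAx)^kxx^T]$, so it suffices to control each $\mathcal{T}(C_k)$.

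The core step is to show $\mathcal{T}(C_k)=\rho_{k,1}A+\Gamma_k$ with $\rho_{k,1}=O(1)$ and $\norm{\Gamma_k}_F\lesssim\kappa/\sqrt{d}$. By rotational invariance of $\nu$, $C_k$ commutes with the stabilizer of $A$, and since $C_k$ is also a degree‑$k$ polynomial in the entries of $A$, it is a polynomial $C_k=\sum_{s=0}^k\rho_{k,s}A^s$ whose coefficients $\rho_{k,s}$ are combinatorial rationals times products of traces of powers of $A$. The precise form follows from the identity $C_k=\beta_{k+1}\,I^{\tilde\otimes(k+1)}(A^{\otimes k})$, where $\beta_{k+1}=\Theta(1)$ is the constant with $\E_x[x^{\otimes(2k+2)}]=\beta_{k+1}I^{\tilde\otimes(k+1)}$, together with a counting argument over pairings identical in spirit to the proof of \Cref{lem:counting}: each surviving contraction contributes a matrix power $A^s$ times a product $\Tr(A^{j_1})\cdots\Tr(A^{j_t})$, and every term carrying a factor $\Tr(A)$ vanishes by the normalization $\Tr(A)=0$. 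Since $\Tr(A^2)=\norm{A}_F^2=\Theta(1)$ and $|\Tr(A^j)|\le\norm{A}_{op}^{j-2}\norm{A}_F^2=O(1)$ for $j\ge3$, all $\rho_{k,s}=O(1)$. Applying $\mathcal{T}$ kills the $s=0$ ($I$) term and fixes the $s=1$ ($A$) term, so $\mathcal{T}(C_k)=\rho_{k,1}A+\mathcal{T}\!\big(\sum_{s\ge2}\rho_{k,s}A^s\big)$; because $\norm{A^s}_F\le\norm{A}_{op}^{s-1}\norm{A}_F$ and $|\Tr(A^s)|/\sqrt{d}\lesssim 1/\sqrt{d}$ for $s\ge2$, while $\norm{A}_{op}=\kappa/\sqrt d=o(1)$ and $\kappa\gtrsim1$, we get $\norm{\mathcal{T}(A^s)}_F\lesssim\kappa/\sqrt{d}$ for $s\ge2$, hence $\norm{\Gamma_k}_F\lesssim\kappa/\sqrt{d}$. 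Summing over $k$ (with $\mathcal{T}(C_0)=0$ and $\mathcal{T}(C_1)=2\chi_2A$) gives $T_2=\lambda A+E$ with $\lambda:=c_1+\tfrac{1}{2\chi_2}\sum_{k\ge2}c_k\rho_{k,1}$ and $\norm{E}_F\lesssim\kappa/\sqrt{d}$.

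It remains to identify $\lambda$ with $\E_x[(g^*)'(x^TAx)]$ up to $O(\kappa/\sqrt{d})$. Since $\norm{T_2-\lambda A}_F\lesssim\kappa/\sqrt{d}$ we have $\lambda=\langle T_2,A\rangle/\norm{A}_F^2+O(\kappa/\sqrt{d})=2\chi_2\langle T_2,A\rangle+O(\kappa/\sqrt{d})$, and $\langle T_2,A\rangle=\tfrac{1}{2\chi_2}\langle C,A\rangle=\tfrac{1}{2\chi_2}\E_x[g^*(x^TAx)x^TAx]$, so $\lambda=\E_x[g^*(x^TAx)x^TAx]+O(\kappa/\sqrt{d})$. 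The last ingredient is a spherical Stein recursion: $\E_x[(x^TAx)^m]=(m-1)\E_x[(x^TAx)^{m-2}]+O(\kappa/\sqrt{d})$ for every $m=O(1)$. Writing $\E_x[(x^TAx)^m]=\beta_m A^{\tilde\otimes m}(I^{\otimes m})$ and taking the trace in \Cref{lem:counting} gives $A^{\tilde\otimes m}(I^{\otimes m})=\sum_{s\ge1}d_{m,s}\Tr(A^s)A^{\tilde\otimes(m-s)}(I^{\otimes(m-s)})$; the $s=1$ term vanishes by $\Tr(A)=0$, the $s\ge3$ terms are $O(\kappa/\sqrt{d})$ using $|\Tr(A^s)|\lesssim\kappa/\sqrt{d}$ and $|A^{\tilde\otimes(m-s)}(I^{\otimes(m-s)})|=|\E_x[(x^TAx)^{m-s}]|/\beta_{m-s}=O(1)$ by \Cref{lem:sphere_hyper}, and the $s=2$ term carries the dimension constant $\beta_m d_{m,2}\norm{A}_F^2/\beta_{m-2}=(m-1)+O(1/d)$ (pinned down e.g.\ by the exact case $m=2$). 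Multiplying the recursion with $m=k+1$ by $c_k$ and summing yields $\E_x[g^*(x^TAx)x^TAx]=\sum_k c_k\E_x[(x^TAx)^{k+1}]=\sum_k kc_k\E_x[(x^TAx)^{k-1}]+O(\kappa/\sqrt{d})=\E_x[(g^*)'(x^TAx)]+O(\kappa/\sqrt{d})$, hence $\lambda=\E_x[(g^*)'(x^TAx)]+O(\kappa/\sqrt{d})$. Combining, $\norm{T_2-\E_x[(g^*)'(x^TAx)]A}_F\le\norm{E}_F+|\lambda-\E_x[(g^*)'(x^TAx)]|\,\norm{A}_F\lesssim\kappa/\sqrt{d}$.

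The main obstacle is the tensor bookkeeping in the core step: extracting from $I^{\tilde\otimes(k+1)}(A^{\otimes k})$ (equivalently, from the counting identity behind \Cref{lem:counting}) the statement that $C_k$ is a polynomial in $A$ with $O(1)$ coefficients whose component orthogonal to both $I$ and $A$ has Frobenius norm $O(\kappa/\sqrt{d})$. This is precisely where the normalization $\Tr(A)=0$ (to kill the dangerous contractions) and the incoherence $\kappa=o(\sqrt{d})$ (\Cref{assume:quad_feature_kappa}, to make the remaining matrix‑power terms small) are essential, and it is what lets this polynomial argument replace — and sharpen, removing the $\log d$ and the fractional exponent — the universality argument of \Cref{lem:p2}. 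The remaining steps are routine given $p=O(1)$ and the hypercontractivity bound \Cref{lem:sphere_hyper}.
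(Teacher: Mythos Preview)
Your argument is correct and structurally the same as the paper's: both routes reduce to writing $T_2$ (up to the trace projection $\mathcal{T}$) as $\sum_{s\ge 1}\beta_s\,\mathcal{T}(A^s)$ via the counting identity \Cref{lem:counting}, then use $\norm{\mathcal{T}(A^s)}_F\lesssim\kappa/\sqrt d$ for $s\ge 2$ to isolate the $A$ term. Your entry point $T_2=\tfrac{1}{2\chi_2}\mathcal{T}\!\big(\E_x[f^*(x)xx^T]\big)$ is equivalent to the paper's direct $P_2$ formula for $A^{\tilde\otimes k}(x^{\otimes 2k})$, since $\mathcal{T}(C_k)=2k(2k-1)!!\chi_{k+1}\,\mathcal{T}\!\big(A^{\tilde\otimes k}(I^{\otimes k-1})\big)$.

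The one genuine difference is how the leading coefficient is identified. The paper reads it off directly: the $s=1$ coefficient in \Cref{lem:counting} is $d_{k,1}=\tfrac{1}{2k-1}$, so $\beta_1=\sum_k k\alpha_k(2k-3)!!\,\tfrac{\chi_{k+1}}{\chi_2}\,A^{\tilde\otimes(k-1)}(I^{\otimes(k-1)})$, and since $\tfrac{\chi_{k+1}}{\chi_2}=\chi_{k-1}+O(1/d)$ and $(2k-3)!!\chi_{k-1}A^{\tilde\otimes(k-1)}(I^{\otimes(k-1)})=\E_x[(x^TAx)^{k-1}]$, this is $\E_x[(g^*)'(x^TAx)]+O(1/d)$ immediately. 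You instead pair with $A$ to get $\lambda=\E_x[g^*(x^TAx)\,x^TAx]+O(\kappa/\sqrt d)$ and then invoke an approximate Stein recursion $\E_x[(x^TAx)^{m}]=(m-1)\E_x[(x^TAx)^{m-2}]+O(\kappa/\sqrt d)$ (itself derived by tracing \Cref{lem:counting}). Both are valid; the paper's is a shade more direct and gives $O(1/d)$ for that particular error, while yours gives $O(\kappa/\sqrt d)$, but since $\kappa\gtrsim 1$ this is absorbed into the final bound anyway.
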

    \begin{proof}
        Throughout, we treat $p = O(1)$ and thus functions of $p$ independent of $d$ as $O(1)$ quantities. Let $q$ be of the form $g^*(z) = \sum_{k = 0}^p\alpha_kz^k$. We then have
        \begin{align*}
            f^*(x) = \sum_{k = 0}^p\alpha_kA^{\tilde \otimes k}(x^{\otimes {2k}})
        \end{align*}
        Therefore $P_{2}f^*(x) = x^TT_{2}x$, where
        \begin{align*}
            T_{2} := \sum_{k = 0}^p\alpha_k (2k - 1)!!k\frac{\chi_{k+1}}{\chi_2}\mathcal{T}\qty(A^{\tilde \otimes k}(I^{\otimes k-1})).
        \end{align*}
        Applying \Cref{lem:counting}, one has
        \begin{align*}
            T_{2} = \sum_{s = 0}^p\mathcal{T}(A^s)\cdot\sum_{k = s}^p\alpha_k (2k - 1)!!k\frac{\chi_{k+1}}{\chi_2}d_{k, s}A^{\tilde \otimes k - s}(I^{\otimes (k-s)}).
        \end{align*}
        Define $$\beta_s = \sum_{k = s}^p\alpha_k (2k - 1)!!k\frac{\chi_{k+1}}{\chi_2}d_{k, s}A^{\tilde \otimes k - s}(I^{\otimes (k-s)})$$
        We first see that
        \begin{align*}
            \beta_1 &= \sum_{k = 1}^p(2k - 3)!!\cdot k \alpha_k \frac{\chi_{k+1}}{\chi_2}A^{\tilde \otimes k - 1}(I^{\otimes (k-1)})
        \end{align*}
        Next, see that
        \begin{align*}
            \frac{\chi_{k+1}}{\chi_2} = \frac{d(d+2)}{(d + 2k)(d + 2k - 2)}\chi_{k-1} = \chi_{k-1} + O\qty(1/d).
        \end{align*}
        Thus
        \begin{align*}
            \beta_1 &= \sum_{k = 1}^p(2k - 3)!!\cdot k \alpha_k \chi_{k-1}A^{\tilde \otimes k - 1}(I^{\otimes (k-1)}) + O(1/d)\cdot \sum_{k = 1}^p(2k - 3)!!\cdot k \abs{\alpha_k} A^{\tilde \otimes k - 1}(I^{\otimes (k-1)})\\
            &= \sum_{k = 1}^pk \alpha_k A^{\otimes k-1}\qty(\mathbb{E}[x^{\otimes 2k - 2}]) + O(1/d)\\
            &= \mathbb{E}_x\qty[(g^*)'(x^TAx)] + O(1/d).
        \end{align*}
        since 
        \begin{align*}
            \abs{A^{\tilde \otimes k}(I^{\otimes k})} \lesssim \mathbb{E}_x\qty[(x^TAx)^k] \lesssim \mathbb{E}_x\qty[(x^TAx)^2]^{k/2} = O(1)
        \end{align*}
        where $\Tr(A) = 0$ implies $\mathbb{E}_x\qty[(x^TAx)^2] = O(1)$ and we invoke spherical hypercontractivity (\Cref{lem:sphere_hyper}). Similarly, $\abs{\beta_s} = O(1)$, and thus
        \begin{align*}
            \norm{T_2 - \mathbb{E}_x\qty[(g^*)'(x^TAx)]\cdot A}_F \lesssim \frac{1}{d} + \sum_{s = 2}^p \norm{\mathcal{T}(A^s)}_F  \lesssim \frac{\kappa}{\sqrt{d}},
        \end{align*}
        where we use the inequality
        \begin{align*}
            \norm{\mathcal{T}(X)}_F \le \norm{X}_F + \abs{\Tr(X)}\cdot\frac{1}{\sqrt{d}} \le 2\norm{X}_F,
        \end{align*}
        along with
        \begin{align*}
            \norm{A^s}_F \le \norm{A^2}_F \le \frac{\kappa}{\sqrt{d}}
        \end{align*}
        for $s \ge 2$.
    \end{proof}

    \begin{lemma}
        Let $c_1 = \mathbb{E}_x\qty[(g^*)'(x^TAx)]$. Then $\norm{\K f^* - \lambda_2^2(\sigma)c_1x^TAx}_{L^2} \lesssim \kappa d^{-5/2}$
    \end{lemma}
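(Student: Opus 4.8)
The plan is to mimic the proof of Lemma~\ref{lem:bound_Kf}, but to replace the universality‑based estimate of Lemma~\ref{lem:p2} by the sharper algebraic estimate of Lemma~\ref{lem:feature_almost_a}, which is now available because $g^*$ is a polynomial of fixed degree $p=O(1)$. First I would recall the spherical‑harmonic (Gegenbauer) eigendecomposition of $\K$ from \cite{montanari2021} and Appendix~\ref{app:spherical}: $\K f^* = \sum_{k\ge 0}\lambda_k^2(\sigma_2)\,P_k f^*$, with $\lambda_k^2(\sigma_2)=O(d^{-k})$. Since $f^*(x)=g^*(x^TAx)$ is even and $\E_x[f^*(x)]=0$ by Assumption~\ref{assume:quad_feature_fn}, only even $k\ge 2$ contribute; and since $g^*$ has degree $p$, $f^*$ is a polynomial of degree $2p$, so $P_k f^*=0$ for $k>2p$. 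Exactly as in the proof of Lemma~\ref{lem:bound_Kf}, combining $\lambda_{2k}^2(\sigma_2)=O(d^{-2k})$ with $\norm{P_{2k}f^*}_{L^2}\le\norm{f^*}_{L^2}=O(1)$ for $k\ge 2$ yields
\[
\norm{\K f^* - \lambda_2^2(\sigma_2)\,P_2 f^*}_{L^2}\;\lesssim\; d^{-4}.
\]

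Next, by the triangle inequality,
\[
\norm{\K f^* - \lambda_2^2(\sigma_2)c_1\, x^TAx}_{L^2}\;\le\; d^{-4} \;+\; \lambda_2^2(\sigma_2)\,\norm{P_2 f^* - c_1\, x^TAx}_{L^2}.
\]
Writing $P_2 f^*(x)=x^TT_2x$, both $T_2$ and $A$ are symmetric and traceless, so (using $\E_x[(x^TMx)^2]=2\chi_2\norm{M}_F^2$ for traceless symmetric $M$, with $\chi_2=\Theta(1)$) one has $\norm{P_2 f^* - c_1 x^TAx}_{L^2}=\sqrt{2\chi_2}\,\norm{T_2 - c_1 A}_F=\Theta(\norm{T_2-c_1A}_F)$. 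Lemma~\ref{lem:feature_almost_a} bounds this by $O(\kappa/\sqrt{d})$, and Assumption~\ref{assume:quad_geg} gives $\lambda_2^2(\sigma_2)=\Theta(d^{-2})$, so the second term is $O(\kappa\, d^{-5/2})$. Finally, since $\norm{A}_{op}\ge \norm{A}_F/\sqrt d=\Theta(1/\sqrt d)$ we have $\kappa=\norm{A}_{op}\sqrt d=\Omega(1)$, hence $d^{-4}=o(\kappa\, d^{-5/2})$ and the second term dominates, giving $\norm{\K f^* - \lambda_2^2(\sigma_2)c_1 x^TAx}_{L^2}\lesssim \kappa\, d^{-5/2}$ as claimed.

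This statement carries essentially no new difficulty: it is a bookkeeping step that repackages Lemma~\ref{lem:feature_almost_a} (whose content, in turn, rests on the counting identity of Lemma~\ref{lem:counting}). The only points needing a little care are matching the $L^2(\nu)$ norm of a traceless quadratic form with the Frobenius norm of its matrix, using $\norm{P_{2k}f^*}_{L^2}=O(1)$ rather than a cruder polynomial bound, and justifying the truncation of the Gegenbauer series at degree $2p$ via $g^*$ being a fixed‑degree polynomial — so I would state each of these explicitly but not dwell on them.
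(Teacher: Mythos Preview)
Your proposal is correct and follows essentially the same approach as the paper: use the Gegenbauer eigendecomposition of $\K$ together with evenness and mean zero to reduce to $\norm{\K f^* - \lambda_2^2(\sigma_2)P_2f^*}_{L^2}\lesssim d^{-4}$, then apply Lemma~\ref{lem:feature_almost_a} and $\lambda_2^2(\sigma_2)=\Theta(d^{-2})$ to handle the $P_2$ term. Your extra bookkeeping (truncation at degree $2p$, the explicit $L^2$-to-Frobenius conversion via $\chi_2$, and the $\kappa=\Omega(1)$ argument to absorb $d^{-4}$) are all correct refinements that the paper leaves implicit.
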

    \begin{proof}
    Since $\mathbb{E}[f^*(x)] = 0$, $P_0f^* = 0$. Next, since $f^*$ is an even function, $P_kf^* = 0$ for $k$ odd. Thus
    \begin{align*}
        \norm{\K f^* - \lambda_2^2(\sigma)P_2f^*}_{L^2} \lesssim d^{-4}.
    \end{align*}
    Additionally, by \Cref{lem:feature_almost_a} we have that
    \begin{align*}
        \norm{P_2f^* - c_1x^TAx}_{L^2} \lesssim \norm{T_2 - c_1\cdot A}_F \lesssim \frac{\kappa}{\sqrt{d}}.
    \end{align*}
    Since $\lambda^2_2(\sigma) = \Theta(d^{-2})$, we have
    \begin{align*}
        \norm{\K f^* - \lambda_2^2(\sigma)c_1x^TAx}_{L^2} \lesssim \kappa d^{-5/2}.
    \end{align*}
    \end{proof}

    \begin{corollary}\label{cor:bound_kappa_poly} Assume $\kappa = o(\sqrt{d})$. Then
    \begin{align*}
        \norm{x^TAx - \norm{\K f^*}_{L^2}^{-1}\K f^*}_{L^2} \lesssim \kappa/\sqrt{d}
    \end{align*}
    \end{corollary}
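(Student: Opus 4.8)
The plan is to mirror the proof of \Cref{cor:bound_kappa} verbatim, replacing the universality-based feature estimate by the algebraic one: the lemma immediately above gives, with $c_1 = \E_x\qty[(g^*)'(x^TAx)]$,
\[
\norm{\K f^* - \lambda_2^2(\sigma)c_1 x^TAx}_{L^2} \lesssim \kappa d^{-5/2}.
\]
First I would factor out $\norm{\K f^*}_{L^2}^{-1}$, reducing the claim to a bound on $\norm{x^TAx\,\norm{\K f^*}_{L^2} - \K f^*}_{L^2}$. Adding and subtracting $\lambda_2^2(\sigma)c_1 x^TAx$ and using the triangle inequality splits this into the estimate above plus $\abs{\norm{\K f^*}_{L^2} - \lambda_2^2(\sigma)\abs{c_1}}$; the latter is itself at most $\norm{\K f^* - \lambda_2^2(\sigma)c_1 x^TAx}_{L^2}$ by the reverse triangle inequality together with $\norm{x^TAx}_{L^2} = 1$ (which holds since $x^TAx$ is a degree-$2$ spherical harmonic, i.e. $\E\qty[(x^TAx)^2]=1$). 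Hence $\norm{x^TAx\,\norm{\K f^*}_{L^2} - \K f^*}_{L^2} \lesssim \kappa d^{-5/2}$.

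It then remains to lower-bound $\norm{\K f^*}_{L^2}$. By \Cref{assume:quad_geg} we have $\lambda_2^2(\sigma) = \Theta(d^{-2})$, and $c_1 = \E_x\qty[(g^*)'(x^TAx)]$ is within $O(\norm{A}_{op}) = O(\kappa/\sqrt d) = o(1)$ of $\E_{z\sim\mathcal N(0,1)}\qty[(g^*)'(z)] = \Theta(1)$ — one can see this by a Wasserstein comparison between $x^TAx$ and the standard Gaussian as in \Cref{lem:wasserstein_A} (truncating $(g^*)'$ to handle polynomial growth), or read it off directly from the computation of $\beta_1$ in the proof of \Cref{lem:feature_almost_a} — so $\abs{c_1} = \Theta(1)$ for $d$ large. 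Since the error term $\kappa d^{-5/2} = o(d^{-2})$ under the hypothesis $\kappa = o(\sqrt d)$, the reverse triangle inequality gives $\norm{\K f^*}_{L^2} \ge \lambda_2^2(\sigma)\abs{c_1} - o(d^{-2}) \gtrsim d^{-2}$. Combining,
\[
\norm{x^TAx - \norm{\K f^*}_{L^2}^{-1}\K f^*}_{L^2} = \norm{\K f^*}_{L^2}^{-1}\,\norm{x^TAx\,\norm{\K f^*}_{L^2} - \K f^*}_{L^2} \lesssim d^{2}\cdot \kappa d^{-5/2} = \frac{\kappa}{\sqrt d},
\]
as claimed.

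I do not expect a genuine obstacle here: the statement is a routine normalization of the preceding lemma, and the $\kappa = o(\sqrt d)$ hypothesis is used only to guarantee that the correction to the feature is of lower order than the main term. The sole point requiring attention is confirming that $c_1$ stays bounded away from zero, so that $\norm{\K f^*}_{L^2}$ is genuinely of order $d^{-2}$ rather than smaller — and this is exactly the nondegeneracy condition $\E_{z\sim\mathcal N(0,1)}\qty[(g^*)'(z)] = \Theta(1)$ built into \Cref{assume:quad_feature_fn}.
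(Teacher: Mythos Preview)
Your proposal is correct and follows essentially the same approach as the paper's proof: factor out $\norm{\K f^*}_{L^2}^{-1}$, apply the triangle inequality with the pivot $\lambda_2^2(\sigma)c_1 x^TAx$, and use the preceding lemma together with $\norm{\K f^*}_{L^2} \gtrsim d^{-2}$. If anything you supply more detail than the paper, which leaves the lower bound on $\norm{\K f^*}_{L^2}$ (and in particular the fact that $c_1 = \E_x[(g^*)'(x^TAx)] = \Theta(1)$) implicit.
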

    \begin{proof}
    \begin{align*}
        &\norm{x^TAx - \norm{\K f^*}_{L^2}^{-1}\K f^*}_{L^2}\\
        &= \norm{\K f^*}_{L^2}^{-1}\norm{x^TAx\norm{\K f^*}_{L^2} - \K f^*}_{L^2}\\
        &\le \norm{\K f^*}_{L^2}^{-1}\norm{\K f^* - \lambda_2^2(\sigma)c_1x^TAx}_{L^2} + \norm{\K f^*}_{L^2}^{-1}\abs{\norm{\K f^*} - \lambda_2^2(\sigma)\abs{c_1}}\\
        &\lesssim \kappa d^{-5/2}\norm{\K f^*}_{L^2}^{-1}\\
        &\lesssim \kappa/\sqrt{d}.
    \end{align*}
    \end{proof}
    The proof of \Cref{thm:quad_feature_poly} follows directly from \Cref{cor:bound_kappa_poly} in an identical manner to the proof of \Cref{thm:quad_feature}.
\section{Preliminaries on Spherical Harmonics}\label{app:spherical}

In this section we restrict to $\mathcal{X}_d = \mathcal{S}^{d-1}(\sqrt{d})$, the sphere of radius $\sqrt{d}$, and $\nu$ the uniform distribution on $\mathcal{X}_d$.

The moments of $\nu$ are given by the following \citep{damian2022representations}:
\begin{lemma}
Let $x \sim \nu$. Then
\begin{align*}
    \mathbb{E}_x\qty[x^{\otimes 2k}] = \chi_k \cdot (2k - 1)!!I^{\tilde \otimes k}
\end{align*}
where
\begin{align*}
\chi_k := \prod_{j=0}^{k-1}\qty(\frac{d}{d + 2j}) = \Theta(1).
\end{align*}
\end{lemma}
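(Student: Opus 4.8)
The plan is to transfer the computation to a standard Gaussian, where the $2k$-th moment tensor is given by Wick's formula, and then correct for the radial part. Let $z \sim \mathcal{N}(0, I_d)$ and set $x := \sqrt{d}\, z/\norm{z}$, so that $x \sim \nu$; the key classical fact is that the direction $z/\norm{z}$ and the norm $\norm{z}$ are independent. Writing $z = (\norm{z}/\sqrt{d})\, x$ and tensoring gives the exact identity $z^{\otimes 2k} = d^{-k}\norm{z}^{2k}\, x^{\otimes 2k}$, and taking expectations together with this independence yields
\[
\E\qty[z^{\otimes 2k}] = d^{-k}\,\E\qty[\norm{z}^{2k}]\,\E\qty[x^{\otimes 2k}].
\]
Thus it suffices to evaluate the Gaussian moment tensor on the left and the scalar $\E[\norm{z}^{2k}]$, and then solve for $\E[x^{\otimes 2k}]$.

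First I would compute $\E[z^{\otimes 2k}]$ via Isserlis'/Wick's theorem: $\E[z_{i_1}\cdots z_{i_{2k}}]$ equals the sum over the $(2k-1)!!$ perfect matchings of $\{1,\dots,2k\}$ of $\prod_{\text{pairs }(a,b)} \delta_{i_a i_b}$. Averaging the Kronecker product of any one matching over all coordinate permutations produces precisely $I^{\tilde\otimes k}$, and since the matchings form a single orbit under relabeling of the tensor legs, the full sum is exactly $(2k-1)!!\, I^{\tilde\otimes k}$; this can be cross-checked against $k=1$ ($\E[z^{\otimes 2}]=I$) and $k=2$ ($\E[z^{\otimes 4}]=3\,I^{\tilde\otimes 2}$). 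Second, $\norm{z}^2 \sim \chi^2_d$, so $\E[\norm{z}^{2k}] = \E[(\chi^2_d)^k] = 2^k\Gamma(d/2+k)/\Gamma(d/2) = \prod_{j=0}^{k-1}(d+2j)$. Substituting both into the displayed identity gives
\[
\E\qty[x^{\otimes 2k}] = \frac{d^k}{\prod_{j=0}^{k-1}(d+2j)}\,(2k-1)!!\, I^{\tilde\otimes k} = \qty(\prod_{j=0}^{k-1}\frac{d}{d+2j})(2k-1)!!\, I^{\tilde\otimes k} = \chi_k\,(2k-1)!!\, I^{\tilde\otimes k},
\]
as claimed. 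For the $\Theta(1)$ bound, $\chi_k \le 1$ trivially, while $\log \chi_k = -\sum_{j=0}^{k-1}\log(1+2j/d) \ge -k(k-1)/d$, so $\chi_k \ge e^{-k^2/d} = \Theta(1)$ whenever $k^2 = O(d)$, in particular for the bounded values of $k$ used throughout the paper.

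The argument is essentially routine; the only point demanding care is the tensor-symmetrization bookkeeping in the second step — verifying that the sum of the $(2k-1)!!$ matching tensors equals $(2k-1)!!$ copies of the fully symmetric identity $I^{\tilde\otimes k}$ rather than some other symmetric $2k$-tensor — which follows since $\sym$ is permutation-invariant and all matchings are related by a relabeling of the $2k$ legs. As an alternative to the Gaussian route one could argue by induction on $k$, using integration by parts (Stokes' theorem) on $\mathcal{S}^{d-1}(\sqrt d)$ to derive the recursion $\E[x^{\otimes 2k}] = \tfrac{d(2k-1)}{d+2k-2}\, I \tilde\otimes \E[x^{\otimes 2k-2}]$ and unrolling it against the base case $\E[x^{\otimes 2}] = I$, but the surrogate-Gaussian computation is shorter and avoids the spherical divergence theorem entirely.
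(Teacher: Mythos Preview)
Your argument is correct. The Gaussian-surrogate computation is the standard way to derive spherical moments: the independence of $z/\norm{z}$ and $\norm{z}$, Wick's formula $\E[z^{\otimes 2k}]=(2k-1)!!\,I^{\tilde\otimes k}$, and the chi-square moment $\E[\norm{z}^{2k}]=\prod_{j=0}^{k-1}(d+2j)$ combine exactly as you wrote, and the symmetrization bookkeeping you flag is handled by the observation that each perfect matching is the image of exactly $2^k k!$ permutations, so the Wick sum is $(2k-1)!!$ times the full symmetrization of $I^{\otimes k}$.

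There is nothing to compare against in the paper itself: the lemma is stated there with a citation to \cite{damian2022representations} and no proof is given. Your proposal therefore supplies a complete, self-contained argument where the paper only quotes the result. The alternative inductive route you sketch (integration by parts on the sphere to get the recursion $\E[x^{\otimes 2k}]=\tfrac{d(2k-1)}{d+2k-2}\,I\tilde\otimes\E[x^{\otimes 2k-2}]$) is also valid and is closer in spirit to how such identities are often derived in the spherical-harmonics literature, but the Gaussian route is indeed cleaner here.
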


For integer $\ell \ge 0$, let $V_{d, \ell}$ be the space of homogeneous harmonic polynomials on $\mathbb{R}^d$ of degree $\ell$ restricted to $\mathcal{X}_d$. One has that $V_{d, \ell}$ form an orthogonal decomposition of $L^2(\nu)$ \citep{montanari2021}, i.e
\begin{align*}
L^2(\nu) = \bigoplus_{\ell = 0}^\infty V_{d, \ell}
\end{align*}

Homogeneous polynomials of degree $\ell$ can be written as $T(x^{\otimes \ell})$ for an $\ell$-tensor $T \in (\mathbb{R}^d)^{\otimes \ell}$. The following lemma characterizes $V_{d, \ell}$:
\begin{lemma}
$T(x^{\otimes \ell}) \in V_{d, \ell}$ if and only if $T(I) = 0$.
\end{lemma}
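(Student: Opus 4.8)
The plan is to reduce the statement to a direct computation of the Laplacian of the polynomial $p(x) := T(x^{\otimes\ell})$. First, since $T(x^{\otimes\ell})$ depends only on $\sym(T)$ and $T$ is taken symmetric in the paper's convention for tensor contraction, I may assume $T$ symmetric throughout. Next I would recall the identification underlying the definition of $V_{d,\ell}$: a nonzero homogeneous polynomial of degree $\ell$ cannot vanish on $\mathcal{S}^{d-1}(\sqrt d)$ — if $p$ is homogeneous of degree $\ell$ and $p\equiv 0$ there, then for $x\neq 0$ we have $p(x)=(\|x\|/\sqrt d)^\ell\, p(\sqrt d\, x/\|x\|)=0$ — so restriction to $\mathcal{X}_d$ is injective on homogeneous degree-$\ell$ polynomials, and $V_{d,\ell}$ is identified with the space of harmonic homogeneous polynomials of degree $\ell$, i.e. those $p$ with $\Delta p\equiv 0$. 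The cases $\ell\in\{0,1\}$ are trivial: every such polynomial is harmonic, and $T(I)$ is an empty contraction, read as $0$, so the equivalence holds vacuously; I therefore take $\ell\ge 2$ below.

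The core step is the identity
\[
\Delta\bigl(T(x^{\otimes\ell})\bigr) \;=\; \ell(\ell-1)\,\bigl(T(I)\bigr)(x^{\otimes(\ell-2)}).
\]
To prove this I would differentiate the monomial expansion $p(x)=\sum_{i_1,\dots,i_\ell} T_{i_1\cdots i_\ell}x_{i_1}\cdots x_{i_\ell}$ twice. Using symmetry of $T$, $\partial_{x_k}p = \ell\sum_{i_2,\dots,i_\ell}T_{k,i_2,\dots,i_\ell}x_{i_2}\cdots x_{i_\ell}$, and then $\partial_{x_k}^2 p = \ell(\ell-1)\sum_{i_3,\dots,i_\ell}T_{k,k,i_3,\dots,i_\ell}x_{i_3}\cdots x_{i_\ell}$. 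Summing over $k$ collects $\sum_k T_{k,k,i_3,\dots,i_\ell}=(T(I))_{i_3,\dots,i_\ell}$, contracted against $x^{\otimes(\ell-2)}$, which is precisely the claimed identity. Note $T(I)$ is again a symmetric $(\ell-2)$-tensor.

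The conclusion is then immediate. If $T(I)=0$ then $\Delta p\equiv 0$, so $p\in V_{d,\ell}$. Conversely, if $p\in V_{d,\ell}$ then $(T(I))(x^{\otimes(\ell-2)})\equiv 0$ since $\ell(\ell-1)\neq 0$; and because a symmetric tensor $S$ is recovered from the polynomial $S(x^{\otimes k})$ by polarization — equivalently, the coefficient of $x_1^{a_1}\cdots x_d^{a_d}$ in $S(x^{\otimes k})$ equals $\binom{k}{a_1,\dots,a_d}\,S_{1^{a_1}\cdots d^{a_d}}$ — the identical vanishing of this polynomial forces $T(I)=0$. The only non-mechanical ingredients are this polarization fact and the injectivity of restriction to the sphere, both standard; the only place requiring care is the index bookkeeping in the Laplacian computation, so I do not anticipate a genuine obstacle.
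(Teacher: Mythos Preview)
Your proof is correct and takes essentially the same approach as the paper: compute the Laplacian of $p(x)=T(x^{\otimes\ell})$ to obtain $\Delta p=\ell(\ell-1)\,(T(I))(x^{\otimes(\ell-2)})$, then use that a symmetric tensor vanishes iff the associated polynomial does. Your write-up is more careful than the paper's (you explicitly justify injectivity of restriction to the sphere, handle the trivial degrees, and spell out the polarization step), but the underlying argument is identical.
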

\begin{proof}
    By definition, a degree $l$ homogeneous polynomial $p(x) \in V_{d,l}$ if and only if $\Delta p(x) = 0$ for all $x \in S^{d-1}$. Note that $\nabla^2 p(x) = l(l-1) T(x^{\otimes (l-2)})$ so this is satisfied if and only if
    \begin{align*}
        0 = \tr T(x^{\otimes (l-2)}) = T(x^{\otimes (l-2)} \otimes I) = \langle T(I),x^{\otimes (l-2)} \rangle.
    \end{align*}
    As this must hold for all $x$, this holds if and only if $T(I) = 0$.
\end{proof}

From the above characterization, we see that $\text{dim}(V_{d, k}) = B(d, k)$, where
\begin{align*}
        B(d, k) = \frac{2k+d-2}{k} \binom{k+d-3}{k-1} = \frac{(k+d-3)! (2k+d-2)}{k! (d-2)!} = \qty(1 + o_d(1))\frac{d^k}{k!}.
\end{align*}

Define $P_\ell:L^2(\nu) \rightarrow L^2(\nu)$ to be the orthogonal projection onto $V_{d, \ell}$. The action of $P_0, P_1, P_2$ on a homogeneous polynomial is given by the following lemma:

\begin{lemma}
Let $T \in (\mathbb{R}^d)^{\otimes 2k}$ be a symmetric $2k$ tensor, and let $p(x) = T(x^{\otimes 2k})$ be a polynomial. Then:
\begin{align*}
P_0p &= \chi_k(2k - 1)!! T(I^{\otimes k})\\
P_1p &= 0\\
P_2p &= \left\langle \frac{k(2k-1)!!\chi_{k+1}}{\chi_2}\qty(T(I^{\otimes k-1}) - T(I^{\otimes k})\cdot\frac{I}{d}), xx^T \right\rangle
\end{align*}
\end{lemma}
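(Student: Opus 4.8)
The plan is to treat the three projections separately, using throughout the moment formula $\mathbb{E}_x[x^{\otimes 2m}] = \chi_m(2m-1)!!\, I^{\tilde\otimes m}$, the identity $\langle S, I^{\otimes m}\rangle = \langle S, I^{\tilde\otimes m}\rangle$ for symmetric $S$, and the variational characterization of $P_\ell$ as the $L^2(\nu)$-orthogonal projection onto $V_{d,\ell}$. The cases $\ell = 0,1$ are immediate: since $V_{d,0}$ consists of constants, $P_0 p = \mathbb{E}_x[p(x)] = \langle T, \mathbb{E}_x[x^{\otimes 2k}]\rangle = \chi_k(2k-1)!!\, T(I^{\otimes k})$; and since $V_{d,1}$ is spanned by the odd linear functions $x \mapsto u \cdot x$ while $p$ is homogeneous of even degree and $\nu$ is invariant under $x \mapsto -x$, we get $\langle p,\, u\cdot x\rangle_{L^2(\nu)} = 0$ for all $u$, hence $P_1 p = 0$ (the same parity argument in fact kills $P_\ell p$ for every odd $\ell$).

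For $\ell = 2$, by the tensor characterization of $V_{d,2}$ every element is of the form $h_M(x) := x^T M x$ with $M$ symmetric and $\Tr M = 0$; write $P_2 p = h_{M^*}$ for such an $M^*$. Since $p - P_2 p$ is $L^2(\nu)$-orthogonal to $V_{d,2}$, for every symmetric traceless $M$ we have $\langle P_2 p,\, h_M\rangle_{L^2(\nu)} = \langle p,\, h_M\rangle_{L^2(\nu)}$. The left-hand side is $\mathbb{E}_x[(x^T M^* x)(x^T M x)] = \langle M^* \otimes M,\, \mathbb{E}_x[x^{\otimes 4}]\rangle$, and using $\mathbb{E}_x[x^{\otimes 4}] = 3\chi_2 I^{\tilde\otimes 2}$ with $I^{\tilde\otimes 2}_{abcd} = \tfrac13(\delta_{ab}\delta_{cd} + \delta_{ac}\delta_{bd} + \delta_{ad}\delta_{bc})$, together with $\Tr M^* = 0$, this equals $2\chi_2 \langle M^*, M\rangle$. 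The right-hand side is $\langle T \otimes M,\, \mathbb{E}_x[x^{\otimes(2k+2)}]\rangle = \chi_{k+1}(2k+1)!!\, \langle T \otimes M,\, I^{\tilde\otimes(k+1)}\rangle$, since $p(x) h_M(x) = (T \otimes M)(x^{\otimes(2k+2)})$.

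The crux is to evaluate $\langle T \otimes M,\, I^{\tilde\otimes(k+1)}\rangle$, which I would do by a matching count. Expand $I^{\tilde\otimes(k+1)}$ as the average over the $(2k+1)!!$ perfect matchings of its $2k+2$ indices of the corresponding products of Kronecker deltas; of these indices, two come from $M$ and $2k$ from $T$. In a matching where $M$'s two indices are paired with each other the term carries a factor $\Tr M = 0$ and drops; in every other matching each of $M$'s indices is paired with one of $T$'s indices, and summing out the $k-1$ deltas on the remaining indices contracts $T$ to the $2$-tensor $T(I^{\otimes(k-1)})$ — independent of the chosen indices by symmetry of $T$ — giving a contribution $\langle T(I^{\otimes(k-1)}), M\rangle$. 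There are $2k(2k-1)(2k-3)!! = 2k(2k-1)!!$ such cross-matchings, so $\langle T\otimes M,\, I^{\tilde\otimes(k+1)}\rangle = \tfrac{2k(2k-1)!!}{(2k+1)!!}\langle T(I^{\otimes(k-1)}), M\rangle$, whence the right-hand side above equals $2k(2k-1)!!\,\chi_{k+1}\langle T(I^{\otimes(k-1)}), M\rangle$.

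Equating the two sides over all symmetric traceless $M$ gives $\langle M^*, M\rangle = \tfrac{k(2k-1)!!\chi_{k+1}}{\chi_2}\langle T(I^{\otimes(k-1)}), M\rangle$ for all such $M$; since $M^*$ is itself symmetric and traceless, this forces $M^* = \tfrac{k(2k-1)!!\chi_{k+1}}{\chi_2}\big(T(I^{\otimes(k-1)}) - \tfrac1d\, T(I^{\otimes k})\, I\big)$, where I used that the trace of $T(I^{\otimes(k-1)})$ is $T(I^{\otimes k})$; this is precisely the stated formula for $P_2 p = h_{M^*}$. The only genuine obstacle here is the bookkeeping in the matching count — tracking the double-factorials and checking that the self-matched terms are exactly the ones annihilated by $\Tr M = 0$ — rather than anything conceptual.
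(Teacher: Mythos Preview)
Your proof is correct and follows essentially the same approach as the paper: both test $P_2 p$ against arbitrary traceless quadratic forms $x^T M x$, compute both sides via the moment formula $\mathbb{E}_x[x^{\otimes 2m}] = \chi_m(2m-1)!!\, I^{\tilde\otimes m}$, and then identify $M^*$ as the traceless part of the resulting contraction of $T$. The only difference is expository: where the paper writes $(T \tilde\otimes B)(I^{\otimes k+1}) = \tfrac{2k}{2k+1}\langle T(I^{\otimes k-1}), B\rangle$ without justification, you supply the explicit matching count that produces this factor.
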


\begin{proof}
First, we see
\begin{align*}
P_0p= \mathbb{E}\qty[T(x^{\otimes 2k})] = \chi_k(2k - 1)!!T(I^{\otimes k}).
\end{align*}
Next, since $p$ is even, $P_1p = 0$.
Next, let $P_2p = x^TT_2x$. For symmetric $B$ so that $\Tr(B) = 0$, we have that
\begin{align*}
 \mathbb{E}\qty[T(x^{\otimes 2k})x^TBx] = \mathbb{E}\qty[x^TT_2xx^TBx].
\end{align*}
The LHS is
\begin{align*}
    \mathbb{E}\qty[T(x^{\otimes 2k})x^TBx] &= (2k + 1)!!\chi_{k+1}(T \tilde \otimes B)I^{\otimes k + 1}\\
    &= (2k + 1)!!\chi_{k+1}\frac{2k}{2k+1}\langle T(I^{\otimes k-1}), B \rangle\\
    &= 2k\cdot(2k - 1)!\chi_{k+1}\langle T(I^{\otimes k-1}) - T(I^{\otimes k})\cdot \frac{I}{d}, B \rangle,
\end{align*}
where the last step is true since $\Tr(B) = 0$. The RHS is
\begin{align*}
\mathbb{E}\qty[x^TT_2xx^TBx] &= 3!!\chi_2(T_2 \tilde \otimes B)(I^{\otimes 2})\\
&= 2\chi_2 \langle T_2, B \rangle.
\end{align*}
Since these two quantities must be equal for all $B$ with $\Tr(B) = 0$, and $\Tr(T_2) = 0$, we see that
\begin{align*}
T_2 = \frac{k(2k-1)!!\chi_{k+1}}{\chi_2}\qty(T(I^{\otimes k-1}) - T(I^{\otimes k})\cdot\frac{I}{d}),
\end{align*}
as desired.

\end{proof}

Polynomials over the sphere verify hypercontractivity:

\begin{lemma}[Spherical hypercontractivity \citep{Beckner1992SobolevIT, mei2022}]\label{lem:sphere_hyper} Let $f$ be a degree $p$ polynomial. Then for $q \ge 2$
\begin{align*}
\norm{f}_{L^q(\nu)} \le (q - 1)^{p/2}\norm{f}_{L^2(\nu)}.
\end{align*}

\end{lemma}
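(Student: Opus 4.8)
The plan is to deduce the inequality from the hypercontractivity of the Poisson (spherical noise) operator. For $\rho\in[0,1]$, let $T_\rho\colon L^2(\nu)\to L^2(\nu)$ be the operator acting on the degree-$\ell$ spherical harmonic subspace $V_{d,\ell}$ by multiplication by $\rho^\ell$, i.e. $T_\rho g=\sum_{\ell\ge 0}\rho^\ell\,\mathbb{P}_\ell g$; this is well defined by the orthogonal decomposition $L^2(\nu)=\bigoplus_{\ell\ge 0}V_{d,\ell}$ recalled above, and $\{T_\rho\}$ is a semigroup since $T_\rho T_{\rho'}=T_{\rho\rho'}$. The analytic input I would invoke is Beckner's sharp hypercontractive inequality on the sphere \citep{Beckner1992SobolevIT} (also stated in \citep{mei2022}): for every $q\ge 2$ and every $0\le\rho\le (q-1)^{-1/2}$, $\norm{T_\rho g}_{L^q(\nu)}\le\norm{g}_{L^2(\nu)}$. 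This is the exact analogue of Gaussian $(2,q)$-hypercontractivity for the Ornstein--Uhlenbeck semigroup, and both sides of the inequality we want to prove are invariant under rescaling the radius of the sphere, so the radius $\sqrt d$ is irrelevant.

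Granting this, the argument is short. Let $f$ have degree $p$, so $f=\sum_{\ell=0}^{p}f_\ell$ with $f_\ell:=\mathbb{P}_\ell f\in V_{d,\ell}$ and $\mathbb{P}_\ell f=0$ for $\ell>p$. Fix $q\ge 2$, put $\rho:=(q-1)^{-1/2}\in(0,1]$, and set $g:=\sum_{\ell=0}^{p}\rho^{-\ell}f_\ell$, so that $T_\rho g=\sum_{\ell=0}^{p}\rho^{\ell}\rho^{-\ell}f_\ell=f$. Applying the hypercontractive bound and then Parseval,
\begin{align*}
\norm{f}_{L^q(\nu)}=\norm{T_\rho g}_{L^q(\nu)}\le\norm{g}_{L^2(\nu)}=\qty(\sum_{\ell=0}^{p}\rho^{-2\ell}\norm{f_\ell}_{L^2(\nu)}^2)^{1/2}\le\rho^{-p}\qty(\sum_{\ell=0}^{p}\norm{f_\ell}_{L^2(\nu)}^2)^{1/2}=\rho^{-p}\norm{f}_{L^2(\nu)},
\end{align*}
where the middle inequality uses $\rho\le 1$ (hence $\rho^{-2\ell}\le\rho^{-2p}$ for $0\le\ell\le p$) and the final equality is orthogonality of the $f_\ell$. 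Since $\rho^{-p}=(q-1)^{p/2}$, this is exactly the claim.

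The entire difficulty is concentrated in the cited hypercontractivity of $T_\rho$ at the sharp noise level $\rho=(q-1)^{-1/2}$: the Laplace--Beltrami heat semigroup on $S^{d-1}$ has generator eigenvalues $\ell(\ell+d-2)$, which grow with $d$, so feeding the spherical log-Sobolev inequality into Gross's theorem would only give a dimension-dependent exponent; obtaining the clean, dimension-free constant $(q-1)^{p/2}$ requires working with the Poisson operator $T_\rho$ (eigenvalue $\rho^\ell$ on $V_{d,\ell}$), which is exactly Beckner's theorem. All remaining steps --- the harmonic decomposition of a degree-$p$ polynomial and Parseval's identity --- are already available from the preliminaries in this appendix.
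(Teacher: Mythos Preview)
The paper does not supply its own proof of this lemma; it simply cites it from \citep{Beckner1992SobolevIT, mei2022} as a known fact. Your derivation is the standard one: invoke Beckner's hypercontractivity for the Poisson operator $T_\rho$ on the sphere (eigenvalue $\rho^\ell$ on $V_{d,\ell}$) at the endpoint $\rho=(q-1)^{-1/2}$, invert $T_\rho$ on the finite harmonic expansion of the degree-$p$ polynomial $f$, and conclude via Parseval and the trivial bound $\rho^{-2\ell}\le\rho^{-2p}$. This is correct, and it is exactly the argument underlying the cited references, so your proposal is consistent with what the paper defers to.
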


\subsection{Gegenbauer Polynomials}
For an integer $d > 1$, let $\mu_d$ be the density of $x \cdot e_1$, where $x \sim \mathrm{Unif}(\mathcal{S}^{d-1}(1))$ and $e_1$ is a fixed unit vector. One can verify that $\mu_d$ is supported on $[-1, 1]$ and given by
\begin{align*}
    d\mu_d(x) = \frac{\Gamma(d/2)}{\sqrt{\pi}\Gamma(\frac{d-1}{2})}(1 - x^2)^{\frac{d-3}{2}}dx
\end{align*}
where $\Gamma(n)$ is the Gamma function. For convenience, we let $Z_d := \frac{\Gamma(d/2)}{\sqrt{\pi}\Gamma(\frac{d-1}{2})} = \frac{1}{\beta(\frac12, \frac{d-1}{2})}$ denote the normalizing constant.

The Gegenbauer polynomials $\qty(G^{(d)}_k)_{k \in \mathbb{Z}^{\ge 0}}$ are a sequence of orthogonal polynomials with respect to the density $\mu_d$, defined as $G^{(d)}_0(x) = 1, G^{(d)}_1(x) = x$, and
\begin{align}\label{eq:recurrence}
    G^{(d)}_k(x) = \frac{d + 2k - 4}{d + k - 3}xG_{k-1}^{(d)}(x) - \frac{k-1}{d + k-3}G_{k-2}^{(d)}(x).
\end{align}
By construction, $G^{(d)}_k$ is a polynomial of degree $k$. The $G^{(d)}_k$ are orthogonal in that
\begin{align*}
    \mathbb{E}_{x \sim \mu_d}\qty[G^{(d)}_k(x)G^{(d)}_j(x)] = \delta_{j=k}B(d, k)^{-1}
\end{align*}

For a function $f \in L^2(\mu_d)$, we can write its Gegenbauer decomposition as
\begin{align*}
    f(x) = \sum_{k=0}^{\infty} B(d, k)\langle f, G^{(d)}_k \rangle_{L^2(\mu_d)}G^{(d)}_k(x),
\end{align*}
where convergence is in $L^2(\mu_d)$. For an integer $k$, we define the operator $P^{(d)}_k:L^2(\mu_d) \rightarrow L^2(\mu_d)$ to be the projection onto the degree $k$ Gegenbauer polynomial, i.e
\begin{align*}
    P^{(d)}_kf = B(d, k)\langle f, G^{(d)}_k \rangle_{L^2(\mu_d)}G^{(d)}_k.
\end{align*}
We also define the operators $P^{(d)}_{\le k} = \sum_{\ell = 0}^k P^{(d)}_\ell$ and $P^{(d)}_{\ge k} = \sum_{\ell = k}^\infty P^{(d)}_\ell$.

Recall that $\nu = \mathrm{Unif}(\mathcal{S}^{d-1}(\sqrt{d}))$. Let $\tilde \mu_d$ be the density of $x \cdot e_1$, where $x \sim \nu$. For a function $\sigma \in L^2(\tilde \mu_d)$, we define its Gegenbauer coefficients as
\begin{align*}
    \lambda_k(\sigma) := \E_{x \sim \nu}[\sigma(x \cdot e_1)G^{(d)}_k(d^{-1/2}x \cdot e_1)] = \langle \sigma(d^{1/2}e_1 \cdot ), G^{(d)}_k\rangle_{L^2(\mu_d)}.
\end{align*}
By Cauchy, we get that $\abs{\lambda_k(\sigma)} \le \norm{\sigma}_{L^2(\tilde \mu_d)}B(d, k)^{-1/2} = O(\norm{\sigma}_{L^2(\tilde \mu_d)}d^{-k/2})$.

A key property of the Gegebauer coefficients is that they allow us to express the kernel operator $\mathbb{K}$ in closed form \citep{montanari2021, montanari2020}

\begin{lemma}
For a function $g \in L^2(\nu)$, the operator $\mathbb{K}$ acts as
    \begin{align*}
    \mathbb{K}g = \sum_{k \ge 0} \lambda_k^2(\sigma) P_kg,
    \end{align*} 
\end{lemma}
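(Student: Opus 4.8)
The plan is to diagonalize $\mathbb{K}$ in the spherical-harmonic basis via the Gegenbauer addition (Funk--Hecke) formula, following \citep{montanari2021,montanari2020}. Write $\sigma = \sigma_2$ for the inner activation, so $K(x,x') = \mathbb{E}_{v\sim\tau}[\sigma(x\cdot v)\sigma(x'\cdot v)]$, and set $\bar x := x/\sqrt d \in \mathcal{S}^{d-1}(1)$ for $x \in \mathcal X_d$, so that $x\cdot v = \sqrt d\,(\bar x\cdot v)$ for $v \in \mathcal{S}^{d-1}(1)$. Since $\abs{x\cdot v}\le \sqrt d$ on $\mathcal X_d\times\mathcal S^{d-1}(1)$, \Cref{assume:act} makes $t\mapsto\sigma(\sqrt d\,t)$ bounded and hence in $L^2(\mu_d)$, so it has the Gegenbauer expansion $\sigma(\sqrt d\,t) = \sum_{k\ge0}B(d,k)\lambda_k(\sigma)G^{(d)}_k(t)$ converging in $L^2(\mu_d)$, directly from the definition of $\lambda_k(\sigma)$ and the Gegenbauer decomposition formula.

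First I would record two standard facts. Fix an $L^2(\tau)$-orthonormal basis $\{Y_{k,l}\}_{l=1}^{B(d,k)}$ of the degree-$k$ spherical harmonics on $\mathcal S^{d-1}(1)$; pulled back by $x\mapsto\bar x$ these form an $L^2(\nu)$-orthonormal basis of $V_{d,k}$ (as $\nu$ pushes forward to $\tau$ under this map), so $\mathbb{P}_kg=\sum_l\langle g,Y_{k,l}(\cdot/\sqrt d)\rangle_{L^2(\nu)}\,Y_{k,l}(\cdot/\sqrt d)$. The addition theorem states $\sum_{l}Y_{k,l}(u)Y_{k,l}(w)=B(d,k)\,G^{(d)}_k(u\cdot w)$ for $u,w\in\mathcal S^{d-1}(1)$ — one checks the recurrence \eqref{eq:recurrence} gives $G_k^{(d)}(1)=1$, the normalization this identity requires. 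Combining it with $\mathbb{E}_{v\sim\tau}[Y_{k,l}(v)Y_{j,m}(v)]=\delta_{jk}\delta_{lm}$ yields the product formula $\mathbb{E}_{v\sim\tau}[G^{(d)}_k(u\cdot v)G^{(d)}_j(w\cdot v)]=B(d,k)^{-1}\delta_{jk}\,G^{(d)}_k(u\cdot w)$.

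Then I would substitute the Gegenbauer expansion of $\sigma$ into $K(x,x')=\mathbb{E}_v[\sigma(\sqrt d(\bar x\cdot v))\sigma(\sqrt d(\bar x'\cdot v))]$, interchange the $v$-expectation with the two sums, and apply the product formula to collapse the double sum, obtaining $K(x,x')=\sum_{k\ge0}B(d,k)\lambda_k^2(\sigma)G^{(d)}_k(\bar x\cdot\bar x')$. Applying $\mathbb{E}_{x'\sim\nu}[\,\cdot\,g(x')]$ and using the reproducing property of the zonal kernel — by the addition theorem, $\mathbb{E}_{x'\sim\nu}[G^{(d)}_k(\bar x\cdot\bar x')g(x')]=B(d,k)^{-1}\sum_l Y_{k,l}(\bar x)\langle Y_{k,l}(\cdot/\sqrt d),g\rangle_{L^2(\nu)}=B(d,k)^{-1}(\mathbb{P}_kg)(x)$ — the factors of $B(d,k)$ cancel and I get $(\mathbb{K}g)(x)=\sum_{k\ge0}\lambda_k^2(\sigma)(\mathbb{P}_kg)(x)$, with $L^2(\nu)$-convergence since $\sum_kB(d,k)\lambda_k^2(\sigma)=\norm{\sigma(\sqrt d\,\cdot)}^2_{L^2(\mu_d)}<\infty$ bounds the Hilbert--Schmidt norm of $\mathbb{K}$ and forces $\lambda_k^2(\sigma)\to 0$.

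The algebra above is routine; the only delicate point is the term-by-term manipulation of the infinite Gegenbauer series inside the two expectations. The clean way to handle it is to truncate $\sigma$ at Gegenbauer degree $N$, for which the finite product formula and reproducing identity hold exactly, and then let $N\to\infty$: the truncation changes $\sigma(\sqrt d\,\cdot)$ by $o_N(1)$ in $L^2(\mu_d)$, hence changes $K$ by $o_N(1)$ in $L^2(\nu\times\nu)$ (so $\mathbb{K}$ by $o_N(1)$ in operator norm on $L^2(\nu)$) and changes $\sum_{k\le N}\lambda_k^2(\sigma)\mathbb{P}_kg$ by $o_N(1)$ in $L^2(\nu)$, which lets me pass to the limit without any pointwise convergence arguments.
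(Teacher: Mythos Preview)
Your proof is correct and follows exactly the standard Funk--Hecke / Gegenbauer addition-theorem route that the cited references \citep{montanari2021,montanari2020} use; the paper itself does not give a proof of this lemma but simply quotes it from those references, so your write-up is actually more detailed than what appears in the paper.
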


One key fact about Gegenbauer polynomials is the following derivative formula:
\begin{lemma}[Derivative Formula]\label{lem:derivative}
\begin{align*}
\frac{d}{dx} G^{(d)}_k = \frac{k (k+d-2)}{d-1} G^{(d+2)}_{k-1}(x).
\end{align*}
\end{lemma}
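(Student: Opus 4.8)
The plan is to show that $\frac{d}{dx}G^{(d)}_k$ is a scalar multiple of $G^{(d+2)}_{k-1}$ by an orthogonality argument, and then pin down the scalar by evaluating at $x=1$. From the three-term recurrence \eqref{eq:recurrence} one sees that $G^{(d)}_k$ has degree exactly $k$ with positive leading coefficient, so $\frac{d}{dx}G^{(d)}_k$ is a polynomial of degree exactly $k-1$. Since $G^{(d+2)}_{k-1}$ is, up to scaling, the unique polynomial of degree $k-1$ that is orthogonal under $\mu_{d+2}$ to every polynomial of degree $\le k-2$, it is enough to verify that $\frac{d}{dx}G^{(d)}_k$ enjoys this same orthogonality.

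The key step is an integration by parts against the weight $d\mu_{d+2}(x) \propto (1-x^2)^{(d-1)/2}\,dx$. Fix a polynomial $p$ with $\deg p \le k-2$. Since $(1-x^2)^{(d-1)/2}$ vanishes at $x = \pm 1$, the boundary terms drop and
\[
\int_{-1}^1 \Big(\tfrac{d}{dx}G^{(d)}_k\Big)(x)\,p(x)\,(1-x^2)^{(d-1)/2}\,dx = -\int_{-1}^1 G^{(d)}_k(x)\,\tfrac{d}{dx}\!\Big[p(x)(1-x^2)^{(d-1)/2}\Big]\,dx .
\]
A direct computation gives $\frac{d}{dx}\big[p(x)(1-x^2)^{(d-1)/2}\big] = (1-x^2)^{(d-3)/2}\big[(1-x^2)p'(x) - (d-1)xp(x)\big]$, whose bracketed factor is a polynomial of degree at most $\deg p + 1 \le k-1 < k$. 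Hence the right-hand side is a constant multiple of $\langle G^{(d)}_k, r\rangle_{\mu_d}$ for a polynomial $r$ of degree $<k$, which vanishes because $G^{(d)}_k$ is $\mu_d$-orthogonal to all lower-degree polynomials. Therefore $\frac{d}{dx}G^{(d)}_k = c_{k,d}\,G^{(d+2)}_{k-1}$ for some constant $c_{k,d}$.

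Finally, since $G^{(d+2)}_{k-1}(1) = 1$ (which itself follows from \eqref{eq:recurrence} by the same induction used elsewhere), we get $c_{k,d} = \frac{d}{dx}G^{(d)}_k(1)$. Setting $g_k := \frac{d}{dx}G^{(d)}_k(1)$, differentiating \eqref{eq:recurrence} and evaluating at $x=1$ using $G^{(d)}_j(1)=1$ gives the recursion $g_k = \frac{d+2k-4}{d+k-3}\,(1+g_{k-1}) - \frac{k-1}{d+k-3}\,g_{k-2}$, with $g_0 = 0$ and $g_1 = 1$. A routine induction on $k$ — amounting to the polynomial identity $(d+2k-4)(k^2+dk-4k+2) - (k-1)(k-2)(k+d-4) = k(k+d-2)(d+k-3)$ — yields $g_k = \frac{k(k+d-2)}{d-1}$, which is the claim. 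As a shortcut one can instead invoke the classical identification $G^{(d)}_k = C_k^{((d-2)/2)}/C_k^{((d-2)/2)}(1)$ with ultraspherical polynomials together with the standard facts $\frac{d}{dx}C_n^{(\lambda)} = 2\lambda\,C_{n-1}^{(\lambda+1)}$ and $C_n^{(\lambda)}(1) = \binom{n+2\lambda-1}{n}$, which give $c_{k,d}$ at once.

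The orthogonality part of the argument is essentially forced once one observes that differentiation lowers the degree by one and raises the Jacobi-type weight parameter from $(d-3)/2$ to $(d-1)/2$; the only real work is the last step, namely evaluating the normalization constant $c_{k,d}$. Grinding through the recurrence-based induction is elementary but mildly tedious, so in practice the cleanest route is to borrow the ultraspherical normalization.
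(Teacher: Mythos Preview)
Your proof is correct. The paper does not actually prove this lemma; it is stated without proof in \Cref{app:spherical} as a standard fact about Gegenbauer polynomials (the usual reference being the derivative identity $\tfrac{d}{dx}C_n^{(\lambda)} = 2\lambda\,C_{n-1}^{(\lambda+1)}$ for ultraspherical polynomials, translated into the paper's normalization). Your orthogonality-plus-normalization argument is the standard self-contained derivation: the integration-by-parts step correctly shifts the weight from $(1-x^2)^{(d-3)/2}$ to $(1-x^2)^{(d-1)/2}$, the induction $G^{(d)}_k(1)=1$ from \eqref{eq:recurrence} is valid, and the recursion for $g_k = \tfrac{d}{dx}G^{(d)}_k(1)$ checks out (your polynomial identity is correct; one can verify it directly or spot-check small $k$). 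The ultraspherical shortcut you mention at the end is exactly what the paper is implicitly relying on, so either route is fine here.
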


Furthermore, the following is a corollary of \cref{eq:recurrence}:
\begin{corollary}\label{cor:geg_0}
\begin{align*}
    G_{2k}^{(d)}(0) = (-1)^k\frac{(2k - 1)!!}{\prod_{j=0}^{k-1}(d + 2j - 1)}.
\end{align*}
\end{corollary}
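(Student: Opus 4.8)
The plan is to specialize the three-term recurrence \eqref{eq:recurrence} to the point $x=0$, which collapses it to a two-term recurrence, and then to unroll that recurrence. Concretely, setting $x=0$ in \eqref{eq:recurrence} annihilates the first term on the right-hand side, leaving
\begin{align*}
G^{(d)}_k(0) = -\frac{k-1}{d+k-3}\,G^{(d)}_{k-2}(0).
\end{align*}
First I would record the base values $G^{(d)}_0(0)=1$ (since $G^{(d)}_0\equiv 1$) and $G^{(d)}_1(0)=0$ (since $G^{(d)}_1(x)=x$). Feeding the second of these back into the recurrence shows $G^{(d)}_{2k+1}(0)=0$ for all $k$, so only the even-index case requires work; the corollary as stated only asserts this case.

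For the even case I would proceed by induction on $k$ (equivalently, by telescoping a product), with base case $G^{(d)}_0(0)=1$. The two-term recurrence at index $2k$ reads $G^{(d)}_{2k}(0) = -\frac{2k-1}{d+2k-3}\,G^{(d)}_{2k-2}(0)$, so unrolling from $2k$ down to $0$ gives
\begin{align*}
G^{(d)}_{2k}(0) = (-1)^k\prod_{j=1}^{k}\frac{2j-1}{d+2j-3}.
\end{align*}
It then remains only to identify the two products. The numerator is $\prod_{j=1}^{k}(2j-1) = 1\cdot 3\cdots(2k-1) = (2k-1)!!$ by definition of the double factorial. For the denominator, the substitution $i=j-1$ gives $\prod_{j=1}^{k}(d+2j-3) = \prod_{i=0}^{k-1}(d+2i-1)$. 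Substituting these back in yields exactly
\begin{align*}
G^{(d)}_{2k}(0) = (-1)^k\frac{(2k-1)!!}{\prod_{i=0}^{k-1}(d+2i-1)},
\end{align*}
and the convention $(-1)!!=1$ together with the empty-product convention makes the $k=0$ case reduce to the identity $1=1$.

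I do not anticipate any genuine obstacle: the whole argument is a one-line reduction of the recurrence followed by a telescoping product, and the only points requiring a moment's care are the index shift $i=j-1$ in the denominator product and confirming that the $k=0$ base case is consistent with the stated conventions.
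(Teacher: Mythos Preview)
Your proposal is correct and is exactly the intended argument: the paper simply labels the result a corollary of \eqref{eq:recurrence} without spelling out a proof, and your specialization to $x=0$ followed by telescoping is precisely how one obtains it. The index shift and base-case checks you flag are the only minor bookkeeping points, and you have handled them correctly.
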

\section{Proofs for Section~\ref{sec:depth_separation}}\label{sec:lower_bound_proof}

The proof of \Cref{thm:LB} relies on the following lemma, which gives the Gegenbauer decomposition of the $\mathrm{ReLU}$ function:

\begin{lemma}[ReLU Gegenbauer]\label{lem:ReLU_geg}
    Let $\mathrm{ReLU}(x) = \max(x, 0)$. Then
    \begin{align*}
        \mathrm{ReLU}(x) &= \frac{1}{\beta(\frac12, \frac{d-1}{2})(d-1)}G_0^{(d)}(x) + \frac{1}{2d}G^{(d)}_1(x)\\
        &\quad+ \sum_{k \ge 1}(-1)^{k+1}\frac{(2k - 3)!!}{\beta(\frac12, \frac{d-1}{2})\prod_{j=0}^k(d + 2j - 1)}B(d, 2k)G^{(d)}_{2k}(x).
    \end{align*}
    As a consequence,
    \begin{align*}
        \norm{P^{(d)}_{\ge 2m}\mathrm{ReLU}(x)}^2_{L^2(\mu_d)} = \sum_{k \ge m}\frac{(2k - 3)!!^2B(d, 2k)}{\beta(\frac12, \frac{d-1}{2})^2\prod_{j=0}^k(d + 2j - 1)^2}
    \end{align*}
\end{lemma}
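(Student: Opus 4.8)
The plan is to compute the Gegenbauer coefficients $\langle \mathrm{ReLU}, G^{(d)}_k\rangle_{L^2(\mu_d)}$ degree by degree and then read off the claimed expansion $\mathrm{ReLU}(x)=\sum_{k\ge 0}B(d,k)\langle\mathrm{ReLU},G^{(d)}_k\rangle_{L^2(\mu_d)}G^{(d)}_k(x)$, which converges in $L^2(\mu_d)$ since $\mathrm{ReLU}$ is bounded on $[-1,1]$. Writing $\mathrm{ReLU}(x)=\tfrac12 x+\tfrac12|x|$ and using that $G^{(d)}_k$ has the parity of $k$, all odd-degree contributions vanish except $k=1$: indeed $\langle\mathrm{ReLU},G^{(d)}_{2k+1}\rangle=\tfrac12\langle x,G^{(d)}_{2k+1}\rangle$, which is $0$ for $k\ge1$ by orthogonality and equals $\tfrac12\int x^2\,d\mu_d=\tfrac{1}{2d}$ for $k=0$. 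The degree-$0$ coefficient is $\langle\mathrm{ReLU},G^{(d)}_0\rangle=\int_0^1 x\,d\mu_d(x)=Z_d\int_0^1 x(1-x^2)^{(d-3)/2}dx=\tfrac{Z_d}{d-1}=\tfrac{1}{\beta(\frac12,\frac{d-1}{2})(d-1)}$. These two cases are elementary (I would double-check the precise placement of the factor $d$ in the reported $G^{(d)}_1$-term, but the method is unaffected).

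The crux is the even coefficients $\langle\mathrm{ReLU},G^{(d)}_{2k}\rangle=\int_0^1 x\,G^{(d)}_{2k}(x)\,d\mu_d(x)=Z_d\int_0^1 x\,G^{(d)}_{2k}(x)(1-x^2)^{(d-3)/2}dx$ for $k\ge1$. I would evaluate this by two integrations by parts that peel the weighted second derivative off $G^{(d)}_{2k}$, using the Sturm--Liouville form of the Gegenbauer equation, $\tfrac{d}{dx}\big[(1-x^2)^{(d-1)/2}(G^{(d)}_{2k})'(x)\big]=-2k(2k+d-2)(1-x^2)^{(d-3)/2}G^{(d)}_{2k}(x)$ (obtainable from the recurrence \eqref{eq:recurrence} and \Cref{lem:derivative}, or classical). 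Substituting the left side for $(1-x^2)^{(d-3)/2}G^{(d)}_{2k}(x)$, the first integration by parts kills the boundary term at $x=1$ (the weight vanishes) and at $x=0$ (there $x=0$), leaving $\tfrac{Z_d}{2k(2k+d-2)}\int_0^1(1-x^2)^{(d-1)/2}(G^{(d)}_{2k})'(x)\,dx$; a second integration by parts produces the boundary value $-G^{(d)}_{2k}(0)$ — this is precisely where the kink of $\mathrm{ReLU}$ at $0$ enters — plus a multiple of the original integral. This yields the linear equation $\big(2k(2k+d-2)-(d-1)\big)\langle\mathrm{ReLU},G^{(d)}_{2k}\rangle=-Z_d\,G^{(d)}_{2k}(0)$; since $2k(2k+d-2)-(d-1)=(2k-1)(2k+d-1)$, solving it and plugging in $G^{(d)}_{2k}(0)=(-1)^k\tfrac{(2k-1)!!}{\prod_{j=0}^{k-1}(d+2j-1)}$ from \Cref{cor:geg_0} — absorbing $2k-1$ into the double factorial and $2k+d-1$ into the product — gives $\langle\mathrm{ReLU},G^{(d)}_{2k}\rangle=(-1)^{k+1}\tfrac{(2k-3)!!}{\beta(\frac12,\frac{d-1}{2})\prod_{j=0}^{k}(d+2j-1)}$. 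Multiplying by $B(d,2k)$ recovers the stated coefficient.

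The ``as a consequence'' statement is then immediate from orthogonality: since $\|G^{(d)}_k\|_{L^2(\mu_d)}^2=B(d,k)^{-1}$, the degree-$k$ Parseval contribution is $\|P^{(d)}_k\mathrm{ReLU}\|^2_{L^2(\mu_d)}=B(d,k)^2\langle\mathrm{ReLU},G^{(d)}_k\rangle^2\cdot B(d,k)^{-1}=B(d,k)\langle\mathrm{ReLU},G^{(d)}_k\rangle^2$, and summing the even terms over $k\ge m$ (for $m\ge1$ the odd and low-order terms lie outside $P^{(d)}_{\ge 2m}$) gives exactly $\sum_{k\ge m}\tfrac{(2k-3)!!^2 B(d,2k)}{\beta(\frac12,\frac{d-1}{2})^2\prod_{j=0}^k(d+2j-1)^2}$. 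The main obstacle is the even-coefficient computation: one must use the \emph{second}-order structure of the Gegenbauer ODE — a single application of the first-order derivative formula just loops back to the same integral — and track the boundary contribution at the kink carefully so that the self-referential identity closes with the right constant.
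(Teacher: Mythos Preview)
Your proposal is correct and takes a genuinely different route from the paper. The paper introduces two families of half-integrals, $A^{(d)}_{2k}=\int_0^1 xG^{(d)}_{2k}\,d\mu_d$ and $B^{(d)}_{2k+1}=\int_0^1 G^{(d)}_{2k+1}\,d\mu_d$, uses one integration by parts together with the first-order derivative formula $\tfrac{d}{dx}G^{(d)}_k=\tfrac{k(k+d-2)}{d-1}G^{(d+2)}_{k-1}$ to relate $A^{(d)}_{2k}$ to $B^{(d+2)}_{2k-1}$ (note the dimension shift $d\to d+2$), then uses the three-term recurrence to relate $B^{(d)}_{2k+1}$ back to $A^{(d)}_{2k}$ and $B^{(d)}_{2k-1}$, and finally verifies the closed form by a simultaneous induction on $k$ over all $d$. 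Your approach instead invokes the second-order Sturm--Liouville equation for $G^{(d)}_{2k}$ and performs two integrations by parts on $[0,1]$; this produces a self-referential identity $\big(2k(2k+d-2)-(d-1)\big)\langle\mathrm{ReLU},G^{(d)}_{2k}\rangle=-Z_d\,G^{(d)}_{2k}(0)$ that closes immediately, with no induction and no dimension shift. Your route is shorter and more transparent (the kink of $\mathrm{ReLU}$ enters exactly once, as the boundary value $G^{(d)}_{2k}(0)$); the paper's route has the virtue of using only the first-order tools it has already stated explicitly. Your factorization $2k(2k+d-2)-(d-1)=(2k-1)(2k+d-1)$ and the absorption into $(2k-3)!!$ and $\prod_{j=0}^k(d+2j-1)$ check out, and the Parseval step for the tail norm is handled correctly. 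Your caution about the $G^{(d)}_1$ coefficient is also warranted: the displayed $\tfrac{1}{2d}$ in the statement is $\langle\mathrm{ReLU},G^{(d)}_1\rangle$ rather than $B(d,1)\langle\mathrm{ReLU},G^{(d)}_1\rangle=\tfrac12$, but this term is irrelevant to the tail $P^{(d)}_{\ge 2m}$ for $m\ge1$, which is all that is used downstream.
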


The proof of this lemma is deferred to \Cref{sec:prove_ReLU_geg}. 

We also require a key result from \cite{2017daniely}, which lower bounds the approximation error of an inner product function.
\begin{definition}
    $f:\mathcal{S}^{d-1}(1) \times \mathcal{S}^{d-1}(1) \rightarrow \mathbb{R}$ is an \emph{inner product function} if $f(x, x') = \phi(\langle x, x' \rangle)$ for some $\phi:[-1, 1] \rightarrow \mathbb{R}$.
\end{definition}
\begin{definition}
    $g: \mathcal{S}^{d-1}(1) \times \mathcal{S}^{d-1}(1) \rightarrow \mathbb{R}$ is a \emph{separable function} if $g(x, x') = \psi(\langle v, x \rangle, \langle v', x' \rangle)$ for some $v, v' \in \mathcal{S}^{d-1}(1)$ and $\psi : [-1, 1]^2 \rightarrow \mathbb{R}$.
\end{definition}
Let $\tilde \nu_d$ be the uniform distribution over $\mathcal{S}^{d-1}(1) \times \mathcal{S}^{d-1}(1)$. We note that if $(x, x') \sim \tilde \nu_d$, then $\langle x, x' \rangle \sim \mu_d$. For an inner product function $f$, we thus have $\norm{f}_{L^2(\tilde \nu_d)} = \norm{\phi}_{L^2(\mu_d)}$. We overload notation and let $\norm{P^{(d)}_kf}_{L^2(\mu_d)} = \norm{P^{(d)}_k\phi}_{L^2(\mu_d)}$.

\begin{lemma}\label{lem:daniely_LB}{\citep[Theorem 3]{2017daniely}}
    Let $f$ be an inner product function and $g_1, \dots, g_r$ be separable functions. Then, for any $k \ge 1$,
    \begin{align*}
        \norm{f - \sum_{i=1}^r g_i}_{L^2(\tilde \nu_d)}^2 \ge \norm{P_kf}_{L^2(\mu_d)}\cdot \qty(\norm{P_kf}_{L^2(\mu_d)} - \frac{2\sum_{i=1}^r\norm{g_r}_{L^2(\tilde \nu_d)}}{B(d, k)^{1/2}}).
    \end{align*}
\end{lemma}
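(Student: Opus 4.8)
The plan is to collapse everything onto a single ``bidegree'' block of the decomposition of $L^2(\tilde\nu_d)$ into products of spherical harmonics, and to exploit the rigidity of inner product functions inside that block. Fix the degree $k\ge 1$, let $\{Y_{kj}\}_{j=1}^{B(d,k)}$ be an orthonormal basis of the degree-$k$ spherical harmonics on $\mathcal{S}^{d-1}(1)$, and recall the addition formula $\sum_{j}Y_{kj}(x)Y_{kj}(y)=B(d,k)\,G^{(d)}_k(\langle x,y\rangle)$, together with $G^{(d)}_k(1)=1$, $|G^{(d)}_k(t)|\le 1$ on $[-1,1]$, and $\|G^{(d)}_k\|_{L^2(\mu_d)}^2=B(d,k)^{-1}$. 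Let $\Pi$ be the orthogonal projection in $L^2(\tilde\nu_d)$ onto $\spn\{Y_{kj}(x)Y_{kl}(x')\}$; since $\Pi$ is a contraction it suffices to lower bound $\|\Pi f-\sum_i\Pi g_i\|_{L^2(\tilde\nu_d)}^2$. By the Funk--Hecke theorem the inner product function $f(x,x')=\phi(\langle x,x'\rangle)$ satisfies $\Pi f=\beta_k\,G^{(d)}_k(\langle x,x'\rangle)$, where $\beta_k\,G^{(d)}_k=P^{(d)}_k\phi$; in particular $\|\Pi f\|_{L^2(\tilde\nu_d)}=\|P^{(d)}_k\phi\|_{L^2(\mu_d)}=:N$ (the quantity written $\|P_kf\|_{L^2(\mu_d)}$ in the statement) and $|\beta_k|=N\,B(d,k)^{1/2}$.

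Next I would evaluate $\Pi$ on a separable function $g_i(x,x')=\psi_i(\langle v_i,x\rangle,\langle v_i',x'\rangle)$. Expanding $\psi_i(s,t)=\sum_{a,b}c^{(i)}_{ab}G^{(d)}_a(s)G^{(d)}_b(t)$ and using that $x\mapsto G^{(d)}_a(\langle v_i,x\rangle)$ is a degree-$a$ zonal harmonic gives $\Pi g_i=c^{(i)}_{kk}\,G^{(d)}_k(\langle v_i,x\rangle)\,G^{(d)}_k(\langle v_i',x'\rangle)$. Orthogonality of the product Gegenbauer basis then yields $\|g_i\|_{L^2(\tilde\nu_d)}^2=\sum_{a,b}(c^{(i)}_{ab})^2 B(d,a)^{-1}B(d,b)^{-1}\ge (c^{(i)}_{kk})^2 B(d,k)^{-2}$, hence $|c^{(i)}_{kk}|\le B(d,k)\,\|g_i\|_{L^2(\tilde\nu_d)}$.

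The crux is the cross term $\langle\Pi f,\Pi g_i\rangle=\langle f,\Pi g_i\rangle$. Writing each of $G^{(d)}_k(\langle x,x'\rangle)$, $G^{(d)}_k(\langle v_i,x\rangle)$, $G^{(d)}_k(\langle v_i',x'\rangle)$ via the addition formula, and using that $x,x'$ are independent under $\tilde\nu_d$ with $\mathbb{E}[Y_{kj}Y_{kl}]=\delta_{jl}$ on each factor, collapses the triple sum to $\langle f,\Pi g_i\rangle=\beta_k c^{(i)}_{kk}B(d,k)^{-2}\,G^{(d)}_k(\langle v_i,v_i'\rangle)$. Since $|G^{(d)}_k(\langle v_i,v_i'\rangle)|\le 1$, combining with $|\beta_k|=N B(d,k)^{1/2}$ and $|c^{(i)}_{kk}|\le B(d,k)\|g_i\|$ gives $|\langle\Pi f,\Pi g_i\rangle|\le N\,\|g_i\|_{L^2(\tilde\nu_d)}\,B(d,k)^{-1/2}$. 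Finally I would expand the square, $\|\Pi f-\sum_i\Pi g_i\|^2=N^2-2\sum_i\langle\Pi f,\Pi g_i\rangle+\|\sum_i\Pi g_i\|^2\ge N^2-2\sum_i|\langle\Pi f,\Pi g_i\rangle|\ge N\big(N-2B(d,k)^{-1/2}\sum_i\|g_i\|_{L^2(\tilde\nu_d)}\big)$, which is exactly the asserted bound (after discarding the nonnegative term $\|\sum_i\Pi g_i\|^2$).

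The only genuine obstacle is the first step: verifying that $\Pi f$ equals precisely $\beta_k G^{(d)}_k(\langle x,x'\rangle)$, i.e.\ that an inner product function carries no ``off-diagonal'' part $\sum_{j\ne l}\gamma_{jl}Y_{kj}(x)Y_{kl}(x')$ and no bidegree-$(a,b)$ component with $a\ne b$. This is the content of the Funk--Hecke theorem; equivalently it follows from the invariance of $\phi(\langle x,x'\rangle)$ under the diagonal action of $O(d)$ together with Schur's lemma. Reconciling its statement with the $(G^{(d)}_k,B(d,k))$ normalizations of \Cref{app:spherical} takes a little care but is routine, and everything downstream is elementary bookkeeping with the addition formula.
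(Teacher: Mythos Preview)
The paper does not supply its own proof of this lemma; it is quoted verbatim as \citep[Theorem 3]{2017daniely} and used as a black box in the proof of \Cref{thm:LB}. So there is nothing to compare against on the paper's side.

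Your argument is correct and is essentially the standard proof of this result. Projecting onto the bidegree-$(k,k)$ block, identifying $\Pi f$ via Funk--Hecke (so that $\|\Pi f\|=\|P_k^{(d)}\phi\|_{L^2(\mu_d)}$ and $|\beta_k|=N\,B(d,k)^{1/2}$), reading off $\Pi g_i$ as a single rank-one zonal term, and then bounding the cross term through the addition formula and $|G_k^{(d)}|\le 1$ gives exactly
\[
|\langle \Pi f,\Pi g_i\rangle|\le N\,\|g_i\|_{L^2(\tilde\nu_d)}\,B(d,k)^{-1/2},
\]
after which expanding the square and dropping $\|\sum_i\Pi g_i\|^2\ge 0$ yields the claim. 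The computation you sketch for the cross term is right: with the addition formula applied to all three factors one collapses to $\beta_k c^{(i)}_{kk}B(d,k)^{-2}G_k^{(d)}(\langle v_i,v_i'\rangle)$, and the bound $|G_k^{(d)}(t)|\le G_k^{(d)}(1)=1$ follows immediately from Cauchy--Schwarz applied to the addition formula itself. The Funk--Hecke step you flag as the ``only genuine obstacle'' is exactly what is needed, and your invariance/Schur justification is the standard one; with the paper's normalization $\|G_k^{(d)}\|_{L^2(\mu_d)}^2=B(d,k)^{-1}$ the bookkeeping is as you wrote it.
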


We now can prove \Cref{thm:LB}

\begin{proof}[Proof of \Cref{thm:LB}]

    We begin with the lower bound. Let $x = U\begin{pmatrix} x_1 \\ x_2 \end{pmatrix}$, where $x_1, x_2 \in \mathbb{R}^{d/2}$. Assume that there exists some $\theta$ such that $\norm{f^* - N_\theta}_{L^2(\mathcal{X})} \le \epsilon$. Then
    \begin{align*}
        \epsilon^2 &\ge \mathbb{E}_x\qty[\qty(f^*(x) - N_\theta(x))^2]\\
        &= \mathbb{E}_{r \sim \mu}\qty[\mathbb{E}_{x_1 \sim \mathcal{S}^{d-1}(\sqrt{r}), x_2 \sim \mathcal{S}^{d-1}(\sqrt{d-r})}\qty(f^*(x) - N_\theta(x))^2],
    \end{align*}
    where $r$ is the random variable defined as $r = \norm{x_1}^2$ and $\mu$ is the associated measure. The equality comes from the fact that conditioned on $r$, $x_1$ and $x_2$ are independent and distributed uniformly on the spheres of radii $\sqrt{r}$ and $\sqrt{d-r}$, respectively. We see that $\mathbb{E}r = d/2$, and thus
    \begin{align*}
        \mathbb{P}\qty(\abs{r - d/2} > d/4) \le \exp(-\Omega(d))
    \end{align*}
    Let $\delta = \inf_{r \in [d/4, 3d/4]}\mathbb{E}_{x_1 \sim \mathcal{S}^{d-1}(\sqrt{r}), x_2 \sim \mathcal{S}^{d-1}(\sqrt{d-r})}\qty(f^*(x) - N_\theta(x))^2$. We get the bound
    \begin{align*}
        \epsilon^2 \ge \delta \cdot \mathbb{P}\qty(r \in [d/4, 3d/4]),
    \end{align*}
    and thus $\delta \le \epsilon^2\qty(1 - \exp(-\Omega(d)))$. Therefore there exists an $r \in [d/4, 3d/4]$ such that
    \begin{align*}
        \mathbb{E}_{x_1 \sim \mathcal{S}^{d-1}(\sqrt{r}), x_2 \sim \mathcal{S}^{d-1}(\sqrt{d-r})}\qty(f^*(x) - N_\theta(x))^2 \le \epsilon^2/2.
    \end{align*}
    Next, see that when $\norm{x_1}^2 = r, \norm{x_2}^2 = d-r$, we have that
    \begin{align*}
        x^TAx = \frac{2}{\sqrt{d}}\langle x_1, x_2 \rangle = 2\sqrt{\frac{r(d-r)}{d}}\langle \bar x_1, \bar x_2 \rangle,
    \end{align*}
    where now $\bar x_1, \bar x_2 \sim \mathcal{S}^{d/2-1}(1)$ i.i.d. Defining $q(z) = \mathrm{ReLU}\qty(2\sqrt{\frac{r(d-r)}{d}}z) - c_0$, we thus have 
    \begin{align*}
       \bar q(\langle \bar x_1, \bar x_2 \rangle) = \mathrm{ReLU}(x^TAx) - c_0 = f^*(x).
    \end{align*}
    Furthermore, defining $\bar x = \begin{pmatrix} \bar x_1 \\ \bar x_2 \end{pmatrix}$, choosing the parameter vector $\bar \theta = (a, \bar W, b_1, b_2)$, where $\bar W = WU\begin{pmatrix} \sqrt{r}\cdot I & 0 \\ 0 & \sqrt{d-r} \cdot I \end{pmatrix}$ yields a network so that $N_{\bar \theta}(\bar x) = N_\theta(x).$ Therefore we get that the new network $N_{\bar \theta}$ satisfies
    \begin{align*}
        \mathbb{E}_{\bar x_1, \bar x_2}\qty[\qty(\bar q(\langle \bar x_1, \bar x_2 \rangle) - N_{\bar \theta}(\bar x))^2] \le \epsilon^2/2,
    \end{align*}
    where $\bar x_1, \bar x_2$ are drawn i.i.d over $\mathrm{Unif}(\mathcal{S}^{d/2-1}(1))$.

    We aim to invoke \Cref{lem:daniely_LB}. We note that $(\bar x_1, \bar x_2) \sim \tilde \nu_{d/2}$, and that $\bar q$ is an inner product function. Define $g_i(\bar x) = a_i\sigma(\bar w_i^T\bar x + b_{1, i})$. We see that $g_i$ is a separable function, and also that 
    \begin{align*}
        N_{\bar \theta}(\bar x) = \sum_{i=1}^m g_i(\bar x) + b_2.
    \end{align*}
    Hence $N_{\bar \theta})$ is the sum of $m+1$ separable functions. We can bound the a single function as
    \begin{align*}
        \abs{g_i(\bar x)} &\le \abs{a_i}C_\sigma\qty(1 + \abs{\bar w_i^T\bar x + b_{1, i}})^{\alpha_\sigma}\\
        &\le C_\sigma B (1 + \sqrt{d}\norm{\bar w_i}_{\infty} + B)^{\alpha_\sigma}\\
        &\le C_\sigma B(1 + Bd^{3/2}/2 + B)^{\alpha_\sigma}\\
        &\le (Bd^{3/2})^{\alpha_\sigma + 1},
    \end{align*}
    since $\norm{\bar W}_\infty \le \max(\sqrt{r}, \sqrt{d-r})\norm{WU}_\infty \le Bd/2$. Therefore by \Cref{lem:daniely_LB}
    \begin{align*}
        \mathbb{E}_{\bar x_1, \bar x_2}\qty[\qty(\bar q(\langle \bar x_1, \bar x_2 \rangle) - N_{\bar \theta}(\bar x))^2] \ge \norm{P_{\ge k}\bar q}_{L^2}\cdot\qty(\norm{P_{\ge k}\bar q}_{L^2} - \frac{2(m+1)(Bd^{3/2})^{\alpha_\sigma + 1}}{\sqrt{B_{d/2, k}}}).
    \end{align*}
By \Cref{lem:ReLU_geg}, we have that
\begin{align*}
        \norm{P_{\ge 2m}\mathrm{ReLU}(x)}^2_{L^2(\mu_{d/2})} = \sum_{k \ge m}\frac{(2k - 3)!!^2B(d/2, 2k)}{\beta(\frac12, \frac{d/2-1}{2})^2\prod_{j=0}^k(d/2 + 2j - 1)^2}
    \end{align*}
Simplifying, we have that
    \begin{align*}
    \frac{(2k - 3)!!^2B(d/2, 2k)}{\prod_{j=0}^k(d/2 + 2j - 1)^2} &= \frac{(2k-3)!!^2}{(2k)!}\cdot \frac{(d/2 + 2k - 3)!(d/2 + 4k - 2)}{(d/2 -2)!\prod_{j=0}^k(d/2 + 2j - 1)^2}\\
    &= \frac{(2k-3)!!^2}{(2k)!}\cdot \frac{(d/2 + 4k - 2)}{(d/2 + 2k - 1)^2(d/2 + 2k - 3)}\prod_{j=0}^{k-2}\frac{d/2 + 2j}{d/2 + 2j - 1}\\
    & \ge \frac{(2k-3)!!^2}{(2k)!}\cdot \frac{(d/2 + 4k - 2)}{(d/2 + 2k - 1)^2(d/2 + 2k - 3)}\\
    & \ge \frac{1}{2k(2k - 1)(2k - 2)}\cdot \frac{(d/2 + 4k - 2)}{(d/2 + 2k - 1)^2(d/2 + 2k - 3)}
\end{align*}
By Gautschi's inequality, we can bound
\begin{align*}
    \beta\qty(\frac12, \frac{d-2}{4}) = \frac{\sqrt{\pi}\Gamma(\frac{d-2}{4})}{\Gamma(\frac{d}{4})} \le \sqrt{\pi}\qty(\frac{d}{4} - 1)^{-1/2} \le 4d^{-1/2}.
\end{align*}
Therefore
\begin{align*}
    \norm{P_{2k}\mathrm{ReLU}(x)}^2_{L^2(\mu_{d/2})} &\ge \frac{1}{2k(2k - 1)(2k - 2)16}\cdot \frac{(d/2 + 4k - 2)d}{(d/2 + 2k - 1)^2(d/2 + 2k - 3)}\\
    &\ge \frac{1}{128k^3}\cdot \frac{(d/2 + 4k - 2)d}{(d/2 + 2k - 1)^2(d/2 + 2k - 3)}\\
    &\ge \frac{1}{128k^3d}
\end{align*}
for $k \le d/4$. Altogether,
\begin{align*}
    \norm{P_{\ge 2m}\mathrm{ReLU}(x)}^2_{L^2(\mu_{d/2})} \ge \frac{1}{128d}\sum_{k = m}^{d/4}\frac{1}{k^3} \ge \frac{1}{512m^2d}
\end{align*}
for $m \le d/8$. Since $\bar q (z) = \mathrm{ReLU}(2\sqrt{\frac{r(d-r)}{d}z}) - c_0 = 2\sqrt{\frac{r(d-r)}{d}}\mathrm{ReLU}(z) - c_0$, we have that
\begin{align*}
    \norm{P_{\ge 2m}\bar q}_{L^2(\mu)} &\ge 4\frac{r(d-r)}{d}\norm{P_{\ge 2m}\mathrm{ReLU}(x)}\\
    &\ge \sqrt{3d}\cdot \sqrt{\frac{1}{512m^2d}}\\
    &\ge \frac{1}{16m}.
\end{align*}
We thus have, for any integer $k < d/8$,
\begin{align*}
    \epsilon^2/2 &\ge \mathbb{E}_{\bar x_1, \bar x_2}\qty[\qty(\bar q(\langle \bar x_1, \bar x_2 \rangle) - N_{\bar \theta}(\bar x))^2]\\
    &\ge \norm{P_{\ge 2k}\bar q}_{L^2}\cdot\qty(\norm{P_{\ge 2k}\bar q}_{L^2} - \frac{2(m+1)(Bd^{3/2})^{\alpha_\sigma + 1}}{\sqrt{B_{d/2, 2k}}})\\
\end{align*}
Choose $\epsilon \le \frac{1}{512k^2}$; we then must have
\begin{align*}
    \frac{2(m+1)(Bd^{3/2})^{\alpha_\sigma + 1}}{\sqrt{B_{d/2, 2k}}} \ge \frac{1}{32k},
\end{align*}
or
\begin{align*}
    (m+1)(Bd^{3/2})^{\alpha_\sigma + 1} &\ge \frac{1}{64k}B(d/2, 2k)^{1/2} \ge d^{k}2^{-k}\cdot \frac{1}{64k\sqrt{(2k)!}}\\
    &= C_1\exp\qty(k \log d - \log k - \frac12\log(2k)! - k\log 2)\\
    &\ge C_1\exp\qty(k \log d - \log k - k\log(2k)- k\log 2)\\
    &\ge C_1 \exp(k \log \frac{d}{k} - \log k - 2k\log 2)\\
    &\ge C_1\exp(C_2 k\log \frac{d}{k})
\end{align*}
for any $k \le C_3d$. Selecting $k = \lfloor \sqrt{\frac{1}{512\epsilon}} \rfloor$ yields
\begin{align*}
    \max(m, B) \ge C_1 \exp(C_2\epsilon^{-1/2}\log(d\epsilon))\cdot d^{-3/2} \ge C_1 \exp(C_2\epsilon^{-1/2}\log(d\epsilon))
\end{align*}
for $\epsilon $ less than a universal constant $c_3$.

We next show the upper bound,

It is easy to see that \Cref{assume:x_subg,assume:act} are satisfied. Next, since the verification of \Cref{assume:f_moment,assume:kernel_f_moment} only required Lipschitzness, those assumptions are satisfied as well with $\ell, \chi = 1$. Finally, we have
\begin{align*}
    \E_x\qty[f^*(x)^2] \le \E_x\qty[\mathrm{ReLU}^2(x^TAx)] = \frac12\E_x\qty[(x^TAx)^2] = \frac{d}{d+2} < 1.
\end{align*}
        
Next, observe that $\norm{\K f^*}_{L^2} \lesssim d^{-2}$. Define $\bar A = \sqrt{\frac{d + 2}{2d}}A$. This scaling ensures $\norm{x^T\bar Ax}_{L^2} = 1$. Then, we can write $f^*(x) = g^*(x^T \bar A x)$ for $g^*(z) = \sqrt{\frac{2d}{d + 2}}\mathrm{ReLU}(z) - c_0$. For $\epsilon > 0$, define the smoothed ReLU $\mathrm{ReLU}_\epsilon(z)$ as
\begin{align*}
    \mathrm{ReLU}_\epsilon(z) = \begin{cases}
        0 & z \le -\epsilon\\
        \frac{1}{4\epsilon}(x + \epsilon)^2 & -\epsilon \le 0 \le \epsilon\\
        x & x \ge \epsilon
    \end{cases}.
\end{align*}
One sees that $\mathrm{ReLU}_\epsilon$ is twice differentiable with $\norm{\mathrm{ReLU}_\epsilon}_{1, \infty} \le 1$ and $\norm{\mathrm{ReLU}_\epsilon}_{2, \infty} = \frac{1}{2\epsilon}$

We select the test function $q$ to be $q(z) = \sqrt{\frac{2d}{d + 2}}\mathrm{ReLU}_\epsilon(\bar \eta^{-1}\norm{\K f^*}_{L^2}^{-1} \cdot z) - c_0$. We see that
        \begin{align*}
            q(\bar \eta (\K f^*)(x)) = \mathrm{ReLU}_\epsilon\qty(\norm{\K f^*}_{L^2}^{-1}(\K f^*)(x)),
        \end{align*}
        and thus
        \begin{align*}
            &\norm{f^* - q(\bar \eta (\K f^*)(x))}_{L^2}\\
            &= \sqrt{\frac{2d}{d + 2}}\norm{\mathrm{ReLU}(x^T\bar Ax) - \mathrm{ReLU}_\epsilon\qty(\norm{\K f^*}_{L^2}^{-1}(\K f^*)(x))}_{L^2}\\
            &\le \norm{\mathrm{ReLU}(x^T\bar Ax) - \mathrm{ReLU}_\epsilon(x^T\bar Ax)}_{L^2} + \norm{\mathrm{ReLU}_\epsilon(x^T\bar Ax) - \mathrm{ReLU}_\epsilon\qty(\norm{\K f^*}_{L^2}^{-1}(\K f^*)(x))}_{L^2}\\
            &\lesssim \norm{\mathrm{ReLU}(x^T\bar Ax) - \mathrm{ReLU}_\epsilon(x^T\bar Ax)}_{L^2} + \norm{x^T\bar Ax - \norm{Kf^*}_{L^2}^{-1}\K f^*}_{L^2}\\
            &\lesssim \norm{\mathrm{ReLU}(x^T\bar Ax) - \mathrm{ReLU}_\epsilon(x^T\bar Ax)}_{L^2} + Ld^{-1/12}\log d,
        \end{align*}
        where the first inequality follows from Lipschitzness and the second inequality is \Cref{cor:bound_kappa}, using $\kappa = 1$.
        
        There exists a constant upper bound for the density of $x^T\bar Ax$, and thus we can upper bound
        \begin{align*}
            \norm{\mathrm{ReLU}(x^T\bar Ax) - \mathrm{ReLU}_\epsilon(x^T\bar Ax)}^2_{L^2} \lesssim \int_0^{\epsilon}\frac{1}{\epsilon^2}z^4dz \lesssim \epsilon^3.
        \end{align*}
        Furthermore since $\bar \eta = \Theta(\norm{\K f^*}_{L^2}^{-1}\iota^{-\chi})$, we get that $\bar \eta^{-1}\norm{\K f^*}_{L^2}^{-1} = \Theta(\iota^{\chi})$, and thus
        \begin{align*}
        \sup_{z \in [-1, 1]}\abs{q(z)} &= \sup_{z \in [-\Theta(\iota^{\chi}), \Theta(\iota^{\chi})]}\abs{\mathrm{ReLU}_\epsilon(z)} = \poly(\iota)\\
        \sup_{z \in [-1, 1]}\abs{q'(z)} &= \bar \eta^{-1}\norm{\K f^*}_{L^2}^{-1}\sup_{z \in [-\Theta(\iota^{\chi}), \Theta(\iota^{\chi})]}\abs{(\mathrm{ReLU}_\epsilon)'(z)} = \poly(\iota)\\
        \sup_{z \in [-1, 1]}\abs{q^{\prime\prime}(z)} &=  \qty(\bar\eta^{-1}\norm{\K f^*}_{L^2}^{-1})^2\sup_{z \in [-\Theta(\iota^{\chi}), \Theta(\iota^{\chi})]}\abs{(\mathrm{ReLU}_\epsilon)^{\prime\prime}(z)} = \poly(\iota)\epsilon^{-1}\\
        \end{align*}
        Therefore by \Cref{thm:single_feature_formal} we can bound the population loss as
        \begin{align*}
        \mathbb{E}_x\qty[\qty(f(x; \hat \theta) - f^*(x))^2] \lesssim \tilde O\qty(\frac{d^4}{n} + \frac{d^2}{m_2} + \frac{\epsilon^{-2}}{m_1} + \sqrt{\frac{\epsilon^{-4}}{n}} + \epsilon^3 + d^{-1/6}).
        \end{align*}
        Choosing $\epsilon = d^{-1/4}$ yields the desired result. As for the sample complexity, we have $\norm{q}_{2, \infty} = \tilde O(\epsilon^{-1}) = \tilde O(d^{1/4})$, and so the runtime is $\mathrm{poly}(d, m_1, m_2, n)$.
\end{proof}

\subsection{Proof of Lemma \ref{lem:ReLU_geg}}\label{sec:prove_ReLU_geg}
\begin{proof}[Proof of \Cref{lem:ReLU_geg}]
    For any integer $k$, we define the quantities $A^{(d)}_{2k}, B^{(d)}_{2k+1}$ as
    \begin{align*}
        A^{(d)}_{2k} &:= \int_0^1 x G_{2k}^{(d)}(x)d\mu_d(x)\\
        B^{(d)}_{2k+1} &= \int_0^1 G_{2k+1}^{(d)}(x)d\mu_d(x).
    \end{align*}
    We also let $Z_d = \frac{\Gamma(d/2)}{\sqrt{\pi}\Gamma(\frac{d-1}{2})}$ to be the normalization constant.

    Integration by parts yields
    \begin{align*}
        A^{(d)}_{2k} &= Z_d\int_0^1 G_{2k}^{(d)}(x)x(1 - x^2)^{\frac{d-3}{2}}dx\\
        &= -Z_d\cdot G^{(d)}_{2k}(x) \cdot \frac{1}{d-1}(1 - x^2)^{\frac{d-1}{2}}\big|^1_0 + \frac{Z_d}{d-1}\cdot \frac{2k(2k + d - 2)}{d-1}\int_0^1G_{2k-1}^{(d+2)}(x)(1-x^2)^{\frac{d-1}{2}}dx\\
        &= \frac{Z_d}{d-1}G^{(d)}_{2k}(0) + \frac{Z_d}{Z_{d+2}}\cdot \frac{2k(2k + d - 2)}{(d-1)^2}\cdot B^{(d+2)}_{2k-1}\\
    \end{align*}
    From \Cref{cor:geg_0} we have
    \begin{align*}
        G^{(d)}_{2k}(0) = \frac{(2k-1)!!}{\Pi_{j=0}^{k-1}\qty(d + 2j - 1)}(-1)^k.
    \end{align*}
    Thus
    \begin{align}\label{eq:integration by parts}
        A_{2k}^{(d)} = Z_d\cdot \frac{(2k - 1)!!}{(d-1)\Pi_{j=0}^{k-1}\qty(d + 2j - 1)}(-1)^k + \frac{2k(2k + d - 2)}{(d-1)^2}\cdot \frac{Z_d}{Z_{d+2}}B_{2k-1}^{(d+2)}.
    \end{align}
    The recurrence formula yields
    \begin{align}\label{eq:recurrence_A_B}
        B_{2k+1}^{(d)} &= \int_0^1 G^{(d)}_{2k+1}(x)d\mu_d(x)\\
        &= \int_0^1\qty[\frac{4k + d - 2}{2k + d - 2}xG^{(d)}_{2k}(x) - \frac{2k}{2k + d - 2}G^{(d)}_{2k - 1}(x)]d\mu_d(x)\\
        &= \frac{4k + d - 2}{2k + d - 2}A^{(d)}_{2k} - \frac{2k}{2k + d - 2}B^{(d)}_{2k - 1}.
    \end{align}

I claim that
\begin{align*}
    A^{(d)}_{2k} = \begin{cases}\frac{Z_d}{d - 1} & k=0\\
    (-1)^{k + 1}Z_d\frac{(2k-3)!!}{\prod_{j=0}^{k}(d + 2j - 1)} &k \ge 1
    \end{cases}\qand B^{(d)}_{2k + 1} = (-1)^kZ_d\frac{(2k - 1)!!}{\prod_{j=0}^{k}(d + 2j - 1)}.
\end{align*}
We proceed by induction on $k$. For the base cases, we first have
\begin{align*}
    A_0^{(d)} = \int_0^1 x d\mu_d(x) &= \int_0^1 Z_d x(1-x^2)^{\frac{d-3}{2}}\\
    &= \int_0^1 \frac{Z_d}{d-1}du\\
    &= \frac{Z_d}{d-1},
\end{align*}
where we use the substitution $u = (1-x^2)^{\frac{d-3}{2}}$. Next,
\begin{align*}
    B_1^{(d)} = \int_0^1 x d\mu_d(x) = A_0^{(d)} = \frac{Z_d}{d-1}.
\end{align*}
Next, \cref{eq:integration by parts} gives
\begin{align*}
    A_2^{(d)} &= Z_d\cdot \frac{-1}{(d-1)^2} + \frac{2d}{(d-1)^2}\cdot \frac{Z_d}{d+1}\\
    &= \frac{Z_d}{(d-1)(d+1)}.
\end{align*}
Finally, \cref{eq:recurrence_A_B} gives
\begin{align*}
    B^{(d)}_3 &= \frac{d + 2}{d}A^{(d)}_2 - \frac{2}{d}B^{(d)}_1\\
    &= \frac{Z_d}{d-1}\qty[\frac{d+2}{d(d+1)} - \frac{2}{d}]\\
    &= -\frac{Z_d}{(d-1)(d+1)}.
\end{align*}
Therefore the base case is proven for $k = 0, 1$.

Now, assume that the claim is true for some $k \ge 1$ for all $d$. We first have
\begin{align*}
    A^{(d)}_{2k + 2} &= Z_d\cdot \frac{(2k + 1)!!}{(d-1)\Pi_{j=0}^{k}\qty(d + 2j - 1)}(-1)^{k+1} + \frac{(2k + 2)(2k + d)}{(d-1)^2}\cdot \frac{Z_d}{Z_{d+2}}B_{2k+1}^{(d+2)}\\
    &= Z_d\cdot \frac{(2k + 1)!!}{(d-1)\Pi_{j=0}^{k}\qty(d + 2j - 1)}(-1)^{k+1} + Z_d \cdot \frac{(2k + 2)(2k + d)}{(d-1)^2}\cdot \frac{(2k - 1)!!}{\Pi_{j=0}^{k}\qty(d + 2j + 1)}(-1)^k\\
    &= (-1)^{k+1}Z_d\cdot \frac{(2k - 1)!!}{\Pi_{j=0}^{k+1}\qty(d + 2j - 1)}\qty[\frac{(d + 2k + 1)(2k + 1)}{d-1} - \frac{(2k + 2)(2k + d)}{d-1}]\\
    &= (-1)^{k+1}Z_d\cdot \frac{(2k - 1)!!}{\Pi_{j=0}^{k+1}\qty(d + 2j - 1)}\qty[\frac{-d + 1}{d-1}]\\
    &= (-1)^{k+2}Z_d\cdot \frac{(2k - 1)!!}{\Pi_{j=0}^{k+1}\qty(d + 2j - 1)}.
\end{align*}
Next, we have
\begin{align*}
    B_{2k + 3}^{(d)} &= \frac{4k + d + 2}{2k + d}A^{(d)}_{2k+2} - \frac{2k+2}{2k + d}B^{(d)}_{2k + 1}\\
    &= (-1)^kZ_d\qty[\frac{4k + d + 2}{2k + d}\frac{(2k - 1)!!}{\Pi_{j=0}^{k+1}\qty(d + 2j - 1)} - \frac{2k+2}{2k + d}\frac{(2k - 1)!!}{\prod_{j=0}^{k}(d + 2j - 1)}]\\
    &= (-1)^kZ_d\frac{(2k - 1)!!}{\prod_{j=0}^{k+1}(d + 2j - 1)}\qty[\frac{(4k + d + 2) - (2k + 2)(d + 2k + 1)}{2k + d}]\\
    &= (-1)^kZ_d\frac{(2k - 1)!!}{\prod_{j=0}^{k+1}(d + 2j - 1)}\qty[\frac{-(2k + 1)(2k +d)}{2k + d}]\\
    &= (-1)^{k+1}Z_d\frac{(2k + 1)!!}{\prod_{j=0}^{k+1}(d + 2j - 1)}.
\end{align*}
Therefore by induction the claim holds for all $k, d$.

The Gegenbauer expansion of $\mathrm{ReLU}$ is given by
\begin{align*}
    \mathrm{ReLU}(x) = \sum_{i = 0}^\infty \langle \mathrm{ReLU}, G_i^{(d)} \rangle_{L^2(\mu_d)} B(d, i)G_{i}^{(d)}(x).
\end{align*}
Note that $\mathrm{ReLU}(x) = \frac12(x + \abs{x})$. Since $\abs{x}$ is even, the only nonzero odd Gegenbauer coefficient is for $G_1^{(d)}$. In this case,
\begin{align*}
    \langle \mathrm{ReLU}, G_1^{(d)} \rangle_{L^2(\mu_d)} = \frac12\mathbb{E}_{x \sim \mu_d}[x^2] = \frac{1}{2d^2}.
\end{align*}
Also, $B(d, 1) = d$. Next, we see that
\begin{align*}
    \langle \mathrm{ReLU}, G_{2k}^{(d)} \rangle_{L^2(\mu_d)} = \int_{-1}^1 \mathrm{ReLU}(x)G_{2k}^{(d)}(x)d\mu_d(x) = \int_0^1xG_{2k}^{(d)}(x)d\mu_d(x) = A_{2k}^{(d)}.
\end{align*}
Plugging in our derivation for $A_{2k}^{(d)}$ gives the desired result.
\end{proof}

\end{document}